\definecolor{Green}{rgb}{0.13, 0.65, 0.3}
\definecolor{Amber}{rgb}{0.3, 0.5, 1.0}
\newcommand{\Qref}{\mathring{Q}}
\newcommand{\Vref}{\mathring{V}}
\newcommand{\Bref}{\mathring{B}}
\newcommand{\sinit}{s_{\text{init}}}
\newcommand{\calA}{{\mathcal{A}}}
\newcommand{\calV}{{\mathcal{V}}}
\newcommand{\calH}{\mathcal{H}}
\newcommand{\calX}{{\mathcal{X}}}
\newcommand{\calS}{{\mathcal{S}}}
\newcommand{\calF}{{\mathcal{F}}}
\newcommand{\calL}{{\mathcal{L}}}
\newcommand{\calR}{{\mathcal{R}}}
\newcommand{\calP}{{\mathcal{P}}}
\newcommand{\frakn}{\mathfrak{n}}
\DeclareMathOperator*{\argmin}{argmin}
\newcommand{\eat}[1]{}
\newcommand{\rbr}[1]{\left(#1\right)}
\newcommand{\sbr}[1]{\left[#1\right]}
\newcommand{\cbr}[1]{\left\{#1\right\}}
\newcommand{\abr}[1]{\left|#1\right|}
\newcommand{\bigO}[1]{\order\left( #1 \right)}
\newcommand{\tilO}[1]{\otil\left( #1 \right)}
\newcommand{\lowO}[1]{\lorder\left( #1 \right)}
\newcommand{\bigo}[1]{\order( #1 )}
\newcommand{\tilo}[1]{\otil( #1 )}
\newcommand{\lowo}[1]{\lorder( #1 )}
\DeclarePairedDelimiter\ceil{\lceil}{\rceil}
\DeclarePairedDelimiter\floor{\lfloor}{\rfloor}
\newcommand{\T}{\ensuremath{T_\star}}
\newcommand{\B}{B_\star}
\newcommand{\cmin}{\ensuremath{c_{\min}}}
\newcommand{\var}{\textsc{Var}}
\newcommand{\SA}{\calS\times\calA}
\newcommand{\mb}{\textsc{SVI-SSP}\xspace}
\newcommand{\mf}{\textsc{LCB-Advantage-SSP}\xspace}
\newcommand{\eps}{\varepsilon}
\renewcommand{\P}{\bar{P}}
\newcommand{\n}{n^+}
\newcommand{\istar}{i^{\star}}
\newcommand{\optV}{V^{\star}}
\newcommand{\optQ}{Q^{\star}}
\newcommand{\sumt}{\sum_{t=1}^T}
\newcommand{\sumi}{\sum_{i=1}^{I_k}}
\newcommand{\refV}{V^{\text{\rm ref}}}
\newcommand{\refB}{B^{\text{\rm ref}}}
\newcommand{\RefC}{C_{\text{\rm REF}}}
\newcommand{\RefCs}{C_{\text{\rm REF, 2}}}
\newcommand{\RefV}{V^{\text{\rm REF}}}
\newcommand{\tile}{\widetilde{e}}
\newcommand{\opttilQ}{\widetilde{Q}^{\star}}
\newcommand{\opttilV}{\widetilde{V}^{\star}}
\newcommand{\tilB}{\widetilde{B}}
\newcommand{\tilsinit}{\widetilde{s}_{\text{init}}}
\newcommand{\optq}{q^{\star}}
\newcommand{\optp}{p^{\star}}
\newcommand{\optr}{r^{\star}}
\newcommand{\Lstar}{\calL^{\star}}
\newcommand{\thetastar}{\theta^{\star}}
\newcommand{\phistar}{\phi^{\star}}
\newcommand{\hatC}{\widehat{C}}
\newcommand{\sumsa}[1][s, a]{\sum_{(#1)}}
\newcommand{\sumk}{\sum_{k=1}^K}
\newcommand{\hatc}{\widehat{c}}
\newcommand{\optpi}{\pi^\star}
\newcommand{\tilN}{\widetilde{N}}
\newcommand{\tilc}{\widetilde{c}}
\newcommand{\tilM}{\widetilde{M}}
\newcommand{\tilP}{\widetilde{P}}
\newcommand{\tilS}{\widetilde{\calS}}
\newcommand{\tiloptpi}{{\widetilde{\pi}^\star}}
\newcommand{\refmu}{\mu^{\text{ref}}}
\newcommand{\refsigma}{\sigma^{\text{ref}}}
\newcommand{\refnu}{\nu^{\text{ref}}}
\newcommand{\cl}{\check{l}}
\newcommand{\lti}[1][t,i]{l_{#1}}
\newcommand{\clti}[1][t,i]{\cl_{#1}}
\newcommand{\lt}[1][t]{l_{#1}}
\newcommand{\field}[1]{\mathbb{#1}}
\newcommand{\fR}{\field{R}}
\newcommand{\fN}{\field{N}}
\newcommand{\E}{\field{E}}
\newcommand{\fV}{\field{V}}
\newcommand{\Ind}{\field{I}}
\newcommand{\norm}[1]{\left\|{#1}\right\|}
\newtheorem{lemma}{Lemma}
\newtheorem{theorem}{Theorem}
\newtheorem{cor}[theorem]{Corollary}
\newtheorem{remark}{Remark}
\newtheorem{prop}{Property}
\newcommand{\order}{\ensuremath{\mathcal{O}}}
\newcommand{\lorder}{\ensuremath{\Omega}}
\newcommand{\otil}{\ensuremath{\tilde{\mathcal{O}}}}
\newcommand{\pref}[1]{\prettyref{#1}}
\newcommand{\pfref}[1]{Proof of \prettyref{#1}}
\newcommand{\savehyperref}[2]{\texorpdfstring{\hyperref[#1]{#2}}{#2}}
\title{Implicit Finite-Horizon Approximation and Efficient Optimal Algorithms for Stochastic Shortest Path}
\author{
  Liyu Chen\\ 
  University of Southern California\\
  \texttt{liyuc@usc.edu} \\
  \And
  Mehdi Jafarnia-Jahromi \\
  University of Southern California\\
  \texttt{mjafarni@usc.edu} \\
  \And
  Rahul Jain \\
  University of Southern California\\
  \texttt{rahul.jain@usc.edu} \\
  \And
  Haipeng Luo \\
  University of Southern California\\
  \texttt{haipengl@usc.edu} \\
}
\begin{document}

\maketitle

\begin{abstract}
	We introduce a generic template for developing regret minimization algorithms in the Stochastic Shortest Path (SSP) model, which achieves minimax optimal regret as long as certain properties are ensured.
	The key of our analysis is a new technique called implicit finite-horizon approximation, which approximates the SSP model by a finite-horizon counterpart \textit{only in the analysis} without explicit implementation.
	Using this template, we develop two new algorithms: the first one is model-free (the first in the literature to our knowledge) and minimax optimal under strictly positive costs; the second one is model-based and minimax optimal even with zero-cost state-action pairs, matching the best existing result from~\citep{tarbouriech2021stochastic}.
	Importantly, both algorithms admit highly sparse updates, making them  computationally more efficient than all existing algorithms.
	Moreover, both can be made completely parameter-free.
\end{abstract}

\section{Introduction}

We study the Stochastic Shortest Path (SSP) model, where an agent aims to reach a goal state with minimum cost in a stochastic environment.
SSP is well-suited for modeling many real-world applications, such as robotic manipulation, car navigation, and others.
Although it is widely studied empirically (e.g.,~\citep{andrychowicz2017hindsight,nasiriany2019planning}) and in optimal control theory (e.g.,~\citep{bertsekas1991analysis,bertsekas2013stochastic}), it has received less attention under the regret minimization setting where a learner needs to learn the environment and improve her policy on-the-fly through repeated interaction.
Specifically, the problem proceeds in $K$ episodes.
In each episode, the learner starts at a fixed initial state, sequentially takes action, suffers some cost, and transits to the next state, until reaching a predefined goal state.
The performance of the learner is measured by her regret, which is the difference between her total costs and that of the best policy.

\cite{tarbouriech2020no} develop the first regret minimization algorithm for SSP with a regret bound of $\tilo{D^{3/2}S\sqrt{AK/\cmin}}$, where $D$ is the diameter, $S$ is the number of states, $A$ is the number of actions, and $\cmin$ is the minimum cost among all state-action pairs.
\cite{cohen2020near} improve over their results and give a near optimal regret bound of $\tilo{\B S\sqrt{AK}}$, where $\B \leq D$ is the largest expected cost of the optimal policy starting from any state.
Even more recently, \cite{cohen2021minimax} achieve minimax regret of $\tilo{\B\sqrt{SAK}}$ through a finite-horizon reduction technique, and concurrently \cite{tarbouriech2021stochastic} also propose minimax optimal and parameter-free algorithms. 
Notably, all existing algorithms are model-based with space complexity $\lowo{S^2A}$.
Moreover, they all update the learner's policy through full-planning (a term taken from~\citep{efroni2019tight}), incurring a relatively high time complexity.

In this work, we further advance the state-of-the-art by proposing a generic template for regret minimization algorithms in SSP (\pref{alg:template}), which achieves minimax optimal regret as long as some properties are ensured.
By instantiating our template differently, we make the following two key algorithmic contributions:
\begin{itemize}[leftmargin=1em]
  \setlength\itemsep{1em}
  
	\item In \pref{sec:mf}, we develop the \textit{first model-free} SSP algorithm called \mf (\pref{alg:Q}).
	Similar to most model-free reinforcement learning algorithms, 
	\mf does not estimate the transition directly, enjoys a space complexity of $\tilo{SA}$, and also takes only $\bigO{1}$ time to update certain statistics in each step, making it a highly efficient algorithm.
	It achieves a regret bound of $\tilo{\B\sqrt{SAK} + \B^5S^2A/\cmin^4 }$, which is minimax optimal when $\cmin>0$.
	Moreover, it can be made parameter-free without worsening the regret bound.
	
	\item In \pref{sec:mb}, we develop another simple model-based algorithm called \mb (\pref{alg:SVI}), which achieves minimax regret $\tilo{ \B\sqrt{SAK} + \B S^2A}$ even when $\cmin=0$, matching the best existing result by~\citet{tarbouriech2021stochastic}.\footnote{Depending on the available prior knowledge, the final bounds achieved by \mb are slightly different, but they all match that of EB-SSP. See \citep[Table 1]{tarbouriech2021stochastic} for more details.}
	Notably, compared to their algorithm (as well as other model-based algorithms), \mb is computationally much more efficient since it updates each state-action pair only logarithmically many times, and each update only performs \textit{one-step planning} (again, a term taken from~\citep{efroni2019tight}) as opposed to full-planning (such as value iteration or extended value iteration); see more concrete time complexity comparisons in \pref{sec:mb}.
	\mb can also be made parameter-free following the idea of \citep{tarbouriech2021stochastic}.
\end{itemize}

We include a summary of regret bounds of all existing SSP algorithms as well as more complexity comparisons in~\pref{app:table}.



\paragraph{Techniques} Our main technical contribution is a new analysis framework called \textit{implicit finite-horizon approximation} (\pref{sec:method}), which is the key to analyze algorithms developed from our template.
The high level idea is to approximate an SSP instance by a finite-horizon counterpart.
However, the approximation \textit{only happens in the analysis}, a key difference compared to~\citep{chen2020minimax, chen2021finding, cohen2021minimax} that explicitly implement such an approximation in their algorithms.
As a result, our method not only avoids blowing up the space complexity by a factor of the horizon, but also allows one to derive a horizon-free regret bound (more explanation to follow).

In order to achieve the minimax optimal regret, our model-free algorithm \mf uses a key  variance reduction idea via a reference-advantage decomposition by~\citep{zhang2020almost}.
However, crucial distinctions exist.
For example, we update the reference value function more frequently instead of only one time, which helps reduce the sample complexity and improve the lower-order term in the regret bound.
We also maintain an empirical upper bound on the value function in a doubling manner, which is the key to eventually make the algorithm parameter-free.
On the other hand, for our model-based algorithm \mb, we adopt a special Bernstein-style bonus term and bound the learner's total variance via recursion, taking inspiration from~\citep{tarbouriech2021stochastic,zhang2020reinforcement}.



\paragraph{Empirical Evaluation} We support our theoretical findings with experiments in \pref{app:exp}.
Our model-free algorithm demonstrates a better convergence rate compared to vanilla Q learning with naive $\epsilon$-greedy exploration.
Our model-based algorithm has competitive performance compared to other model-based algorithms, while spending the least amount of time in updates.

\paragraph{Related Work} 

For a detailed comparison of existing results for the same problem, we refer the readers to~\citep[Table~1]{tarbouriech2021stochastic} as well as our \pref{tab:summary}.
There are also several works~\citep{rosenberg2020adversarial, chen2020minimax, chen2021finding} that consider the even more challenging SSP setting where the cost function is decided by an adversary and can change over time.
Apart from regret minimization, \cite{tarbouriech2021sample} study the sample complexity of SSP with a generative model;
\cite{lim2012autonomous} and \cite{tarbouriech2020improved} investigate exploration problems involving multiple goal states (multi-goal SSP).


The special case of SSP with a fixed horizon has been studied extensively, for both stochastic costs (e.g., \citep{azar2017minimax,jin2018q,efroni2019tight,zanette2019tighter,zhang2020reinforcement}) and adversarial costs (e.g.,  \citep{neu2012adversarial,zimin2013online,rosenberg2019online,jin2019learning}).
Importantly, recent works~\citep{wang2020long,zhang2020reinforcement} find that when the cost for each episode is at most a constant, it is in fact possible to obtain a regret bound with only logarithmic dependency on the horizon.
\cite{tarbouriech2021stochastic} generalize this concept to SSP and define horizon-free regret as a bound with only logarithmic dependence on the expected hitting time of the optimal policy starting from any state (which is bounded by $\B/\cmin$).
They also propose the first algorithm with horizon-free regret for SSP, which is important for arguing minimax optimality even when $\cmin=0$.
Notably, our model-based algorithm \mb also achieves horizon-free regret (but the model-free one does not).


\section{Preliminaries}
\label{sec:prelim}

An SSP instance is defined by a Markov Decision Process (MDP) $M=(\calS, \calA, \sinit, g, c, P)$, where $\calS$ is the state space, $\calA$ is the action space, $\sinit\in\calS$ is the initial state, and $g\notin\calS$ is the goal state.
When taking action $a$ in state $s$, the learner suffers a cost drawn in an i.i.d manner from an unknown distribution with mean $c(s, a)\in[0, 1]$ and support $[\cmin, 1]$ ($\cmin\geq 0$), and then transits to the next state $s'\in\calS^+=\calS\cup\{g\}$ with probability $P_{s, a}(s')$.
We assume that the transition $P$ and the cost mean $c$ are unknown to the learner, while all other parameters are known.

The learning process goes as follows: the learner interacts with the environment for $K$ episodes.
In the $k$-th episode, the learner starts in initial state $\sinit$, sequentially takes an action, suffers a cost, and transits to the next state until reaching the goal state $g$.
More formally, at the $i$-th step of the $k$-th episode, the learner observes the current state $s^k_i$ (with $s^k_1 = \sinit$), takes action $a^k_i$, suffers a cost $c^k_i$, and transits to the next state $s^k_{i+1}\sim P_{s^k_i, a^k_i}$.
An episode ends when the current state is $g$, and we define the length of episode $k$ as $I_k$, such that $s^k_{I_k+1}=g$.

\paragraph{Learning Objective} 
At a high level, the learner's goal is to reach the goal with a small total cost.
To this end, we focus on \textit{proper policies} --- a (stationary and deterministic) policy $\pi:\calS\rightarrow \calA$ is a mapping that assigns an action $\pi(s)$ to each state $s\in\calS$, and it is proper if the goal is reached with probability $1$ when following $\pi$ (that is, taking action $\pi(s)$ whenever in state $s$).
Given a proper policy $\pi$, one can define the cost-to-go function $V^{\pi}:\calS\rightarrow [0,\infty)$ as 
$
	V^{\pi}(s) = \E\sbr{\left.\sum_{i=1}^Ic_i\right|P, \pi, s_1=s},
$
where the expectation is with respect to the randomness of the cost $c_i$ incurred at state-action pair $(s_i, \pi(s_i))$, next state $s_{i+1}\sim P_{s_i, \pi(s_i)}$, and the number of steps $I$ before reaching $g$.
The optimal proper policy $\optpi$ is then defined as a policy such that $V^{\optpi}(s) = \min_{\pi\in\Pi}V^{\pi}(s)$ for all $s \in \calS$, where $\Pi$ is the set of all proper policies assumed to be nonempty.
The formal objective of the learner is then to minimize her regret against $\optpi$, the difference between her total cost and that of the optimal proper policy, defined as
\begin{align*}
	R_K = \sumk\sum_{i=1}^{I_k}c^k_i - K\cdot \optV(\sinit),
\end{align*}
where we use $\optV$ as a shorthand for $V^{\optpi}$.
The minimax optimal regret is known to be $\tilo{\B\sqrt{SAK}}$,
where $\B=\max_{s\in\calS}\optV(s)$, and $S = |\calS^+|$ and $A = |\calA|$ are the numbers of states (including the goal state) and actions respectively~\citep{cohen2020near}.


\paragraph{Bellman Optimality Equation}
For a proper policy $\pi$, the corresponding action-value function $Q^{\pi}:\SA\rightarrow [0, \infty)$ is defined as $Q^{\pi}(s,a) = c(s,a) + \E_{s'\sim P_{s,a}}[V^{\pi}(s')]$.
Similarly, we use $\optQ$ as a shorthand for $Q^{\optpi}$.
it is known that $\optpi$ satisfies the Bellman optimality equation:
$\optV(s) = \min_{a \in \calA} \optQ(s,a)$ for all $s \in \calS$~\citep{bertsekas1991analysis}.

\paragraph{Assumption on $\cmin$}
Similar to many previous works, our analysis requires $\cmin$ being known and strictly positive.
When $\cmin$ is unknown or known to be $0$,  a simple workaround is to solve a modified SSP instance with all observed costs clipped to $\epsilon$ if they are below some $\epsilon>0$, so that $\cmin = \epsilon > 0$.
Then the regret in this modified SSP is similar to that in the original SSP up to an additive term of order $\bigO{\epsilon K}$~\citep{tarbouriech2020no}.
Therefore, throughout the paper we assume that $\cmin$ is known and strictly positive unless explicitly stated otherwise.

\paragraph{Other Notations} 

For simplicity, we use $C_K=\sumk\sumi c^k_i$ in the analysis to denote the total costs suffered by the learner over $K$ episodes.
For a function $X: \calS^+ \rightarrow \fR$ and a distribution $P$ over $\calS^+$, denote by $PX = \E_{S\sim P}[X(S)]$, $PX^2 = \E_{S\sim P}[X(S)^2]$, and $\fV(P, X)=\var_{S\sim P}[X(S)]$ the expectation, second moment, and variance of $X(S)$ respectively where $S$ is drawn from $P$.
For a scalar $x$, define $(x)_+=\max\{x, 0\}$, 
and denote by $\ceil{x}_2=2^{\ceil{\log_2x}}$ and $\floor{x}_2 = 2^{\floor{\log_2x}}$ the closest power of two upper and lower bounding $x$ respectively.
For an integer $m$, $[m]$ denotes the set $\{1, \ldots, m\}$.
In pseudocode, $x \overset{+}{\leftarrow} y$ is a shorthand for the increment operation $x \leftarrow x + y$.


\section{Implicit Finite-Horizon Approximation}
\label{sec:method}

In this section, we introduce our main analytical technique, that is, implicitly approximating the SSP problem with a finite-horizon counterpart.
We start with a general template of our algorithms shown in \pref{alg:template}.
For notational convenience, we concatenate state-action-cost trajectories of all episodes as one single sequence $(s_t, a_t, c_t)$ for $t=1, 2, \ldots, T$, where $s_t \in \calS$ is one of the non-goal state, $a_t \in\calA$ is the action taken at $s_t$, and $c_t$ is the resulting cost incurred by the learner.
Note that the goal state $g$ is never included in this sequence (since no action is taken there), and we also use the notation $s_t' \in \calS^+$ to denote the next-state following $(s_t, a_t)$, so that $s_{t+1}$ is simply $s_t'$ unless $s_t' = g$ (in which case $s_{t+1}$ is reset to the initial state $\sinit$); see \pref{line:t_notation}. 

\DontPrintSemicolon 
\setcounter{AlgoLine}{0}
\begin{algorithm}[t]
	\caption{A General Algorithmic Template for SSP}
	\label{alg:template}
	
	\textbf{Initialize:} $t \leftarrow 0$, $s_1\leftarrow \sinit$, $Q(s, a)\leftarrow 0$ for all $(s, a)\in\SA$. 
	
	
	\For{$k=1,\ldots,K$}{
		
		\MyRepeat{
			\nl Increment time step $t\overset{+}{\leftarrow}1$.
			
			\nl Take action $a_t= \argmin_aQ(s_t, a)$, suffer cost $c_t$, transit to and observe $s'_t$. \label{line:greedy}
			
			\nl Update $Q$ (so that it satisfies \pref{prop:optimism} and \pref{prop:recursion}).
			
			\nl \lIf{$s_t' \neq g$}{$s_{t+1}\leftarrow s'_t$; \textbf{else} $s_{t+1}\leftarrow \sinit$, \textbf{break}.} \label{line:t_notation}
		}	
	}
	Record $T \leftarrow t$ (that is, the total number of steps).
\end{algorithm}

The template follows a rather standard idea for many reinforcement learning algorithms:
maintain an (optimistic) estimate $Q$ of the optimal action-value function $\optQ$,
and act greedily by taking the action with the smallest estimate: $a_t= \argmin_aQ(s_t, a)$; see \pref{line:greedy}.
The key of the analysis is often to bound the estimation error $\optQ(s_t, a_t)-Q(s_t, a_t)$, which is relatively straightforward in a discounted setting (where the discount factor controls the growth of the error) or a finite-horizon setting (where the error vanishes after a fixed number of steps), but becomes highly non-trivial for SSP due to the lack of similar structures.

A natural idea is to explicitly solve a discounted problem or a finite-horizon problem that approximates the original SSP well enough.
Unfortunately, both approaches are problematic: approximating an undiscounted MDP by a discounted one often leads to suboptimal regret~\citep{wei2020model};
on the other hand, while explicitly approximating SSP with a finite-horizon problem can lead to optimal regret~\citep{chen2020minimax, cohen2021minimax}, it greatly increases the space complexity of the algorithm,
and also produces non-stationary policies, which is unnatural and introduces unnecessary complexity since the optimal policy in SSP is stationary.

Therefore, we propose to approximate the original SSP instance $M$ with a finite-horizon counterpart $\tilM$ implicitly (that is, only in the analysis).
We defer the formal definition of $\tilM$ to \pref{app:method}, which is similar to those in~\citep{chen2020minimax, cohen2021minimax} and corresponds to interacting with the original SSP for $H$ steps (for some integer $H$) and then teleporting to the goal.
All we need in the analysis are the optimal value function $\optV_{h}$ and optimal action-value function $\optQ_h$ of $\tilM$ for each step $h \in [H]$, which can be defined recursively without resorting to the definition of $\tilM$:
\begin{equation}\label{eq:Q_H_def}
	\optQ_h(s, a) = c(s, a) + P_{s, a}\optV_{h-1},\qquad \optV_h(s) = \min_a\optQ_h(s, a),
\end{equation}
with $\optQ_0(s,a) = 0$ for all $(s,a)$.\footnote{%
Note that our notation is perhaps unconventional compared to most works on finite-horizon MDPs, where $\optQ_h$ usually refers to our $\optQ_{H-h}$.
We make this switch since we want to highlight the dependence on $H$ for $\optQ_H$.
}
Intuitively, $\optQ_H$ approximates $\optQ$ well when $H$ is large enough.
This is formally summarized in the lemma below, whose proof is similar to prior works (see \pref{app:method}).

\begin{lemma}
	\label{lem:loop free approx}
	For any value of $H$,  $\optQ_H(s, a) \leq \optQ(s, a)$ holds for all $(s,a)$.
	For any $\beta \in (0,1)$, if $H\geq\frac{4\B}{\cmin}\ln(2/\beta)+1$, then $\optQ(s, a) \leq \optQ_H(s, a) + \B\beta$ holds for all $(s,a)$.
\end{lemma}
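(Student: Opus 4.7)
For the first part, my plan is a direct induction on $h$: the base case $\optQ_0 \equiv 0 \leq \optQ$ is immediate from non-negativity of costs, and for the inductive step, given $\optV_{h-1}(s) \leq \optV(s)$ for all $s \in \calS^+$ (with $\optV(g) = \optV_{h-1}(g) = 0$), monotonicity of the Bellman operator yields $\optQ_h(s,a) = c(s,a) + P_{s,a}\optV_{h-1} \leq c(s,a) + P_{s,a}\optV = \optQ(s,a)$, and taking the min over actions preserves the inequality at the $\optV_h$ level.

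For the second part, I would first set up a one-step contraction for $U_h(s) := \optV(s) - \optV_h(s) \geq 0$. Letting $\tilpi_j(s) := \argmin_a[c(s,a) + P_{s,a}\optV_{j-1}]$ denote the greedy action when $j$ steps remain, the defining recursion of $\optV_h$ together with the Bellman inequality $\optV(s) \leq \optQ(s, \tilpi_h(s))$ gives $U_h(s) \leq P_{s, \tilpi_h(s)} U_{h-1}$ (using $U_{h-1}(g) = 0$). Iterating $H$ times, and invoking the principle of optimality (so that the greedy action only depends on the remaining horizon), produces $U_H(s) \leq \B \cdot \Pr^{\tilpi}(I > H \mid s_1 = s)$, where $\tilpi$ is the non-stationary policy applying $\tilpi_j$ when $j$ steps remain and $I$ denotes the hitting time to $g$.

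The main obstacle is then to show $\Pr^{\tilpi}(I > H)$ decays exponentially in $H$; a single application of Markov's inequality alone only yields polynomial decay. My plan is to prove by induction on $k$ that $\Pr^{\tilpi}(I > 2kT \mid s_1 = s) \leq 2^{-k}$ for all $k$ with $2kT \leq H$, where $T := \B/\cmin$. The inductive step exploits the principle of optimality for finite-horizon MDPs: conditioned on not reaching $g$ in the first $2(k-1)T$ steps, the remainder of $\tilpi$ is still optimal for its truncated horizon, so its finite-horizon value is at most $\B$; since every step costs at least $\cmin$, the expected number of further steps (capped at $2T$) is at most $T$, and one application of Markov's inequality gives the factor-$1/2$ reduction. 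Plugging this exponential tail into the identity $\optQ(s,a) - \optQ_H(s,a) = P_{s,a}(\optV - \optV_{H-1}) \leq \max_{s'} U_{H-1}(s')$, and using the stated lower bound on $H$ (noting $2/\ln 2 < 4$), yields the desired $\optQ(s,a) - \optQ_H(s,a) \leq \B\beta$.
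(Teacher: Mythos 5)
Your proof is correct. The first part is the same monotonicity/truncation argument the paper uses (just spelled out as an induction), and the overall decomposition for the second part is also the same: both reduce $\optQ - \optQ_H$ to $\B$ times the probability that an optimal finite-horizon policy fails to reach $g$ within $H-1$ steps. The genuine difference is in how that tail probability is controlled. The paper compares $\optV$ against a hybrid policy (follow the finite-horizon optimum for $H$ steps, then switch to $\optpi$) and imports the exponential tail bound wholesale from Lemma~6 of \citet{rosenberg2020adversarial}; you instead derive it from scratch via the one-step contraction $U_h(s)\le P_{s,\tilpi_h(s)}U_{h-1}$ and a block-doubling argument ($\E[\text{steps in a block}]\le \B/\cmin$ by the cost lower bound, hence Markov gives a factor-$1/2$ per block of length $2\B/\cmin$). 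Your route is self-contained and makes the constant $4=2\cdot 2 > 2/\ln 2$ transparent, at the cost of having to invoke the principle of optimality to argue that the continuation of the non-stationary greedy policy after each block is again horizon-optimal (the paper sidesteps this by working with the hitting time of the optimal policy of $\tilM$ over the augmented states $(s,h)$ directly). Both arguments are sound; the quantitative conclusion is identical.
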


In the remaining discussion, we fix a particular value of $H$.
To carry out the regret analysis, we now specify two general requirements of the estimate $Q$.
Let $Q_t$ be the value of $Q$ at the beginning of time step $t$ (that is, the value used in finding $a_t$).
Then $Q_t$ needs to satisfy:
\begin{prop}[Optimism]\label{prop:optimism}
With high probability, $Q_t(s, a)\leq\optQ(s, a)$ holds for all $(s, a)$ and $t\geq 1$.
\end{prop}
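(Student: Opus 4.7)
The plan is to verify the optimism property by induction on $t$, but with a subtle twist: rather than comparing $Q_t$ directly to $\optQ$, I would establish the stronger pointwise bound $Q_t(s,a) \le \optQ_H(s,a)$ for the fixed horizon $H$ chosen in \pref{sec:method}, and then invoke \pref{lem:loop free approx} to conclude $Q_t(s,a) \le \optQ_H(s,a) \le \optQ(s,a)$. This detour is useful because $\optQ_H$ admits the clean one-step recursion in~\pref{eq:Q_H_def}, which lines up with the incremental backups performed by the algorithm, whereas $\optQ$ is the fixed point of a Bellman operator and is harder to compare against a single empirical backup.

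The base case $Q_1 \equiv 0 \le \optQ_H$ is immediate since costs are nonnegative. For the inductive step, assume $Q_\tau(s',a') \le \optQ_H(s',a')$ for all $\tau \le t$ and all $(s',a')$, and examine the update triggered at time $t$. A generic update at visited pair $(s,a)$ replaces $Q(s,a)$ by some empirical backup of the form $\hatc(s,a) + \hatP_{s,a} V - b_t(s,a)$, where $V$ is a reference value derived from past $Q$-values and $b_t(s,a)$ is an exploration bonus. To push this below $\optQ_H(s,a) = c(s,a) + P_{s,a}\optV_{H-1}$, I would split the gap into three pieces: a cost deviation $\hatc(s,a) - c(s,a)$, a transition deviation $(\hatP_{s,a} - P_{s,a})V$, and a monotonicity term $P_{s,a}(V - \optV_{H-1})$. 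The monotonicity term is nonpositive by the induction hypothesis (taking $V(\cdot) = \min_a Q(\cdot,a)$ preserves $V \le \optV_{H-1}$), while the cost and transition deviations are controlled by Hoeffding and Bernstein inequalities respectively; the bonus $b_t(s,a)$ is then chosen large enough to dominate their sum on a good event.

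The main obstacle is that this good event must hold uniformly over all $(s,a)$, all time steps, and all visit counts seen during learning, which forces an anytime union bound. This is typically handled by a stratified doubling argument over the visit counts, matching the $\ceil{\cdot}_2$ notation introduced in the preliminaries. A further complication, especially in the model-free variance-reduction regime, is that the reference $V$ appearing inside $\hatP_{s,a} V$ is itself random and correlated with $\hatP_{s,a}$, so plain concentration cannot be applied directly; the standard remedy is to freeze a slowly-updated reference $\refV$ and control the correction $\hatP_{s,a}(V - \refV)$ via a Freedman-type martingale inequality. Once these concentration bounds are in place and $b_t$ is sized to absorb them on the good event, the induction closes, and combining with \pref{lem:loop free approx} yields $Q_t \le \optQ$ with the claimed high probability, as required.
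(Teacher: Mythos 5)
Your overall template --- induction on $t$, a three-way split of the backup error into a cost deviation, a transition deviation, and a monotonicity term, anytime union bounds over visit counts, and a frozen reference function to decorrelate $\hatP_{s,a}$ from the value being backed up --- matches the structure of the paper's proofs (\pref{lem:mf optimistic} for the model-free algorithm and \pref{lem:optimistic mb} for the model-based one). However, the detour through $\optQ_H$ contains a genuine gap that breaks the induction. By \pref{eq:Q_H_def}, $\optQ_H(s,a) = c(s,a) + P_{s,a}\optV_{H-1}$, so to dominate the empirical backup you need the monotonicity term $P_{s,a}(V-\optV_{H-1})$ to be nonpositive, i.e.\ $V \le \optV_{H-1}$. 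But the induction hypothesis $Q_\tau \le \optQ_H$ only yields $V_\tau = \min_a Q_\tau(\cdot,a) \le \min_a \optQ_H(\cdot,a) = \optV_H$, and since the finite-horizon values are non-decreasing in $h$ (so $\optV_{H-1} \le \optV_H$), this is the wrong direction: $V \le \optV_H$ does not give $V \le \optV_{H-1}$. Each backup ``loses one step of horizon,'' which is exactly why a single stationary estimate cannot be inductively pinned below a fixed finite-horizon value function; iterating the argument would push the required target down to $\optQ_0 = 0$. (As a side remark, the intermediate claim $Q_t \le \optQ_H$ is not even obviously true, since the lower confidence bounds are designed to approach $\optQ$, which can strictly exceed $\optQ_H$.)

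The fix is the opposite of your stated motivation: the fixed-point structure of $\optQ$ is precisely what makes the induction close, not what makes it hard. The paper compares directly against $\optQ(s,a) = c(s,a) + P_{s,a}\optV$ with $\optV = \min_a \optQ(\cdot,a)$, so the induction hypothesis $Q_\tau \le \optQ$ gives $V_\tau \le \optV$ and the monotonicity term $P_{s,a}(V_\tau - \optV) \le 0$ with no loss of horizon. The finite-horizon family $\{\optQ_h\}$ is reserved for \pref{prop:recursion}, where the one-step-per-application loss is absorbed by the $(1+d/H)$ factor and telescoped over $H$ levels in \pref{lem:Q_H_recursion}; it plays no role in the optimism proof. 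The rest of your concentration plan (empirical Bernstein for the transition and cost deviations, a Freedman-type bound for the advantage term against the frozen reference, and the union bound over update indices) is consistent with what the paper does once the induction target is corrected to $\optQ$.
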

\begin{prop}[Recursion]\label{prop:recursion}
There exists a ``bonus overhead'' $\xi_H >0$ and an absolute constant $d>0$ such that
the following holds with high probability: 
\[
\sumt(\Qref(s_t, a_t) - Q_t(s_t ,a_t))_+ \leq \xi_H + \rbr{1+\frac{d}{H}}\sumt(\Vref(s_t)-Q_t(s_t, a_t))_+,
\]
for $\Qref = \optQ_h$ and $\Vref = \optV_{h-1}$  ($h = 1, \ldots, H$) as well as $\Qref = \optQ$ and $\Vref = \optV$.\footnote{Note that $\xi_H$ might be a random variable. In fact, it often depends on $C_K$.} 
\end{prop}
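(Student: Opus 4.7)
The plan is to view Property 2 as a design requirement on the $Q$-update rule within the template, so the proof amounts to exhibiting an update rule of a particular shape and verifying the recursion. I would design the update so that whenever $(s_t, a_t)$ is visited, $Q_{t+1}(s_t, a_t)$ is set to a bonused Bellman backup of the schematic form $Q_{t+1}(s_t, a_t) \leftarrow \hatc(s_t, a_t) + \hatP_{s_t, a_t}\calV_t - b_t$, where $\calV_t$ is an optimistic value estimate tied to $\min_a Q_t$ and $b_t$ is a Bernstein-style bonus. Property 1 pins down $b_t$; Property 2 asks us to track how a horizon-$h$ Bellman gap on the left propagates to a horizon-$(h-1)$ gap on the right, with only a small inflation factor.

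First, using $\optQ_h(s,a) = c(s,a) + P_{s,a}\optV_{h-1}$ from \eqref{eq:Q_H_def}, I would decompose
\[
\optQ_h(s_t,a_t)-Q_{t+1}(s_t,a_t) \le (c-\hatc)(s_t,a_t) + (P-\hatP)_{s_t,a_t}\optV_{h-1} + \hatP_{s_t,a_t}(\optV_{h-1}-\calV_t) + b_t,
\]
so that summing over $t$, the first two terms and $b_t$ are controlled by concentration (Hoeffding for the cost, empirical Bernstein for the transition applied to the bounded $\optV_{h-1}$); collect this budget into $\xi_H$. Second, convert the remaining $\sum_t \hatP_{s_t,a_t}(\optV_{h-1}-\calV_t)_+$ into $\sum_t(\optV_{h-1}(s_t')-\calV_t(s_t'))_+$ via a Freedman-type martingale argument (using $s_t' \sim P_{s_t,a_t}$), again absorbing the deviation into $\xi_H$. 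Since the algorithm always takes $a_{t+1}=\argmin_a Q_{t+1}(s_{t+1},a)$ and $\calV_t$ is chosen to satisfy $\calV_t(s_t') \ge \min_a Q_{t+1}(s_t',a)$ up to a controlled drift, we may replace $\calV_t(s_t')$ by $Q_{t+1}(s_{t+1},a_{t+1})$ when no reset occurs; the reset contribution is bounded by $K\B$, absorbed into $\xi_H$. This produces a sum of the form $\sum_t(\optV_{h-1}(s_t)-Q_t(s_t,a_t))_+$ on the right, which is exactly the desired RHS for $\Vref=\optV_{h-1}$.

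The factor $(1+d/H)$ arises because our decomposition shifts the reference horizon by one at each application: reindexing the sum and reconciling $\optV_{h-1}(s_t)$ versus the indexing used on the LHS costs a $1/H$-order overhead per step, due to the fact that $\|\optV_h-\optV_{h-1}\|_\infty$ telescopes to at most $\B$ over $H$ steps, giving an average additive drift $\B/H$ which is absorbed into the multiplicative $(1+d/H)$ factor. The final case $\Qref=\optQ, \Vref=\optV$ reduces to the $h=H$ case by invoking \pref{lem:loop free approx} with $\beta = 1/(K\B)$, contributing an additive $O(1)$ term to $\xi_H$; the extra log factor in $H$ is harmless.

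The main obstacle will be the \emph{uniformity} of the recursion: the same $\xi_H$ and absolute constant $d$ must work simultaneously for $\Qref=\optQ_h$ across all $h \in [H]$ and for $\Qref=\optQ$. This forces a union bound over $H+1$ martingale decompositions, and in the model-free setting further requires that the step-size schedule, reference-value refresh rate, and bonus design be tuned so their concentration errors share a common bound. A secondary difficulty is that $\calV_t$ is typically a delayed or reference version of $\min_a Q_t$, so the drift term between $\calV_t(s_t')$ and $Q_{t+1}(s_{t+1},a_{t+1})$ must be telescoped across updates and shown to contribute only $O(1/H)$ per step on average; this is precisely where the choice between the LCB-Advantage reference-advantage decomposition (for \mf) and the empirical-Bernstein one-step planning backup (for \mb) dictates the algorithm-specific form of $\xi_H$.
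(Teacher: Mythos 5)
Your overall skeleton (Bellman decomposition of $\Qref-Q_{t}$, concentration of the cost/transition errors and bonuses into $\xi_H$, a martingale step from $P_{s_t,a_t}(\cdot)$ to the realized next state, then closing the recursion) does match the paper's route, but your explanation of where the multiplicative factor $\rbr{1+\frac{d}{H}}$ comes from is wrong, and this is the crux of the property. You attribute it to a horizon-index drift, claiming $\|\optV_h-\optV_{h-1}\|_\infty$ contributes an average additive $\B/H$ per step that gets ``absorbed into the multiplicative factor.'' Three problems: (i) the telescoping only bounds the \emph{sum} over $h$ by $\B$, not each individual difference; (ii) no horizon reconciliation is needed at all, since $\optV_{h-1}(s)=\min_a\optQ_{h-1}(s,a)$ makes the shift from $\Vref=\optV_{h-1}$ on the right of one application to $\Qref=\optQ_{h-1}$ on the left of the next application exactly free (this is precisely how \pref{lem:Q_H_recursion} iterates); and (iii) an additive $\B/H$ per step cannot be absorbed into a factor $\rbr{1+\frac{d}{H}}$ multiplying $(\Vref(s_t)-Q_t(s_t,a_t))_+$ unless that quantity is itself of order $\B$, which it need not be. In the paper the factor comes from an entirely different mechanism: the sparse, stage-based update scheme. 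The value estimates entering the latest update of $Q(s_t,a_t)$ are the (delayed) estimates from the previous stage of $(s_t,a_t)$, and redistributing the sum $\sumt\frac{1}{m_t}\sum_i X_{\clti}$ (or $\sumt X_{\lt}$) back onto the time axis weights each time step by at most $e_{j+1}/e_j\leq 1+\frac{1}{H}$; see \pref{lem:stage sum} and \pref{lem:p2a} for the model-free algorithm and \pref{lem:update scheme} for the model-based one. Your sketch never engages with the delayed-data accounting, so the recursion with constant $d$ is not actually established.

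A secondary but consequential issue: you bound the episode-reset contribution by $K\B$ and absorb it into $\xi_H$. While the literal statement of the property tolerates any finite $\xi_H$, feeding $\xi_H=\Omega(K\B)$ into \pref{thm:main_regret} yields a vacuous regret bound. The paper pays nothing at resets because $\Vref(g)=0$ for every $(\Qref,\Vref)\in\calV_H$, so $(\Vref(s_t')-V_{t+1}(s_t'))_+=0$ when $s_t'=g$ and the substitution $(\Vref(s_t')-V_{t+1}(s_t'))_+\leq(\Vref(s_{t+1})-V_{t+1}(s_{t+1}))_+$ is valid at every step, reset or not. Finally, your martingale step applies concentration to $(\optV_{h-1}-\calV_t)_+$, a data-dependent function; the paper handles this either by paying a $\sqrt{S}$ factor (\pref{lem:P-eP}) or via the recursive variance bound of \pref{lem:var recursion}, which is what keeps $\xi_H$ free of $\text{poly}(H)$ or $\sqrt{T}$ factors in the model-based case. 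Without one of these devices the claimed order of $\xi_H$ does not follow.
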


\pref{prop:optimism} is standard and can usually be ensured by using a certain ``bonus'' term derived from concentration equalities in the update.
These bonus terms on $(s_t,a_t)$ accumulate into some bonus overhead in the final regret bound, which is exactly the role of $\xi_H$ in \pref{prop:recursion}.
In both of our algorithms, $\xi_H$ has a leading-order term $\tilo{\sqrt{\B SAC_K}}$
and a lower-order term that increases in $H$.
 
\pref{prop:recursion} is a key property that provides a recursive form of the estimation error and allows us to connect it to the finite-horizon approximation.
This is illustrated through the following two lemmas.

\begin{lemma}\label{lem:Q_H_recursion}
\pref{prop:recursion} implies $\sumt(\optQ_H(s_t, a_t) - Q_t(s_t ,a_t))_+ \leq \bigO{H\xi_H}$.
\end{lemma}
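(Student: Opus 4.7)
The plan is to show the bound by induction on $h$, setting $E_h := \sum_{t=1}^T (\optQ_h(s_t, a_t) - Q_t(s_t, a_t))_+$ and proving a recursion of the form $E_h \leq \xi_H + (1 + d/H) E_{h-1}$. Applying \pref{prop:recursion} with $\Qref = \optQ_h$ and $\Vref = \optV_{h-1}$ already gives most of what we need:
\[
E_h \leq \xi_H + \rbr{1 + \frac{d}{H}} \sumt (\optV_{h-1}(s_t) - Q_t(s_t, a_t))_+ .
\]
So the first key step is to relate $\sum_t (\optV_{h-1}(s_t) - Q_t(s_t, a_t))_+$ to $E_{h-1}$.

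For this reduction I would use the greedy action choice in \pref{line:greedy}: since $a_t = \argmin_a Q_t(s_t, a)$, we have $Q_t(s_t, a_t) \leq Q_t(s_t, a)$ for every $a$; combined with $\optV_{h-1}(s_t) = \min_a \optQ_{h-1}(s_t, a) \leq \optQ_{h-1}(s_t, a_t)$, this yields pointwise $(\optV_{h-1}(s_t) - Q_t(s_t, a_t))_+ \leq (\optQ_{h-1}(s_t, a_t) - Q_t(s_t, a_t))_+$, and hence $\sum_t (\optV_{h-1}(s_t) - Q_t(s_t, a_t))_+ \leq E_{h-1}$. Plugging back gives the desired recursion $E_h \leq \xi_H + (1 + d/H) E_{h-1}$.

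The base case is $E_0 = \sum_t (0 - Q_t(s_t, a_t))_+$ since $\optQ_0 \equiv 0$ by definition. This vanishes as long as $Q_t \geq 0$, which I expect to follow from the algorithm's initialization $Q \leftarrow 0$ together with the nonnegativity of the updates that define any algorithm instantiating the template (a small observation I would flag but not belabor). Unrolling the recursion then gives
\[
E_H \leq \xi_H \sum_{j=0}^{H-1}\rbr{1 + \frac{d}{H}}^j \leq \xi_H \cdot H \cdot \frac{(1 + d/H)^H - 1}{d} \leq \xi_H \cdot H \cdot \frac{e^d - 1}{d} = \bigO{H \xi_H},
\]
where the last step treats $d$ as an absolute constant, as stated in \pref{prop:recursion}.

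The main obstacle is honestly mild: the argument is almost fully dictated by the statement of \pref{prop:recursion}, and the only non-mechanical step is recognizing that the greedy action rule lets us pass from a $(\optV_{h-1} - Q_t)_+$ sum to a $(\optQ_{h-1} - Q_t)_+$ sum at no cost, which is what enables telescoping. Justifying $Q_t \geq 0$ at the base case also requires checking that instantiations of the template respect this, but this is a minor bookkeeping point rather than a real obstruction.
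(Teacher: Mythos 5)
Your proof is correct and follows essentially the same route as the paper: apply Property~2 with $(\optQ_h,\optV_{h-1})$, pass from $\optV_{h-1}(s_t)$ to $\optQ_{h-1}(s_t,a_t)$ via $\optV_{h-1}(s)=\min_a\optQ_{h-1}(s,a)$ and monotonicity of $(\cdot)_+$, and unroll $H$ times using $\optQ_0\equiv 0$ and $(1+d/H)^H\leq e^d$. The only cosmetic difference is that your invocation of the greedy rule $Q_t(s_t,a_t)=\min_aQ_t(s_t,a)$ is not actually needed for this step (only the optimality of $\optV_{h-1}$ is), and your explicit flag that the base case requires $Q_t\geq 0$ is a valid point the paper leaves implicit (it holds since $Q$ is initialized to zero and updated monotonically).
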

\begin{proof}
With $\Qref = \optQ_H$ and $\Vref = \optV_{H-1}$, 
\pref{prop:recursion} implies
\begin{align*}
\sumt(\optQ_H(s_t, a_t) - Q_t(s_t ,a_t))_+ 
&\leq \xi_H + \rbr{1+\frac{d}{H}}\sumt(\optV_{H-1}(s_t)-Q_t(s_t, a_t))_+  \\
&\leq \xi_H + \rbr{1+\frac{d}{H}}\sumt(\optQ_{H-1}(s_t, a_t)-Q_t(s_t, a_t))_+, 
\end{align*}
where in the last step we use the optimality of $\optV_{H-1}$ from \pref{eq:Q_H_def}.
Repeatedly applying this argument, we eventually arrive at
$\sumt(\optQ_H(s_t, a_t) - Q_t(s_t ,a_t))_+ \leq H\rbr{1+\frac{d}{H}}^H\xi_H +\rbr{1+\frac{d}{H}}^H\sumt(\optQ_{0}(s_t, a_t)-Q_t(s_t, a_t))_+ = \bigO{H\xi_H}$,
where the last step uses the facts $\optQ_{0}(s_t, a_t) = 0$ and $\rbr{1+\frac{d}{H}}^H \leq e^d$ (an absolute constant).
\end{proof}

\begin{lemma}\label{lem:Q_star_recursion}
For any $\beta \in (0,1)$, if $H\geq\frac{4\B}{\cmin}\ln(2/\beta)+1$, then \pref{prop:optimism} and \pref{prop:recursion} together imply $\sumt \optQ(s_t, a_t) - \optV(s_t) = \bigO{\beta C_K + \xi_H}$.
\end{lemma}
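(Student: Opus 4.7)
The plan is to apply \pref{prop:recursion} directly with the choice $\Qref = \optQ$ and $\Vref = \optV$, combine it with \pref{prop:optimism} to drop the positive parts, then rearrange algebraically to isolate $\sumt(\optQ(s_t,a_t) - \optV(s_t))$, and finally control the remaining residual using the finite-horizon approximation via \pref{lem:loop free approx} and \pref{lem:Q_H_recursion}.

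First I would use \pref{prop:optimism} in two ways: it gives $Q_t(s_t,a_t) \le \optQ(s_t,a_t)$ directly, and also $Q_t(s_t,a_t) = \min_a Q_t(s_t,a) \le \min_a \optQ(s_t,a) = \optV(s_t)$ (using $a_t = \argmin_a Q_t(s_t,a)$ from \pref{line:greedy}). Therefore the two $(\cdot)_+$'s appearing in \pref{prop:recursion} can be stripped, and the recursion becomes
\[
\sumt \bigl(\optQ(s_t,a_t) - Q_t(s_t,a_t)\bigr) \le \xi_H + \Bigl(1+\tfrac{d}{H}\Bigr)\sumt \bigl(\optV(s_t) - Q_t(s_t,a_t)\bigr).
\]
Subtracting $\sumt(\optV(s_t) - Q_t(s_t,a_t))$ from both sides yields the key identity
\[
\sumt \bigl(\optQ(s_t,a_t) - \optV(s_t)\bigr) \le \xi_H + \tfrac{d}{H}\sumt \bigl(\optV(s_t) - Q_t(s_t,a_t)\bigr),
\]
so it remains to bound the residual on the right by $\bigO{H\xi_H + \B\beta T}$.

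Next I would bound each term of the residual using \pref{lem:loop free approx}: since $\optV(s_t) \le \optQ(s_t,a_t) \le \optQ_H(s_t,a_t) + \B\beta$, one has $\optV(s_t) - Q_t(s_t,a_t) \le (\optQ_H(s_t,a_t) - Q_t(s_t,a_t))_+ + \B\beta$. Summing and invoking \pref{lem:Q_H_recursion} gives $\sumt(\optV(s_t) - Q_t(s_t,a_t)) \le \bigO{H\xi_H} + \B\beta T$. Plugging this back, the prefactor $d/H$ collapses the $H\xi_H$ contribution into $\bigO{\xi_H}$, so what remains is to dispose of the $\B\beta T / H$ piece.

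To do so I would invoke the minimum-cost assumption: every step incurs cost at least $\cmin$, so $T \le C_K/\cmin$. Combined with the assumed lower bound $H \ge \frac{4\B}{\cmin}\ln(2/\beta)+1 \ge \frac{4\B}{\cmin}$, this yields $\B\beta T/H \le \beta C_K/4 = \bigO{\beta C_K}$, completing the bound $\sumt(\optQ(s_t,a_t) - \optV(s_t)) = \bigO{\xi_H + \beta C_K}$. The main subtlety is conceptual rather than computational: the recursion of \pref{prop:recursion} applied to $(\optQ,\optV)$ can be ``solved'' by a single subtraction rather than the $H$-fold unrolling used in \pref{lem:Q_H_recursion}, which is precisely what prevents an extra factor of $H$ from appearing in front of $\beta C_K$ and forces us to exploit the choice $H \gtrsim \B/\cmin$ inherited from the finite-horizon approximation.
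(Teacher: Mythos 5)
Your proposal is correct and follows essentially the same route as the paper: drop the clippings via optimism and the greedy action choice, rearrange to isolate $\sumt(\optQ(s_t,a_t)-\optV(s_t))$ with a residual scaled by $d/H$, and control that residual by $\bigO{H\xi_H + \B\beta T}$ via \pref{lem:loop free approx} and \pref{lem:Q_H_recursion}, finally absorbing $\B\beta T/H$ into $\bigO{\beta C_K}$ using $\cmin T\leq C_K$ and $H\gtrsim \B/\cmin$. The only cosmetic difference is that the paper passes through $\optV(s_t)-Q_t(s_t,a_t)\leq \optQ(s_t,a_t)-Q_t(s_t,a_t)$ and then splits off $\optQ-\optQ_H$, whereas you fold the $\B\beta$ slack in directly; the two are equivalent.
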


\begin{proof}
Applying \pref{prop:recursion} with $\Qref = \optQ$ and $\Vref = \optV$, we have
$\sumt (\optQ(s_t, a_t) - Q_t(s_t, a_t))_+ \leq \xi_H + \rbr{1 + \frac{d}{H}}\sumt (\optV(s_t) - Q_t(s_t, a_t))_+$.
Now note that by \pref{prop:optimism}, the Bellman optimality equation $\optV(s_t) = \min_a \optQ(s_t,a)$, and the fact $Q_t(s_t, a_t) = \min_a Q_t(s_t,a)$ (by the definition of $a_t$), the arguments within the clipping operation $(\cdot)_+$ are all non-negative and thus the clipping can be removed.
Rearranging terms then gives
\begin{align*}
\sumt \optQ(s_t, a_t) - \optV(s_t) &\leq \xi_H + \frac{d}{H}\sumt (\optV(s_t) - Q_t(s_t, a_t)) \\
&\leq \xi_H + \frac{d}{H}\sumt (\optQ(s_t, a_t) - Q_t(s_t, a_t)). \tag{optimality of $\optV$}
\end{align*}	
It remains to bound the last term using the finite-horizon approximation $\optQ_H$ as a proxy:
\begin{align*}
\sumt (\optQ(s_t, a_t) - Q_t(s_t, a_t)) 
&= \sumt (\optQ(s_t, a_t) - \optQ_H(s_t, a_t) + \optQ_H(s_t, a_t)  - Q_t(s_t, a_t)) \\
& = \bigO{T\B\beta + H\xi_H},
\end{align*}
where the last step uses \pref{lem:loop free approx} and \pref{lem:Q_H_recursion}.
Importantly, this term is finally scaled by $d/H$,
which, together with the fact $\frac{T\B}{H} \leq \cmin T \leq C_K$, proves the claimed bound.
\end{proof}

Readers familiar with the literature might already recognize the term $\sumt \optQ(s_t, a_t) - \optV(s_t)$ considered in \pref{lem:Q_star_recursion}, which is closely related to the regret.
Indeed, with this lemma, we can conclude a regret bound for our generic algorithm.

\begin{theorem}\label{thm:main_regret} 
For any $\beta \in (0,1)$, if $H\geq\frac{4\B}{\cmin}\ln(2/\beta)+1$, then \pref{alg:template} ensures (with high probability) $R_K = \tilO{\sqrt{\B C_K} + \B + \beta C_K + \xi_H}$.
\end{theorem}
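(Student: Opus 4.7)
My plan is to decompose $R_K$ via the Bellman optimality equation into three pieces, apply \pref{lem:Q_star_recursion} to the piece it already controls, and handle the two remaining pieces, which are martingale sums, by Bernstein-style concentration.

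The decomposition starts from the per-episode telescoping identity $K\optV(\sinit)=\sumt[\optV(s_t)-\optV(s_t')]$, which holds because within each episode this sum collapses to $\optV(\sinit)-\optV(g)=\optV(\sinit)$. Substituting into $R_K=C_K-K\optV(\sinit)$, adding and subtracting $c(s_t,a_t)$ and $P_{s_t,a_t}\optV$, and using $\optQ(s_t,a_t)=c(s_t,a_t)+P_{s_t,a_t}\optV$, I would obtain
\[
R_K=\sumt[c_t-c(s_t,a_t)]+\sumt[\optQ(s_t,a_t)-\optV(s_t)]+\sumt[\optV(s_t')-P_{s_t,a_t}\optV].
\]
Since $\optV(s_t)=\min_a\optQ(s_t,a)\le\optQ(s_t,a_t)$, the middle sum has nonnegative summands, and the hypotheses of the theorem are precisely those of \pref{lem:Q_star_recursion}, which bounds it by $\bigO{\beta C_K+\xi_H}$.

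Next I would bound the two martingales. The cost-noise increments lie in $[-1,1]$ with conditional second moment at most $c(s_t,a_t)$ (since $c_t\in[0,1]$ implies $c_t^2\le c_t$); a Freedman/Bernstein bound, combined with a self-bounding step to absorb $\sumt c(s_t,a_t)$ into $O(C_K)$, gives $\tilO{\sqrt{C_K}+1}$ for the cost martingale. The transition-noise increments lie in $[-\B,\B]$ with conditional variance $\fV(P_{s_t,a_t},\optV)$, and the crucial step is to show $\sumt\fV(P_{s_t,a_t},\optV)=\tilO{\B C_K}$ up to lower-order terms. To this end I would start from the pointwise inequality $\optV(s)^2-(P_{s,a}\optV)^2\le 2\B c(s,a)$ (which follows from $\optV(s)-P_{s,a}\optV\le c(s,a)$ and $\optV(s)+P_{s,a}\optV\le 2\B$), giving
\[
\sumt\fV(P_{s_t,a_t},\optV)\le 2\B\sumt c(s_t,a_t)+\sumt\bigl[P_{s_t,a_t}\optV^2-\optV(s_t)^2\bigr],
\]
and then write the last sum as a secondary martingale $\sumt[P_{s_t,a_t}\optV^2-\optV(s_t')^2]$ plus $\sumt[\optV(s_t')^2-\optV(s_t)^2]=-K\optV(\sinit)^2\le 0$ (same telescoping as before). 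A Bernstein bound on the secondary martingale closes a short recursion and yields the desired total-variance estimate, after which Bernstein's inequality gives the transition-martingale contribution $\tilO{\sqrt{\B C_K}+\B}$.

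Summing the three pieces then produces the claimed bound $R_K=\tilO{\sqrt{\B C_K}+\B+\beta C_K+\xi_H}$. The main obstacle I anticipate is the total-variance control of the transition martingale: the secondary $\optV^2$-martingale must be bounded without introducing a spurious factor of $1/\cmin$ that would spoil the near-horizon-free form of the statement. A careful recursion/peeling argument is needed, with any residual $H$-dependent (and hence $1/\cmin$-dependent) terms absorbed into the bonus overhead $\xi_H$, which by design is allowed to carry such dependencies.
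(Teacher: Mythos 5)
Your proposal is correct and follows essentially the same route as the paper: the identical three-term decomposition of $R_K$ (the paper's \pref{eq:reg}), \pref{lem:Q_star_recursion} for the $\optQ-\optV$ sum, and Bernstein/Freedman bounds for the two martingales, where your total-variance argument for $\sumt\fV(P_{s_t,a_t},\optV)$ via the secondary $\optV^2$-martingale is exactly the content of the paper's \pref{lem:var optV} (and your cost-martingale step is \pref{lem:cost diff}). Your worry about a spurious $1/\cmin$ factor does not materialize: the variance bound comes out as $\tilO{\B^2+\B C_K}$ with no $H$-dependence.
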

\begin{proof}
We first decompose the regret as follows, which holds generally for any algorithm:
\begin{align}
	R_K &= \sumk\rbr{\sum_{i=1}^{I_k} c^k_i - \optV(s^k_1)} \notag\\ 
	&\leq \sumk\sum_{i=1}^{I_k} \rbr{c^k_i - \optV(s^k_i) + \optV(s^k_{i+1})}= \sumt (c_t - \optV(s_t) + \optV(s'_t))\notag \\
	&= \sumt (c_t - c(s_t, a_t)) + \sumt (\optV(s'_t) - P_{s_t, a_t}\optV) + \sumt (\optQ(s_t, a_t) - \optV(s_t)). \label{eq:reg}
\end{align}
The first and the second term are the sum of a martingale difference sequence (since $s_t'$ is drawn from $P_{s_t, a_t}$) and can be bounded by $\tilO{\sqrt{C_K}}$ and $\tilO{\sqrt{\B C_K} + \B}$ respectively using concentration inequalities; see \pref{lem:cost diff}, \pref{lem:any interval freedman}, and \pref{lem:var optV}.
The third term can be bounded using \pref{lem:Q_star_recursion} directly, which finishes the proof.
\end{proof}

To get a sense of the regret bound in \pref{thm:main_regret},
first note that since $1/\beta$ only appears in a logarithmic term of the required lower bound of $H$, one can pick $\beta$ to be small enough so that the term $\beta C_K$ is dominated by others.
Moreover, if $\xi_H$ is $\tilo{\sqrt{\B SAC_K}}$ plus some lower-order term $\rho_H$ (which as mentioned is the case for our algorithms),
then by solving a quadratic of $\sqrt{C_K}$, the regret bound of \pref{thm:main_regret} implies $R_K = \tilo{\B\sqrt{SAK} + \rho_H}$,
which is minimax optimal (ignoring $\rho_H$)!

Based on this analytical technique, it remains to design algorithms satisfying the two required properties. 
In the following sections, we provide two such examples, leading to the first model-free SSP algorithm and an improved model-based SSP algorithm.

\section{The First Model-free Algorithm: \mf}
\label{sec:mf}

\setcounter{AlgoLine}{0}
\begin{algorithm}[t]
	\caption{\mf}
	\label{alg:Q}
	
	
	\textbf{Parameters:} horizon $H$, threshold $\thetastar$, and failure probability $\delta \in (0,1)$.
	
	\textbf{Define:} $\Lstar=\{E_j\}_{j\in\fN^+}$ where $E_j=\sum_{i=1}^je_i$, 
	$e_1=H$ and $e_{j+1}=\floor{(1+1/H)e_j}$.
	
	\textbf{Initialize:} $t \leftarrow 0$, $s_1\leftarrow \sinit$, $B\leftarrow1$, for all $(s, a), N(s, a)\leftarrow 0, M(s, a)\leftarrow 0$.
	
	\textbf{Initialize:} for all $(s, a), Q(s, a)\leftarrow 0, V(s)\leftarrow 0, \refV(s)\leftarrow V(s), \hatC(s, a)\leftarrow 0$.
	
	\textbf{Initialize:} for all $(s, a), \refmu(s, a)\leftarrow 0, \refsigma(s, a)\leftarrow 0, \mu(s, a)\leftarrow 0, \sigma(s, a)\leftarrow 0$, $v(s, a)\leftarrow 0$.
	
	\For{$k=1,\ldots,K$}{
	
	     \MyRepeat{	
              Increment time step $t\overset{+}{\leftarrow}1$.
              
			Take action $a_t= \argmin_aQ(s_t, a)$, suffer cost $c_t$, transit to and observe $s'_t$. 
			
			\nl  Increment visitation counters: $n=N(s_t, a_t)\overset{+}{\leftarrow} 1, m=M(s_t, a_t) \overset{+}{\leftarrow} 1$. 
			
			\nl  Update global accumulators: $\refmu(s_t, a_t) \overset{+}{\leftarrow} \refV(s'_t), \; \refsigma(s_t, a_t)\overset{+}{\leftarrow} \refV(s'_t)^2$, \; $\hatC(s_t, a_t)\overset{+}{\leftarrow} c_t$. \label{line:global_accum}
			
			\nl  Update local accumulators: $v(s_t, a_t)\overset{+}{\leftarrow} V(s'_t), \; \mu(s_t, a_t) \overset{+}{\leftarrow} V(s'_t) - \refV(s'_t), \; \sigma(s_t, a_t) \overset{+}{\leftarrow} (V(s'_t) - \refV(s'_t))^2$. \label{line:local_accum}

			
			\nl \If{$n \in \Lstar$}{ \label{line:update call}
			     \nl  Compute $\iota\leftarrow 256\ln^6(4SA\B^8n^5/\delta)$, cost estimator $\hatc=\frac{\hatC(s_t, a_t)}{n}$, bonuses $b' \leftarrow 2\sqrt{\frac{B^2\iota}{m}} + \sqrt{\frac{\hatc\iota}{n}} + \frac{\iota}{n}$ and $b \leftarrow$
			     \[ \label{line:compute}
 					\sqrt{\frac{\nicefrac{\refsigma(s_t,a_t)}{n}-(\nicefrac{\refmu(s_t,a_t)}{n})^2}{n}\iota} + \sqrt{\frac{\nicefrac{\sigma(s_t,a_t)}{m}-(\nicefrac{\mu(s_t,a_t)}{m})^2}{m}\iota} + \rbr{\frac{4B}{n} + \frac{3B}{m}}\iota + \sqrt{\frac{\hatc\iota}{n}}.
\] \label{line:bonus}
			
				\nl $Q(s_t, a_t)\leftarrow \max\left\{\hatc + \frac{v(s_t, a_t)}{m} -b', Q(s_t, a_t)\right\}$.\label{line:update mf1}
				
				\nl $Q(s_t, a_t)\leftarrow \max\left\{\hatc + \frac{\refmu(s_t, a_t)}{n} + \frac{\mu(s_t, a_t)}{m} - b, Q(s_t, a_t)\right\}$.\label{line:update mf2}
			
				\nl  $V(s_t)\leftarrow \min_a Q(s_t, a)$. \label{line:update V}
				
				\nl \lIf{$V(s_t)>B$}{\label{line:B}
				$B \leftarrow 2V(s_t)$.
			}
				
				\nl  Reset local accumulators: $v(s_t, a_t)\leftarrow 0, \mu(s_t, a_t)\leftarrow 0, \sigma(s_t, a_t)\leftarrow 0, M(s_t, a_t)\leftarrow 0$. \label{line:reset_local}
			}
						
			\nl \lIf{$\sum_aN(s_t, a)$ is a power of two not larger than $\thetastar$}{\label{line:update refV}
				$\refV(s_t)\leftarrow V(s_t)$.
			}

			 \lIf{$s_t' \neq g$}{$s_{t+1}\leftarrow s'_t$; \textbf{else} $s_{t+1}\leftarrow \sinit$, \textbf{break}.} 
		}		
	}
\end{algorithm}

In this section, we present a model-free algorithm (the first in the literature) called \mf that falls into our generic template and satisfies the required properties.
It is largely inspired by the state-of-the-art model-free algorithm \textsc{UCB-Advantage}~\citep{zhang2020almost} for the finite-horizon problem.
The pseudocode is shown in \pref{alg:Q}, with only the lines instantiating the update rule of the $Q$ estimates numbered.
Importantly, the space complexity of this algorithm is only $\bigO{SA}$ since we do not estimate the transition directly or conduct explicit finite-horizon reduction,
and the time complexity is only $\bigO{1}$ in each step.

Specifically, for each state-action pair $(s, a)$, we divide the samples received when visiting $(s,a)$ into consecutive stages of exponentially increasing length, and only update $Q(s, a)$ at the end of a stage.
The number of samples $e_j$ in stage $j$ is defined through $e_1=H$ and $e_{j+1}=\floor{(1+1/H)e_j}$ for some parameter $H$.
Further define $\Lstar=\{E_j\}_{j\in\fN^+}$ with $E_j=\sum_{i=1}^je_i$, which contains all the indices indicating the end of some stage.
As mentioned, the algorithm only updates $Q(s,a)$ when the total number of visits to $(s,a)$ falls into the set $\Lstar$ (\pref{line:update call}).
The algorithm also maintains an estimate $V$ for $\optV$, which always satisfies $V(s) = \min_a Q(s,a)$ (\pref{line:update V}),
and importantly another reference value function $\refV$ whose role and update rule are to be discussed later.

In addition, some local and global accumulators are maintained in the algorithm.
Local accumulators only store information related to the current stage.
These include: $M(s,a)$, the number of visits to $(s,a)$ within the current stage;
$v(s,a)$, the cumulative value of $V(s')$ within the current stage, where $s'$ represents the next state after each visit to $(s,a)$; and finally
$\mu(s,a)$ and $\sigma(s,a)$, the cumulative values of $V(s')-\refV(s')$ and its square respectively within the current stage (\pref{line:local_accum}).
These local accumulators are reset to zero at the end of each stage (\pref{line:reset_local}).

On the other hand, global accumulators store information related to all stages and are never reset.
These include: $N(s,a)$, the number of visits to $(s,a)$ from the beginning;
$\hatC(s, a)$, total cost incurs at $(s, a)$ from the beginning;
and $\refmu(s,a)$ and $\refsigma(s,a)$, the cumulative value of $\refV(s')$ and its square respectively from the beginning, where again $s'$ represents the next state after each visit to $(s,a)$ (\pref{line:global_accum}).

We are now ready to describe the update rule of $Q$.
The first update, \pref{line:update mf1}, is intuitively based on the equality $\optQ(s,a) = c(s,a) + P_{s,a}\optV$ and uses $v(s,a)/M(s,a)$ as an estimate for $P_{s,a}\optV$ together with a (negative) bonus $b'$ derived from Azuma's inequality (\pref{line:bonus}).
As mentioned, the bonus is necessary to ensure \pref{prop:optimism} (optimism) so that $Q$ is always a lower confidence bound of $\optQ$ (hence the name ``LCB'').
Note that this update only uses data from the current stage (roughly $1/H$ fraction of the entire data collected so far), which leads to an extra $\sqrt{H}$ factor in the regret.

To address this issue, \citet{zhang2020almost} introduce a variance reduction technique via a reference-advantage decomposition, which we borrow here leading to the second update rule in \pref{line:update mf2}.
This is intuitively based on the decomposition $P_{s,a}\optV=P_{s,a}\refV + P_{s,a}(\optV - \refV)$,
where $P_{s,a}\refV$ is approximated by $\refmu(s,a)/N(s,a)$ and
$P_{s,a}(\optV - \refV)$ is approximated by $\mu(s,a)/M(s,a)$.
In addition, a ``variance-aware'' bonus term $b$ is applied, which is derived from a tighter Freedman's inequality (\pref{line:bonus}).
The reference function $\refV$ is some snapshot of the past value of $V$,
and is guaranteed to be $\bigo{\cmin}$ close to $\optV$ on a particular state as long as the number of visits to this state exceeds some threshold $\thetastar = \tilO{\B^2H^3SA/\cmin^2}$ (\pref{line:update refV}).
Overall, this second update rule not only removes the extra $\sqrt{H}$ factor as in~\citep{zhang2020almost},
but also turns some terms of order $\tilo{\sqrt{T}}$ into $\tilo{\sqrt{C_K}}$ in our context, which is important for obtaining the optimal regret.

Despite the similarity, we emphasize several key differences between our algorithm and that of \citep{zhang2020almost}.
First, \citep{zhang2020almost} maintains a different $Q$ estimate for each step of an episode (which is natural for a finite-horizon problem), while we only maintain one $Q$ estimate (which is natural for SSP).
Second, we update the reference function $\refV(s)$ whenever the number of visits to $s$ doubles (while still below the threshold $\thetastar$; see \pref{line:update refV}),
instead of only updating it once as in \citep{zhang2020almost}. 
We show in \pref{lem:refV} that this helps reduce the sample complexity and leads to a smaller lower-order term in the regret.
Third, since there is no apriori known upper bound on $V$ (unlike the finite-horizon setting), we maintain an empirical upper bound $B$ (in a doubling manner) such that $V(s)\leq B\leq 2\B$ (\pref{line:B}), which is further used in computing the bonus terms $b$ and $b'$.
This is important for eventually developing a parameter-free algorithm.

In \pref{app:mf}, we show that \pref{alg:Q} indeed satisfies the two required properties.

\begin{theorem}\label{thm:mf_properties}
Let $H=\ceil{\frac{4\B}{\cmin}\ln(\frac{2}{\beta})+1}_2$ for $\beta=\frac{\cmin}{2\B^2 SAK}$ and $\thetastar = \tilO{\frac{\B^2H^3SA}{\cmin^2}}$ be defined in \pref{lem:refV},
then \pref{alg:Q} satisfies \pref{prop:optimism} and \pref{prop:recursion} with $d = 3$ and $\xi_H = \tilO{\sqrt{\B SAC_K} + \frac{\B^2H^3S^2A}{\cmin}}$.
\end{theorem}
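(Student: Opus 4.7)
The plan is to establish the two properties via induction and concentration, then collect all bonus terms into $\xi_H$. For \pref{prop:optimism}, I induct over $t$. The base case is immediate since $Q \equiv 0 \leq \optQ$. Each update in \pref{line:update mf1} or \pref{line:update mf2} takes a max with the previous value, so it suffices to check that each candidate is a valid lower confidence bound for $\optQ(s,a) = c(s,a) + P_{s,a}\optV$. The candidate in \pref{line:update mf1} is $\hatc + v(s,a)/m - b'$; by the inductive hypothesis $V\leq \optV$ and $B \leq 2\B$ (maintained by \pref{line:B}), the Azuma-Hoeffding bonus $b'$ dominates the cost estimation error and the deviation of the empirical mean of $V(s_i')$ from $P_{s,a}V$. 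The candidate in \pref{line:update mf2} applies the reference-advantage decomposition $P_{s,a}\optV = P_{s,a}\refV + P_{s,a}(\optV - \refV)$ and uses Freedman's inequality, with the empirical variances of $\refV$ and $\Vref - \refV$ appearing in $b$ serving as Bernstein variance proxies.

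For \pref{prop:recursion}, I fix $t$ and expand $Q_t(s_t, a_t)$ through whichever update rule last set it. For the sharper second update, this gives
\[
\Qref(s_t, a_t) - Q_t(s_t, a_t) \leq (c - \hatc) + \left(P\refV - \frac{\refmu}{n}\right) + \left(P(\Vref - \refV) - \frac{\mu}{m}\right) + b,
\]
with $P = P_{s_t, a_t}$. The three parenthesized quantities are martingale differences whose conditional variances match the empirical variances in $b$, so Freedman's inequality shows they are absorbed into $b$ up to logarithmic factors. The residual
$(\Vref - \refV)(s_i') - (V_{t_i}(s_i') - \refV(s_i')) = \Vref(s_i') - V_{t_i}(s_i')$
is summed over the samples $i$ in the previous stage for $(s_t, a_t)$. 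The stage-length recursion $e_{j+1} \leq (1+1/H)e_j$ implies that each sample index contributes to the $Q$-estimate during at most $(1+1/H)e_j$ subsequent time steps; re-indexing the double sum by sample yields the amplification factor $1 + 1/H$ in front of $\sumt (\Vref(s_t') - V_t(s_t'))$. A final Azuma shift converts $\Vref(s_t') - V_t(s_t')$ into $(\Vref(s_{t+1}) - Q_{t+1}(s_{t+1}, a_{t+1}))_+$ plus martingale error absorbed into $\xi_H$, and an episode-boundary telescoping handles the index shift, yielding the recursion for an absolute constant $d$. The case $\Qref = \optQ, \Vref = \optV$ follows from the identical decomposition since neither the bonus nor the concentration step depends on the choice of reference pair.

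The overhead $\xi_H$ aggregates three types of terms: (i) the summed bonuses, which by Cauchy-Schwarz, pigeonhole over stages, and the variance bound of \pref{lem:var optV} contribute the leading-order $\tilO{\sqrt{\B S A C_K}}$---once the empirical variance of $\refV$ in $b$ is transferred to the true variance of $\optV$ along the trajectory; (ii) the approximation error $\|\refV - \optV\|_\infty$ on states visited fewer than $\thetastar$ times, controlled via \pref{lem:refV} and the doubling refresh in \pref{line:update refV}, aggregating to $\tilO{\B^2 H^3 S^2 A/\cmin}$; and (iii) the $\tilO{B\iota/n + B\iota/m}$ constant-in-$n$ portions of the bonuses, which harmonic-sum to a lower-order term. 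The main obstacle is item (i): transferring the empirical variance of $\refV$ to the true variance of $\optV$ (so that the bonus sum is $\sqrt{\B S A C_K}$ rather than $\sqrt{\B S A T}$) requires $\refV$ to be close enough to $\optV$ on well-visited states, which is precisely why the doubling refresh below $\thetastar$ (a refinement over the one-shot reference update in \citep{zhang2020almost}) is essential, and the SSP-specific absence of an a priori episode length forces this variance accounting against $C_K$, dictating the form of $\thetastar$.
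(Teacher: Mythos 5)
Your proposal follows the same architecture as the paper's proof: optimism by induction with Freedman-type bonuses (the paper's \pref{lem:mf optimistic}), the recursion via the reference-advantage expansion with stage re-indexing (\pref{lem:stage sum}, \pref{lem:p2a}), and the three-way accounting of $\xi_H$ into bonus sums transferred to the variance of $\optV$ (\pref{lem:sum bt mf}, \pref{lem:var optV}), reference-drift cost from the doubling refresh (\pref{lem:refV}, \pref{lem:refV diff}), and harmonic lower-order terms. At that level the sketch is sound.

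There is, however, one step where the argument as you state it would fail. After re-indexing the double sum over samples, what you obtain is $(1+\frac{1}{H})\sum_t P_{s_t,a_t}(\Vref - V_t)_+$, a sum of \emph{conditional expectations}; to close the recursion against $\sum_t(\Vref(s_t)-V_t(s_t))_+$ you must convert these to realized values at $s'_t$. You describe this conversion as an Azuma shift whose ``martingale error is absorbed into $\xi_H$,'' but a plain additive Azuma/Freedman bound here costs $\tilO{\B\sqrt{T}}$, and since $T$ can be as large as $C_K/\cmin$ this is $\tilO{\sqrt{\B^2 C_K/\cmin}}$, which is \emph{not} dominated by $\sqrt{\B SAC_K}$ (nor absorbable into the lower-order term without picking up a $\text{poly}(K)$ factor). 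The paper instead uses the multiplicative form of Freedman's inequality (\pref{lem:e2r} with $\alpha=H$), $\sum_t \E[X_t\mid\calF_{t-1}] \le (1+\frac{1}{H})\sum_t X_t + \tilO{\B H}$, paying only an additive $\tilO{\B H}$ at the price of a second multiplicative $(1+\frac{1}{H})$; combined with the factor from the stage re-indexing this gives $(1+\frac{1}{H})^2\le 1+\frac{3}{H}$, which is precisely where $d=3$ comes from — your sketch accounts for only one such factor and leaves $d$ unjustified. A smaller imprecision: the global accumulator averages $\refV_{\lti}$ while the local one averages $\refV_{\clti}$ at different times, so the reference terms do not cancel pointwise as in your displayed identity; the surviving drift $\frac{1}{n_t}\sum_i P_{s_t,a_t}(\RefV-\refV_{\lti})$ is exactly what produces the $\tilO{\B^2H^3S^2A/\cmin}$ term you correctly attribute to \pref{lem:refV}.
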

\begin{proof}[Proof Sketch]
	The proof of \pref{prop:optimism} largely follows the analysis of \citep[Proposition 4]{zhang2020almost} for the designed bonuses.
	To prove \pref{prop:recursion}, similarly to \citep{zhang2020almost} we can show:
	\begin{align*}
		\sumt (\Qref(s_t, a_t) - Q_t(s_t, a_t))_+ \lesssim \xi_H + \sumt \frac{1}{m_t}\sum_{i=1}^{m_t}P_{s_{\clti}, a_{\clti}}(\Vref - V_{\clti})_+,
	\end{align*}
	where $m_t$ is the value of $m$ used in computing $Q_t(s_t, a_t)$, and $\clti$ is the $i$-th time step the agent visits $(s_t, a_t)$ among those $m_t$ steps.
	Now it suffices to show that $\sumt \frac{1}{m_t}\sum_{i=1}^{m_t}P_{s_{\clti}, a_{\clti}}(\Vref - V_{\clti})_+\lesssim (1+\frac{3}{H})\sumt (\Vref(s_t) - V_t(s_t))_+$, which is proven in \pref{lem:p2a}.
\end{proof}

As a direct corollary of \pref{thm:main_regret}, we arrive at the following regret guarantee.

\begin{theorem}
	\label{thm:Q}
With the same parameters as in \pref{thm:mf_properties}, 
with probability at least $1-60\delta$, \pref{alg:Q} ensures $R_K = \tilO{\B\sqrt{SAK} + \frac{\B^5S^2A}{\cmin^4} }$.
\end{theorem}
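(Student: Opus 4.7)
The plan is to apply \pref{thm:main_regret} directly, using \pref{thm:mf_properties} to verify its hypotheses, and then convert the resulting bound (which involves $C_K$) into one depending only on $K$ via a standard quadratic-solve that relies on the crude deterministic inequality $C_K \leq R_K + \B K$.

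First I would invoke \pref{thm:mf_properties} to obtain that \pref{alg:Q} satisfies both \pref{prop:optimism} and \pref{prop:recursion} with $d=3$ and bonus overhead $\xi_H = \tilO{\sqrt{\B SAC_K} + \B^2 H^3 S^2A/\cmin}$. For the prescribed $\beta = \cmin/(2\B^2 SAK)$ and $H = \ceil{4\B\ln(2/\beta)/\cmin + 1}_2$, the choice of $H$ satisfies $H = \tilO{\B/\cmin}$, hence $H^3 = \tilO{\B^3/\cmin^3}$ and therefore $\xi_H = \tilO{\sqrt{\B SAC_K} + \B^5 S^2A/\cmin^4}$. Plugging this into \pref{thm:main_regret} yields
\[
R_K = \tilO{\sqrt{\B C_K} + \B + \beta C_K + \sqrt{\B SAC_K} + \frac{\B^5 S^2 A}{\cmin^4}}.
\]
The $\sqrt{\B C_K}$ term is dominated by $\sqrt{\B SAC_K}$. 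To handle $\beta C_K$, I would use $C_K \leq R_K + \B K$ together with $\beta \leq 1/(2\B SAK)$ (which holds since $\cmin \leq \B$), giving $\beta C_K \leq R_K/(2\B SA) + 1/(2SA)$, which can be absorbed without affecting the final rate.

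Next I would substitute $C_K \leq R_K + \B K$ into $\sqrt{\B SAC_K}$, producing a quadratic inequality of the form $R_K \leq c_1 \sqrt{\B SA \cdot R_K} + c_2 \B\sqrt{SAK} + c_3 \B^5 S^2A/\cmin^4$ up to polylogarithmic factors. Applying AM-GM to split $\sqrt{\B SA R_K} \leq R_K/2 + \B SA$ and rearranging yields
\[
R_K = \tilO{\B\sqrt{SAK} + \B SA + \frac{\B^5 S^2 A}{\cmin^4}} = \tilO{\B\sqrt{SAK} + \frac{\B^5 S^2 A}{\cmin^4}},
\]
where the last equality holds because $\B SA$ is dominated by $\B\sqrt{SAK}$ whenever $K \geq SA$, and otherwise is dominated by $\B^5 S^2 A/\cmin^4$ since $\B/\cmin \geq 1$. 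The probability $1-60\delta$ arises from a union bound over the high-probability events used in \pref{thm:mf_properties} together with the martingale concentration steps (Azuma- and Freedman-type inequalities on the first two sums in the regret decomposition inside the proof of \pref{thm:main_regret}).

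Overall, I expect no real conceptual obstacle: this is essentially bookkeeping on top of the two cited theorems, since the model-free specific technical content — establishing optimism under the variance-aware bonus and extracting the recursive bound from the reference-advantage decomposition — is already encapsulated in \pref{thm:mf_properties}. The only points requiring care are tracking the polylogarithmic factors hidden in $\tilO{\cdot}$ (so that the logarithmic dependence on $1/\beta$ only inflates $H$ polylogarithmically in $\B, S, A, K, 1/\cmin$) and verifying that the absorption of $\beta C_K$ does not cost more than a constant factor in the regret.
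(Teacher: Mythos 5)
Your proposal is correct and follows essentially the same route as the paper: invoke \pref{thm:mf_properties} to verify the two properties with $\xi_H=\tilO{\sqrt{\B SAC_K}+\B^2H^3S^2A/\cmin}$, feed this into \pref{thm:main_regret}, and then eliminate $C_K$ via the identity $C_K=R_K+K\optV(\sinit)\leq R_K+\B K$ and a quadratic solve. The only cosmetic difference is that the paper first solves the quadratic in $\sqrt{C_K}$ to get $C_K=\tilO{\B K}$ and substitutes back, while you substitute $C_K\leq R_K+\B K$ and solve for $R_K$ directly; these are equivalent.
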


We make several remarks on our results.
First, while \pref{alg:Q} requires setting the two parameters $H$ and $\thetastar$ in terms of $\B$ to obtain the claimed regret bound,
one can in fact achieve the exact same bound without knowing $\B$ by slightly changing the algorithm.
The high level idea is to first apply the doubling trick from \cite{tarbouriech2021stochastic} to determine an upper bound on $\B$, then try logarithmically many different values of $H$ and $\thetastar$ simultaneously, each leading to a different update rule for $Q$ and $\refV$.
This only increases the time and space complexity by a logarithmic factor, without hurting the regret (up to log factors). Details are deferred to \pref{sec:parameter free}.

Second, as mentioned in \pref{sec:prelim}, when $\cmin$ is unknown or $\cmin=0$, one can clip all observed costs to $\epsilon$ if they are below $\epsilon>0$, which introduces an additive regret term of order $\bigO{\epsilon K}$. 
By picking $\epsilon$ to be of order $K^{-1/5}$, our bound becomes $\tilO{K^{4/5}}$ ignoring other parameters.
Although most existing works suffer the same issue, this is certainly undesirable, and our second algorithm to be introduced in the next section completely avoids this issue by having only logarithmic dependence on $1/\cmin$. 

Finally, we point out that, just as in the finite-horizon case, the variance reduction technique is crucial for obtaining the minimax optimal regret.
For example, if one instead uses an update rule similar to the (suboptimal) Q-learning  algorithm of~\citep{jin2018q}, then this is essentially equivalent to removing the second update (\pref{line:update mf2}) of our algorithm.
While this still satisfies \pref{prop:recursion}, the bonus overhead $\xi_H$ would be $\sqrt{H}$ times larger, resulting in a suboptimal leading term in the regret.

\section{An Optimal and Efficient Model-based Algorithm: \mb}
\label{sec:mb}

In this section, we propose a simple model-based algorithm called \mb (Sparse Value Iteration for SSP) following our template, which not only achieves the minimax optimal regret even when $\cmin=0$, matching the state-of-the-art by a recent work~\citep{tarbouriech2021stochastic}, but also admits highly sparse updates, making it more efficient than all existing model-based algorithms.
The pseudocode is in \pref{alg:SVI}, again with only the lines instantiating the update rule for $Q$ numbered.

Similar to \pref{alg:Q}, \mb divides samples of each $(s, a)$ into consecutive stages of (roughly) exponentially increasing length, and only update $Q(s, a)$ at the end of a stage (\pref{line:update scheme mb}).
However, the number of samples $e_j$ in stage $j$ is defined slightly differently through 
$e_j=\floor{\tile_j}, \tile_1=1$, and $\tile_{j+1}=\tile_j + \frac{1}{H}e_j$ for some parameter $H$.
In the long run, this is almost the same as the scheme used in \pref{alg:Q}, but importantly, it forces more frequent updates at the beginning --- for example, one can verify that $e_1 = \cdots = e_H = 1$, meaning that $Q(s,a)$ is updated every time $(s,a)$ is visited for the first $H$ visits.
This slight difference turns out to be important to ensure that the lower-order term in the regret has no $\text{poly}(H)$ dependence, as shown in \pref{lem:update scheme} and further discussed in \pref{rem:update constant}.
More intuition on the design of this update scheme is provided in \pref{sec:properties update mb}.

The update rule for $Q$ is very simple (\pref{line:update mb}).
It is again based on the equality $\optQ(s,a) = c(s,a) + P_{s,a}\optV$,
but this time uses $\bar{P}_{s,a}V - b$ as an approximation for $P_{s,a}\optV$,
where $\bar{P}_{s,a}$ is the empirical transition directly calculated from two counters $n(s,a)$ and $n(s,a,s')$ (number of visits to $(s,a)$ and $(s,a,s')$ respectively),
$V$ is such that $V(s) = \min_a Q(s,a)$,
and $b$ is a special bonus term (\pref{line:bonus mb}) adopted from \citep{tarbouriech2021stochastic,zhang2020reinforcement} which ensures that $Q$ is an optimistic estimate of $\optQ$ and also helps remove $\text{poly}(H)$ dependence in the regret.

\setcounter{AlgoLine}{0}
\begin{algorithm}[t]
	\caption{\mb}
	\label{alg:SVI}
	
	\textbf{Parameters:} horizon $H$, value function upper bound $B$, and failure probability $\delta \in (0,1)$.
	
	\textbf{Define:} $\calL=\{E_j\}_{j\in\fN^+}$, where $E_j=\sum_{i=1}^je_i, e_j=\floor{\tile_j}$, and $\tile_1=1, \tile_{j+1}=\tile_j + \frac{1}{H}e_j$.
	
	\textbf{Initialize:} $t\leftarrow 0, s_1\leftarrow \sinit$. 
	
	\textbf{Initialize:} for all $(s, a, s'), n(s, a, s')\leftarrow 0, n(s, a)\leftarrow 0$, $Q(s, a)\leftarrow 0$, $V(s)\leftarrow 0$, $\hatC(s, a)\leftarrow 0$. 
	
	\For{$k=1,\ldots,K$}{
		
		\MyRepeat{
			Increment time step $t\overset{+}{\leftarrow}1$.
			
			Take action $a_t= \argmin_aQ(s_t, a)$, suffer cost $c_t$, transit to and observe $s'_t$.
			
			\nl Update accumulators: $n=n(s_t, a_t)\overset{+}{\leftarrow} 1, n(s_t, a_t, s'_t)\overset{+}{\leftarrow} 1$, $\hatC(s_t, a_t)\overset{+}{\leftarrow} c_t$.
						
			\nl \If{$n \in\calL$}{ \label{line:update scheme mb}
			\nl Update empirical transition: $\P_{s_t, a_t}(s')\leftarrow\frac{n(s_t, a_t, s')}{n}$ for all $s'$. 
			
				\nl Compute $\iota\leftarrow 20\ln\frac{2SAn}{\delta}$, cost estimator $\hatc\leftarrow\frac{\hatC(s, a)}{n}$, and bonus $b\leftarrow \max\Big\{7\sqrt{\frac{\fV(\bar{P}_{s_t, a_t}, V)\iota}{n}}, \frac{49B\iota}{n}\Big\}+\sqrt{\frac{\hatc\iota}{n}}$. \label{line:bonus mb}
			
				\nl $Q(s_t, a_t) \leftarrow \max\{\hatc + \P_{s_t, a_t}V - b, Q(s_t, a_t)\}$. \label{line:update mb}
				
				\nl $V(s_t)\leftarrow \argmin_a Q(s_t, a)$.
			}
			
			\lIf{$s_t' \neq g$}{$s_{t+1}\leftarrow s'_t$; \textbf{else} $s_{t+1}\leftarrow \sinit$, \textbf{break}.} 
		}	
	}
\end{algorithm}

\mb exhibits a unique structure compared to existing algorithms.
In each update, it modifies only one entry of $Q$ (similarly to model-free algorithms), while other model-based algorithms such as~\citep{tarbouriech2021stochastic} perform value iteration for every entry of $Q$ repeatedly until convergence (concrete time complexity comparisons to follow).
We emphasize that our implicit finite-horizon analysis is indeed the key to enable us to derive a regret guarantee for such a sparse value iteration algorithm.
Specifically, in \pref{app:mb}, we show that \mb satisfies the two required properties.


\begin{theorem}
	\label{thm:mb_properties}
	If $B\geq\B$ and $H=\ceil{\frac{4B}{\cmin}\ln(\frac{2}{\beta})+1}_2$ for $\beta=\frac{\cmin}{2B^2 SAK}$,
	then \pref{alg:SVI} satisfies \pref{prop:optimism} and \pref{prop:recursion} with $d = 1$ and $\xi_H = \tilo{\sqrt{\B SAC_K} + BS^2A + \beta C_K}$, where the dependence on $H$ in $\xi_H$ is hidden in logarithmic terms.
\end{theorem}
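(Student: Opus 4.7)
I would verify \pref{prop:optimism} by induction on time and \pref{prop:recursion} by dissecting the update rule. For optimism, the base case $Q_0\equiv 0\leq\optQ$ is trivial and carries $V=\min_aQ\leq\optV$. At an update step, the candidate value $\hatc+\bar{P}_{s_t,a_t}V-b$ must be at most $\optQ(s_t,a_t)=c+P_{s_t,a_t}\optV$; decomposing, it suffices to show
\[
(\hatc-c)+(\bar{P}-P)\optV+\bar{P}(V-\optV)\leq b.
\]
The third term is nonpositive by the inductive hypothesis. The first is bounded by an empirical Bernstein inequality on the bounded cost samples, contributing the $\sqrt{\hatc\iota/n}$ piece of $b$. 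The second is bounded by Bernstein on the empirical transition to give $\sqrt{\fV(P,\optV)\iota/n}+O(B\iota/n)$; the variance-transfer trick of \citet{zhang2020reinforcement,tarbouriech2021stochastic} then exchanges $\fV(P,\optV)$ for the observable $\fV(\bar{P},V)$ at the cost of $O(B\iota/n)$ additional slack, which is precisely absorbed by the fallback branch $49B\iota/n$ in the definition of $b$. A union bound over $(s,a)$ and triggering counts closes Property~1.

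For Property~2, fix a target pair $(\Qref,\Vref)\in\{(\optQ_h,\optV_{h-1})\}_{h=1}^H\cup\{(\optQ,\optV)\}$ and let $\tau(t)$ denote the time of the last update to $Q(s_t,a_t)$, with associated $\bar{P}^\tau,\hatc^\tau,V_\tau,b^\tau$. Because each update is a $\max$, $Q_t(s_t,a_t)\geq\hatc^\tau+\bar{P}^\tau_{s_t,a_t}V_\tau-b^\tau$, and combining with the Bellman identity for $\Qref$,
\[
\Qref(s_t,a_t)-Q_t(s_t,a_t)\leq(c-\hatc^\tau)+(P-\bar{P}^\tau)V_\tau+P_{s_t,a_t}(\Vref-V_\tau)+b^\tau.
\]
After taking positive parts and summing, the first two terms are controlled by Azuma/Bernstein across stages and contribute $\tilo{\sqrt{\B SAC_K}+BS^2A}$. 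The bonus sum $\sumt b^{\tau(t)}$ is handled by the recursive variance bound of \citet{zhang2020reinforcement,tarbouriech2021stochastic}, which uses the law of total variance to relate $\sumt\fV(\bar{P},V)$ to $C_K$ up to an $BS^2A$ additive term. All these merge into $\xi_H$ with at most logarithmic $H$-dependence.

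The \emph{main obstacle} is the residual ``recursion driver'' $\mathcal{R}:=\sumt P_{s_t,a_t}(\Vref-V_{\tau(t)})_+$, which must become $(1+1/H)\sumt(\Vref(s_t)-Q_t(s_t,a_t))_+$ plus lower order. I would proceed in three steps. (i) \emph{Stage-length ratio.} Grouping $\mathcal{R}$ by $(s,a)$ and stage index $j$ gives $\sum_{s,a}\sum_j e_j P_{s,a}(\Vref-V_{\tau_j})_+$. The construction $\tile_{j+1}=\tile_j+e_j/H$ (\pref{lem:update scheme}) yields $e_{j+1}\leq(1+1/H)e_j+O(1)$, and monotonicity of $V$ in $t$ (each $Q$-update is a $\max$) ensures $V_{\tau_{j+1}}(s')\geq V_t(s')$ for all $t$ in stage $j$. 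A stage-shift then gives $\mathcal{R}\leq(1+1/H)\sumt P_{s_t,a_t}(\Vref-V_t)_++\bigO{BSA}$. (ii) \emph{Expectation to realization.} A Freedman martingale bound with conditional variance at most $B\cdot P_{s_t,a_t}(\Vref-V_t)_+$ yields $\sumt P_{s_t,a_t}(\Vref-V_t)_+\leq\sumt(\Vref(s'_t)-V_t(s'_t))_++\tilo{\sqrt{B\mathcal{R}}+B}$, and the self-referential $\sqrt{B\mathcal{R}}$ is absorbed via AM--GM at the cost of a lower-order term in $\xi_H$. (iii) \emph{Reindex.} Pointwise monotonicity gives $(\Vref-V_t)_+(s_{t+1})\leq(\Vref-V_{t+1})_+(s_{t+1})+V_{t+1}(s_{t+1})-V_t(s_{t+1})$, where the jump is nonzero only when $s_{t+1}=s_t$ and an update occurred at $t$; this correction telescopes to $\bigO{SB}$. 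Combined with $V_t(s_t)=Q_t(s_t,a_t)$ from the greedy rule, reindexing $t\mapsto t+1$ yields the recursion with $d=1$.

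Finally, setting $H=\ceil{4B/\cmin\cdot\ln(2/\beta)+1}_2$ and $\beta=\cmin/(2B^2SAK)$ satisfies the hypothesis of \pref{lem:loop free approx} (since $B\geq\B$), and the approximation slack $B\beta$ per step summed over $t\leq T\leq C_K/\cmin$ contributes exactly the $\beta C_K$ piece to $\xi_H$ when the recursion is instantiated at $(\Qref,\Vref)=(\optQ,\optV)$. The most delicate step is (i): the schedule $e_1=\cdots=e_H=1$ at the start is calibrated precisely so that the $(1+1/H)$ ratio propagates without any polynomial $H$-blow-up, ensuring $\xi_H$ carries only logarithmic dependence on $H$.
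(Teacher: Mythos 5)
Your treatment of Property~1 and of step~(i) tracks the paper (optimism via the monotone bonus function of \pref{lem:mvp} as in \pref{lem:optimistic mb}, and the weight-redistribution argument of \pref{lem:update scheme} for the stage shift), but step~(ii) has a genuine gap that breaks the horizon-free claim. You bound the martingale term $\sumt(P_t-\Ind_{s'_t})(\Vref-V_t)_+$ by Freedman with total conditional variance at most $B\sumt P_t(\Vref-V_t)_+$, obtaining a deviation $\tilO{\sqrt{B\mathcal{R}}+B}$, and then absorb $\sqrt{B\mathcal{R}}$ by AM--GM. But to keep the coefficient in front of $\sumt(\Vref(s_t)-V_t(s_t))_+$ at $1+O(1/H)$ --- which \pref{lem:Q_H_recursion} requires, since the recursion is unrolled $H$ times --- the split must be of the form $\sqrt{B\mathcal{R}}\leq\frac{\mathcal{R}}{2H}+\frac{BH}{2}$, which injects an additive $\Omega(BH)=\Omega(B^2/\cmin)$ into $\xi_H$. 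That is polynomial in $H$, contradicting the theorem's claim that $H$ enters $\xi_H$ only logarithmically, and it would propagate a $\mathrm{poly}(1/\cmin)$ lower-order term into the final regret --- exactly what this algorithm is designed to avoid. The paper escapes this by \emph{not} using the crude variance bound: \pref{lem:var recursion} (the recursive $2^d$-th-moment argument adapted from Zhang et al.) shows that the total variance $\sumt\fV(P_t,(\Vref-V_t)_+)$ is controlled by $\B\sumt\rbr{(\Ind_{s_t}-P_t)(\Vref-V_t)}_+$, i.e.\ by the sum of \emph{positive parts of one-step drifts}, and \pref{lem:sum Vref-V_t} bounds that drift sum by bonus and variance terms of order $\tilO{\sqrt{\B SAC_K}+BS^2A+\frac{1}{H}\sumt P_t(\optV-V_t)}$. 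Without this law-of-total-variance step your argument cannot reach the stated $\xi_H$.

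A secondary issue: your account of where $\beta C_K$ comes from does not match the structure of Property~2. For $(\Qref,\Vref)=(\optQ,\optV)$ the Bellman identity is exact, so no approximation slack enters the recursion for that pair. In the paper, $\beta C_K$ arises because \pref{lem:recursion} leaves a residual self-referential term $\sqrt{\frac{\B S^2A}{H}\sumt(\optV(s_t)-V_t(s_t))}$; the proof then bounds $\sumt\optV(s_t)-V_t(s_t)$ separately by unrolling the recursion over $h=H,\dots,1$, invoking \pref{lem:loop free approx} (which is where $\B\beta T$ appears), and solving a quadratic via \pref{lem:quad}; dividing by $H$ and using $\frac{\B}{H}\leq\cmin$ turns $\frac{\B\beta T}{H}$ into $\beta C_K$. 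This bootstrapping step is absent from your plan and is needed even after the variance issue above is repaired.
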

\begin{proof}[Proof Sketch]
	The proof of \pref{prop:optimism} largely follows the analysis of \citep[Lemma 15]{tarbouriech2021stochastic}.
	To prove \pref{prop:recursion}, we first show
$\sumt (\Qref(s_t, a_t) - Q_t(s_t, a_t))_+ \lesssim \xi_H + \sumt P_t(\Vref - V_{\lt})_+$, 
where $\lt$ is the last time step $Q(s_t, a_t)$ is updated.
Then, the remaining main steps are shown below with all details deferred to the corresponding key lemmas:
	\begin{align*}
		\sumt P_t(\Vref - V_{\lt})_+ &\lesssim \rbr{1+\frac{1}{H}}\sumt P_t(\Vref - V_t)_+ \tag{\pref{lem:update scheme}}\\ 
		&\lesssim \rbr{1 + \frac{1}{H}}\sumt (\Vref(s_t) - V_t(s_t))_+ + \rbr{1 + \frac{1}{H}}\sumt (P_t - \Ind_{s'_t})(\Vref - V_t)_+\\
		&\lesssim \rbr{1 + \frac{1}{H}}\sumt (\Vref(s_t) - V_t(s_t))_+ + \xi_H, \tag{\pref{lem:var recursion} and \pref{lem:bound terms}}
	\end{align*}
which completes the proof.
\end{proof}

Again, as a direct corollary of \pref{thm:main_regret}, we arrive at the following regret guarantee.
\begin{theorem}
	\label{thm:mb}
	With the same parameters as in \pref{thm:mb_properties}, with probability at least $1-12\delta$, \pref{alg:SVI} ensures $R_K = \tilo{ \B\sqrt{SAK} + BS^2A}$.
\end{theorem}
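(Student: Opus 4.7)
The plan is to derive \pref{thm:mb} as a direct corollary of \pref{thm:main_regret}, once the two required properties for \pref{alg:SVI} have been established in \pref{thm:mb_properties}. Concretely, I would substitute the specified $\beta = \cmin/(2B^2 SAK)$, the corresponding $H$, and the bound $\xi_H = \tilo{\sqrt{\B SAC_K} + BS^2A + \beta C_K}$ into the conclusion of \pref{thm:main_regret}, which after absorbing $\sqrt{\B C_K}$ into $\sqrt{\B SAC_K}$ and $\B$ into $BS^2A$ yields
\[
R_K = \tilo{\sqrt{\B SAC_K} + BS^2A + \beta C_K}.
\]

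Next, I would convert this $C_K$-dependent bound into one in $K$. Since by definition $R_K = C_K - K\optV(\sinit)$ and $\optV(\sinit) \leq \B$, we have $C_K \leq R_K + \B K$. Plugging this in and using $\sqrt{a+b} \leq \sqrt{a} + \sqrt{b}$, we obtain (up to logarithmic factors)
\[
R_K \lesssim \sqrt{\B SA\, R_K} + \B\sqrt{SAK} + BS^2A + \beta R_K + \beta \B K.
\]

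The step that requires the most care is verifying that the two $\beta$ contributions are harmless. With $\beta = \cmin/(2B^2 SAK)$ and $B \geq \B \geq \cmin$, we get $\beta \B K = \cmin \B /(2B^2 SA) \leq 1/(2SA)$, which is a constant easily absorbed into $BS^2A$; similarly $\beta R_K \leq R_K/(2SAK)$ is negligible and can be absorbed into $R_K$ on the left-hand side. This is the one bookkeeping check where the particular choice of $\beta$ from \pref{thm:mb_properties} matters.

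Finally, I would close the argument by an AM-GM step on $\sqrt{\B SA\, R_K}$: using $\sqrt{\B SA\, R_K} \leq R_K/4 + O(\B SA)$, rearranging absorbs an $R_K/2$ on the left-hand side, and concludes $R_K = \tilo{\B\sqrt{SAK} + BS^2A}$, matching the claim. The overall high-probability guarantee $1-12\delta$ follows from a union bound over the failure events of the concentration inequalities invoked inside \pref{thm:main_regret} (for the two martingale difference terms in \eqref{eq:reg}) together with those behind \pref{prop:optimism} and \pref{prop:recursion} established in \pref{thm:mb_properties}. Thus there is no real technical obstacle beyond careful bookkeeping: the heavy lifting has already been done by \pref{thm:main_regret} and the verification of the two properties.
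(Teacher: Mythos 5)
Your proposal is correct and follows essentially the same route as the paper: plug the $\xi_H$ from \pref{thm:mb_properties} into \pref{thm:main_regret}, then resolve the self-bounding inequality (the paper first extracts $C_K=\tilO{\B K}$ via \pref{lem:quad} and substitutes back, whereas you rearrange directly in terms of $R_K$, which is an equivalent bookkeeping choice). Your check that $\beta\B K$ and $\beta R_K$ are negligible for the specific $\beta=\cmin/(2B^2SAK)$ matches the paper's use of $\beta\leq\cmin/(\B K)$ and $\beta\leq\tfrac12$.
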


Setting $B = \B$, our bound becomes $\tilo{ \B\sqrt{SAK} + \B S^2A}$, which is minimax optimal even when $\cmin$ is unknown or $\cmin = 0$ (this is because the dependence on $1/\cmin$ is only logarithmic, and one can clip all observed costs to $\epsilon$ if they are below $\epsilon=1/K$ in this case without introducing $\text{poly}(K)$ overhead to the regret).
When $\B$ is unknown, we can use the same doubling trick from~\cite{tarbouriech2021stochastic} to obtain almost the same bound (with only the lower-order term increased to $\tilO{\B^3S^3A}$); see \pref{sec:pf mb} for details.\footnote{We note that this doubling trick is in fact also applicable to \pref{alg:Q}. However, the specific approach we propose for this algorithm in \pref{sec:parameter free} is better in the sense that it does not worsen the regret at all.}

\paragraph{Comparison with EB-SSP~\citep{tarbouriech2021stochastic}}
Our regret bounds match exactly the state-of-the-art by \citet{tarbouriech2021stochastic}.
Thanks to the sparse update, however, \mb has a much better time complexity.
Specifically, for \mb, each $(s,a)$ is updated at most $\tilo{H} = \tilo{\nicefrac{\B}{\cmin}}$ times (\pref{lem:update scheme}), and each update takes $\bigo{S}$ time, leading to total complexity $\tilo{\nicefrac{\B S^2A}{\cmin}}$.
On the other hand, for EB-SSP, although each $(s,a)$ only causes $\tilo{1}$ updates,
each update runs value iteration on all entries of $Q$ until convergence, which takes $\tilo{\nicefrac{\B^2S^2}{\cmin^2}}$ iterations (see their Appendix~C) and leads to total complexity $\tilo{\nicefrac{\B^2S^5A}{\cmin^2}}$, much larger than ours.

\paragraph{Comparison with ULCVI~\citep{cohen2021minimax}}
Another recent work by \citet{cohen2021minimax} using explicit finite-horizon approximation also achieves minimax regret but requires the knowledge of some hitting time of the optimal policy.
Without this knowledge, their bound has a large $1/\cmin^4$ dependence in the lower-order term just as our model-free algorithm.
Our results in this section show that implicit finite-horizon approximation has advantage over explicit approximation apart from reducing space complexity:
the former does not necessarily introduce $\text{poly}{(H)}$ dependence even for the lower-order term, while the latter does under the current analysis.

\begin{ack}
	LC thanks Chen-Yu Wei for many helpful discussions.
	HL is supported by NSF Award IIS-1943607 and a Google Faculty Research Award.
	MJ and RJ's research is supported by NSF CCF-1817212, NSF ECCS-1810447 and ONR N00014-20-1-2258 awards.
\end{ack}

\bibliographystyle{plainnat}
\bibliography{ref}

\newpage
\appendix

\section{A Summary of Existing Bounds}\label{app:table}

\renewcommand{\arraystretch}{1.3}
\begin{table*}[t]
	\centering
	\caption{Summary of existing regret minimization algorithms for SSP with their best achievable bounds (assuming necessary prior knowledge). Here, $D, S, A$ are the diameter, number of states, and number of actions of the MDP, $\T$ is the maximum expected hitting time of the optimal policy over all states, $\B$ is the maximum expected costs of the optimal policy over all states, and $K$ is the number of episodes.
	}
	\label{tab:summary}
	\vspace{5pt}
	\begin{tabular}{| c | c | }
		\hline
		 Algorithm & Regret Bound \\
		 \hline
		 UC-SSP~\citep{tarbouriech2020no} & $\tilO{ DS\sqrt{DAK/\cmin} + S^2AD ^2 }$ \\
		 \hline
		 Bernstein-SSP~\citep{cohen2020near} & $\tilO{ \B S\sqrt{AK} + \sqrt{ \B^3S^2A^2/\cmin } }$ \\
		 \hline
		 ULCVI~\citep{cohen2021minimax} & $\tilO{ \B\sqrt{SAK} + \T^4S^2A }$ \\
		 \hline
		 EB-SSP~\citep{tarbouriech2021stochastic} & $\tilO{ \B\sqrt{SAK} + \B S^2A }$ \\
		 \hline
		 \mf \textbf{(Ours)} & $\tilO{ \B\sqrt{SAK}+\B^5S^2A/\cmin^4 }$\\
		 \hline
		 \mb \textbf{(Ours)} & $\tilO{ \B\sqrt{SAK} + \B S^2A }$\\
		 \hline
	\end{tabular}
\end{table*}

A summary of existing regret minimization algorithms for SSP and their regret bounds is shown in \pref{tab:summary}.
Note that although \mf has a larger lower order term depending on $\tilo{1/\cmin^4}$ among the minimax optimal algorithms, it actually nearly matches that of ULCVI when $\T$ is unknown, in which case their algorithm is run with $\T$ replaced by its upper bound $\B/\cmin$.

\paragraph{Time Complexity} 
When $\cmin=0$, the cost perturbation trick is applied (see paragraph ``Assumption on $\cmin$'' in \pref{sec:prelim} for more details) and $1/\cmin$ becomes a $K$-dependent quantity.
This leads to a worse $K$-dependent time complexity for all algorithms in \pref{tab:summary} except ULCVI.
In fact, this seems to be a shared limitation of all algorithms that learns a stationary policy.
On the other hand, when $\T$ is known, ULCVI (which learns a non-stationary policy) gives a better time complexity with no polynomial dependency on $K$.
How to learn a stationary policy while avoiding $K$-dependent time complexity when $\cmin=0$ is an interesting future direction.

\section{Preliminaries for the Appendix}\label{app:prelim}
\paragraph{Extra Notations in Appendix}
Denote by $\Delta_{\calX}$ the simplex over set $\calX$.
For conciseness, throughout the appendix, we use the following notational shorthands:
\begin{itemize}[leftmargin=2em]
  \setlength\itemsep{0em}
\item $\Ind_{s}(s')=\Ind\{s=s'\}$;
\item $P_t=P_{s_t, a_t}$;
\item for a function $f_t: \SA\rightarrow\fR$, we often abuse the notation and use $f_t$ to denote $f_t(s_t, a_t)$ when there is no confusion from the context;
in fact, in \pref{lem:mf optimistic} and \pref{lem:optimistic mb}, we also use $f_t$ to denote $f_t(s,a)$ for a particular $(s,a)$ pair;
\item $\calV_H=\{(\optQ, \optV)\}\cup\{(\optQ_h, \optV_{h-1})\}_{h=1}^H$.
\end{itemize}

Note that for any $(\Qref, \Vref) \in \calV_H$, we have $\Qref(s, a) = c(s, a) + P_{s, a}\Vref$, $\Vref(s)\in[0, \B]$, $\Vref(g)=0$ and $\Vref(s)\leq\min_a\Qref(s, a)$.
Throughout the paper, $\tilO{\cdot}$ also hides dependence on $\ln(1/\delta)$ and $\ln T$ where $\delta\in(0, e^{-1}]$ is some failure probability, and $T$ is a random variable but can be bounded by $\frac{C_K}{\cmin}$ under strictly positive costs. 
We include a summary of most notations in \pref{tab:notations}.



\renewcommand{\arraystretch}{1.5}
\begin{table}[t]
	\caption{Explanation of the notations}
	\centering
	\label{tab:notations}
	\begin{tabular}{| l | l |}
		\hline
		$\beta$ & precision of the implicit finite horizon approximation;\\
		\hline
		$Q_t, V_t$ & accumulators $Q, V$ at the beginning of time step $t$;\\
		\hline
		$\optQ, \optV$ & optimal value functions of the SSP instance;\\
		\hline
		$\optQ_h, \optV_h$ & \makecell[l]{optimal value functions of taking $h$ steps in the SSP instance and then \\ teleporting to the goal state; see \pref{eq:Q_H_def}}\\
		\hline
		$C_K$ & total costs the agent suffers in $K$ episodes;\\
		\hline
		$\refV_t$ & reference value function at the beginning of time step $t$;\\
		\hline
		$\RefV$ & reference value function at the end of learning; see \pref{lem:refV}\\
		\hline
		$\RefC$ & costs in regret of using reference value function; see \pref{lem:refV}\\
		\hline
		$\RefCs$ & another costs in the regret of using reference value function; see \pref{lem:refV}\\
		\hline
		$B_t$ & an upper bound of estimated value function $V_t$;\\
		\hline
		$\hatc_t(s, a)$ & cost estimator used in the last update of $Q_t(s, a)$;\\
		\hline
		$n_t(s, a)$ & the number of visits to $(s, a)$ before the current stage;\tablefootnote{In \pref{tab:notations}, ``the current stage'' means the current stage of $(s, a)$ at time step $t$, and ``the last stage'' means the last stage of $(s, a)$ before time step $t$.}\\
		\hline
		$m_t(s, a)$ & the number of visits to $(s, a)$ in the last stage;\\
		\hline
		$b_t(s, a), b'_t(s, a)$ & bonus terms used in the last update of $Q_t(s, a)$;\\
		\hline
		$\lti(s, a)$ & \makecell[l]{the $i$-th time step the agent visits $(s, a)$ among those $n_t(s, a)$ steps before \\ the current stage;}\\
		\hline
		$\clti(s, a)$ & \makecell[l]{the $i$-th time step the agent visits $(s, a)$ among those $m_t(s, a)$ steps within \\ the last stage;}\\
		\hline
		$\lt(s, a)$ & \makecell[l]{the last time step the agent visits $(s, a)$ before the current stage;}\\
		\hline
		$\iota_t(s, a)$ & logarithmic terms used in the last update of $Q_t(s, a)$;\\
		\hline
		$\eps_t$ & indicator of whether time step $t$ is in the first stage of $(s_t, a_t)$;\\
		\hline
		$\nu_t$ & \makecell[l]{empirical variance of the advantage (i.e., the difference between the estimate \\ value function and the reference value function) at time step $t$; see \pref{eq:nu_t}}\\
		\hline
		$\refnu_t$ & empirical variance of the reference value function at time step $t$; see \pref{eq:refnu_t}\\
		\hline
		$\bar{P}_{t, s, a}$ & empirical transition at $(s, a)$ at the beginning of the current stage of $(s, a)$;\\
		\hline
		$e_j, E_j$ & the length of the $j$-th stage and the total length of the first $j$ stages;\\
		\hline
	\end{tabular}
\end{table}

\paragraph{Truncating the Interaction} An important question in SSP is whether the algorithm halts in a finite number of steps.
To implicitly show this, we do the following trick throughout the analysis.
Fix any positive integer $T'$ and explicitly stop the algorithm after $T'$ steps.
Our analysis will show that in this case the regret $R_K$ is bounded by something independent of $T'$, which then allows us to take $T'$ to infinity and recover the original setting while maintaining the same bound.
This also implicitly shows that the algorithm must halt in a finite number of steps.


\section{Omitted Details for \pref{sec:method}}
\label{app:method}

In this section, we provide omitted details and proofs for \pref{sec:method}.
We first introduce the class of finite horizon MDPs used in the approximation:
given an SSP model $M=(\calS, \calA, \sinit, g, c, P)$, we consider the costs of interacting with $M$ for at most $H$ steps and then directly teleporting to the goal state.
Specifically, we define a finite-horizon SSP $\tilM=(\tilS, \calA, \tilsinit, g, \tilc, \tilP)$ as follows:
\begin{itemize}
	\item $\tilS = \calS \times [H], \tilsinit = (\sinit, 1)$ and the goal state $g$ remains the same;
	\item transition from $(s, h)$ to $(s', h')$ is only possible when $h'=h+1$, and the transition follows the original MDP: $\tilP((s', h+1)|(s, h), a) = P(s'|s, a)$ for $h\in [H-1]$ and $\tilP(g|(s, H), a)=1$;
	\item mean cost function also follows the original MDP: $\tilc_k((s, h), a)=c_k(s, a)$.
\end{itemize}
We also define $\optQ_0(s, a)=\optV_0(s)=0, \optQ_h(s, a)=\opttilQ((s, H-h+1), a), \optV_h(s)=\opttilV(s, H-h+1)$ for $h\in[H]$, where $\opttilQ$ and $\opttilV$ are optimal state-action and state value functions in $\tilM$.
Then, it is straightforward to verify that $\optQ_h$ and $\optV_h$ satisfy \pref{eq:Q_H_def}.
Since $M$ is equivalent to $\tilM$ with $H=\infty$, intuitively
 we should have $\optQ(s, a)\approx\optQ_H(s, a)$ for a sufficiently large $H$.
The formal statement, shown in \pref{lem:loop free approx}, is proven below:
\begin{proof}[\pfref{lem:loop free approx}]
	By definition $\optQ_h(s, a) \leq \optQ(s, a)$ holds for all $(s, a)\in\SA$ and $h\in [H]$, since $\tilM$ is a truncated version of $M$.
	Therefore, $\optV_h(s) \leq \B$ holds, and the expected hitting time (the number of steps needed to reach the goal) of the optimal policy in $\tilM$ starting from any $(s, h)$ is upper bounded by $\frac{\B}{\cmin}$.
	By \citep[Lemma 6]{rosenberg2020adversarial}, when $h\geq \frac{4\B}{\cmin}\ln\frac{2}{\beta}$, the probability of not reaching $g$ in $h$ steps is at most $\beta$.
	Denote by $\tiloptpi_L$ the optimal policy of $\tilM$, and $\optpi_L$ a non-stationary policy in $M$ which follows $\tiloptpi_L$ for the first $H$ steps, and then follows $\optpi$ afterwards.
	We have for any $s\in\calS, \optV(s) - \optV_{H-1}(s) \leq V^{\optpi_L}(s) - V^{\tiloptpi_L}_{H-1}(s) \leq \B\beta$,
	where we apply $H\geq \frac{4\B}{\cmin}\ln\frac{2}{\beta}+1, \optV(s) \leq V^{\tiloptpi_L}(s)$ and $\optV_{H-1}(s) = V^{\tiloptpi_L}_{H-1}(s)$.
	Finally, $\optQ(s, a) - \optQ_H(s, a) = P_{s, a}(\optV - \optV_{H-1})\leq \B\beta$.
\end{proof}

\begin{lemma}
	\label{lem:cost diff}
	With probability at least $1-2\delta$, $\sumt c_t - c(s_t, a_t) = \tilO{\sqrt{C_K}}$.
\end{lemma}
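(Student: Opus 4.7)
The plan is to recognize the sum $\sum_{t=1}^T (c_t - c(s_t,a_t))$ as a martingale difference sum and apply a Bernstein/Freedman-type concentration that is sensitive to the conditional variance, then close the loop via a self-bounding argument using the fact that costs live in $[0,1]$.

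First, I would let $X_t = c_t - c(s_t,a_t)$ and note that by construction $X_t$ is a martingale difference sequence with respect to the natural filtration, with $|X_t| \leq 1$ (since costs lie in $[0,1]$). Moreover, since $c_t \in [0,1]$ with mean $c(s_t,a_t)$, the conditional variance satisfies $\E[X_t^2 \mid \calF_{t-1}] \leq \E[c_t^2 \mid \calF_{t-1}] \leq c(s_t,a_t)$. Hence the predictable quadratic variation is bounded by $\sum_{t=1}^T c(s_t,a_t)$.

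Next, because $T$ is a random stopping time (and in fact the interaction is only implicitly bounded through the truncation trick described in Appendix B), I would invoke an anytime version of Freedman's inequality, which the paper has already cited as \pref{lem:any interval freedman}, simultaneously over all prefix lengths. This yields, with probability at least $1-2\delta$,
\[
\left|\sum_{t=1}^T X_t\right| \leq \tilO{\sqrt{\textstyle\sum_{t=1}^T c(s_t,a_t)} + 1}.
\]

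The main (but mild) obstacle is that the right-hand side is expressed in terms of $\sum_t c(s_t,a_t)$ rather than $C_K = \sum_t c_t$, so the bound is not yet self-contained. I would close this by writing
\[
\sum_{t=1}^T c(s_t,a_t) = C_K - \sum_{t=1}^T X_t \leq C_K + \left|\sum_{t=1}^T X_t\right|,
\]
substituting back into the Freedman bound, and solving the resulting quadratic inequality in $|\sum_t X_t|$. This gives $|\sum_{t=1}^T X_t| = \tilO{\sqrt{C_K}}$, which is the claimed bound and is independent of the truncation horizon $T'$, allowing us to let $T' \to \infty$. No further structure of the algorithm is needed beyond the cost bound and the martingale property, so the argument applies uniformly to any instantiation of the template.
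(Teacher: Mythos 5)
Your proposal is correct. The first half is exactly the paper's argument: treat $c_t - c(s_t,a_t)$ as a martingale difference sequence bounded by $1$, bound the conditional variance by $\E[c_t^2\mid\calF_{t-1}]\leq c(s_t,a_t)$, and apply the anytime strengthened Freedman inequality (\pref{lem:any interval freedman}) to get $\tilO{\sqrt{\sum_t c(s_t,a_t)}}$. Where you diverge is the final conversion from $\sum_t c(s_t,a_t)$ to $C_K$: the paper invokes \pref{lem:e2r} with $\alpha=1$, which is a separate multiplicative Freedman-type concentration relating $\sum_t \E[c_t\mid\calF_{t-1}]$ to $\sum_t c_t$ and accounts for the second $\delta$ in the failure probability. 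You instead observe that $\sum_t c(s_t,a_t) = C_K - \sum_t X_t$ holds \emph{deterministically}, substitute back, and solve the resulting quadratic in $\abr{\sum_t X_t}$ (i.e., \pref{lem:quad}). Your route is slightly more economical — it needs only the single Freedman event, so it would in fact succeed with probability $1-\delta$ rather than $1-2\delta$ — and it exploits the special structure that the quantity being concentrated is itself the gap between the empirical and expected cost sums. The paper's route via \pref{lem:e2r} is more modular, since that lemma is reused elsewhere (e.g., in \pref{lem:var optV}) to convert expected costs to realized costs in contexts where no such self-bounding identity is available. Both arguments are valid and yield the claimed $\tilO{\sqrt{C_K}}$ bound independent of the truncation horizon.
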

\begin{proof}
	By \pref{eq:anytime strong freedman} of \pref{lem:any interval freedman}, $\norm{c}_{\infty}\in[0, 1]$, and \pref{lem:e2r} with $\alpha=1$, with probability at least $1-2\delta$:
	\begin{align*}
		\sumt c_t - c(s_t, a_t) = \tilO{\sqrt{\sumt\E[c_t^2]} } = \tilO{ \sqrt{\sumt c(s_t, a_t)} } = \tilO{\sqrt{C_K}}.
	\end{align*}
\end{proof}

The next lemma is used in the proof of \pref{thm:main_regret}, which shows that the sum of the variances of the optimal value function is of order $\tilo{\B C_K}$.
It is also useful in bounding the overhead of Bernstein-style confidence interval (see \pref{lem:sum bt mf} and \cite[Lemma 4.7]{cohen2020near} for example).

\begin{lemma}
	\label{lem:var optV}
	With probability at least $1-2\delta$, $\sumt \fV(P_{s_t, a_t}, \optV)=\tilO{\B^2+\B C_K}$.
\end{lemma}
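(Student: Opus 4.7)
The plan is to carry out the standard law-of-total-variance telescoping argument, carefully adapted to SSP where episodes reset to $\sinit$ after reaching $g$. I will start by writing $\fV(P_{s_t, a_t}, \optV) = P_{s_t, a_t}(\optV^2) - (P_{s_t, a_t}\optV)^2$ and lower-bound the squared mean using the Bellman optimality equation $P_{s_t, a_t}\optV = \optQ(s_t, a_t) - c(s_t, a_t)$. Combining the elementary inequality $(a-b)^2 \geq a^2 - 2ab$ (valid whenever $b^2 \geq 0$) with $\optQ(s_t, a_t) \geq \optV(s_t) \geq 0$ and $\optQ(s_t, a_t) \leq \B + 1$ yields
\[
(P_{s_t, a_t}\optV)^2 \geq \optV(s_t)^2 - 2(\B+1)\,c(s_t, a_t),
\]
so that $\fV(P_{s_t, a_t}, \optV) \leq P_{s_t, a_t}(\optV^2) - \optV(s_t)^2 + 2(\B+1)\,c(s_t, a_t)$.

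Next, I will sum over $t$ and split the right-hand side into three manageable pieces. The martingale part $\sumt [P_{s_t, a_t}(\optV^2) - \optV(s'_t)^2]$ has conditional variance bounded by $\fV(P_{s_t, a_t}, \optV^2) \leq 4\B^2\,\fV(P_{s_t, a_t}, \optV)$ (since $|a^2 - b^2| \leq 2\B|a-b|$ for $a, b \in [0, \B]$), so Freedman's inequality (\pref{lem:any interval freedman}) gives a bound of $\tilO{\B\sqrt{\sumt \fV(P_{s_t, a_t}, \optV)} + \B^2}$. The telescoping part $\sumt [\optV(s'_t)^2 - \optV(s_t)^2]$ collapses within each episode: since $s'_t = s_{t+1}$ except at the final step of an episode where $s'_t = g$ and $\optV(g) = 0$, the per-episode contribution is exactly $-\optV(\sinit)^2 \leq 0$, so this term is nonpositive. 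Finally, the cost term is bounded by $2(\B+1)\sumt c(s_t, a_t) \leq 2(\B+1)\bigl(C_K + \tilO{\sqrt{C_K}}\bigr)$ via \pref{lem:cost diff}.

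Putting everything together and writing $X = \sumt \fV(P_{s_t, a_t}, \optV)$, I obtain a self-bounding inequality of the form
\[
X \leq \tilO{\B\sqrt{X}} + \tilO{\B C_K + \B^2},
\]
which can be solved (say by AM-GM on $\B\sqrt{X} \leq X/2 + \bigO{\B^2}$) to give $X = \tilO{\B^2 + \B C_K}$. No single step is truly difficult; the main subtlety I will be careful about is the episode-boundary bookkeeping in the telescoping step, where using $\optV(g) = 0$ prevents an otherwise problematic positive $K\,\optV(\sinit)^2$ contribution and instead yields a harmless nonpositive term.
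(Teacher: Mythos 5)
Your proof is correct and follows essentially the same route as the paper's: decompose the variance via the law-of-total-variance telescoping, use the Bellman equation to turn $\optV(s_t)^2-(P_{s_t,a_t}\optV)^2$ into an $\bigO{\B}\cdot c(s_t,a_t)$ term, control the martingale part with Freedman's inequality and $\fV(P,\optV^2)\leq 4\B^2\fV(P,\optV)$, and solve the resulting self-bounding inequality. The only cosmetic differences are that you bound the squared-mean term via $(a-b)^2\geq a^2-2ab$ rather than factoring the difference of squares, and invoke \pref{lem:cost diff} rather than \pref{lem:e2r} to relate $\sumt c(s_t,a_t)$ to $C_K$.
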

\begin{proof}
	Note that:
	\begin{align*}
		&\sumt\fV(P_{s_t, a_t}, \optV) = \sumt P_{s_t, a_t}(\optV)^2 - (P_{s_t, a_t}\optV)^2\\
		&= \sumk\sum_{i=1}^{I_k}P_{s^k_i, a^k_i}(\optV)^2 - \optV(s^k_i)^2 + \sumk\sum_{i=1}^{I_k}\optV(s^k_i)^2 - (P_{s^k_i, a^k_i}\optV)^2\\
		&\leq \sumk\sum_{i=1}^{I_k}P_{s^k_i, a^k_i}(\optV)^2 - \optV(s^k_{i+1})^2 + \sumk\sum_{i=1}^{I_k}\optQ(s^k_i, a^k_i)^2 - (P_{s^k_i, a^k_i}\optV)^2. 
		\tag{$\optV(s^k_{I_k+1})=0$ 
	and	$\optV(s^k_i) \leq \optQ(s^k_i, a^k_i)$}\\
	\end{align*}
	For the first term, by \pref{eq:anytime strong freedman} of \pref{lem:any interval freedman} with $\optV(s)\leq \B$ and \pref{lem:var xy} with $X=\optV(S'), S'\sim P_{s_t, a_t}$, we have with probability at least $1-\delta$,
	\begin{align*}
		\sumk\sumi P_{s^k_i, a^k_i}(\optV)^2 - \optV(s^k_{i+1})^2 &= \tilO{ \sqrt{\sumt \fV(P_{s_t, a_t}, (\optV)^2) } + \B^2 }\\
	&= \tilO{ \B\sqrt{ \sumt \fV(P_{s_t, a_t}, \optV) } + \B^2 }.
	\end{align*}
	For the second term, note that:
	\begin{align*}
		&\sumk\sumi \optQ(s^k_i, a^k_i)^2 - (P_{s^k_i, a^k_i}\optV)^2 = \sumk\sumi \rbr{\optQ(s^k_i, a^k_i) - P_{s^k_i, a^k_i}\optV}\rbr{ \optQ(s^k_i, a^k_i) + P_{s^k_i, a^k_i}\optV }\\
		&\leq \sumk\sumi 3\B c(s^k_i, a^k_i). \tag{$\optQ(s, a)\leq 2\B$ and $\optV(s)\leq \B$ for any $(s, a)\in\SA$}
	\end{align*}
	Therefore, $\sumt\fV(P_{s_t, a_t}, \optV) = \tilO{ \B\sqrt{\sumt\fV(P_{s_t, a_t}, \optV)} + \B^2 + \B\sumk\sumi c(s^k_i, a^k_i) }$.
	By \pref{lem:quad} with $x=\sumt\fV(P_{s_t, a_t}, \optV)$ and \pref{lem:e2r}, we have with probability at least $1-\delta$,
	$$\sumt\fV(P_{s_t, a_t}, \optV) = \tilO{ \B^2 + \B \sumk\sumi c(s^k_i, a^k_i) } = \tilO{\B^2 + \B C_K}.$$
\end{proof}

\section{Omitted Details for \pref{sec:mf}}
\label{app:mf}


Before we present the proof of \pref{thm:Q} (\pref{sec:pf thm Q}), we first quantify the sample complexity of the reference value function (\pref{sec:sample complexity mf}) and prove the two required properties (\pref{sec:properties}).

\paragraph{Extra Notations}
Denote by $Q_t(s, a)$, $V_t(s)$, $\refV_t(s)$, $B_t$, $N_t(s, a)$ the value of $Q(s, a)$, $V(s)$, $\refV(s)$, $B$, $N(s, a)$ at the beginning of time step $t$.
Define $N_t(s)=\sum_aN_t(s, a)$.
Denote by $n_t(s, a), m_t(s, a), b_t(s, a), b'_t(s, a), \iota_t(s, a), \hatc_t(s, a)$ the value of $n, m, b, b', \iota, \hatc$ used in computing $Q_t(s, a)$.
Note that, these are \emph{not} necessarily their values at time step $t$.
For example, $n_t(s, a)$ is the number of visits to $(s, a)$ before the current stage (not before time $t$); $m_t(s, a)$ the number of visits to $(s, a)$ in the last stage; $b_t(s, a)$ and $b'_t(s, a)$ are the bonuses used in the last update of $Q_t(s, a)$; and $\hatc_t(s, a)$ is the cost estimator used in the last update of $Q_t(s, a)$ ($b_t(s, a)$, $b'_t(s, a)$ and $\hatc_t(s, a)$ are $0$ when $n_t(s, a)=0$).
Denote by $\lti(s, a)$ the $i$-th time step the agent visits $(s, a)$ among those $n_t(s, a)$ steps before the current stage, and by $\clti(s, a)$ the $i$-th time step the agent visits $(s, a)$ among those $m_t(s, a)$ steps within the last stage.
With these notations, we have by the update rule of the algorithm:
\begin{equation}\label{eq:mf_update_rule_alt}
\begin{split}
Q_t(s, a) &= \max\Bigg\{Q_{t-1}(s, a), \;\; \hatc_t(s, a) + \frac{1}{m_t}\sum_{i=1}^{m_t}V_{\clti}(s'_{\clti}) -b'_t, \\
& \hatc_t(s, a) + \frac{1}{n_t}\sum_{i=1}^{n_t}\refV_{\lti}(s'_{\lti}) + \frac{1}{m_t}\sum_{i=1}^{m_t}(V_{\clti}(s'_{\clti}) - \refV_{\clti}(s'_{\clti})) - b_t \Bigg\},
\end{split}
\end{equation}
where $m_t$ represents $m_t(s, a)$, $\clti$ represents $\clti(s, a)$, and similarly for $n_t$, $\lti$, $b_t$ and $b_t'$.

We also define two empirical variances at time step $t$ as: 
\begin{equation}
\label{eq:nu_t}
\nu_t = \frac{1}{m_t}\sum_{i=1}^{m_t}(V_{\clti}(s'_{\clti}) - \refV_{\clti}(s'_{\clti}))^2 - \rbr{ \frac{1}{m_t}\sum_{i=1}^{m_t}V_{\clti}(s'_{\clti}) - \refV_{\clti}(s'_{\clti}) }^2
\end{equation}
and
\begin{equation}
\label{eq:refnu_t}
\refnu_t = \frac{1}{n_t}\sum_{i=1}^{n_t}\refV_{\lti}(s'_{\lti})^2 - \rbr{ \frac{1}{n_t}\sum_{i=1}^{n_t}\refV_{\lti}(s'_{\lti}) }^2.
\end{equation}
Here, $\nu_t$ and $\refnu_t$ should be treated as a function of state-action pair $(s,a)$, so that $m_t$, $n_t$, $\clti$, and $\lti$ in the formulas all represent $m_t(s,a)$, $n_t(s,a)$, $\clti(s,a)$, and $\lti(s,a)$.
Except for \pref{lem:mf optimistic}, this input $(s,a)$ is simply $(s_t, a_t)$.

Further define $\eps_t=\Ind\{n_t>0\}=\Ind\{m_t>0\}$,
and $0/0$ to be $0$ so that formula in the form $\frac{1}{n_t}\sum_{i=1}^{n_t}X_{\lti}$ is treated as $0$ if $n_t=0$ (similarly for $m_t$).

\subsection{Sample Complexity for Reference Value Function}
\label{sec:sample complexity mf}

In this section, we assume $H=\ceil{\frac{4\B}{\cmin}\ln(\frac{2}{\beta})+1}_2$ for some $\beta>0$ (the form used in \pref{thm:mf_properties}).
We show that to obtain a reference value with precision $\rho\geq 2\B\beta$ at state $s$ (that is, $|\refV(s) - \optV(s)|\leq\rho$), $\tilO{\frac{\B^2H^3SA}{\rho^2}}$ number of visits to state $s$ is sufficient (\pref{cor:estimate refV}).
Moreover, the total costs appeared in regret for a reference value function with maximum precision $\rho$ is $\tilO{\frac{\B^2H^3S^2A}{\rho}}$ (\pref{lem:refV}).
Note that if we only update the reference value function once as in \cite{zhang2020almost}, instead of applying our ``smoother'' update, the total costs become $\tilO{\frac{\B^2H^3S^2A}{\rho^2}}$.

\begin{lemma}
	\label{lem: loop-free approx}
	With probability at least $1-8\delta$, \pref{alg:Q} ensures for any non-negative weights $\{w_t\}_{t=1}^T$, 
	\begin{align*}
		\sumt w_t(\optQ(s_t, a_t) - Q_t(s_t, a_t)) \leq \B\norm{w}_1\beta + \tilO{ H^2SA\B\norm{w}_{\infty} + \B\sqrt{H^3SA\norm{w}_{\infty}\norm{w}_1} }.
	\end{align*}
\end{lemma}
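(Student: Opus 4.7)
The plan is to reduce the weighted estimation error against $\optQ$ to a weighted version of the recursion already developed for Property~\ref{prop:recursion} and Lemma~\ref{lem:Q_H_recursion}, using the implicit finite-horizon approximation as a proxy. First I would split each term using the approximation quality of $\optQ_H$:
\[
\optQ(s_t, a_t) - Q_t(s_t, a_t) \;\le\; \bigl(\optQ(s_t, a_t) - \optQ_H(s_t, a_t)\bigr) \;+\; \bigl(\optQ_H(s_t, a_t) - Q_t(s_t, a_t)\bigr)_+.
\]
With $H=\ceil{\tfrac{4\B}{\cmin}\ln(2/\beta)+1}_2$, Lemma~\ref{lem:loop free approx} bounds the first summand by $\B\beta$ pointwise, contributing $\B\|w\|_1\beta$ after summing against $w_t$. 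It then remains to bound the weighted sum of $(\optQ_H-Q_t)_+$.

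Second, I would establish a \emph{weighted} analogue of the recursive inequality behind Theorem~\ref{thm:mf_properties}. Concretely, for any $(\Qref,\Vref)\in\calV_H$, an inspection of the update rule in~\eqref{eq:mf_update_rule_alt}, together with the bonus construction in Line~\ref{line:bonus}, yields
\[
\sum_t w_t\bigl(\Qref(s_t, a_t) - Q_t(s_t, a_t)\bigr)_+ \;\le\; \xi_H(w) \;+\; \sum_t w_t \cdot \frac{1}{m_t}\sum_{i=1}^{m_t}P_{s_{\clti}, a_{\clti}}(\Vref - V_{\clti})_+,
\]
where $\xi_H(w)$ is the weighted bonus overhead obtained by replacing the uniform weights in the proof of Theorem~\ref{thm:mf_properties} by $w_t$. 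A weighted version of Lemma~\ref{lem:p2a} (proved by the same peeling-by-stages argument) then converts the second summand into $(1+\tfrac{3}{H})\sum_t \tilde w_t(\Vref(s_t)-V_t(s_t))_+$ for redistributed weights $\tilde w_t$ satisfying $\|\tilde w\|_\infty = O(\|w\|_\infty)$ and $\|\tilde w\|_1 \le (1+O(1/H))\|w\|_1$.

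Third, following the telescoping argument of Lemma~\ref{lem:Q_H_recursion}, I would unroll this weighted recursion $H$ times along the sequence $(\optQ_h,\optV_{h-1})$ for $h=H,H-1,\dots,1$. Using $\optQ_0 \equiv 0$, the boundedness of $(1+d/H)^H$, and the fact that the weight norms only inflate by an $O(1)$ factor over the $H$ unrolls, this yields
\[
\sum_t w_t\bigl(\optQ_H(s_t,a_t) - Q_t(s_t,a_t)\bigr)_+ \;=\; O\bigl(H\,\xi_H(w)\bigr).
\]

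The main obstacle is controlling $\xi_H(w)$ with the correct dependence on $H$, $\|w\|_\infty$, and $\|w\|_1$. The key tool is Cauchy--Schwarz combined with the stage structure: for the leading bonus contributions $\sqrt{B^2\iota/m_t}$ and $\sqrt{\hatc_t\,\iota/n_t}$ one writes
\[
\sum_t w_t\sqrt{B^2\iota/m_t} \;\le\; B\sqrt{\iota}\,\sqrt{\textstyle\sum_t w_t}\,\sqrt{\textstyle\sum_t w_t/m_t},
\]
and bounds $\sum_t w_t/m_t \le \|w\|_\infty \sum_t 1/m_t$; since inside each stage $m_t$ grows by doubling and $m_t\ge 1$ with the first stage of length $H$, a per-pair logarithmic count gives $\sum_t 1/m_t = \tilO{SA}$. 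Arguing analogously for $\sum_t 1/n_t = \tilO{SA}$ and for the variance-aware terms (absorbing the empirical variances $\nu_t,\refnu_t$ into $O(B^2)$), the leading bonus terms sum to $\tilO{\B\sqrt{H\,SA\,\|w\|_\infty\|w\|_1}}$, while the lower-order $(B\iota/n_t + B\iota/m_t)$ terms sum to $\tilO{HSA\B\|w\|_\infty}$ after accounting for an extra $H$ factor from the $\B/\cmin$-scale of the reference value function's sample complexity developed in this subsection. Multiplying by the $O(H)$ factor from the recursion then gives exactly $\tilO{H^2SA\B\|w\|_\infty + \B\sqrt{H^3SA\|w\|_\infty\|w\|_1}}$, matching the claim.
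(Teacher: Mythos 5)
Your proposal follows essentially the same route as the paper's proof: split off $\optQ-\optQ_H$ via \pref{lem:loop free approx} to get the $\B\norm{w}_1\beta$ term, establish a weighted recursion over $(\optQ_h,\optV_{h-1})$ by redistributing the weights through the stage structure (the paper's $w^{(h+1)}_{t+1}=\sum_{t'}\sum_{i}\frac{w^{(h)}_{t'}}{m_{t'}}\Ind\{t=\clti[t',i]\}$, with $\norm{w^{(h)}}_\infty\leq(1+1/H)^h\norm{w}_\infty$ and $\norm{w^{(h)}}_1\leq\norm{w}_1$), and unroll $H$ times. Two details in your bonus accounting are off, though neither changes the final bound. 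First, $\sum_t \eps_t/m_t$ is $\tilO{SAH}$, not $\tilO{SA}$: $m_t$ does not grow within a stage (it is the fixed length of the previous stage), so each of the $\tilO{H\ln T}$ stages of a given pair contributes $e_j/e_{j-1}=O(1)$; this $\tilO{SAH}$ count is exactly where the $\sqrt{H}$ in your stated leading term $\tilO{\B\sqrt{HSA\norm{w}_\infty\norm{w}_1}}$ comes from, so your conclusion is right but the count you give would yield a bound without that $\sqrt{H}$. Second, the $H$ in the per-level lower-order term $\tilO{HSA\B\norm{w}_\infty}$ has nothing to do with the reference function's sample complexity $\thetastar$ --- it comes from $\sumt w_t\Ind\{m_t=0\}\leq HSA\norm{w}_\infty$ (the first stage has length $H$, during which no update has occurred) together with the $\sum_t\eps_t/m_t$ count above. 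Invoking the reference function's sample complexity here would in fact be circular, since this lemma is the tool used to establish that sample complexity (\pref{cor:estimate refV} and \pref{lem:refV}); for the same reason the paper runs the recursion through the first update rule (bonus $b'$) only. Your alternative of using the variance-aware bonus with $\nu_t,\refnu_t$ crudely bounded by $O(B^2)$ also works, but only because it likewise requires no accuracy guarantee on $\refV$.
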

\begin{proof}
	Define $w^{(0)}_t = w_t$ and $w^{(h+1)}_{t+1} = \sum_{t'=1}^T\sum_{i=1}^{m_{t'}}\frac{w^{(h)}_{t'}}{m_{t'}}\Ind\{t=\check{l}_{t',i}\}$.
	We first argue the following properties related to $w^{(h)}_{t}$ and vector $w^{(h)} = (w^{(h)}_1, \ldots, w^{(h)}_{T})$.
	Denote by $j_t$ the stage to which time step $t$ belongs.
	When $t=\clti[t',i]$, we have $m_{t'}=e_{j_t}$.
	Therefore, 
	$$\sum_{t'=1}^T\sum_{i=1}^{m_{t'}}\frac{1}{m_{t'}}\Ind\{t=\check{l}_{t',i}\} \leq \frac{e_{j_t+1}}{e_{j_t}} \leq 1 + \frac{1}{H},$$
	and thus, $\norm{w^{(h)}}_{\infty} \leq (1+\frac{1}{H})\norm{w^{(h-1)}}_{\infty} \leq \cdots \leq (1+\frac{1}{H})^h\norm{w}_{\infty}$.
	Moreover,
	\begin{align*}
		&\norm{w^{(h+1)}}_1 = \sumt\sum_{t'=1}^T\sum_{i=1}^{m_{t'}}\frac{w^{(h)}_{t'}}{m_{t'}}\Ind\{t=\clti[t',i]\} = \sum_{t'=1}^Tw^{(h)}_{t'}\sum_{i=1}^{m_{t'}}\sumt\frac{\Ind\{t=\clti[t',i]\}}{m_{t'}} \leq \norm{w^{(h)}}_1,
	\end{align*}
	and thus $\norm{w^{(h)}}_1\leq\norm{w}_1$ for any $h$.
	Also note that for any $\{X_t\}_t$ such that $X_t\geq 0$:
	\begin{align}
		\label{eq:w}
		\sumt\frac{w^{(h)}_t}{m_t}\sum_{i=1}^{m_t}X_{\clti}=\sum_{t'=1}^T\sumt\frac{w^{(h)}_t}{m_t}\sum_{i=1}^{m_t}X_{t'}\Ind\{t'=\clti\}=\sum_{t'=1}^Tw^{(h+1)}_{t'+1}X_{t'}.
	\end{align}
	
	Next, for a fixed $(s, a)$, by \pref{lem:anytime bernstein}, with probability at least $1-\frac{\delta}{SA}$, when $n_t(s, a)>0$:
	\begin{equation}
		\label{eq:c-hatc}
		\abr{c(s, a) - \hatc_t(s, a)} \leq 2\sqrt{\frac{2\hatc_t(s, a)}{n_t(s, a)}\ln\frac{2SAn_t(s, a)}{\delta}} + \frac{19\ln\frac{2SAn_t(s, a)}{\delta}}{n_t(s, a)} \leq \sqrt{\frac{\hatc_t(s, a)\iota_t}{n_t(s, a)}} + \frac{\iota_t}{n_t(s, a)}.
	\end{equation}
	Taking a union bound, we have \pref{eq:c-hatc} holds for all $(s, a)$ when $n_t(s, a)>0$ with probability at least $1-\delta$.
	Then by definition of $b'_t$, we have
	\begin{equation}
		\label{eq:c-hatc b}
		c(s_t, a_t)-\hatc_t(s_t, a_t)\leq\Ind\{m_t=0\}+b'_t.
	\end{equation}
	Now we are ready to prove the lemma.
	First, we condition on \pref{lem:mf optimistic}, which happens with probability at least $1-7\delta$.
	Then for any $h\in\{0,\ldots, H-1\}, \Qref=Q_{H-h}, \Vref=Q_{H-h-1}$ we have:
	\begin{align*}
		&\sumt w^{(h)}_t(\Qref(s_t, a_t) - Q_t(s_t, a_t))_+\\
		&\leq \sumt w^{(h)}_t(c(s_t, a_t)-\hatc_t(s_t, a_t))_+ + w^{(h)}_t\rbr{ P_t\Vref - \frac{1}{m_t}\sum_{i=1}^{m_t}V_{\clti}(s'_{\clti}) }_+ + w^{(h)}_tb'_t 
		\tag{by \pref{eq:mf_update_rule_alt} and $\Qref(s,a)=c(s,a)+P_{s,a}\Vref$}\\
		&\leq \sumt 2\B w^{(h)}_t\Ind\{m_t=0\} + \sumt w^{(h)}_t\rbr{ \frac{1}{m_t}\sum_{i=1}^{m_t}P_{\clti}\Vref - \frac{1}{m_t}\sum_{i=1}^{m_t}V_{\clti}(s'_{\clti}) }_+ + 2w^{(h)}_tb'_t \tag{\pref{eq:c-hatc b}, $P_t=P_{\clti}$ and $P_t\Vref\leq \B\Ind\{m_t=0\} + \frac{1}{m_t}\sum_{i=1}^{m_t}P_{\clti}\Vref$}.
	\end{align*}
	Since $e_1=H$, we have $\sumt w^{(h)}_t\Ind\{m_t=0\}\leq SAH\norm{w^{(h)}}_{\infty}$.
	Moreover, by \pref{eq:anytime strong freedman} of \pref{lem:any interval freedman} with $X_t=\Vref(s'_t)$, we have with probability at least $1-\frac{\delta}{H}$: $\frac{1}{m_t}\sum_{i=1}^{m_t}P_{\clti}\Vref \leq \frac{1}{m_t}\sum_{i=1}^{m_t}\Vref(s'_{\clti}) + \tilO{\frac{\B\eps_t}{\sqrt{m_t}}}$.
	Plugging these back to the previous inequality and using the definition of $b'_t$ gives:
	\begin{align*}
		&\sumt w^{(h)}_t(\Qref(s_t, a_t) - Q_t(s_t, a_t))_+\\
		&\leq 2HSA\B\norm{w^{(h)}}_{\infty} + \sumt \frac{w^{(h)}_t}{m_t}\sum_{i=1}^{m_t}\rbr{ \Vref(s'_{\clti}) - V_{\clti}(s'_{\clti}) }_+ + \tilO{\frac{\B w^{(h)}_t\eps_t}{\sqrt{m_t}} + \frac{w^{(h)}_t\eps_t}{n_t} } \\
		&\leq 3HSA\B\norm{w^{(h)}}_{\infty} + \tilO{\B\sqrt{HSA\norm{w^{(h)}}_{\infty}\norm{w}_1}} + \sumt w^{(h+1)}_{t+1}\rbr{ \Vref(s'_t) - V_t(s'_t) }_+ \tag{\pref{eq:w} and \pref{lem:sum of count}}\\
		&\leq \tilO{HSA\B\norm{w^{(h)}}_{\infty} + \B\sqrt{HSA\norm{w^{(h)}}_{\infty}\norm{w}_1}} + \sumt w^{(h+1)}_t(\Qref(s_t, a_t) - Q_t(s_t, a_t))_+,
	\end{align*}
	where in the last inequality we apply:
	\begin{align*}
		&\sumt w^{(h+1)}_{t+1}\rbr{ \Vref(s'_t) - V_t(s'_t) }_+ \leq \sumt w^{(h+1)}_{t+1}(\Vref(s'_t) - V_{t+1}(s'_t))_+ + \tilO{\norm{w^{(h)}}_{\infty}S\B} \tag{apply \pref{lem:t diff} on $\sumt V_{t+1}(s'_t) - V_t(s'_t)$}\\
		&\leq \sumt w^{(h+1)}_t(\Vref(s_t) - V_t(s_t))_+ + \tilO{\norm{w^{(h)}}_{\infty}S\B} \tag{$(\Vref(s'_t)-V_{t+1}(s'_t))_+ \leq (\Vref(s_{t+1})-V_{t+1}(s_{t+1}))_+$ and $w_{T+1}^{(h+1)}=0$}\\
		&\leq \sumt w_t^{(h+1)}(\Qref(s_t, a_t) - Q_t(s_t, a_t))_+ + \tilO{\norm{w^{(h)}}_{\infty}S\B}. \tag{$\Vref(s_t)\leq\Qref(s_t, a_t)$ and $V_t(s_t)=Q_t(s_t, a_t)$}
	\end{align*}
	By a union bound, the inequality above holds for $\Qref=Q_{H-h}, \Vref=Q_{H-h-1}$ for all $h\in\{0,\ldots, H-1\}$ with probability at least $1-\delta$.
	Applying the inequality above recursively starting from $h=0$, and by $\optQ_0(s, a)-Q_t(s, a)\leq 0$, $(1+\frac{1}{H})^H\leq 3$:
	$$\sumt w_t(\optQ_H(s_t, a_t) - Q_t(s_t, a_t))_+ = \tilO{ H^2SA\B\norm{w}_{\infty} + \B\sqrt{H^3SA\norm{w}_{\infty}\norm{w}_1} }.$$
	Therefore, by \pref{lem:loop free approx},
	\begin{align*}
		\sumt w_t(\optQ(s_t, a_t) - Q_t(s_t, a_t)) &= \sumt w_t( \optQ(s_t, a_t) - \optQ_H(s_t, a_t) + \optQ_H(s_t, a_t) - Q_t(s_t, a_t) )\\
		&\leq \B\norm{w}_1\beta + \tilO{ H^2SA\B\norm{w}_{\infty} + \B\sqrt{H^3SA\norm{w}_{\infty}\norm{w}_1} }.
	\end{align*}
\end{proof}

Now by \pref{lem: loop-free approx} with $w_t=\Ind\{\optV(s_t)-V_t(s_t)\geq\rho\}$ for some threshold $\rho$, we can bound the sample complexity of obtaining a value function with precision $\rho$ (\pref{cor:estimate refV}), which is used to determine the value of $\thetastar$ (\pref{lem:refV}).
However, one caveat here is that the bound in \pref{lem: loop-free approx} has logarithmic dependency on $T$ from $\iota_t$, which should not appear in the definition of $\thetastar$ since $T$ is a random variable.
To deal with this, we obtain a loose bound on $T$ in the following lemma. 
\begin{lemma}
	\label{lem:bound T}
	With probability at least $1-13\delta$, $T=\tilo{\B K/\cmin + \B^2H^3SA/\cmin^2}$.
\end{lemma}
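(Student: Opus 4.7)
The plan is to convert a bound on $T$ into a bound on $C_K$ via the strict positivity of costs ($\cmin T \le C_K$), and then bound $C_K$ using the regret decomposition together with \pref{lem: loop-free approx} applied with uniform weights.

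More concretely, I would first note that $C_K \le R_K + K\,\optV(\sinit) \le R_K + \B K$. To bound $R_K$, I would invoke the decomposition \pref{eq:reg}:
\begin{align*}
R_K \le \sumt (c_t - c(s_t,a_t)) + \sumt (\optV(s'_t) - P_{s_t,a_t}\optV) + \sumt (\optQ(s_t,a_t) - \optV(s_t)).
\end{align*}
The first martingale term is $\tilO{\sqrt{C_K}}$ by \pref{lem:cost diff} (cost of $2\delta$). For the second martingale, Freedman's inequality (\pref{lem:any interval freedman}) together with \pref{lem:var optV} gives $\tilO{\sqrt{\B C_K}+\B}$ (cost of roughly $3\delta$). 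For the third term, I would use that by the greedy rule and optimism (\pref{prop:optimism}, already established by \pref{lem:mf optimistic}), $Q_t(s_t,a_t) \le Q_t(s_t,\optpi(s_t)) \le \optQ(s_t,\optpi(s_t)) = \optV(s_t)$, so the summands are non-negative and dominated by $\optQ(s_t,a_t)-Q_t(s_t,a_t)$. Applying \pref{lem: loop-free approx} with weights $w_t \equiv 1$ (so $\|w\|_\infty = 1$ and $\|w\|_1 = T$) yields
\begin{align*}
\sumt (\optQ(s_t,a_t)-\optV(s_t)) \le \B T\beta + \tilO{H^2 SA\B + \B\sqrt{H^3 SA\,T}}
\end{align*}
at an additional cost of $8\delta$, for a total of $13\delta$ as claimed.

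Next I would combine these bounds and substitute $T \le C_K/\cmin$ on the right-hand side, obtaining a self-referential inequality of the form
\begin{align*}
C_K \le \B K + \tilO{\sqrt{\B C_K} + H^2 SA\B + \B T\beta + \B\sqrt{H^3 SA\,C_K/\cmin}}.
\end{align*}
Since $\beta = \cmin/(2\B^2 SAK)$, the term $\B T\beta \le C_K/(2\B SAK)$ is harmless. Viewing the right-hand side as a quadratic in $\sqrt{C_K}$ and applying \pref{lem:quad} yields $C_K = \tilO{\B K + \B^2 H^3 SA/\cmin}$ (the $H^2 SA\B$ term is absorbed). Dividing by $\cmin$ then gives $T \le C_K/\cmin = \tilO{\B K/\cmin + \B^2 H^3 SA/\cmin^2}$.

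The main subtlety, though not an obstacle, is making sure the third regret term is non-negative so that the one-sided bound from \pref{lem: loop-free approx} (which only controls the signed sum, not the clipped one) suffices; this is handled by combining the greedy choice of $a_t$ with the optimism property established in \pref{lem:mf optimistic}. Everything else is bookkeeping: tracking the failure probabilities of the three concentration events and solving the final quadratic.
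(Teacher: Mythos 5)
Your proposal is correct and follows essentially the same route as the paper: both apply \pref{lem: loop-free approx} with uniform weights $w_t=1$, use the regret decomposition \pref{eq:reg} together with optimism and the greedy rule to control the third term, and close a quadratic via \pref{lem:quad}. The only cosmetic difference is that the paper solves the quadratic directly in $T$ (using $\cmin T - K\B \le R_K$ and $C_K \le T$) whereas you solve for $C_K$ first and then divide by $\cmin$; the failure-probability accounting ($2\delta+3\delta+8\delta=13\delta$) also matches.
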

\begin{proof}
	By \pref{lem: loop-free approx} with $w_t=1$, we have with probability at least $1-8\delta$:
	\begin{align*}
		\sumt \optQ(s_t, a_t) - Q_t(s_t, a_t) = \B T\beta + \tilO{ H^2SA\B + \B\sqrt{H^3SAT} }.
	\end{align*}
	Now by \pref{eq:reg}, \pref{lem:cost diff}, \pref{lem:any interval freedman}, and \pref{lem:var optV}, with probability at least $1-5\delta$,
	\begin{align*}
		R_K &\leq \sumt (c_t - c(s_t, a_t)) + \sumt (\optV(s'_t) - P_{s_t, a_t}\optV) + \sumt (\optQ(s_t, a_t) - \optV(s_t))\\
		&\leq \tilo{\sqrt{\B C_K} + \B} + \sumt (\optQ(s_t, a_t) - Q_t(s_t, a_t)) \tag{$V_t=Q_t(s_t, a_t)$ and \pref{lem:mf optimistic}}\\
		&= \B T\beta + \tilO{ H^2SA\B + \B\sqrt{H^3SAT} }. \tag{$C_K\leq T$}
	\end{align*}
	Further using $\cmin T - K\B\leq R_K$, $\B\beta\leq \frac{\cmin}{2}$, and \pref{lem:quad} proves the statement.
\end{proof}

\begin{cor}
	\label{cor:estimate refV}
	With probability at least $1-13\delta$, \pref{alg:Q} ensures for any $\rho\geq 2\B\beta$:
	\begin{align*}
		\sum_{t=1}^T\Ind\cbr{\optV(s_t) - V_t(s_t) \geq \rho } = \tilO{ \frac{\B^2H^3SA}{\rho^2} } \triangleq U_{\rho} - 1,
	\end{align*}
	and for any $s\in\calS$, $N_t(s)\geq U_{\rho}$ implies $0 \leq \optV(s) - V_t(s) \leq \rho$.
\end{cor}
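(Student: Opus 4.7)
The main idea is to apply \pref{lem: loop-free approx} with the indicator weights $w_t = \Ind\{\optV(s_t) - V_t(s_t) \geq \rho\}$. By the greedy choice of $a_t$, we have $V_t(s_t) = Q_t(s_t, a_t)$, and by the Bellman optimality equation $\optV(s_t) \leq \optQ(s_t, a_t)$, so
\[
\rho \, w_t \;\leq\; w_t \bigl( \optV(s_t) - V_t(s_t) \bigr) \;\leq\; w_t \bigl( \optQ(s_t, a_t) - Q_t(s_t, a_t) \bigr).
\]
Summing over $t$ and invoking \pref{lem: loop-free approx} (which provides the bound on the high-probability event of measure at least $1-13\delta$ when combined with \pref{lem:bound T} so that the hidden logarithmic factors in $T$ are tamed to logarithmic-in-problem-parameter quantities) together with $\|w\|_\infty \leq 1$ and $\|w\|_1 = \sum_t w_t$ yields
\[
\rho \, \|w\|_1 \;\leq\; \B\beta\|w\|_1 + \tilO{H^2 SA \B + \B\sqrt{H^3 SA \, \|w\|_1}}.
\]

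\textbf{Solving the quadratic.} Using the hypothesis $\rho \geq 2\B\beta$, we have $\B\beta \leq \rho/2$, so the first term on the right can be absorbed to give $(\rho/2)\|w\|_1 \leq \tilO{H^2 SA \B + \B\sqrt{H^3 SA \, \|w\|_1}}$. Applying \pref{lem:quad} to the variable $\|w\|_1$ (and noting that for $\rho > \B$ the claim is trivial since $\optV(s_t) - V_t(s_t) \leq \B$ by optimism), we obtain
\[
\|w\|_1 \;=\; \tilO{\frac{H^2 SA \B}{\rho} + \frac{\B^2 H^3 SA}{\rho^2}} \;=\; \tilO{\frac{\B^2 H^3 SA}{\rho^2}},
\]
where the second equality follows because $\rho \leq \B$ implies $H^2 SA \B / \rho \leq \B^2 H^3 SA/\rho^2$. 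This proves the first claim with $U_\rho - 1 = \|w\|_1$.

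\textbf{From the counting bound to the pointwise bound.} The lower bound $\optV(s) - V_t(s) \geq 0$ is immediate from optimism (\pref{prop:optimism}) applied inside our high-probability event: $V_t(s) = \min_a Q_t(s,a) \leq \min_a \optQ(s,a) = \optV(s)$. For the upper bound, the key observation is that $V_t(s)$ is monotonically non-decreasing in $t$, since $Q_t(s,a)$ is updated only by a $\max$ operation (see \pref{line:update mf1} and \pref{line:update mf2}), and the pointwise minimum of non-decreasing sequences is non-decreasing. Suppose for contradiction that $N_t(s) \geq U_\rho$ but $\optV(s) - V_t(s) > \rho$. Then at every earlier time step $t' < t$ at which $s_{t'} = s$ (of which there are $N_t(s) \geq U_\rho$ many), monotonicity gives $V_{t'}(s) \leq V_t(s)$ and hence $\optV(s_{t'}) - V_{t'}(s_{t'}) \geq \optV(s) - V_t(s) > \rho$, so $w_{t'} = 1$. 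This forces $\|w\|_1 \geq N_t(s) \geq U_\rho$, contradicting $\|w\|_1 \leq U_\rho - 1$ shown above.

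\textbf{Anticipated obstacle.} The quantitative steps (the quadratic inversion and the $\rho \leq \B$ case-split) are routine. The subtle point is the monotonicity argument used in the contradiction: one must verify that the max-based update rules guarantee $V_t(s)$ is non-decreasing despite $V_t(s)$ being defined via a minimum over actions, and that the counter $N_t(s)$ correctly reflects past visits so that the lower bound $\|w\|_1 \geq N_t(s)$ is valid. Both facts follow directly from the algorithm's pseudocode, so no real difficulty arises.
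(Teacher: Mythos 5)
Your proposal is correct and follows essentially the same route as the paper's proof: the same choice of indicator weights in \pref{lem: loop-free approx}, the same absorption of $\B\beta\norm{w}_1$ via $\rho\geq 2\B\beta$, the same quadratic inversion with \pref{lem:quad} (and \pref{lem:bound T} to remove the $T$-dependence), and the same monotonicity-based contradiction for the pointwise statement. No gaps.
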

\begin{proof}
	We can assume $\rho\leq\B$ since $\sumt\Ind\{\optV(s_t)-V_t(s_t)\geq\rho\}=0$ when $\rho>\B$.
	By \pref{lem: loop-free approx} with $w_t=\Ind\{ \optV(s_t) - V_t(s_t) \geq \rho \}$, $\rho w_t \leq w_t(\optV(s_t)-V_t(s_t))$, $\rho\geq 2\B\beta$, and $\optV(s_t)-V_t(s_t)\leq\optQ(s_t, a_t)-Q_t(s_t, a_t)$, we have with probability at least $1-8\delta$:
	\begin{align*}
		\rho\norm{w}_1 &\leq \sumt w_t(\optV(s_t) - V_t(s_t)) \leq \frac{\rho}{2}\norm{w}_1 +  \tilO{ H^2SA\B + \B\sqrt{H^3SA\norm{w}_1} }.
	\end{align*}
	Therefore, by \pref{lem:quad} and \pref{lem:bound T},
	$\norm{w}_1 = \tilO{ \frac{H^2SA\B}{\rho} + \frac{\B^2H^3SA}{\rho^2} }$, which has no logarithmic dependency on $T$.
	We prove the second statement by contradiction: suppose $N_t(s)\geq U_{\rho}$ and $\optV(s) - V_t(s) > \rho$.
	Then since $V_t$ is non-decreasing in $t$, $N_t(s)\leq \norm{w}_1$.
	Thus, $U_{\rho} \leq N_t(s) \leq \norm{w}_1 < U_{\rho}$, a contradiction.
\end{proof}

\begin{lemma}
	\label{lem:refV}
	Define $\beta_i=\frac{\B}{2^i}, \tilN_0=0, \tilN_i=U_{\beta_i}$ (defined in \pref{cor:estimate refV}) for $i\geq 1$ and $\optq=\inf\{i: \beta_i\leq\cmin\}$.
	Define $\RefV=\refV_{T+1}, \thetastar=\ceil{\tilN_{\optq}}_2$, and $\refB_t$ such that:
	\begin{align*}
		\refB_t(s) &= \sum_{i=1}^{\optq}\beta_{i-1}\Ind\{\ceil{\tilN_{i-1}}_2 \leq N_t(s) < \ceil{\tilN_i}_2\}.
	\end{align*}
	Then with probability at least $1-13\delta$,  $\RefV(s) - \refV_t(s)\leq \refB_t(s)$, and
	\begin{align*}
		\sumt \RefV(s_t) - \refV_t(s_t) &\leq \sumt\refB_t(s_t) = \tilO{ \frac{\B^2H^3S^2A}{\cmin} } \triangleq \RefC,\\
		\sumt \rbr{\RefV(s_t) - \refV_t(s_t)}^2 &\leq \sumt\refB_t(s_t)^2 = \tilO{ \B^2H^3 S^2A } \triangleq \RefCs.
	\end{align*}
\end{lemma}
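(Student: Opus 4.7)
My plan is to first establish the pointwise bound $\RefV(s)-\refV_t(s)\leq\refB_t(s)$ for every $s$ and $t$, and then derive the two aggregate bounds by changing the order of summation and using the explicit form of the thresholds $\tilN_i$ from \pref{cor:estimate refV}.

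For the pointwise bound I will rely on two monotonicity facts about \pref{alg:Q}. First, $Q_t(s,a)$ is non-decreasing in $t$ because every update at \pref{line:update mf1}--\pref{line:update mf2} takes a $\max$ with the previous value, and hence $V_t(s)=\min_a Q_t(s,a)$ is non-decreasing in $t$ as well. Second, on the high-probability event used in \pref{cor:estimate refV}, \pref{prop:optimism} (to be proved in \pref{lem:mf optimistic}) gives $V_t(s)\leq\optV(s)$ for all $t,s$. Now fix $s$ and $t$. By the rule at \pref{line:update refV}, the most recent modification of $\refV(s)$ before time $t$ occurs at some $\tau$ for which $N_\tau(s)$ equals the largest power of two no greater than $\min(N_t(s),\thetastar)$, so that $\refV_t(s)=V_\tau(s)$; likewise $\RefV(s)=V_{\tau^*}(s)$ for some $\tau^*\geq\tau$. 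The two monotonicity facts then yield
\[
\RefV(s)-\refV_t(s)=V_{\tau^*}(s)-V_\tau(s)\leq\optV(s)-V_\tau(s).
\]
If $N_t(s)\in[\ceil{\tilN_{i-1}}_2,\ceil{\tilN_i}_2)$ for some $i\in\{2,\ldots,\optq\}$, then $N_\tau(s)\geq\ceil{\tilN_{i-1}}_2\geq\tilN_{i-1}=U_{\beta_{i-1}}$, and since $\beta_{i-1}\geq\beta_{\optq-1}>\cmin\geq 2\B\beta$, \pref{cor:estimate refV} yields $\optV(s)-V_\tau(s)\leq\beta_{i-1}=\refB_t(s)$. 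The case $i=1$ is trivial because $\refB_t(s)=\B$ already upper bounds $\RefV(s)$, and when $N_t(s)\geq\thetastar=\ceil{\tilN_\optq}_2$ no further updates to $\refV(s)$ can occur, so $\refV_t(s)=\RefV(s)$ and both sides of the claimed inequality vanish.

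For the aggregate bounds I will decompose $\sumt\refB_t(s_t)=\sum_{s\in\calS}\sumt\refB_t(s)\Ind\{s_t=s\}$, exploiting that $N_t(s)$ increases by exactly one whenever $s_t=s$, so for each band $i$ there are at most $\ceil{\tilN_i}_2-\ceil{\tilN_{i-1}}_2\leq 2\tilN_i$ values of $t$ with $s_t=s$ and $N_t(s)$ in that band. Plugging in $\tilN_i=U_{\beta_i}=\tilO{\B^2H^3SA/\beta_i^2}$ and using $\beta_{i-1}=2\beta_i$ gives
\[
\sumt\refB_t(s)\Ind\{s_t=s\}=\tilO{\sum_{i=1}^{\optq}\beta_{i-1}\tilN_i}=\tilO{\sum_{i=1}^{\optq}\frac{\B^2H^3SA}{\beta_i}}=\tilO{\frac{\B^2H^3SA}{\cmin}},
\]
since $\{1/\beta_i\}$ is geometric with ratio $2$ and its largest term $1/\beta_\optq=\tilO{1/\cmin}$ dominates. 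Summing over the $S$ states produces $\RefC$. The second bound follows identically with an extra $\beta_{i-1}$ factor, so the ratio $\beta_{i-1}^2/\beta_i^2=O(1)$ kills the geometric blow-up and the per-state sum becomes $\tilO{\optq\cdot\B^2H^3SA}=\tilO{\B^2H^3SA}$, giving $\RefCs$ overall.

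The main obstacle I anticipate is the bookkeeping around the two edge cases (the initial $N_t(s)=0$ regime where one cannot invoke \pref{cor:estimate refV} at all, and the terminal $N_t(s)\geq\thetastar$ regime where $\refV(s)$ is frozen), together with verifying that $\beta_{i-1}\geq 2\B\beta$ throughout the active range $i\leq\optq$ so that \pref{cor:estimate refV} is legally applicable at precision $\beta_{i-1}$; everything else is routine calculation.
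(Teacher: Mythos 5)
Your proof is correct and takes essentially the same route as the paper's: the pointwise bound is obtained from \pref{cor:estimate refV} applied at the snapshot times of $\refV$ (the paper leaves the monotonicity-of-$V$ and optimism bookkeeping implicit where you spell it out), and the aggregate bounds follow from the identical per-state, per-band counting $\sum_s\sum_{i}\beta_{i-1}\ceil{\tilN_i}_2$ and $\sum_s\sum_{i}\beta_{i-1}^2\ceil{\tilN_i}_2$ with the same geometric-sum evaluations.
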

\begin{proof}
	We condition on \pref{cor:estimate refV}, which happens with probability at least $1-13\delta$.
	By \pref{cor:estimate refV} with $\rho=\beta_i$ for each $i\in[\optq]$, we have $\RefV(s) - \refV_t(s)\leq \refB_t(s)$.
	Moreover, $\refB_t(s)^2=\sum_{i=1}^{\optq}\beta_{i-1}^2\Ind\{\ceil{\tilN_{i-1}}_2 \leq N_t(s) < \ceil{\tilN_i}_2\}$.
	Thus,
	\begin{align*}
		&\sumt \refB_t(s_t) \leq \sum_s\sum_{i=1}^{\optq}\beta_{i-1}\ceil{\tilN_i}_2 = \tilO{ \sum_s\sum_{i=1}^{\optq} \frac{\B^2H^3SA}{\beta_i}  } = \tilO{ \frac{\B^2H^3S^2A}{\beta_{\optq}} }.\\
		&\sumt \refB_t(s_t)^2 \leq \sum_s\sum_{i=1}^{\optq}\beta_{i-1}^2\ceil{\tilN_i}_2 = \tilO{ \sum_s\sum_{i=1}^{\optq}\B^2H^3SA } = \tilO{ \B^2H^3 S^2A }.
	\end{align*}
\end{proof}

\subsection{Proofs of Required Properties}
\label{sec:properties}

In this section, we prove \pref{prop:optimism} and \pref{prop:recursion} of \pref{alg:Q}.

\begin{lemma}
	\label{lem:mf optimistic}
	With probability at least $1-7\delta$, \pref{alg:Q} ensures $Q_t(s, a) \leq Q_{t+1}(s, a)\leq \optQ(s, a)$ for any $(s, a)\in\SA, t\geq 1$.
\end{lemma}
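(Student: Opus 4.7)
The monotonicity $Q_t(s,a) \leq Q_{t+1}(s,a)$ is immediate from the update rule: lines 7--8 of \pref{alg:Q} only touch $Q(s_t, a_t)$ and always take a $\max$ with the current value, so the plan is to focus entirely on the optimism inequality $Q_{t+1}(s,a) \leq \optQ(s,a)$ and prove it by induction on $t$. The base case $Q_1 \equiv 0 \leq \optQ$ is trivial. For the inductive step, assume that $Q_\tau(s,a) \leq \optQ(s,a)$ for all $\tau \leq t$ and all $(s,a)$; then $V_\tau(s) = \min_a Q_\tau(s,a) \leq \optV(s)$, and since $\refV_\tau$ is a snapshot of a past $V$ (line 11), also $\refV_\tau(s) \leq \optV(s)$. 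Since $Q_{t+1}$ differs from $Q_t$ only at $(s_t, a_t)$, it suffices to show both candidate values inside the $\max$ on lines 7 and 8 are bounded by $\optQ(s_t, a_t) = c(s_t, a_t) + P_{s_t,a_t}\optV$.

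For the first candidate (line 7), I would decompose
\[
\hatc + \tfrac{v}{m} - b' - \optQ(s_t, a_t)
= (\hatc - c(s_t, a_t))
+ \tfrac{1}{m_t}\sum_{i=1}^{m_t}\bigl(V_{\clti}(s'_{\clti}) - P_{s_t,a_t}V_{\clti}\bigr)
+ \tfrac{1}{m_t}\sum_{i=1}^{m_t} P_{s_t,a_t}(V_{\clti} - \optV)
- b',
\]
where the last sum is non-positive by the inductive hypothesis. A Bernstein bound on the i.i.d.\ cost samples controls the first term by $\sqrt{\hatc\,\iota/n} + \iota/n$, which matches two of the three terms of $b'$. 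An Azuma bound on the martingale difference $\{V_{\clti}(s'_{\clti}) - P_{s_t,a_t}V_{\clti}\}_i$, using $V_{\clti} \leq B$ (guaranteed by the doubling of $B$ on line 10), controls the middle term by $2\sqrt{B^2\iota/m}$, matching the remaining term of $b'$.

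For the second candidate (line 8), I would similarly write
\[
\hatc + \tfrac{\refmu}{n} + \tfrac{\mu}{m} - b - \optQ(s_t, a_t)
= (\hatc - c(s_t,a_t))
+ \tfrac{1}{n_t}\sum_{i=1}^{n_t}\bigl(\refV_{\lti}(s'_{\lti}) - P_{s_t,a_t}\refV_{\lti}\bigr)
+ \tfrac{1}{m_t}\sum_{i=1}^{m_t}\bigl((V-\refV)_{\clti}(s'_{\clti}) - P_{s_t,a_t}(V-\refV)_{\clti}\bigr)
+ E - b,
\]
where $E := \tfrac{1}{m_t}\sum_i P_{s_t,a_t}(V_{\clti} - \optV) \leq 0$ is non-positive by induction (the reference contributions cancel after rearranging). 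Each of the two martingale sums is a sum of predictable, bounded increments, so a Freedman-style bound with the true variances replaced by the empirical variances $\refnu$ and $\nu$ (absorbing the replacement error into $B\iota/n$ and $B\iota/m$ second-order terms) yields a deviation dominated by $b$.

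The main obstacle is the bookkeeping of concentration events; the lemma's $7\delta$ budget corresponds to a union bound over roughly seven anytime-uniform events: Bernstein for the cost mean at each $(s,a)$; Azuma for the value sum in the first candidate; Freedman for the reference sum and for the advantage sum in the second candidate; and the empirical-to-true variance conversions needed to justify using $\refnu$ and $\nu$ inside $b$ rather than the unknown true variances. Once these events are fixed up front (all made simultaneously valid across $t$ and $(s,a)$ via the $\iota_t$ log factors), the inductive step goes through as above. The overall argument closely mirrors that of Proposition~4 in \cite{zhang2020almost}, adapted to the SSP setting by replacing the known horizon-length value bound with the empirical upper bound $B$.
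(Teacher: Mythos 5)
Your overall strategy --- induction on $t$, reducing to the two candidate values in the $\max$, and controlling each by a matching concentration bound --- is exactly the paper's approach, and your treatment of the first update rule, the cost estimator, and the empirical-variance substitutions is sound. However, there is one genuine gap in your decomposition for the second update rule: the reference contributions do \emph{not} cancel after rearranging. Writing out your claimed identity, the left- and right-hand sides differ by
\[
\frac{1}{n_t}\sum_{i=1}^{n_t}P_{s_t,a_t}\refV_{\lti} \;-\; \frac{1}{m_t}\sum_{i=1}^{m_t}P_{s_t,a_t}\refV_{\clti},
\]
because the global accumulator $\refmu/n$ averages $\refV$ over all $n_t$ past visits while the local accumulator $\mu/m$ subtracts $\refV$ only over the $m_t$ visits of the last stage --- two different sample sets evaluated at different (time-varying) snapshots of $\refV$. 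Your ``$=$'' is therefore false as an identity, and with it the claim that the only non-martingale, non-bonus term is $E \leq 0$.

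The inequality you need is still true, but only because of a structural fact you never invoke: $\refV_t(s)$ is non-decreasing in $t$ (it is always overwritten by the current $V$, which is itself non-decreasing), and the $\clti$ indices are precisely the \emph{most recent} $m_t$ of the $n_t$ indices $\lti$. Hence the local average $\frac{1}{m_t}\sum_i P_{s_t,a_t}\refV_{\clti}$ dominates the global average $\frac{1}{n_t}\sum_i P_{s_t,a_t}\refV_{\lti}$, so the residual term above is non-positive and your bound goes through with ``$=$'' replaced by ``$\leq$''. This is exactly where the paper's proof appeals to ``the non-decreasing property of $\refV_t$'' together with $V_{\clti}(s)\leq\optV(s)$ to conclude
\[
\frac{1}{n_t}\sum_{i=1}^{n_t}P_{s_t,a_t}\refV_{\lti} + \frac{1}{m_t}\sum_{i=1}^{m_t}P_{s_t,a_t}\rbr{V_{\clti}-\refV_{\clti}} \leq P_{s_t,a_t}\optV .
\]
You should add this monotonicity argument explicitly; without it the optimism of the variance-reduced update is not established. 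The remainder of your proposal (the $7\delta$ union-bound accounting and the Freedman/Bernstein events) matches the paper.
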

\begin{proof} 
	We fix a pair $(s, a)$, and denote $n_t, m_t, \lti, \clti, b_t, b'_t, \iota_t$ as shorthands of the corresponding functions evaluated at $(s, a)$.
	The first inequality is by the update rule of $Q_t$.
	Next, we prove $Q_t(s, a)\leq \optQ(s, a)$ by induction on $t$.
	It is clearly true when $t=1$.
	For the induction step, the statement is clearly true when $n_t=m_t=0$. 
	When $n_t>0$, it suffices to consider two update rules, that is, the last two terms in the max operator of \pref{eq:mf_update_rule_alt}.
	For the second update rule, note that,
	\begin{align}
		&\hatc_t(s, a) + \frac{1}{n_t}\sum_{i=1}^{n_t}\refV_{\lti}(s'_{\lti}) + \frac{1}{m_t}\sum_{i=1}^{m_t}\rbr{V_{\clti}(s'_{\clti}) - \refV_{\clti}(s'_{\clti})} - b_t \notag\\
		&= \hatc_t(s, a) + \frac{1}{n_t}\sum_{i=1}^{n_t}P_{s, a}\refV_{\lti} + \frac{1}{m_t}\sum_{i=1}^{m_t}P_{s, a}\rbr{V_{\clti} - \refV_{\clti}} \notag\\
		&\qquad\qquad + \underbrace{\frac{1}{n_t}\sum_{i=1}^{n_t}\rbr{\Ind_{s'_{\lti}} - P_{s, a} }\refV_{\lti}}_{\chi_1} + \underbrace{\frac{1}{m_t}\sum_{i=1}^{m_t}\rbr{ \Ind_{s'_{\clti}} - P_{s, a} }\rbr{V_{\clti} - \refV_{\clti}}}_{\chi_2} - b_t. \label{eq:type 2}
	\end{align}
	Define $C'_t=\ceil{\ln (\B^4n_t)}^2 \leq \min\{4\ln^2(\B^4n_t), \B^8n_t^2\}$ (in general, we can set $C'_t=\ceil{\ln (\tilB^4n_t)}^2$ for some $\tilB\geq\B$).
	For $\chi_1$, by \pref{eq:anytime strong freedman} of \pref{lem:any interval freedman} with $b=\B^2$ and $C \leq C'_t$, we have with probability at least $1-\frac{\delta}{SA}$:
	\begin{align*}
		|\chi_1|=\abr{\frac{1}{n_t}\sum_{i=1}^{n_t}\rbr{\Ind_{s'_{\lti}} - P_{s, a} }\refV_{\lti}} \leq 4\ln^3\rbr{\frac{4SA\B^8n_t^5}{\delta}}\rbr{\sqrt{\frac{8\sum_{i=1}^{n_t}\fV(P_{s, a}, \refV_{\lti})}{n^2_t}} + \frac{5B_t}{n_t}},
	\end{align*}
	Note that (recall that $\refnu_t$ represents $\refnu_t(s, a)$)
	\begin{equation}
		\label{eq:ref - hatref}
		\frac{1}{n_t}\sum_{i=1}^{n_t}\fV(P_{s, a}, \refV_{\lti}) - \refnu_t = \chi_3 + \chi_4 + \chi_5,
	\end{equation}
	where
	\begin{align*}
		\chi_3 &= \frac{1}{n_t}\sum_{i=1}^{n_t}\rbr{P_{s, a}(\refV_{\lti})^2 - \refV_{\lti}(s'_{\lti})^2},\quad \chi_4 = \rbr{\frac{1}{n_t}\sum_{i=1}^{n_t}\refV_{\lti}(s'_{\lti})}^2 - \rbr{\frac{1}{n_t}\sum_{i=1}^{n_t}P_{s, a}\refV_{\lti}}^2,\\
		\chi_5 &= \rbr{\frac{1}{n_t}\sum_{i=1}^{n_t}P_{s, a}\refV_{\lti}}^2 - \frac{1}{n_t}\sum_{i=1}^{n_t}(P_{s, a}\refV_{\lti})^2.
	\end{align*}
	By \pref{eq:anytime strong freedman} of \pref{lem:any interval freedman} with $b=\B^2$ and $C \leq C'_t$, and \pref{lem:var xy} with $\norm{\refV_{\lti}}_{\infty}\leq B_t$, with probability at least $1-\frac{2\delta}{SA}$,
	\begin{align}
		|\chi_3| &\leq \frac{4\ln^3(4SA\B^8n_t^5/\delta)}{n_t}\rbr{\sqrt{8\sum_{i=1}^{n_t}\fV(P_{s, a}, (\refV_{\lti})^2)} + 5B_t^2 } \notag\\ 
		&\leq \frac{4\ln^3(4SA\B^8n_t^5/\delta)}{n_t}\rbr{2B_t\sqrt{8\sum_{i=1}^{n_t}\fV(P_{s, a}, \refV_{\lti})} + 5B_t^2 }. \label{eq:avg refV s}\\
		|\chi_4| &\leq \abr{ \frac{1}{n_t}\sum_{i=1}^{n_t}\refV_{\lti}(s'_{\lti}) + \frac{1}{n_t}\sum_{i=1}^{n_t}P_{s, a}\refV_{\lti} }\abr{ \frac{1}{n_t}\sum_{i=1}^{n_t}\refV_{\lti}(s'_{\lti}) - \frac{1}{n_t}\sum_{i=1}^{n_t}P_{s, a}\refV_{\lti}} \notag\\
		&\leq 2B_t\cdot\frac{4\ln^3(4SA\B^8n_t^5/\delta)}{n_t}\rbr{\sqrt{8\sum_{i=1}^{n_t}\fV(P_{s, a}, \refV_{\lti})} + 5B_t }. \label{eq:s avg refV}
	\end{align}
	Moreover, $\chi_5\leq 0$ by Cauchy-Schwarz inequality.
	Therefore, 
	$$\frac{1}{n_t}\sum_{i=1}^{n_t}\fV(P_{s, a}, \refV_{\lti}) - \refnu_t \leq \frac{4B_t\ln^3(4SA\B^8n_t^5/\delta)}{n_t}\rbr{4\sqrt{8 \sum_{i=1}^{n_t}\fV(P_{s, a}, \refV_{\lti}) } + 15B_t }.$$
	 Applying \pref{lem:quad} with $x=\sum_{i=1}^{n_t}\fV(P_{s, a}, \refV_{\lti})$, we obtain:
	 \begin{align*}
	 	\frac{1}{n_t}\sum_{i=1}^{n_t}\fV(P_{s, a}, \refV_{\lti}) \leq 2\refnu_t + \frac{4216B_t^2\ln^6\frac{4SA\B^8n_t^5}{\delta}}{n_t}.
	 \end{align*}
	Thus, $\abr{\frac{1}{n_t}\sum_{i=1}^{n_t}\rbr{ \Ind_{s'_{\lti}} - P_{s, a} }\refV_{\lti}} \leq \sqrt{\frac{ \refnu_t }{n_t}\iota_t} + \frac{3B_t\iota_t}{n_t}$. 
	By similar arguments, $|\chi_2| \leq \sqrt{\frac{ \nu_t }{m_t}\iota_t} + \frac{3B_t\iota_t}{m_t}$ with probability at least $1-\frac{3\delta}{SA}$.
	Finally, by \pref{eq:c-hatc} and $B_t\geq 1$, we have $\hatc_t(s, a)-c(s, a)\leq \sqrt{\frac{\hatc_t(s, a)\iota}{n_t}} + \frac{B_t\iota}{n_t}$.
	Therefore,
	\begin{equation}
		\label{eq:chi}
		|\hatc_t(s, a) - c(s, a) | + |\chi_1| + |\chi_2|\leq b_t.
	\end{equation}
	Plugging \pref{eq:chi} back to \pref{eq:type 2}, and by the non-decreasing property of $\refV_t$ and $V_{\clti}(s) \leq \optV(s)$ for any $s\in\calS^+$:
	\begin{align*}
		&\hatc_t(s, a) + \frac{1}{n_t}\sum_{i=1}^{n_t}\refV_{\lti}(s'_{\lti}) + \frac{1}{m_t}\sum_{i=1}^{m_t}\rbr{V_{\clti}(s'_{\clti}) - \refV_{\clti}(s'_{\clti})} - b_t\\
		&\leq c(s, a) + \frac{1}{n_t}\sum_{i=1}^{n_t}P_{s, a}\refV_{\lti} + \frac{1}{m_t}\sum_{i=1}^{m_t}P_{s, a}\rbr{V_{\clti} - \refV_{\clti}} \leq c(s, a) + P_{s, a}\optV = \optQ(s, a).
	\end{align*}
	For the first update rule, by \pref{eq:anytime strong freedman} of \pref{lem:any interval freedman} with $b=K$ and $C\leq C'_t$, with probability at least $1-\frac{\delta}{SA}$, $\frac{1}{m_t}\sum_{i=1}^{m_t}V_{\clti}(s'_{\clti}) - P_{\clti}V_{\clti} \leq 2\sqrt{\frac{B_t^2\iota_t}{m_t}}$.
	Therefore, by \pref{eq:c-hatc}:
	$$\hatc_t(s, a) + \frac{1}{m_t}\sum_{i=1}^{m_t}V_{\clti}(s'_{\clti}) - b'_t \leq c(s, a) + \frac{1}{m_t}\sum_{i=1}^{m_t}P_{\clti}V_{\clti} \leq c(s, a) + P_{s, a}\optV = \optQ(s, a).$$
	Combining two cases, we have $Q_t(s, a)\leq\optQ(s, a)$ for the fixed $(s, a)$.
	By a union bound over $(s, a)\in\SA$, we have $Q_t(s, a)\leq \optQ(s, a)$ for any $(s, a)\in\SA, t\geq 1$.
\end{proof}

\begin{remark}
	\label{rem:tilB}
	Note that the statement of \pref{lem:mf optimistic} still holds if we use ``compute $\iota\leftarrow 256\ln^6(4SA\tilB^8n^5/\delta)$'' in \pref{line:compute} of \pref{alg:Q} for some $\tilB\geq\B$.
	This is useful in deriving the parameter-free version of \pref{alg:Q} in \pref{sec:parameter free}; see \pref{line:compute pf B} of \pref{alg:Q-B}.
\end{remark}

\begin{proof}[\pfref{thm:mf_properties}] 
	\pref{prop:optimism} is satisfied by \pref{lem:mf optimistic}.
	For \pref{prop:recursion}, we conditioned on \pref{lem:mf optimistic}, \pref{lem:refV}, \pref{lem:refV diff}, and \pref{lem:sum bt mf}, which holds with probability at least $1-50\delta$.
	Then, for any $(\Qref, \Vref)\in\calV_H$:
	\begin{align*}
		&\sumt (\Qref(s_t, a_t) - Q_t(s_t, a_t))_+\\
		&\leq \sumt\rbr{c(s_t, a_t) - \hatc_t(s_t, a_t) + P_t\Vref - \frac{1}{n_t}\sum_{i=1}^{n_t}\refV_{\lti}(s'_{\lti}) - \frac{1}{m_t}\sum_{i=1}^{m_t}\rbr{V_{\clti}(s'_{\clti}) - \refV_{\clti}(s'_{\clti})} + b_t}_+ \tag{by \pref{eq:mf_update_rule_alt} and $\Qref(s,a)=c(s,a)+P_{s,a}\Vref$}\\
		&\leq \sumt2\B\Ind\{m_t=0\} + \sumt \rbr{\frac{1}{m_t}\sum_{i=1}^{m_t}P_{\clti}\Vref - \frac{1}{n_t}\sum_{i=1}^{n_t}P_{\lti}\refV_{\lti} - \frac{1}{m_t}\sum_{i=1}^{m_t}P_{\clti}\rbr{V_{\clti} - \refV_{\clti}} }_+ + 2b_t \tag{$P_t\Vref\leq \B\Ind\{m_t=0\}+ \frac{1}{m_t}\sum_{i=1}^{m_t}P_{\clti}\Vref$ and \pref{eq:chi}}\\
		&\leq 2\B HSA + \sumt \frac{1}{n_t}\sum_{i=1}^{n_t}P_{\lti}\rbr{\RefV - \refV_{\lti}} + \frac{1}{m_t}\sum_{i=1}^{m_t}P_{\clti}(\Vref-V_{\clti})_+ + 2b_t. \tag{$\sumt \Ind\{m_t=0\}\leq SAH$, $P_t=P_{\lti}=P_{\clti}$, and $\refV_{\clti}(s)\leq \RefV(s)$ for any $s\in\calS$ (\pref{lem:refV})}
	\end{align*}
	By \pref{lem:stage sum} and \pref{lem:refV diff},
	\[
		\sumt \frac{1}{n_t}\sum_{i=1}^{n_t}P_{\lti}\rbr{\RefV - \refV_{\lti}} = \tilO{ \sumt P_t(\RefV - \refV_t) } = \tilO{\RefC}.
	\]
	Moreover, by \pref{lem:p2a}, with probability at least $1-\frac{\delta}{H+1}$,
	\[
		\frac{1}{m_t}\sum_{i=1}^{m_t}P_{\clti}(\Vref-V_{\clti})_+ \leq \rbr{1+\frac{1}{H}}^2\sumt(\Vref(s_t) - V_t(s_t))_+ + \tilO{\B (H+S)}.
	\]
	Plugging these back, and by $(1+\frac{1}{H})^2\leq 1+\frac{3}{H}$, \pref{lem:sum bt mf} and \pref{lem:refV}, we get:
	\begin{align*}
		&\sumt (\Qref(s_t, a_t) - Q_t(s_t, a_t))_+ \leq \tilO{\B HSA + \RefC} + \rbr{1+\frac{1}{H}}^2\sumt (\Vref(s_t) -V_t(s_t))_+ + 2\sumt b_t\\
		&\leq \rbr{1+\frac{3}{H}}\sumt (\Vref(s_t) -V_t(s_t))_+ + \tilO{ \sqrt{\B SAC_K} + \sqrt{SAH\cmin C_K} + \frac{\B^2H^3S^2A}{\cmin} }. 
	\end{align*}
	Taking a union bound over $(\Qref, \Vref)\in\calV_H$ and using $H=\tilO{\frac{\B}{\cmin}}$ proves the claim.
\end{proof}
	
\subsection{\pfref{thm:Q}}
\label{sec:pf thm Q}
	
\begin{proof}
	By \pref{thm:main_regret} and \pref{thm:mf_properties}, with probability at least $1-60\delta$ and $\beta=\frac{\cmin}{2\B^2 SAK}$:
	\begin{align*}
		C_K - K\optV(\sinit) =  R_K \leq \tilO{\beta C_K + \sqrt{\B SAC_K} + \frac{\B^2H^3S^2A}{\cmin} }.
	\end{align*}
	Then by $\optV(\sinit)\leq\B, \beta\leq\frac{1}{2}$ and \pref{lem:quad}, we have $C_K=\tilO{\B K}$.
	Substituting this back and by $\beta\leq\frac{\cmin}{\B K}, H=\tilo{\B/\cmin}$, we get $R_K = \tilO{ \B\sqrt{SAK} + \frac{\B^5S^2A}{\cmin^4} }$.
\end{proof}

\subsection{Extra Lemmas for \pref{sec:mf}}

In this section, we gives proofs of auxiliary lemmas used in \pref{sec:mf}.
\pref{lem:refV diff} quantifies the cost of using reference value function.
\pref{lem:sum bt mf} quantifies the cost of using the variance-aware bonus terms $b_t$.
\pref{lem:stage sum}, \pref{lem:p2a}, and \pref{lem:sum of count} deal with the bias induced by the sparse update scheme.

\begin{lemma}
	\label{lem:refV diff}
	With probability at least $1-9\delta$, $\sumt P_t\rbr{\RefV - \refV_t} \leq \sumt P_t\refB_t = \tilO{\RefC}$, where $\RefC$ is defined in \pref{lem:refV}.
\end{lemma}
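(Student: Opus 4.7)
The first inequality $\sum_t P_t(\RefV - \refV_t) \leq \sum_t P_t \refB_t$ is immediate from the pointwise bound $\RefV(s) - \refV_t(s) \leq \refB_t(s)$ established in \pref{lem:refV}; I would condition on that high-probability event throughout. What remains is to show $\sum_t P_t \refB_t = \tilO{\RefC}$, and the plan is a standard martingale-plus-reindex argument, tailored to the fact that $\refB_t$ is a step function of $N_t(s)$ that can only decrease and crosses only $\tilO{\log(\B/\cmin)}$ thresholds per state.

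First, I would decompose
\[ \sum_t P_t \refB_t = \sum_t \refB_t(s'_t) + \sum_t \big( P_t \refB_t - \refB_t(s'_t) \big), \]
where the second sum is a martingale difference sequence since $s'_t \sim P_t$ conditional on the history up to time $t$. Using $\refB_t(s) \leq \B$ together with the standard variance bound $\fV(P_t, \refB_t) \leq \B \cdot P_t\refB_t$, Freedman's inequality (\pref{lem:any interval freedman}) controls this martingale by $\tilO{\sqrt{\B \sum_t P_t \refB_t} + \B}$.

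Next, I would reduce $\sum_t \refB_t(s'_t)$ to $\sum_t \refB_t(s_t)$ via an index shift. Setting $\refB_t(g) = 0$ (which is consistent with $(\RefV - \refV_t)(g) = 0$, so the inequality $\RefV - \refV_t \leq \refB_t$ is preserved), we have $\refB_t(s'_t) \leq \refB_t(s_{t+1})$ where $s_{t+1}$ equals $s'_t$ unless $s'_t = g$, in which case $s_{t+1}$ is the fresh initial state of the next episode. Writing $\sum_t \refB_t(s_{t+1}) = \sum_t \refB_{t+1}(s_{t+1}) + \sum_t \big(\refB_t(s_{t+1}) - \refB_{t+1}(s_{t+1})\big)$, the first sum reindexes (up to a single boundary term of order $\B$) to $\sum_t \refB_t(s_t) = \tilO{\RefC}$ by \pref{lem:refV}. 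For the second sum, observe that $\refB_t(s)$ is non-increasing in $t$ for each fixed $s$ (as $N_t(s)$ only grows and the stepwise values $\beta_i$ shrink), with total drop at most $\beta_0 = \B$ per state, so the second sum is bounded by $S\B$ and is absorbed into $\tilO{\RefC}$.

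Combining the two steps gives $\sum_t P_t \refB_t \leq \tilO{\RefC} + \tilO{\sqrt{\B \sum_t P_t \refB_t}}$, and a standard application of \pref{lem:quad} closes the recursion to yield $\sum_t P_t \refB_t = \tilO{\RefC}$. The only real subtlety in the whole argument is the index shift, where I need to handle both the ``goal substitution'' ($s_{t+1} = \sinit$ when $s'_t = g$, which contributes at most $\bigO{\B K}$ and is dominated by $\RefC$) and the ``discretization'' of $\refB_t$ (whose per-state total variation is at most $\B$); everything else is routine concentration and algebra.
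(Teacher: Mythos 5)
Your proof is correct and follows essentially the same route as the paper's: condition on \pref{lem:refV} for the pointwise bound $\RefV-\refV_t\leq\refB_t$, pass from $\sumt P_t\refB_t$ to $\sumt \refB_t(s'_t)$ by a martingale argument, and shift indices using the monotonicity of $\refB_t$ in $t$ (\pref{lem:t diff}) together with $\refB(s'_t)\leq\refB(s_{t+1})$ to land on $\sumt\refB_t(s_t)=\tilO{\RefC}$. The only cosmetic difference is that the paper invokes the multiplicative bound of \pref{lem:e2r} (yielding $\sumt P_t\refB_t\leq 2\sumt\refB_t(s'_t)+\tilO{\B}$ in one step) where you use additive Freedman plus \pref{lem:quad}; also, your aside about an $\bigO{\B K}$ contribution from the goal substitution is unnecessary and slightly misleading --- since $\refB_t(g)=0$ the substitution only increases the sum, and the resulting $\refB_t(\sinit)$ terms are genuine visits already counted in $\sumt\refB_t(s_t)$, so no additive $\B K$ term (which would \emph{not} be dominated by $\RefC$ for large $K$) ever arises.
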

\begin{proof}
	 By \pref{lem:refV}, \pref{lem:e2r}, \pref{lem:t diff} and $\refB_{t+1}(s'_t)\leq\refB_{t+1}(s_{t+1})$ in each step:
	\begin{align*}
		\sumt P_t\rbr{ \RefV - \refV_t } &\leq \sumt P_t\refB_t \leq 2\sumt \refB_t(s'_t) + \tilO{\B}\\
		&= \tilO{ \sumt \refB_t(s_t) + S\B } = \tilO{\RefC}. 
	\end{align*}
\end{proof}

\begin{lemma}
	\label{lem:sum bt mf}
	With probability at least $1-21\delta$, 
	$$\sumt b_t = \tilO{ \sqrt{\B SAC_K} + \B H^2S^{\frac{3}{2}}A + \sqrt{SAH\cmin C_K} }.$$
\end{lemma}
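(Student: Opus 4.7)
My plan is to bound $\sumt b_t$ by decomposing the bonus $b_t$ into its four summands from \pref{line:bonus} and handling each separately. The contributions are (i) the reference-variance term $\sqrt{\refnu_t \iota_t/n_t^2}$, (ii) the advantage-variance term $\sqrt{\nu_t \iota_t/m_t^2}$, (iii) the lower-order term $(4B_t/n_t+3B_t/m_t)\iota_t$, and (iv) the cost-variance term $\sqrt{\hatc_t\iota_t/n_t}$. For all four, I will first regroup the sum $\sumt$ over state-action pairs $(s,a)$ and stages: within a stage all four quantities $\refnu_t$, $\nu_t$, $\hatc_t$, $n_t$, $m_t$ are constant, so the inner summation contributes $e_{j+1}$ copies of the stage-$j$ bonus values, and the usual bound $e_{j+1}/e_j\le 1+1/H$ from the update scheme keeps the overhead constant.

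For the reference-variance term, I will apply Cauchy–Schwarz in the form $\sum \sqrt{\refnu_t\iota_t/n_t^2}\le \sqrt{(\sum\iota_t/n_t)\cdot(\sum \refnu_t/n_t)\cdot \text{stage factor}}$, so that one factor reduces to $\tilO{SA}$ via $\sum_j e_{j+1}/E_j^2 \le \tilO{SA}$ over stages, while the other reduces to $\sumt \fV(P_t,\RefV)$. Since $\RefV$ is close to $\optV$ by \pref{lem:refV} and $\RefV$, $\optV$ differ by at most $\refB_t \le \B$, I bound $\fV(P_t,\RefV) \le 2\fV(P_t,\optV)+2P_t\refB_t^2$, then invoke \pref{lem:var optV} to get $\sumt\fV(P_t,\optV)=\tilO{\B C_K}$ and \pref{lem:refV} to bound $\sumt P_t\refB_t^2=\tilO{\RefCs}$. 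This gives the leading $\tilO{\sqrt{\B SAC_K}}$ plus a lower-order term of order $\sqrt{SA\cdot \RefCs}=\tilO{\B H^{3/2}S^{3/2}A}$, which fits the claim. A parallel estimate, carried out this time over the local stage statistics, handles the advantage-variance term: there $V_{\clti}-\refV_{\clti}$ is at most (up to optimism) the gap $\optV-\refV$, controlled again by $\refB_t$, yielding only a lower-order contribution.

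For the lower-order term (iii), regrouping by stage and $(s,a)$ yields $\sum_{(s,a),j}\iota_t(4B_t e_{j+1}/E_j + 3B_t e_{j+1}/e_j)$, and since $e_{j+1}/E_j=\tilO{1/j}$ and $e_{j+1}/e_j\le 1+1/H$ while the total number of stages per pair is $\tilO{H \log T)}$, this contributes $\tilO{\B HSA}$, which is dominated by $\B H^2 S^{3/2} A$. For the cost-variance term (iv), Cauchy–Schwarz yields $\sum\sqrt{\hatc_t \iota_t / n_t}\le \sqrt{\iota\cdot \sum \hatc_t \cdot \sum 1/(n_t \text{ per visit})}$. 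Here $\sum_t \hatc_t$ is $\tilO{C_K}$ (replacing the empirical costs with their means up to a martingale term and using $\cmin T \le C_K$), and $\sum_t 1/n_t$ summed per $(s,a)$ and per stage is $\tilO{HSA}$ due to the stage structure (each stage lasts $\sim 1+1/H$ times the previous, yielding a logarithmic number of distinct $1/n$ values each repeated with multiplicity $\sim e_j$, which together sum to $\tilO{H}$ per $(s,a)$). This produces the $\tilO{\sqrt{SAH\cmin C_K}}$ term (using $\cmin$ to convert $C_K$ into the correct shape) after one more Cauchy–Schwarz application with $\hatc_t\le 1$.

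The main obstacle is the reference-variance step: naively bounding $\sumt\refnu_t$ by $\sumt\fV(P_t,\optV)$ introduces an unavoidable error from $\RefV\neq\optV$, and controlling this error \emph{without} incurring a $\text{poly}(1/\cmin)$ blow-up requires combining \pref{lem:refV} (showing $\RefV-\refV_t\le\refB_t$ and the key horizon-free accounting $\sumt\refB_t^2=\tilO{\B^2 H^3 S^2 A}$) with the high-probability empirical-variance-to-true-variance transfer already established inside the proof of \pref{lem:mf optimistic} (specifically, the chain $\chi_3,\chi_4,\chi_5$ and \pref{eq:ref - hatref}). The other delicate point is carefully tracking the logarithmic factors in $\iota_t$ so they can be absorbed into $\tilO{\cdot}$; since $\iota_t$ grows only polylogarithmically in $T$ and $T$ is polynomial in the usual parameters by \pref{lem:bound T}, this is routine once the main terms are arranged.
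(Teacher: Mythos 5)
Your proposal follows essentially the same route as the paper's proof: split $b_t$ into its four summands, control the $\B(4/n_t+3/m_t)\iota_t$ and cost terms via the stage-counting bounds of \pref{lem:sum of count} ($\sumt\eps_t/n_t=\tilO{SA}$, $\sumt\eps_t/m_t=\tilO{SAH}$), and reduce the two empirical-variance terms to $\sumt\fV(P_t,\optV)=\tilO{\B C_K}$ (\pref{lem:var optV}) plus reference-function corrections via the $\chi_3,\chi_4,\chi_5$ empirical-to-true transfer and \pref{lem:refV}. Two bookkeeping slips are worth fixing, though neither requires a new idea. First, the gap to the reference function is $\optV(s)-\refV_t(s)\le\refB_t(s)+\beta_{\optq}$, not $\refB_t(s)$ alone, and it is precisely this $\beta_{\optq}$ residual in the \emph{advantage}-variance term --- carried through Cauchy--Schwarz with the weight $\sumt\eps_t/m_t=\tilO{SAH}$ and then $\beta_{\optq}=\bigO{\cmin}$, $\beta_{\optq}T=\bigO{C_K}$ --- that produces the $\sqrt{SAH\cmin C_K}$ term in the statement; the cost term only carries the weight $\sumt\eps_t/n_t=\tilO{SA}$ and, after solving a small quadratic in $\sumt\sqrt{\hatc_t\eps_t/n_t}$, yields $\tilO{\sqrt{SAC_K}+SA}$, not $\sqrt{SAH\cmin C_K}$ as you claim. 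Second, your variance comparison $\fV(P_t,\cdot)\le2\fV(P_t,\optV)+2P_t\refB_t^2$ is valid and in fact gives a slightly smaller correction (of order $\sqrt{SA\cdot\RefCs}$) than the paper's first-order bound $\fV(P_{\lti},\refV_{\lti})-\fV(P_{\lti},\optV)\le2\B P_{\lti}(\optV-\refV_{\lti})$ (which gives $\sqrt{\B SA\RefC}$); both are absorbed into $\B H^2S^{3/2}A$, so this is only a cosmetic deviation. The remaining index slips (writing $\sqrt{\refnu_t\iota_t/n_t^2}$ for the bonus and $\sum_je_{j+1}/E_j^2$ for the harmonic stage sum) are typos that do not affect the argument.
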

\begin{proof}
	We condition on \pref{lem:refV}, which holds with probability at least $1-8\delta$.
	By \pref{eq:sum w_t=1} and \pref{eq:sum w_t=1, 1} of \pref{lem:sum of count},
	\begin{align*}
		\sumt b_t &\leq \sumt \sqrt{\frac{\refnu_t\eps_t}{n_t}\iota_t} + \sqrt{\frac{\nu_t\eps_t}{m_t}\iota_t} + \B\sum_t\rbr{\frac{4\eps_t}{n_t} + \frac{3\eps_t}{m_t}}\iota_t + \sqrt{\frac{\hatc_t\eps_t\iota_t}{n_t}}\\
		&= \tilO{ \sumt \sqrt{\frac{\refnu_t\eps_t}{n_t}} + \sqrt{\frac{\nu_t\eps_t}{m_t}}  + \B HSA + \sqrt{\frac{\hatc_t\eps_t}{n_t}} }.
	\end{align*}
	Note that by \pref{eq:ref - hatref}, \pref{eq:avg refV s} and \pref{eq:s avg refV}, when $n_t>0$, with probability at least $1-2\delta$,
	\begin{align*}
		&\refnu_t - \frac{1}{n_t}\sum_{i=1}^{n_t}\fV(P_{\lti}, \refV_{\lti}) \leq |\chi_3| + |\chi_4| - \chi_5\\
		&\leq \tilO{ \frac{B_t}{n_t}\sqrt{ \sum_{i=1}^{n_t}\fV(P_{\lti}, \refV_{\lti}) } + \frac{B_t^2}{n_t} } + \frac{1}{n_t}\sum_{i=1}^{n_t}(P_{\lti}\refV_{\lti})^2 - \rbr{\frac{1}{n_t}\sum_{i=1}^{n_t}P_{\lti}\refV_{\lti}}^2 \\
		&\overset{\text{(i)}}{=} \tilO{ \frac{B_t}{n_t}\sqrt{ \sum_{i=1}^{n_t}\fV(P_{\lti}, \refV_{\lti}) } + \frac{B_t^2}{n_t} + \frac{\B}{n_t}\sum_{i=1}^{n_t}P_{\lti}\refB_{\lti} }\\
		&\leq \frac{1}{n_t}\sum_{i=1}^{n_t}\fV(P_{\lti}, \refV_{\lti}) + \tilO{\frac{B_t^2}{n_t} + \frac{\B}{n_t}\sum_{i=1}^{n_t}P_{\lti}\refB_{\lti} }, \tag{AM-GM Inequality}
	\end{align*}
	where in (i) we apply:
	\begin{align*}
		&\frac{1}{n_t}\sum_{i=1}^{n_t}(P_{\lti}\refV_{\lti})^2 - \rbr{\frac{1}{n_t}\sum_{i=1}^{n_t}P_{\lti}\refV_{\lti}}^2 \leq (P_t\RefV)^2 - \rbr{\frac{1}{n_t}\sum_{i=1}^{n_t}P_{\lti}\refV_{\lti}}^2 \tag{$\refV_{\lti}(s)\leq\RefV(s)$ for any $s\in\calS$}\\
		&\leq \frac{2\B}{n_t}\sum_{i=1}^{n_t}P_{\lti}\rbr{ \RefV - \refV_{\lti} } \leq \frac{2\B}{n_t}\sum_{i=1}^{n_t}P_{\lti}\refB_{\lti}. \tag{$\norm{\RefV}_{\infty} \leq \B$ and \pref{lem:refV}}
	\end{align*}
	Therefore, $\refnu_t - \frac{2}{n_t}\sum_{i=1}^{n_t}\fV(P_{\lti}, \refV_{\lti}) = \tilO{\frac{B_t^2}{n_t} + \frac{\B}{n_t}\sum_{i=1}^{n_t}P_{\lti}\refB_{\lti} }$, and
	\begin{align*}
		&\refnu_t - 2\fV(P_t, \optV) = \refnu_t - \frac{2}{n_t}\sum_{i=1}^{n_t}\fV(P_{\lti}, \refV_{\lti}) + \frac{2}{n_t}\sum_{i=1}^{n_t}(\fV(P_{\lti}, \refV_{\lti}) - \fV(P_{\lti}, \optV)) \tag{$P_t=P_{\lti}$}\\
		&\overset{\text{(i)}}{\leq} \tilO{ \frac{\B^2}{n_t} + \frac{\B}{n_t}\sum_{i=1}^{n_t}P_{\lti}\refB_{\lti} } + \frac{4\B}{n_t}\sum_{i=1}^{n_t} P_{\lti}\rbr{\optV - \refV_{\lti}}\\
		&= \tilO{ \frac{\B^2}{n_t} + \frac{\B}{n_t}\sum_{i=1}^{n_t}P_{\lti}\refB_{\lti} + \B\beta_{\optq} }, \tag{$\optV(s) - \refV_{\lti}(s) \leq \refB_{\lti}(s) + \beta_{\optq},\forall s $} 
	\end{align*}
	where in (i) we apply the bound for $\refnu_t - \frac{2}{n_t}\sum_{i=1}^{n_t}\fV(P_{\lti}, \refV_{\lti})$, $B_t\leq\B$ and 
	\begin{align*}
		\fV(P_{\lti}, \refV_{\lti}) - \fV(P_{\lti}, \optV) \leq (P_{\lti}\optV)^2 - (P_{\lti}\refV_{\lti})^2 \leq 2\B P_{\lti}(\optV-\refV_{\lti}).
	\end{align*}
	Plugging the inequality above back, we have with probability at least $1-11\delta$,
	\begin{align*}
		&\sumt \sqrt{\frac{\refnu_t}{n_t}} = \tilO{ \sumt\sqrt{\frac{\fV(P_t, \optV)}{n_t}} + \frac{\B}{n_t} + \frac{1}{n_t}\sqrt{\B\sum_{i=1}^{n_t}P_{\lti}\refB_{\lti}} + \sqrt{\frac{\B\beta_{\optq}}{n_t}}}\\
		&= \tilO{ \sqrt{SA\sumt\fV(P_t, \optV)} + \B SA + \sqrt{\sumt\frac{\B}{n_t}}\sqrt{\sumt\frac{1}{n_t}\sum_{i=1}^{n_t}P_{\lti}\refB_{\lti}} + \sqrt{\B\beta_{\optq} SAT} } \tag{\pref{lem:sum of count} and Cauchy-Schwarz inequality}\\
		&= \tilO{\sqrt{\B SAC_K} + \B SA + \sqrt{\B SA\RefC} + \sqrt{\B\beta_{\optq} SAT} }. \tag{\pref{lem:var optV}, \pref{lem:sum of count}, \pref{lem:stage sum} and \pref{lem:refV diff}}
	\end{align*}
	Moreover,
	\begin{align*}
		\sumt\sqrt{\frac{\nu_t}{m_t}} &\leq \sumt\frac{\sqrt{\sum_{i=1}^{m_t}(V_{\clti}(s'_{\clti}) - \refV_{\clti}(s'_{\clti}))^2}}{m_t} \leq \sumt\frac{\sqrt{\sum_{i=1}^{m_t}(\optV(s'_{\clti}) - \refV_{\clti}(s'_{\clti}))^2 }}{m_t}\\
		&= \tilO{ \sumt \frac{\sqrt{\sum_{i=1}^{m_t}\refB_{\clti}(s'_{\clti})^2}}{m_t} + \frac{\sqrt{\sum_{i=1}^{m_t}\beta_{\optq}^2}}{m_t}} \tag{$\optV(s'_{\clti}) - \refV_{\clti}(s'_{\clti}) \leq \refB_{\clti}(s'_{\clti}) + \beta_{\optq}$, $(a+b)^2\leq 2a^2+2b^2$, and $\sqrt{x+y}\leq\sqrt{x}+\sqrt{y}$}\\
		&= \tilO{ \sqrt{\sumt\frac{1}{m_t}}\sqrt{\sumt\frac{1}{m_t}\sum_{i=1}^{m_t}\refB_{\clti}(s'_{\clti})^2 } + \sumt\sqrt{\frac{\beta_{\optq}^2}{m_t}} } \tag{Cauchy-Schwarz inequality}.
	\end{align*}
	Note that by \pref{lem:stage sum}, \pref{lem:t diff}, $\refB_{t+1}(s'_t) \leq \refB_{t+1}(s_{t+1})$ and \pref{lem:refV}:
	\begin{align*}
		&\sumt\frac{1}{m_t}\sum_{i=1}^{m_t}\refB_{\clti}(s'_{\clti})^2 \leq \rbr{1 + \frac{1}{H}}\sumt\refB_t(s'_t)^2\\
		&= \tilO{ \sumt \refB_{t+1}(s'_t) + S\B^2 } = \tilO{ \sumt\refB_t(s_t) + S\B^2 } = \tilO{ \RefCs }.
	\end{align*}
	Plugging this back to the last inequality, and by \pref{lem:sum of count}, we have:
	\begin{align*}
		\sumt\sqrt{\frac{\nu_t}{m_t}} &= \tilO{ \sqrt{SAH\RefCs} + \sqrt{SAH\beta_{\optq}^2T} }.
	\end{align*}
	Finally, by Cauchy-Schwarz inequality, \pref{eq:sum w_t=1, 1}, \pref{eq:c-hatc} and \pref{lem:e2r}:
	\begin{align*}
		\sumt \sqrt{\frac{\hatc_t\eps_t}{n_t}} &= \tilO{\sqrt{SA\sumt\hatc_t\eps_t}} = \tilO{ \sqrt{SA\rbr{ \sumt c(s_t, a_t) + \sumt (\hatc_t - c(s_t, a_t))\eps_t }} }\\
		&= \tilO{ \sqrt{SAC_K} + \sqrt{SA\sumt\sqrt{\frac{\hatc_t\eps_t}{n_t}}  } + SA }.
	\end{align*}
	Solving a quadratic equation gives $\sumt \sqrt{\frac{\hatc_t\eps_t}{n_t}} = \tilO{ \sqrt{SAC_K} + SA }$.
	Putting everything together, and by $\beta_{\optq}=\bigO{\cmin}, \beta_{\optq}T = \bigO{\cmin T} = \bigO{C_K}$:
	\begin{align*}
		\sumt b_t &= \tilO{\sqrt{\B SAC_K} + \sqrt{\B SA\RefC} + \sqrt{SAH\RefCs} + \sqrt{SAH\cmin C_K} + \B HSA }\\
		&= \tilO{ \sqrt{\B SAC_K} + \B H^2S^{\frac{3}{2}}A + \sqrt{SAH\cmin C_K} }. \tag{$H=\lowO{\frac{\B}{\cmin}}$ and definition of $\RefC, \RefCs$ (\pref{lem:refV})}
	\end{align*}
\end{proof}


\begin{lemma}[bias of the update scheme]
	\label{lem:stage sum}
	Assuming $X_t\geq 0$, we have:
	\begin{align*}
		\sumt\frac{1}{m_t}\sum_{i=1}^{m_t}X_{\clti}\leq \rbr{1+\frac{1}{H}}\sumt X_t,\qquad \sumt\frac{1}{n_t}\sum_{i=1}^{n_t} X_{\lti} = \bigO{\ln(T)\sumt X_t}.
	\end{align*}
\end{lemma}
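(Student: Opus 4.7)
The plan is to prove both inequalities by swapping the order of summation and computing, for each fixed time step $t'$, the total coefficient $c_{t'}$ with which $X_{t'}$ appears on the left-hand side. Writing $(s,a) = (s_{t'}, a_{t'})$ and letting $j$ denote the stage of $(s,a)$ that contains $t'$, both claims reduce to bounding $c_{t'}$ uniformly; then the bound on $c_{t'}$ multiplies out onto $\sum_{t'} X_{t'}$ and yields the stated inequality.

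For the first inequality, $t'$ contributes to $\frac{1}{m_t}\sum_{i=1}^{m_t} X_{\clti(s_t,a_t)}$ if and only if $(s_t, a_t) = (s,a)$ and $t'$ is among the $m_t$ visits to $(s,a)$ within the ``last stage'' as seen from $t$. This pins down $t$ to lie in stage $j+1$ of $(s,a)$, so there are at most $e_{j+1}$ such time steps $t$, and for each of them $m_t = e_j$. Hence
\[
c_{t'} \leq \frac{e_{j+1}}{e_j} \leq 1 + \frac{1}{H},
\]
where the last step uses the defining recursion $e_{j+1} = \lfloor (1+1/H) e_j \rfloor \leq (1+1/H) e_j$. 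This gives the first inequality immediately.

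For the second inequality, $t'$ now contributes to $\frac{1}{n_t}\sum_{i=1}^{n_t} X_{\lti(s_t,a_t)}$ whenever $(s_t, a_t) = (s,a)$ and $t$ belongs to a strictly later stage $j' > j$. For each such $j'$, there are at most $e_{j'}$ time steps $t$, each with $n_t = E_{j'-1}$, so
\[
c_{t'} \leq \sum_{j' > j} \frac{e_{j'}}{E_{j'-1}} = \sum_{j' > j} \alpha_{j'},\qquad \alpha_{j'} := \frac{e_{j'}}{E_{j'-1}}.
\]
The key trick is that $1 + \alpha_{j'} = E_{j'}/E_{j'-1}$, so $\sum_{j' \geq 2}^{J} \ln(1+\alpha_{j'}) = \ln(E_J/e_1)$ telescopes. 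Since $\alpha_{j'} \leq e_{j'}/e_{j'-1} \leq 1 + 1/H \leq 2$, the elementary bound $\ln(1+x) \geq x/(1+x) \geq x/3$ on $[0,2]$ gives $\sum_{j'\geq 2} \alpha_{j'} \leq 3 \ln(E_J/e_1) = O(\ln T)$, using $e_1 = H \geq 1$ and $E_J \leq T$. Hence $c_{t'} = O(\ln T)$, yielding the second claim.

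The main conceptual obstacle is identifying what makes the constant $(1+1/H)$ in the first inequality tight: it is exactly the growth ratio $e_{j+1}/e_j$ built into the stage schedule, which is why the implicit finite-horizon analysis can tolerate the sparse update scheme. The only bookkeeping issue is handling stages that never complete (e.g.\ the first stage, where $m_t = n_t = 0$ and the $0/0$ convention makes the inner sums vanish, or the final incomplete stage), but both cases only reduce the number of contributing terms and therefore preserve the upper bounds derived above.
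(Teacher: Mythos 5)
Your proof is correct and follows essentially the same route as the paper's: swap the order of summation and bound the total coefficient of each $X_{t'}$, obtaining $e_{j+1}/e_j \le 1+1/H$ for the first sum and $\sum_{j'>j} e_{j'}/E_{j'-1} = \bigO{\ln T}$ for the second. Your telescoping-logarithm argument for the latter supplies a detail the paper merely asserts (it is the content of its auxiliary \pref{lem:z sum}), so nothing is missing.
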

\begin{proof}
	For the first inequality, denote by $j_t$ the stage to which time step $t$ belongs.
	When $t'=\clti$, we have $m_t=e_{j_{t'}}$.
	Therefore, 
	$\sumt\sum_{i=1}^{m_t}\frac{1}{m_t}\Ind\{t'=\clti\} \leq \frac{e_{j_{t'}+1}}{e_{j_{t'}}} \leq 1 + \frac{1}{H}$,
	and
	\begin{align*}
		\sumt\frac{1}{m_t}\sum_{i=1}^{m_t}X_{\clti} &= \sumt\frac{1}{m_t}\sum_{i=1}^{m_t}\sum_{t'=1}^TX_{t'}\Ind\{t'=\clti\} = \sum_{t'=1}^TX_{t'}\sumt\sum_{i=1}^{m_t}\frac{\Ind\{t'=\clti\}}{m_t} \leq \rbr{1+\frac{1}{H}}\sum_{t'=1}^T X_{t'}.
	\end{align*}
	For the second inequality:
	\begin{align*}
		\sumt\frac{1}{n_t}\sum_{i=1}^{n_t}X_{\lti} &= \sumt\frac{1}{n_t}\sum_{i=1}^{n_t}\sum_{t'=1}^TX_{t'}\Ind\{t'=\lti\} = \sum_{t'=1}^TX_{t'}\sumt\sum_{i=1}^{n_t}\frac{\Ind\{t'=\lti\}}{n_t}\\
		&\leq \sum_{t'=1}^TX_{t'}\sum_{z: t'\leq E_{z-1}\leq T}\frac{e_z}{E_{z-1}} =\bigO{\ln(T)\sum_{t'=1}^T X_{t'}}.
	\end{align*}
\end{proof}

\begin{lemma}
	\label{lem:p2a}
	Assuming $X_t: \calS^+\rightarrow [0, B]$ is monotonic in $t$ (i.e., $X_t(s)$ is non-increasing or non-decreasing in $t$ for any $s\in\calS^+$) and $X_t(g)=0$, with probability at least $1-\delta$,
	$$\sumt \frac{1}{m_t}\sum_{i=1}^{m_t}P_{\clti}X_{\clti} \leq \rbr{1 + \frac{1}{H}}^2\sumt X_t(s_t) + \tilO{B(H+S)}.$$
\end{lemma}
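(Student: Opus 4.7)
\textbf{Proof plan for \pref{lem:p2a}.} The strategy is to massage the left-hand side into $\sumt P_tX_t$, replace $P_tX_t$ by a true observation $X_t(s'_t)$ using a Bernstein-style martingale bound, and then move from $s'_t$ to $s_t$ by a monotonicity-based telescoping. At each step a factor of $(1+1/H)$ appears, which multiplies out to the claimed $(1+1/H)^2$.

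\emph{Step 1 (reduce to $\sumt P_tX_t$).} Since the indices $\clti$ that appear in the inner average all satisfy $(s_{\clti},a_{\clti})=(s_t,a_t)$, we have $P_{\clti}X_{\clti} = P_t X_{\clti}$. Applying \pref{lem:stage sum} to the nonnegative quantity $f_{t'}:=P_{t'}X_{t'}$ yields
\[
\sumt \frac{1}{m_t}\sum_{i=1}^{m_t}P_{\clti}X_{\clti} \;\leq\; \Bigl(1+\frac{1}{H}\Bigr)\sumt P_tX_t.
\]

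\emph{Step 2 (Bernstein + AM-GM).} Let $Y:=\sumt P_tX_t$. The increments $P_tX_t - X_t(s'_t)$ form a martingale difference sequence with magnitude bounded by $B$ and conditional variance $\fV(P_t,X_t)\leq B\,P_tX_t$, so $\sumt \fV(P_t,X_t)\leq BY$. An anytime Freedman inequality (e.g.\ \pref{eq:anytime strong freedman} of \pref{lem:any interval freedman}) gives, with probability $1-\delta$, $Y \leq \sumt X_t(s'_t) + \tilO{\sqrt{BY}+B}$. Applying AM-GM with scale $H$, $\sqrt{BY}\leq Y/(2H)+BH/2$, and rearranging using $1/(1-\tfrac{1}{2H})\leq 1+\tfrac{1}{H}$ (for $H\geq 1$) gives
\[
Y \;\leq\; \Bigl(1+\frac{1}{H}\Bigr)\sumt X_t(s'_t) + \tilO{BH}.
\]

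\emph{Step 3 (move from $s'_t$ to $s_t$ via monotonicity).} Since $X_t(g)=0$, we have $X_t(s'_t)=X_t(s_{t+1})\Ind\{s'_t\neq g\}$. If $X_t$ is non-decreasing in $t$, then $X_t(s_{t+1})\leq X_{t+1}(s_{t+1})$, and a shift of index gives $\sumt X_t(s'_t)\leq \sumt X_t(s_t)+B$. If $X_t$ is non-increasing in $t$, write $X_t(s'_t)=X_{t+1}(s'_t)+(X_t(s'_t)-X_{t+1}(s'_t))$; the first piece is bounded as before by $\sumt X_t(s_t)+B$ after shifting indices, and the second piece is $\sumt (X_t-X_{t+1})(s'_t)=\tilO{BS}$ by \pref{lem:t diff}. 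In either case $\sumt X_t(s'_t)\leq \sumt X_t(s_t)+\tilO{BS}$. Chaining with Steps 1 and 2, and absorbing the outer $(1+1/H)$ on the additive terms into $\tilO{\cdot}$, gives the claimed bound.

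\emph{Main obstacle.} The delicate point is Step 2: a naive Hoeffding bound on $\sumt(P_tX_t-X_t(s'_t))$ would yield an $O(B\sqrt{T})$ slack, far larger than $\tilO{B(H+S)}$. One must use a Bernstein/Freedman bound so the slack scales as $\sqrt{BY}$, and then carefully tune the AM-GM scale to $H$ so that the resulting leading coefficient is $(1+1/H)$ rather than $2$. This is precisely what, combined with the $(1+1/H)$ from \pref{lem:stage sum}, produces the sharp $(1+1/H)^2$ factor required for the implicit finite-horizon recursion to close.
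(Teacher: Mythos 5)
Your proof is correct and follows essentially the same route as the paper's: \pref{lem:stage sum} supplies the first $(1+1/H)$ factor, a multiplicative Freedman-type bound converts $\sumt P_tX_t$ into $(1+1/H)\sumt X_t(s'_t)+\tilO{BH}$, and monotonicity together with \pref{lem:t diff} moves from $s'_t$ to $s_t$ at a cost of $\tilO{BS}$. The only difference is cosmetic: your Step 2 re-derives inline, via Freedman plus AM-GM at scale $H$, exactly the multiplicative concentration bound that the paper obtains by invoking \pref{lem:e2r} with $\alpha=H$.
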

\begin{proof}
	By \pref{lem:stage sum}, \pref{lem:e2r} and \pref{lem:t diff}, $X_{t+1}(s'_t)\leq X_{t+1}(s_{t+1})$ in each step,
	\begin{align*}
		\sumt \frac{1}{m_t}\sum_{i=1}^{m_t}P_{\clti}X_{\clti} &\leq \rbr{ 1+\frac{1}{H} }\sumt P_tX_t \leq \rbr{1 + \frac{1}{H}}^2\sumt X_t(s'_t) + \tilO{BH}\\
		&\leq \rbr{1 + \frac{1}{H}}^2\sumt X_t(s_t) + \tilO{B(H+S)}.
	\end{align*}
\end{proof}

\begin{lemma}
	\label{lem:sum of count}
	For any non-negative weights $\{w_t\}_t$, and $\alpha\in (0, 1)$, we have:
	\begin{align*}
		\sumt \frac{w_t\eps_t}{n_t^{\alpha}} = \bigO{ (\norm{w}_{\infty}SA)^{\alpha}\norm{w}_1^{1-\alpha} }, \quad \sumt \frac{w_t\eps_t}{m_t^{\alpha}} = \bigO{ (\norm{w}_{\infty}HSA)^{\alpha}\norm{w}_1^{1-\alpha}\ln\frac{\norm{w}_{\infty}}{\norm{w}_1} }.
	\end{align*}
	Moreover, when $w_t=v(s_t, a_t)$ for some $v$,
	\begin{align*}
		\sumt \frac{w_t\eps_t}{n_t^{\alpha}} = \tilO{ \sumsa v(s, a)N_{T+1}(s, a)^{1-\alpha} }, \quad \sumt \frac{w_t\eps_t}{m_t^{\alpha}} = \tilO{ H^{\alpha}\sumsa v(s, a)N_{T+1}(s, a)^{1-\alpha} }.
	\end{align*}
	In case $w_t=1$ for all $t$, it holds that:
	\begin{align}
		\label{eq:sum w_t=1}
		\sumt\frac{\eps_t}{n_t^{\alpha}} = \tilO{(SA)^{\alpha}T^{1-\alpha}}, \quad \sumt \frac{\eps_t}{m_t^{\alpha}} = \tilO{(SAH)^{\alpha}T^{1-\alpha}},
	\end{align}
	when $0< \alpha < 1$, and
	\begin{align}
		\label{eq:sum w_t=1, 1}
		\sumt\frac{\eps_t}{n_t} = \bigO{ SA\ln T }, \quad \sumt \frac{\eps_t}{m_t} = \bigO{SAH\ln T},
	\end{align}
	when $\alpha=1$.
\end{lemma}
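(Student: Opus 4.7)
The plan is to reduce every sub-claim to a uniform per-$(s,a)$ stage sum, then bootstrap the general-weights case from the $\alpha=1$ instance via Hölder's inequality.

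First, fix a pair $(s,a)$, set $n^* = N_{T+1}(s,a)$, and let $t_i$ denote the time step of the $i$-th visit to $(s,a)$. Because the counters $n_t(s,a)$ and $m_t(s,a)$ only change at the end of a stage, within any stage $j \geq 2$ they stay constant at $E_{j-1}$ and $e_{j-1}$ throughout all $e_j$ visits of that stage, while in stage $1$ one has $\eps_{t_i}=0$. Hence the quantities of interest collapse to
\begin{align*}
S_n^\alpha(s,a) \defeq \sum_{i=1}^{n^*} \frac{\eps_{t_i}}{n_{t_i}^\alpha} = \sum_{j \geq 2 : E_{j-1} < n^*} \frac{e_j}{E_{j-1}^\alpha}, \qquad S_m^\alpha(s,a) \defeq \sum_{i=1}^{n^*} \frac{\eps_{t_i}}{m_{t_i}^\alpha} = \sum_{j \geq 2 : E_{j-1} < n^*} \frac{e_j}{e_{j-1}^\alpha}.
\end{align*}

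Second, I would evaluate each of these one-dimensional sums. For $S_n^\alpha$, the ratio $E_j/E_{j-1} = 1 + e_j/E_{j-1}$ is uniformly bounded by an absolute constant (easily checked from $e_{j+1}=\lfloor(1+1/H)e_j\rfloor$), so a Riemann-sum comparison yields $S_n^\alpha(s,a) = O\paren{\int_{E_1}^{E_J} x^{-\alpha}\,dx} = O\paren{(n^*)^{1-\alpha}/(1-\alpha)}$ for $\alpha \in (0,1)$, and $O(\ln n^*)$ for $\alpha=1$. For $S_m^\alpha$, the sequence $e_j$ is geometric with ratio $1+1/H$, so $\sum_j e_j/e_{j-1}^\alpha \leq (1+1/H)\sum_j e_{j-1}^{1-\alpha}$ is dominated by its last term. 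Using the identity $E_J = \Theta(H\,e_J)$ for this geometric sum gives $e_J = \Theta(n^*/H)$ and hence $S_m^\alpha(s,a) = O(H^\alpha (n^*)^{1-\alpha}/(1-\alpha))$ for $\alpha \in (0,1)$ and $O(H \ln n^*)$ for $\alpha=1$.

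Third, summing over $(s,a)$: for $w_t = v(s_t,a_t)$ the desired bounds follow immediately by linearity, and specializing to $v \equiv 1$ together with the power-mean estimate $\sum_{s,a}(N_{T+1}(s,a))^{1-\alpha} \leq (SA)^\alpha (\sum_{s,a} N_{T+1}(s,a))^{1-\alpha} = (SA)^\alpha T^{1-\alpha}$ yields \pref{eq:sum w_t=1}; the corresponding $\alpha = 1$ bounds \pref{eq:sum w_t=1, 1} are just the logarithmic estimates summed over the $SA$ pairs.

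Finally, for arbitrary non-negative $\{w_t\}$, I would apply Hölder's inequality with exponents $1/(1-\alpha)$ and $1/\alpha$ to $w_t^{1-\alpha}$ and $(w_t\eps_t/n_t)^\alpha$ (using $\eps_t^\alpha = \eps_t$):
\begin{align*}
\sum_t \frac{w_t \eps_t}{n_t^\alpha} \leq \paren{\sum_t w_t}^{1-\alpha} \paren{\sum_t \frac{w_t \eps_t}{n_t}}^\alpha \leq \norm{w}_1^{1-\alpha}\paren{\norm{w}_\infty \cdot O(SA\ln T)}^\alpha,
\end{align*}
where the last inequality uses $w_t \leq \norm{w}_\infty$ together with the $\alpha=1$ bound established above; the $m_t$ version is identical with $O(SA\ln T)$ replaced by $O(SAH\ln T)$. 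The main technical nuisance I anticipate is the bookkeeping in the second step, especially verifying the identity $e_J = \Theta(n^*/H)$ tightly enough to recover the sharp $H^\alpha$ (not $H$) factor in the $m_t$-bounds; the remaining manipulations are routine.
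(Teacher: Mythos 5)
Your route is sound but genuinely different from the paper's. The paper never bootstraps from the $\alpha=1$ case: it groups the weight mass per stage, $\frakn(s,a,j)=\sum_{t:(s_t,a_t)=(s,a),\,n_t=E_j}w_t\leq(1+\tfrac1H)\norm{w}_\infty e_j$, observes that since $E_j^{-\alpha}$ (resp.\ $e_j^{-\alpha}$) is decreasing the sum is maximized by pushing all of $\frakn(s,a)$ onto the earliest stages, bounds the resulting truncated stage sum by $\bigo{(\frakn(s,a)/\norm{w}_\infty)^{1-\alpha}}$ via $\sum_{j\le J}e_j/E_j^\alpha=\bigO{E_J^{1-\alpha}}$, and only then applies H\"older over $(s,a)$. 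Your per-$(s,a)$ reduction and the specializations to $w_t=v(s_t,a_t)$ and $w_t=1$ coincide with the paper's in substance; the divergence is in the general-weights step, where your H\"older-over-$t$ bootstrap from $\sum_t\eps_t/n_t=\bigO{SA\ln T}$ is shorter but strictly lossier: it yields $\bigO{(\norm{w}_\infty SA\ln T)^\alpha\norm{w}_1^{1-\alpha}}$ rather than the log-free $\bigO{(\norm{w}_\infty SA)^\alpha\norm{w}_1^{1-\alpha}}$ claimed in the first display, and likewise replaces the $\ln(\norm{w}_1/\norm{w}_\infty)$ factor in the $m_t$-bound by $(\ln T)^\alpha$. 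Since every downstream use of this lemma sits inside a $\tilO{\cdot}$ that already hides $\ln T$, nothing in the paper breaks, but as a proof of the lemma as literally stated your argument falls short of the advertised constant-order bound; the rearrangement ("moving weights to earlier terms") is precisely what buys the paper that sharper statement.

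Two smaller points. First, for $S_m^\alpha$ the phrase "dominated by its last term" is not accurate for a geometric sequence with ratio $1+1/H$: the sum $\sum_{j\le J}e_j^{1-\alpha}$ is dominated by the last $\Theta(H)$ terms, which is exactly where the factor $H\cdot e_J^{1-\alpha}=\Theta(H^\alpha E_J^{1-\alpha})$ comes from; alternatively H\"older over $j$ with $J=\bigO{H\ln n^\star}$ gives the same thing up to logs, which suffices for the $\tilO{\cdot}$ claims. Second, the relation $E_J=\Theta(He_J)$ you flag does need care for $J\lesssim H$ (where $e_1=\cdots\approx H$ and $E_J\approx JH$), but the direction you actually need, $\sum_{j\le J}e_j^{1-\alpha}=\bigO{H^\alpha E_J^{1-\alpha}}$, does hold uniformly.
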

\begin{proof}
	Define $\frakn(s, a, j)=\sum_{t:(s_t, a_t)=(s, a), n_t=E_j}w_t$, $\frakn(s, a)=\sum_{j\geq 0}\frakn(s, a, j)$.
	Then, $\sumsa\frakn(s, a)=\norm{w}_1$, $\frakn(s, a, j)\leq \norm{w}_{\infty}e_{j+1} \leq \rbr{1+\frac{1}{H}}\norm{w}_{\infty}e_j$.
	Moreover, by definitions of $e_j$ and $E_j$,
	\begin{align}
		\sum_{j\geq 1}\Ind\cbr{ \rbr{1+\frac{1}{H}}\norm{w}_{\infty}E_{j-1} \leq \frakn(s, a) } = \bigO{H\ln\frac{\norm{w}_1}{\norm{w}_{\infty}}}. \label{eq:weight of n(s, a)}\\
		\sum_{j\geq 1}e_j \Ind\cbr{ \rbr{1+\frac{1}{H}}\norm{w}_{\infty}E_{j-1} \leq \frakn(s, a) } = \bigo{\frakn(s, a) / \norm{w}_{\infty}}. \label{eq:weight of e_j n(s, a)}
	\end{align}
	Since $\frac{1}{E_j^\alpha}$ and $\frac{1}{e_j^{\alpha}}$ is decreasing, by ``moving weights to earlier terms'' (from $\frakn(s, a, j)$ to $\frakn(s, a, i)$ for $i < j$),
	\begin{align*}
		\sumt \frac{w_t\eps_t}{n_t^{\alpha}} &= \sumsa\sum_{j\geq 1}\frac{\frakn(s, a, j)}{E_j^{\alpha}} \leq \sumsa\sum_{j\geq 1}\rbr{1 + \frac{1}{H}}\norm{w}_{\infty}\frac{e_j\Ind\cbr{ \rbr{1+\frac{1}{H}}\norm{w}_{\infty}E_{j-1} \leq \frakn(s, a) } }{E_j^{\alpha}}\\
		&=\bigO{\sumsa\norm{w}_{\infty}\rbr{\frac{\frakn(s, a)}{\norm{w}_{\infty}}}^{1-\alpha}} \tag{$\sum_{j=1}^J\frac{e_j}{E_j^{\alpha}}=\bigO{E_J^{1-\alpha}}$ and \pref{eq:weight of e_j n(s, a)}} \\
		&= \bigO{ (\norm{w}_{\infty}SA)^{\alpha}\norm{w}_1^{1-\alpha} }, \tag{H\"older's inequality}\\
		\sumt\frac{w_t\eps_t}{m_t^{\alpha}} &= \sumsa\sum_{j\geq 1}\frac{\frakn(s, a, j)}{e_j^\alpha} \leq \sumsa\sum_{j\geq 1}\rbr{1+\frac{1}{H}}\norm{w}_{\infty}e_j^{1-\alpha}\Ind\cbr{ \rbr{1+\frac{1}{H}}\norm{w}_{\infty}E_{j-1} \leq \frakn(s, a) }\\
		&\leq \rbr{1+\frac{1}{H}}\norm{w}_{\infty} \rbr{\sumsa\sum_{j\geq 1}\Ind\cbr{ \norm{w}_{\infty}E_{j-1} \leq \frakn(s, a) } }^{\alpha}\rbr{ \sumsa\frac{\frakn(s, a)}{\norm{w}_{\infty}} }^{1-\alpha} \tag{H\"older's inequality and \pref{eq:weight of e_j n(s, a)} }\\
		&= \bigO{ (\norm{w}_{\infty}HSA)^{\alpha}\norm{w}_1^{1-\alpha}\ln\frac{\norm{w}_1}{\norm{w}_{\infty}} }. \tag{\pref{eq:weight of n(s, a)}}
	\end{align*}
	In case $w_t=1$ and $\alpha\in(0, 1)$, we have $\norm{w}_{\infty}=1, \norm{w}_1=T$, and \pref{eq:sum w_t=1} is proved.
	When $w_t=v(s_t, a_t)$ for some $v$, $\frakn(s, a, j)\leq v(s, a)e_{j+1}\Ind\{j\leq J_{s, a}\}$, where $J_{s, a}$ is such that $E_{J_{s, a}}=n_T(s, a)$.
	Thus,
	\begin{align*}
		\sumt \frac{w_t\eps_t}{n_t^{\alpha}} &\leq \sumsa v(s, a)\sum_{j=1}^{J_{s, a}}\frac{e_{j+1} }{E_j^{\alpha}} = \bigO{\sumsa v(s, a)\sum_{j=1}^{J_{s, a}}\frac{e_j}{E_j^{\alpha}} } =  \bigO{ \sumsa v(s, a)N_{T+1}(s, a)^{1-\alpha} }.\\
		\sumt\frac{w_t\eps_t}{m_t^{\alpha}} &\leq \sumsa v(s, a)\sum_{j=1}^{J_{s, a}}\frac{e_{j+1}}{e_j^\alpha} =\bigO{\sumsa v(s, a)\sum_{j=1}^{J_{s, a}}e_j^{1-\alpha}}\\
		&= \tilO{ \sumsa v(s, a)J_{s, a}^{\alpha}\rbr{\sum_{j=1}^{J_{s, a}}e_j}^{1-\alpha} } = \tilO{ H^{\alpha}\sumsa v(s, a)N_{T+1}(s, a)^{1-\alpha} }. \tag{H\"older's inequality and $J_{s, a}=\tilO{ H }$ by how $e_j$ grows}
	\end{align*}
	In case $\alpha=1$, we have:
	\begin{align*}
		\sumt \frac{\eps_t}{n_t} &\leq \sumsa \sum_{j: 0 < E_{j-1}\leq T}\frac{e_j}{E_{j-1}} = \bigO{SA\ln T}.\\
		\sumt \frac{\eps_t}{m_t} &\leq \sumsa \sum_{j:0<E_{j-1}\leq T}\rbr{1+\frac{1}{H}} = \bigO{ SAH\ln T }.
	\end{align*}
\end{proof}

\subsection{Parameter free algorithm}
\label{sec:parameter free}

\setcounter{AlgoLine}{0}
\begin{algorithm}[t]
	\caption{\mf with an upper bound on $\B$}
	\label{alg:Q-B}
	
	\textbf{Parameter:} initial value function upper bound $\tilB\geq\B$, failure probability $\delta \in (0,1)$.
	
	\textbf{Define:} $\calL_p=\{E_{p,j}\}_{j\in\fN^+}$ where $E_{p,j}=\sum_{i=1}^je_{p,i}$, 
	$e_{p,1}=H_p$ and $e_{p,j+1}=\floor{(1+1/H_p)e_{p,j}}$.
	
	\textbf{Initialize:} $t \leftarrow 0$, $s_1\leftarrow \sinit$, $B\leftarrow1$, for all $(s, a), p\in\calP, N(s, a)\leftarrow 0, M_p(s, a)\leftarrow 0$.
	
	\textbf{Initialize:} for all $(s, a), r\in\calR, Q(s, a)\leftarrow 0, V(s)\leftarrow 0, \refV_r(s)\leftarrow V(s)$, $\hatC(s, a)\leftarrow 0$.
	
	\textbf{Initialize:} for all $(s, a), p\in\calP, r\in\calR$, $\refmu_{p,r}(s, a)\leftarrow 0$, $\refsigma_{p,r}(s, a)\leftarrow 0$, $\mu_{p,r}(s, a)\leftarrow 0$, $\sigma_{p,r}(s, a)\leftarrow 0$, $v_p(s, a)\leftarrow 0$.
	
	\For{$k=1,\ldots,K$}{
	
	     \MyRepeat{	
              Increment time step $t\overset{+}{\leftarrow}1$.
              
			Take action $a_t= \argmin_aQ(s_t, a)$, suffer cost $c_t$, transit to and observe $s'_t$. 
			
			Update global accumulators: $n=N(s_t, a_t)\overset{+}{\leftarrow} 1$, $\hatC(s_t, a_t)\overset{+}{\leftarrow} c_t$. 
			
			\nl Compute $\iota\leftarrow 256\ln^6(4SA\tilB^8n^5\cdot 8N_{\beta}^2/\delta)$, $\hatc \leftarrow \frac{\hatC(s_t, a_t)}{n}$. \label{line:compute pf B}
			
			\For{$p\in\calP$}{
				\For{$r\in\calR$}{
					\nl Update reference value accumulators: $\refmu_{p,r}(s_t, a_t) \overset{+}{\leftarrow} \refV_r(s'_t), \; \refsigma_{p,r}(s_t, a_t)\overset{+}{\leftarrow} \refV_r(s'_t)^2, \; \mu_{p,r}(s_t, a_t) \overset{+}{\leftarrow} V(s'_t) - \refV_r(s'_t), \; \sigma_{p,r}(s_t, a_t) \overset{+}{\leftarrow} (V(s'_t) - \refV_r(s'_t))^2$. \label{line:accum1}
				}
				
				\nl Update accumulators: $v_p(s_t, a_t)\overset{+}{\leftarrow} V(s'_t),\; m_p = M_p(s_t, a_t) \overset{+}{\leftarrow} 1$. \label{line:accum2}
				
				\nl \If{$n \in \calL_p$}{\label{line:update call pf}
					
					\For{$r\in\calR$}{
						$b_{p,r} \leftarrow \sqrt{\frac{\nicefrac{\refsigma_{p,r}(s_t,a_t)}{n}-(\nicefrac{\refmu_{p,r}(s_t,a_t)}{n})^2}{n}\iota} + \sqrt{\frac{\nicefrac{\sigma_{p,r}(s_t,a_t)}{m_p}-(\nicefrac{\mu_{p,r}(s_t,a_t)}{m_p})^2}{m_p}\iota} + \rbr{\frac{4B}{n} + \frac{3B}{m_p}}\iota + \sqrt{\frac{\hatc\iota}{n}}$.
						
						\nl $Q(s_t, a_t)\leftarrow \max\left\{\hatc + \frac{\refmu_{p,r}(s_t, a_t)}{n} + \frac{\mu_{p,r}(s_t, a_t)}{m_p} - b_{p,r}, Q(s_t, a_t)\right\}$. \label{line:update mf2 pf}
						
						Reset local accumulators: $\mu_{p,r}(s_t, a_t)\leftarrow 0,\; \sigma_{p,r}(s_t, a_t)\leftarrow 0$.
					}
				
				     Compute bonus $b'_p \leftarrow 2\sqrt{\frac{B^2\iota}{m_p}} + \sqrt{\frac{\hatc\iota}{n}} + \frac{\iota}{n}$.
				
					\nl $Q(s_t, a_t)\leftarrow \max\left\{\hatc + \frac{v_p(s_t, a_t)}{m_p} -b'_p, Q(s_t, a_t)\right\}$. \label{line:update mf1 pf}
					
					Reset local accumulators: $v_p(s_t, a_t)\leftarrow 0,\; M_p(s_t, a_t)\leftarrow 0$.
				}
			}
			
			$V(s_t)\leftarrow \min_a Q(s_t, a)$.
					
			\lIf{$V(s_t)>B$}{
				$B \leftarrow 2V(s_t)$.
			}
			
			\nl \lIf{$\sum_aN(s_t, a)=2^r$ for some $r\in\calR$}{\label{line:update refV pf}
					$\refV_{r'}(s_t)\leftarrow V(s_t),\;\forall r'\geq r$.
			}
			
			\lIf{$s_t' \neq g$}{$s_{t+1}\leftarrow s'_t$; \textbf{else} $s_{t+1}\leftarrow \sinit$, \textbf{break}.} 
		}		
	}
\end{algorithm}

In this section, we present a parameter-free model-free algorithm (\pref{alg:Q pf}) that achieves the same regret guarantee as \pref{alg:Q} (up to log factors).
The high level idea is to first apply the doubling trick from \cite{tarbouriech2021stochastic} to determine an upper bound on $\B$, then try logarithmically many different values of $H$ and $\thetastar$ simultaneously, each leading to a different update rule for $Q$ and $\refV$.

\subsubsection{An upper bound on $\B$ is available} 
We first introduce \pref{alg:Q-B}, which is a sub-algorithm that achieves the desired regret bound when we have an upper bound $\tilB\geq\B$.
In this case, we only need to determine the appropriate value of $H$ and $\thetastar$.
Define $N_{\beta}=\ceil{\log_2(1/\beta)}$ with $\beta=\frac{\cmin}{2\tilB^2SAK}$, 
$H_p=2^p$ for $p\in\calP$ with $\calP=[N_{\beta}]$, and $\calH=\{H_p\}_{p\in\calP}$.
Define $\calR=[8N_\beta]$.
Here, $\calH$ and $\{2^r\}_{r\in\calR}$ constitute the search range of $H$ and $\thetastar$.

For each $p, r$, we maintain accumulators $\refmu_{p,r}, \refsigma_{p,r}, \mu_{p, r}, \sigma_{p, r}, v_p, m_p$ similar to $\refmu, \refsigma, \mu, \sigma, v, m$ in \pref{alg:Q} (\pref{line:accum1} and \pref{line:accum2}).
For each $(s, a)\in\SA$ and $p\in\calP$, we divide the samples received into consecutive stages, where the length of the $j$-th stage is $e_{p,j}$ with $e_{p,1}=H_p, e_{p,j+1}=\floor{(1+\frac{1}{H_p})e_{p,j}}$.
Also define the indices indicating the end of a stage for a given $p$ as $\calL_p=\{E_{p,j}\}_{j\in\fN^+}$ with $E_{p,j}=\sum_{i=1}^je_{p,i}$.
We update $Q(s, a)$ only when the number of visits to $(s, a)$ falls into $\calL_p$ for some $p\in\calP$ (\pref{line:update call pf}), and there are two types of update rules similar to \pref{alg:Q} (\pref{line:update mf2 pf} and \pref{line:update mf1 pf}).
We also maintain $|\calR|$ reference value functions, each with different final precision (\pref{line:update refV pf}).
We show that the way we combine different update rules enable us to apply analysis of \pref{alg:Q} w.r.t any choice of $(p, r)\in\calP\times\calR$.
Notably, we can proceed with $(\optp, \optr)$ with $H_{\optp}=H, 2^{\optr}=\thetastar$, which gives us the same regret bound as \pref{alg:Q} without knowing $\B$.

Now we introduce some notations only used in this section.
When it is clear from the context, we ignore dependency on $p$, and define $n_t(s, a), m_t(s, a), \lti(s, a), \clti(s, a), \hatc_t(s, a)$ similarly as before for a given $p$.
Denote by $\refV_{r,t}(s)$ the value of $\refV_r(s)$ at the beginning of time step $t$, and by $b_{p,r,t}(s, a), b'_{p,t}(s, a)$ the value of $b_{p,r}(s, a), b'_p(s, a)$ in $Q_t(s, a)$.
Also define:
\begin{align*}
	\overline{Q}_{p,r,t}(s, a) &= \hatc_t(s, a) + \frac{1}{n_t}\sum_{i=1}^{n_t}\refV_{r,\lti}(s'_{\lti}) + \frac{1}{m_t}\sum_{i=1}^{m_t}\rbr{V_{\clti}(s'_{\clti}) - \refV_{r,\clti}(s'_{\clti})} - b_{p,r,t}.\\
	\overline{Q}'_{p,t}(s, a) &= \hatc_t(s, a) + \frac{1}{m_t}\sum_{i=1}^{m_t}V_{\clti}(s'_{\clti}) - b'_{p,t}.
\end{align*}
Note that for any $(s, a)\in\SA, t>1$,
\begin{equation}
	\label{eq:decompose Q}
	Q_t(s, a)=\max\cbr{\max_{p,r}\overline{Q}_{p,r,t}(s, a), \max_p\overline{Q}'_{p,t}(s, a), Q_{t-1}(s, a)}.
\end{equation}
Next, we prove the key lemma of \pref{alg:Q-B}, which shows that $Q_t$ is an optimistic estimator of $\optQ$.

\begin{lemma}
	\label{lem:mf optimistic B}
	With probability at least $1-7\delta$, \pref{alg:Q-B} with input $\tilB\geq\B$ ensures $Q_t(s, a) \leq Q_{t+1}(s, a)\leq \optQ(s, a)$ for any $(s, a)\in\SA$.
\end{lemma}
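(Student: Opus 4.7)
The plan is to adapt the proof of \pref{lem:mf optimistic} essentially verbatim, with the only new ingredient being an additional union bound over the $|\calP|\cdot|\calR|=\bigO{N_\beta^2}$ parallel update rules indexed by $(p,r)$. This is exactly why the $\iota$ in \pref{line:compute pf B} carries the extra factor $8N_\beta^2$ compared to \pref{alg:Q}, so that \pref{rem:tilB} applies cleanly with $\tilB$ in place of $\B$. The monotonicity inequality $Q_t(s,a)\leq Q_{t+1}(s,a)$ is immediate from \pref{eq:decompose Q} since $Q_{t+1}$ is defined as a maximum that includes $Q_t$. For optimism, I will prove $Q_t(s,a)\leq \optQ(s,a)$ by induction on $t$, the base case $t=1$ being trivial. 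For the inductive step, it suffices by \pref{eq:decompose Q} to show that every candidate $\overline{Q}_{p,r,t}(s,a)$ and $\overline{Q}'_{p,t}(s,a)$ is at most $\optQ(s,a)$.

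Fix $(s,a)$, $p\in\calP$ and $r\in\calR$. Mirroring \pref{eq:type 2}, decompose
\[
\overline{Q}_{p,r,t}(s,a) = \hatc_t(s,a) + \frac{1}{n_t}\sum_{i=1}^{n_t}P_{s,a}\refV_{r,\lti} + \frac{1}{m_t}\sum_{i=1}^{m_t}P_{s,a}(V_{\clti}-\refV_{r,\clti}) + \chi_1^{(p,r)} + \chi_2^{(p,r)} - b_{p,r,t},
\]
where $\chi_1^{(p,r)}$ and $\chi_2^{(p,r)}$ are the martingale error terms analogous to $\chi_1,\chi_2$ in the proof of \pref{lem:mf optimistic} but defined with $\refV_{r,\cdot}$ and with local accumulators $\mu_{p,r},\sigma_{p,r},\refmu_{p,r},\refsigma_{p,r}$. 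Applying \pref{lem:any interval freedman} and \pref{lem:var xy} with the enlarged logarithmic constant supplied by \pref{line:compute pf B} (noting \pref{rem:tilB} handles the substitution of $\tilB$ for $\B$, and the factor $8N_\beta^2$ absorbs the union bound over $(p,r)$), I obtain $|\chi_1^{(p,r)}|\leq \sqrt{(\refsigma_{p,r}/n - (\refmu_{p,r}/n)^2)/n\cdot\iota} + 3B_t\iota/n_t$ and analogously for $|\chi_2^{(p,r)}|$, while Bernstein's inequality on the cost estimator gives $|\hatc_t-c(s,a)|\leq \sqrt{\hatc_t\iota/n_t}+\iota/n_t$. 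Summing these three bounds shows $|\hatc_t-c|+|\chi_1^{(p,r)}|+|\chi_2^{(p,r)}|\leq b_{p,r,t}$, and consequently
\[
\overline{Q}_{p,r,t}(s,a)\leq c(s,a) + \frac{1}{n_t}\sum_{i=1}^{n_t}P_{s,a}\refV_{r,\lti} + \frac{1}{m_t}\sum_{i=1}^{m_t}P_{s,a}(V_{\clti}-\refV_{r,\clti})\leq c(s,a)+P_{s,a}\optV=\optQ(s,a),
\]
where the last step uses (i) $\refV_{r,\lti}\leq \refV_{r,\clti}$ (monotonicity of each reference, which follows because \pref{line:update refV pf} only overwrites $\refV_r$ with the current $V$ and $V$ is non-decreasing by the inductive hypothesis) and (ii) $V_{\clti}(\cdot)\leq\optV(\cdot)$ by the inductive hypothesis together with $V_t=\min_a Q_t(\cdot,a)$.

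The $\overline{Q}'_{p,t}$ branch is simpler: a single Azuma--Hoeffding bound with the same enlarged $\iota$ yields $\frac{1}{m_t}\sum_{i=1}^{m_t}V_{\clti}(s'_{\clti}) \leq \frac{1}{m_t}\sum_{i=1}^{m_t}P_{\clti}V_{\clti} + 2\sqrt{B_t^2\iota/m_t}$, and combining with $V_{\clti}\leq\optV$ and the Bernstein bound on $\hatc_t$ proves $\overline{Q}'_{p,t}(s,a)\leq\optQ(s,a)$. A final union bound over $(s,a)\in\SA$, the two branches, and the $\bigO{N_\beta^2}$ choices of $(p,r)$ keeps the total failure probability within $7\delta$ because each concentration event is already scaled by the factor $8N_\beta^2$ inside $\iota$. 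The main obstacle is purely bookkeeping: one must verify that the single enlarged $\iota$ in \pref{line:compute pf B} simultaneously dominates every bonus across all $(p,r)$ pairs and that the empirical-variance identity (\pref{eq:ref - hatref}) still closes correctly when the reference is $\refV_{r,\cdot}$ rather than $\refV_\cdot$; no new ideas beyond those already in the proof of \pref{lem:mf optimistic} are required.
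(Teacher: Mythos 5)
Your proposal is correct and follows essentially the same route as the paper's proof: induction on $t$, reducing each candidate $\overline{Q}_{p,r,t}$ and $\overline{Q}'_{p,t}$ to the corresponding branch of the proof of \pref{lem:mf optimistic} (with $\refV_{r,\cdot}$, $b_{p,r,t}$, $b'_{p,t}$ substituted and \pref{rem:tilB} invoked for $\tilB\geq\B$), then a union bound over the $8N_\beta^2$ update rules absorbed by the enlarged $\iota$ in \pref{line:compute pf B}. The paper states this much more tersely, but the content is identical.
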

\begin{proof}
	The first inequality is by the update rule of $Q_t$.
	Next, we prove $Q_t(s, a)\leq \optQ(s, a)$ by induction on $t$.
	It is clearly true when when $t=1$.
	For the induction step, note that for any $p, r$, the proof of \pref{lem:mf optimistic} still proceeds to conclude that $\overline{Q}_{p,r,t}(s, a)\leq\optQ(s, a)$ and $\overline{Q}'_{p,t}(s, a)\leq\optQ(s, a)$, where we substitute $b_t$ with $b_{p,r,t}$, $b'_t$ with $b'_{p,t}$, and $\refV_t$ with $\refV_{r,t}$ (also note \pref{rem:tilB}).
	Thus, by a union bound over $8N_{\beta}^2$ update rules, the computation of $\iota$ (\pref{line:compute}), and \pref{eq:decompose Q}, the claim is proved.
\end{proof}

\begin{theorem}
	\label{thm:Q-B}
	With probability at least $1-60\delta$, \pref{alg:Q-B} with input $\tilB\geq\B$ ensures $R_K = \tilO{\B\sqrt{SAK} + \frac{\B^5S^2A}{\cmin^4} }$.
\end{theorem}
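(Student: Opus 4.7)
The plan is to reduce \pref{thm:Q-B} to \pref{thm:Q} by identifying, inside the parameter-free schedule of \pref{alg:Q-B}, a specific pair of indices $(\optp, \optr) \in \calP \times \calR$ whose associated accumulators and update rule mirror those of \pref{alg:Q} instantiated with the parameters prescribed by \pref{thm:mf_properties}. By \pref{thm:main_regret}, it suffices to verify \pref{prop:optimism} and \pref{prop:recursion} with $d = 3$ and $\xi_H = \tilO{\sqrt{\B SAC_K} + \B^2 H^3 S^2 A / \cmin}$; from there the derivation of the final regret is verbatim the proof of \pref{thm:Q}.

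First, \pref{prop:optimism} is handed over by \pref{lem:mf optimistic B} at a cost of $7\delta$. Next I would pin down $(\optp, \optr)$: setting $\beta = \cmin/(2\tilB^2 SAK)$ and $H^{\star} = \ceil{4\B \cmin^{-1}\ln(2/\beta) + 1}_2$, the assumption $\tilB \geq \B$ gives $H^{\star} \leq 1/\beta \leq 2^{N_\beta}$, so some $\optp \in \calP$ satisfies $H^{\star} \leq H_{\optp} \leq 2 H^{\star}$; similarly the target threshold $\thetastar = \tilO{\B^2 H_{\optp}^3 SA / \cmin^2}$ from \pref{lem:refV} is polynomial in $\tilB, S, A, K, 1/\cmin$, hence captured by some $\optr$ in the enlarged range $\calR = [8 N_\beta]$.

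The crux is that by \pref{eq:decompose Q}, $Q_t(s, a) \geq \overline{Q}_{\optp, \optr, t}(s, a)$ for every $(s, a, t)$, with $\overline{Q}_{\optp, \optr, t}$ assembled exactly from the accumulators $(\refmu_{\optp, \optr}, \refsigma_{\optp, \optr}, \mu_{\optp, \optr}, \sigma_{\optp, \optr})$ and the reference function $\refV_{\optr}$, whose update in \pref{line:update refV pf} refreshes $\refV_{\optr}(s)$ on every power-of-two visit up to $2^{\optr}$, exactly mimicking the update of $\refV$ in \pref{alg:Q} with $\thetastar = 2^{\optr}$. Consequently the analogue of \pref{lem:refV} holds for $\refV_{\optr}$, and for every $(\Qref, \Vref) \in \calV_{H_{\optp}}$,
\[
\sumt(\Qref(s_t,a_t) - Q_t(s_t,a_t))_+ \leq \sumt(\Qref(s_t,a_t) - \overline{Q}_{\optp,\optr,t}(s_t,a_t))_+,
\]
after which the chain of inequalities in the proof of \pref{thm:mf_properties}, together with the auxiliary bounds of \pref{lem:sum bt mf} and \pref{lem:p2a} applied to the $\optp$-schedule, goes through with no structural change and yields \pref{prop:recursion} with the advertised $d$ and $\xi_H$.

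The main obstacle will be the $8 N_\beta^2$-way union bound needed to validate all the Freedman-style concentration inequalities behind \pref{lem:mf optimistic}, \pref{lem:refV}, and \pref{lem:sum bt mf} simultaneously for every $(p, r) \in \calP \times \calR$; this is precisely what motivates the enlarged $\iota = 256 \ln^6(4SA\tilB^8 n^5 \cdot 8 N_\beta^2 / \delta)$ on \pref{line:compute pf B} (cf. \pref{rem:tilB}). Since $N_\beta = \tilO{1}$, this overhead is absorbed into the $\tilO{\cdot}$ notation, and substituting $H_{\optp} = \tilO{\B/\cmin}$ into $\xi_H$ and applying \pref{thm:main_regret} exactly as in the proof of \pref{thm:Q} delivers $R_K = \tilO{\B\sqrt{SAK} + \B^5 S^2 A / \cmin^4}$ with the claimed $1 - 60\delta$ success probability.
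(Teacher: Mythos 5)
Your proposal is correct and follows essentially the same route as the paper: fix the pair $(\optp,\optr)$ matching the parameters of \pref{thm:mf_properties}, use \pref{lem:mf optimistic B} for optimism, observe via \pref{eq:decompose Q} that $Q_t$ dominates $\overline{Q}_{\optp,\optr,t}$ so that \pref{lem:refV}, \pref{lem:sum bt mf}, and the recursion argument of \pref{thm:mf_properties} carry over to the $\optp$-schedule, and then repeat the proof of \pref{thm:Q}. The paper's own proof is just a terser version of this; your added detail on why $\calP\times\calR$ contains suitable indices and on the $8N_\beta^2$-way union bound being absorbed by the enlarged $\iota$ fills in exactly the steps the paper leaves implicit.
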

\begin{proof}
	Define $\refV = \refV_{\optr}, \RefV = \refV_{\optr, T+1}, b_t = b_{\optp, \optr, t}, b'_t=b'_{\optp, t}, H=H_{\optp}$, and $n_t$, $m_t$, $\lti$, $\clti$ are defined for $\optp$.
	We have \pref{lem:stage sum}, \pref{lem: loop-free approx}, \pref{cor:estimate refV}, \pref{lem:refV}, \pref{lem:refV diff}, \pref{lem:sum bt mf} and \pref{thm:mf_properties} holds for \pref{alg:Q-B}.
	Following the steps in the proof of \pref{thm:Q} gives the desired result.
\end{proof}

\subsubsection{Without knowledge of $\B$}

\setcounter{AlgoLine}{0}
\begin{algorithm}[t]
	\caption{\mf without knowledge of $\B$}
	\label{alg:Q pf}

	\textbf{Parameter:} failure probability $\delta \in (0,1)$.
	
	\textbf{Define:} $\calL_p=\{E_{p,j}\}_{j\in\fN^+}$ where $E_{p,j}=\sum_{i=1}^je_{p,i}$, 
	$e_{p,1}=H_p$ and $e_{p,j+1}=\floor{(1+1/H_p)e_{p,j}}$.
	
	\textbf{Initialize:} $\tilB\leftarrow K$, $C\leftarrow 0$.
	
	\textbf{Initialize:} $t \leftarrow 0$, $s_1\leftarrow \sinit$, $B\leftarrow1$, for all $(s, a), p\in\calP, N(s, a)\leftarrow 0, M_p(s, a)\leftarrow 0$.
	
	\textbf{Initialize:} for all $(s, a), r\in\calR, Q(s, a)\leftarrow 0, V(s)\leftarrow 0, \refV_r(s)\leftarrow V(s), \hatC(s, a)\leftarrow 0$.
	
	\textbf{Initialize:} for all $(s, a), p\in\calP, r\in\calR$, $\refmu_{p,r}(s, a)\leftarrow 0$, $\refsigma_{p,r}(s, a)\leftarrow 0$, $\mu_{p,r}(s, a)\leftarrow 0$, $\sigma_{p,r}(s, a)\leftarrow 0$, $v_p(s, a)\leftarrow 0$.
	
	\For{$k=1,\ldots,K$}{
	
	     \MyRepeat{	
              Increment time step $t\overset{+}{\leftarrow}1$.
              
			Take action $a_t= \argmin_aQ(s_t, a)$, suffer cost $c_t$, transit to and observe $s'_t$. 
			
			Update global accumulators: $n=N(s_t, a_t)\overset{+}{\leftarrow} 1$, $\hatC(s_t, a_t)\overset{+}{\leftarrow} c_t$, $C\overset{+}{\leftarrow} c_t$. 
			
			Compute $\iota\leftarrow 256\ln^6(4SA\tilB^8n^5\cdot 8N_{\beta}^2/\delta)$, $\hatc \leftarrow \frac{\hatC(s_t, a_t)}{n}$.
			
			\For{$p\in\calP$}{
				\For{$r\in\calR$}{
					Update reference value accumulators: $\refmu_{p,r}(s_t, a_t) \overset{+}{\leftarrow} \refV_r(s'_t), \; \refsigma_{p,r}(s_t, a_t)\overset{+}{\leftarrow} \refV_r(s'_t)^2, \; \mu_{p,r}(s_t, a_t) \overset{+}{\leftarrow} V(s'_t) - \refV_r(s'_t), \; \sigma_{p,r}(s_t, a_t) \overset{+}{\leftarrow} (V(s'_t) - \refV_r(s'_t))^2$.
				}
				
				Update accumulators: $v_p(s_t, a_t)\overset{+}{\leftarrow} V(s'_t),\; m_p = M_p(s_t, a_t) \overset{+}{\leftarrow} 1$.
				
				\If{$n \in \calL_p$}{
					
					\For{$r\in\calR$}{
						$b_{p,r} \leftarrow \sqrt{\frac{\nicefrac{\refsigma_{p,r}(s_t,a_t)}{n}-(\nicefrac{\refmu_{p,r}(s_t,a_t)}{n})^2}{n}\iota} + \sqrt{\frac{\nicefrac{\sigma_{p,r}(s_t,a_t)}{m_p}-(\nicefrac{\mu_{p,r}(s_t,a_t)}{m_p})^2}{m_p}\iota} + \rbr{\frac{4B}{n} + \frac{3B}{m_p}}\iota + \sqrt{\frac{\hatc\iota}{n}}$.
						
						$Q(s_t, a_t)\leftarrow \max\left\{\hatc + \frac{\refmu_{p,r}(s_t, a_t)}{n} + \frac{\mu_{p,r}(s_t, a_t)}{m_p} - b_{p,r}, Q(s_t, a_t)\right\}$.
						
						Reset local accumulators: $\mu_{p,r}(s_t, a_t)\leftarrow 0,\; \sigma_{p,r}(s_t, a_t)\leftarrow 0$.
					}
				
				     Compute bonus $b'_p \leftarrow 2\sqrt{\frac{B^2\iota}{m_p}} + \sqrt{\frac{\hatc\iota}{n}} + \frac{\iota}{n}$.
				
					$Q(s_t, a_t)\leftarrow \max\left\{\hatc + \frac{v_p(s_t, a_t)}{m_p} -b'_p, Q(s_t, a_t)\right\}$.
					
					Reset local accumulators: $v_p(s_t, a_t)\leftarrow 0,\; M_p(s_t, a_t)\leftarrow 0$.
				}
			}
			
			$V(s_t)\leftarrow \min_a Q(s_t, a)$.
					
			\lIf{$V(s_t)>B$}{
				$B \leftarrow 2V(s_t)$.
			}
			
			\lIf{$\sum_aN(s_t, a)=2^r$ for some $r\in\calR$}{
					$\refV_{r'}(s_t)\leftarrow V(s_t),\;\forall r'\geq r$.
			}
			
			\If{$B>\tilB$ or $C>\tilB K + x\rbr{\tilB\sqrt{SAK} + \frac{\tilB^5S^2A}{\cmin^4}}$}{
				$\tilB\leftarrow 2\tilB$, $C\leftarrow 0$.
				
				$B\leftarrow1$, for all $(s, a), p\in\calP, N(s, a)\leftarrow 0, M_p(s, a)\leftarrow 0$.
	
				for all $(s, a), r\in\calR, Q(s, a)\leftarrow 0, V(s)\leftarrow 0, \refV_r(s)\leftarrow V(s), \hatC(s, a)\leftarrow 0$.
	
				for all $(s, a), p\in\calP, r\in\calR$, $\refmu_{p,r}(s, a)\leftarrow 0$, $\refsigma_{p,r}(s, a)\leftarrow 0$, $\mu_{p,r}(s, a)\leftarrow 0$, $\sigma_{p,r}(s, a)\leftarrow 0$, $v_p(s, a)\leftarrow 0$.
			}
			
			\lIf{$s_t' \neq g$}{$s_{t+1}\leftarrow s'_t$; \textbf{else} $s_{t+1}\leftarrow \sinit$, \textbf{break}.} 
		}		
	}
	
\end{algorithm}

Now we introduce our parameter-free algorithm that achieves the desired regret bound without knowledge of $\B$.
The main idea is to determine an upper bound on $\B$ using a doubling trick from \citep{tarbouriech2021stochastic}, and then run \pref{alg:Q-B} as a sub-algorithm.
We divide the learning process into epochs indexed by $\phi$.
We maintain value function upper bound $\tilB$ and cost accumulator $C$ recording the total costs suffered in current epoch.
In epoch $\phi$, we execute \pref{alg:Q-B} with value function upper bound $\tilB$.
Moreover, we start a new epoch whenever:
\begin{enumerate}
	\item $B>\tilB$,
	\item or $C>\tilB K + x\rbr{\tilB\sqrt{SAK} + \frac{\tilB^5S^2A}{\cmin^4}}$.
\end{enumerate}
Here, $x$ is a large enough constant determined by \pref{thm:Q-B}, so that when $\tilB\geq\B$, we have with probability at least $1-60\delta$:
$$ C - \optV(\sinit^{\phi}) - (K-1)\optV(\sinit) \leq x\rbr{\tilB\sqrt{SAK} + \frac{\tilB^5S^2A}{\cmin^4}},$$
where $\sinit^{\phi}$ is the initial state of epoch $\phi$ (note that \pref{thm:Q} still holds when the initial state is changing over episodes).
Moreover, we double the value of $\tilB$ whenever a new epoch starts.
We summarize ideas above in \pref{alg:Q pf}.

\begin{theorem}
	With probability at least $1-60\delta$, \pref{alg:Q pf} ensures $R_K=\tilO{\B\sqrt{SAK} + \frac{\B^5S^2A}{\cmin^4}}$.
\end{theorem}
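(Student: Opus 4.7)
The plan is to analyze \pref{alg:Q pf} as a sequence of epochs indexed by $\phi$, each running \pref{alg:Q-B} with a fixed guess $\tilB_\phi$. A new epoch starts (with $\tilB$ doubled) whenever either restart condition fires: $B > \tilB$ or the running cost $C$ exceeds $\tilB K + x(\tilB\sqrt{SAK}+\tilB^5 S^2A/\cmin^4)$. Since $\tilB$ doubles at each transition, there are at most $\bigO{\log \B}$ epochs in total. Let $\phi^\star$ denote the first epoch with $\tilB_{\phi^\star}\geq 2\B$, so that $\tilB_{\phi^\star}\in[2\B,4\B)$.

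The first step is to show that $\phi^\star$ is the final epoch with probability at least $1-60\delta$. Since $\tilB_{\phi^\star}\geq \B$, \pref{thm:Q-B} applies to the sub-algorithm run inside this epoch and bounds its internal regret by $x(\tilB_{\phi^\star}\sqrt{SAK}+\tilB_{\phi^\star}^5 S^2A/\cmin^4)$. Combined with $\optV(\sinit)\leq\B\leq\tilB_{\phi^\star}/2$, this yields $C\leq K\tilB_{\phi^\star}+x(\tilB_{\phi^\star}\sqrt{SAK}+\tilB_{\phi^\star}^5 S^2A/\cmin^4)$, exactly matching the cost threshold so that the cost restart does not fire. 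Meanwhile, the optimism result \pref{lem:mf optimistic B} ensures $V_t(s)\leq\optV(s)\leq\B$ throughout, so the update $B\leftarrow 2V(s_t)$ maintains $B\leq 2\B\leq\tilB_{\phi^\star}$, preventing the $B$ restart as well. Hence no new epoch is triggered inside $\phi^\star$.

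The second step is to account for the cost contribution from the failed epochs $\phi<\phi^\star$. At the moment of restart in each such epoch, $C_\phi\leq \tilB_\phi K + x(\tilB_\phi\sqrt{SAK}+\tilB_\phi^5 S^2A/\cmin^4) + 1$, where the extra additive $1$ absorbs the final step's cost. Decomposing $R_K=\sum_\phi (C_\phi - K_\phi\optV(\sinit))$, where $K_\phi$ is the number of episodes completed in epoch $\phi$ (with $\sum_\phi K_\phi=K$ and an $\tilO{\B}$ correction per epoch boundary for any straddling episode, justified by optimism), and summing over $\phi\leq\phi^\star$ should yield the claim by combining the geometric structure $\tilB_\phi=2\tilB_{\phi-1}$ with $\tilB_{\phi^\star}\leq 4\B$ and the regret bound in $\phi^\star$ from \pref{thm:Q-B}.

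The main obstacle is disposing of the $\tilB_\phi K$ component of each failed-epoch cost threshold: summing it naively over $\phi<\phi^\star$ would give a spurious $\Theta(\B K)$ term in the regret, far exceeding the target. The resolution is to couple each $\tilB_\phi K$ summand with the per-epoch benchmark subtraction $K_\phi\optV(\sinit)$, effectively replacing $\tilB_\phi K$ by $\tilB_\phi K_\phi$ (since $\optV(\sinit)\leq \B \leq \tilB_{\phi^\star}$ and an episode can straddle at most one epoch boundary, at an $\tilO{\B}$ cost bounded via optimism). Once this per-episode bookkeeping is done, $\sum_\phi \tilB_\phi K_\phi \leq \tilB_{\phi^\star} \sum_\phi K_\phi = \tilO{\B K}$ gets absorbed into $K\optV(\sinit)$, and geometric summation on the remaining $\tilB_\phi$-dependent terms (with dominant scale $\tilB_{\phi^\star-1}<2\B$) collapses the pre-$\phi^\star$ contribution to $\tilO{\B\sqrt{SAK}+\B^5 S^2A/\cmin^4}$, completing the proof.
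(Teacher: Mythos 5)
Your overall architecture — doubling epochs, showing the epoch with $\tilB\geq 2\B$ never restarts via \pref{thm:Q-B} and optimism, and charging each failed epoch its restart threshold — matches the paper's proof. However, your handling of the $\tilB_\phi K$ term from the failed epochs has a genuine gap. First, you cannot replace $\tilB_\phi K$ by $\tilB_\phi K_\phi$: the restart condition compares $C$ against a threshold involving the \emph{full} $K$, so a failed epoch can accumulate cost close to $\tilB_\phi K$ even if it completes very few (or zero) episodes; the per-epoch benchmark subtraction $K_\phi\optV(\sinit)$ gives you nothing to pair it against in that case. Second, even granting the replacement, $\sum_\phi \tilB_\phi K_\phi = \tilO{\B K}$ cannot be ``absorbed into $K\optV(\sinit)$'': $\optV(\sinit)$ may be far smaller than $\B$ (e.g., of order $\cmin$), so $\B K - K\optV(\sinit)$ can still be $\Theta(\B K)$, which is a linear-in-$K$ regret and destroys the claimed bound.

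The mechanism you are missing is the initialization $\tilB\leftarrow K$ in \pref{alg:Q pf}. Because of it, either $K\geq \B$, in which case $\phistar=1$ and there are no failed epochs at all, or $K\leq\B$, in which case every failed epoch $\phi<\phistar$ has $\tilB_\phi K \leq \B^2$, so its total cost is $C_\phi \leq K\tilB_\phi + \tilO{\tilB_\phi\sqrt{SAK}+\tilB_\phi^5S^2A/\cmin^4} = \tilO{\B^5S^2A/\cmin^4}$, entirely a lower-order quantity. Summing over the $\bigO{\log\B}$ failed epochs and adding the \pref{thm:Q-B} bound for the final epoch (with the $\tilO{\B}$ correction for the changed initial state) then gives the claim. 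Without invoking this initialization, the pre-$\phistar$ cost cannot be controlled, so as written your argument does not close.
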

\begin{proof}
	Denote by $B_{\phi}$ the value of $B$ in epoch $\phi$, and by $C_{\phi}$ the value of $C$ at the end of epoch $\phi$.
	Define $\phistar=\inf_{\phi}\{B_{\phi}\geq\B\}$.
	Clearly $B_{\phi} \leq \max\{2\B, K\}$ for $\phi\leq\phistar$.
	By \pref{thm:Q-B}, with probability at least $1-60\delta$, there is at most $\phistar$ epochs since the condition of starting a new epoch will never be triggered in epoch $\phistar$, and the regret in epoch $\phistar$ is properly bounded:
	\begin{align*}
		C_{\phistar} - \optV(\sinit^{\phistar}) - (K-1)\optV(\sinit) = \tilO{\tilB_{\phistar}\sqrt{SAK} + \frac{\tilB_{\phistar}^5S^2A}{\cmin^4}} = \tilO{\B\sqrt{SAK} + \frac{\B^5S^2A}{\cmin^4}}.
	\end{align*}
	Conditioned on the event that there are at most $\phistar$ epochs, we partition the regret into two parts: the total costs suffered before epoch $\phistar$, and the regret starting from epoch $\phistar$.
	It suffices to bound the total costs before epoch $\phistar$ assuming $K\leq \B$ (otherwise $\phistar=1$).
	By the update scheme of $\tilB$, we have at most $\ceil{\log_2\B}+1$ epochs before epoch $\phistar$.
	Moreover, by the second condition of starting a new epoch, the accumulated cost in epoch $\phi<\phistar$ is bounded by:
	\begin{align*}
		C_{\phi} \leq K\tilB_{\phi} + \tilO{ \tilB_{\phi}\sqrt{SAK} + \tilB_{\phi}S^2A } = \tilO{ \frac{\B^5S^2A}{\cmin^4} }.
	\end{align*}
	Combining these two parts, we get:
	\begin{align*}
		R_K &= \sum_{\phi=1}^{\phistar-1}C_{\phi} + ( C_{\phistar} - \optV(\sinit^{\phistar}) - (K-1)\optV(\sinit) ) + (\optV(\sinit^{\phistar}) - \optV(\sinit))\\
		&= \tilO{ \B\sqrt{SAK} + \frac{\B^5S^2A}{\cmin^4} },
	\end{align*}
	where we assume $C_{\phistar}=0$ and $\sinit^{\phistar}=\sinit$ if there are less than $\phistar$ epochs.
\end{proof}

\section{Omitted Details for \pref{sec:mb}}
\label{app:mb}

\paragraph{Extra Notations} Denote by $Q_t(s, a), V_t(s)$ the value of $Q(s, a), V(s)$ at the beginning of time step $t$, $V_0(s)=0$, and $b_t(s, a), n_t(s, a), \P_{t, s, a}(s'), \iota_t(s, a)$, $\hatc_t(s, a)$ the value of $b, n, \P_{s, a}(s'), \iota, \hatc$ used in computing $Q_t(s, a)$ (note that $b_t(s, a)=0$ and $\hatc_t(s, a)=0$ if $n_t(s, a)=0$).
Denote by $\lt(s, a)$ the last time step the agent visits $(s, a)$ among those $n_t(s, a)$ steps before the current stage, and $\lt(s, a)=t$ if the first visit to $(s, a)$ is at time step $t$.
Also define $\bar{P}_t = \bar{P}_{t, s_t, a_t}$ and $\n_t(s, a)=\max\{1, n_t(s, a)\}$.
With these notations, we have by the update rule of the algorithm:
\begin{equation}
	\label{eq:mb_update_rule_alt}
	Q_t(s, a) = \max\{Q_{t-1}(s, a), \hatc_t(s, a) + \bar{P}_{t, s, a}V_{\lt} - b_t \},
\end{equation}
where $b_t$ represents $b_t(s, a)$, and $\lt$ represents $\lt(s, a)$ for notational convenience.

Before proving \pref{thm:mb} (\pref{sec:pf thm mb}), we first show some basic properties of our proposed update scheme (\pref{sec:properties update mb}), and proves the two required properties for \pref{alg:SVI} (\pref{sec:properties mb}).

\subsection{Properties of Proposed Update Scheme}
\label{sec:properties update mb}

In this section, we prove that our proposed update scheme has the desired properties, that is, it suffers constant cost independent of $H$, while maintaining sparse update in the long run similar to the update scheme of \pref{alg:Q} (\pref{lem:update scheme}).
We also quantify the bias induced by the sparse update compared to full-planning (that is, update every state-action pair at every time step) in \pref{lem:update bias}.

\begin{lemma}
	\label{lem:update scheme}
	The proposed update scheme satisfies the following:
	\begin{enumerate}
		\item For $\{X_t\}_{t\geq 0}$ such that $X_t\in[0, \Bref]$ and $t<t', (s_t, a_t)=(s_{t'}, a_{t'})$ implies $X_t \geq X_{t'}$, we have: $\sumt X_{\lt}\leq \Bref SA + (1+\frac{1}{H})\sumt X_t$.
		\item Denote $\istar_h=\inf\{i\geq\fN^+: e_i\geq h\}$ for $h\in\fN^+$.
		Then $\istar_h=\bigo{H\ln(h)}$.
	\end{enumerate}
\end{lemma}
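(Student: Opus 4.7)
My plan is to handle each part separately and, for part 1, to split the sum over state-action pairs and argue a fractional matching bound on each piece. Fix an arbitrary $(s,a)$, let $\tau_1 < \cdots < \tau_{N}$ enumerate its visit times, and write $Y_i = X_{\tau_i}$; the monotonicity hypothesis gives $\Bref \geq Y_1 \geq Y_2 \geq \cdots \geq 0$. With $e_1 = 1$ and the convention $\lt(s,a) = t$ at the first visit, stage $1$ contributes exactly $Y_1$, and every later stage $j$ has $n_j \leq e_j$ visits, all sharing $\lt = \tau_{E_{j-1}}$ (the last visit of stage $j-1$). Thus the $(s,a)$-contribution to $\sumt X_{\lt}$ equals $Y_1 + \sum_{j \geq 2} n_j Y_{E_{j-1}}$, and the $Y_1$'s summed over all $(s,a)$ produce the $\Bref SA$ additive term.

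To bound $\sum_{j \geq 2} n_j Y_{E_{j-1}}$ by $(1+1/H) \sum_i Y_i$, I would set up a fractional assignment: for each $j \geq 2$ there are $n_j$ unit ``demands'' at position $E_{j-1}$, and each visit $i$ supplies $1 + 1/H$ units of ``capacity'' at position $i$; a demand at position $E_{j-1}$ may only be filled from capacity at some position $i \leq E_{j-1}$. Since $Y$ is non-increasing, any legal assignment has $Y_i \geq Y_{E_{j-1}}$, so feasibility gives the target bound directly. Feasibility reduces via Hall's condition to: for every $j^* \geq 2$, $\sum_{j=2}^{j^*} n_j \leq (1+1/H) E_{j^*-1}$. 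This is exactly where the update scheme enters: from $\tile_{j^*} = 1 + E_{j^*-1}/H$ and $e_{j^*} \leq \tile_{j^*}$, one gets $E_{j^*} = E_{j^*-1} + e_{j^*} \leq (1+1/H) E_{j^*-1} + 1$, so $\sum_{j=2}^{j^*} n_j \leq E_{j^*} - 1 \leq (1+1/H) E_{j^*-1}$, with the partial last-stage case handled identically via $N - 1 \leq E_{J(s,a)} - 1$. Summing over $(s,a)$ yields part $1$.

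For part 2 I would track $\tile_i$ directly using the recursion $\tile_{i+1} = \tile_i + \floor{\tile_i}/H$ in two phases. While $\tile_i < 2$ we have $e_i = 1$ and $\tile$ grows by exactly $1/H$ each step, so $\tile$ crosses $2$ within $H + 1$ steps. Once $\tile_i \geq 2$, $e_i = \floor{\tile_i} \geq \tile_i / 2$, giving the multiplicative lower bound $\tile_{i+1} \geq \tile_i(1 + 1/(2H))$; hence $\tile$ reaches $h$ within at most $\ln(h/2) / \ln(1 + 1/(2H)) \leq 2H \ln h$ further steps. Combining, $\istar_h \leq H + 1 + 2H \ln h = \bigo{H \ln h}$.

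The main obstacle is the Hall step in part $1$: the naive local bound $n_j Y_{E_{j-1}} \leq (e_j / e_{j-1}) \sum_{i \in \text{stage } j-1} Y_i$ is insufficient because $e_j/e_{j-1}$ can exceed $1 + 1/H$ whenever $e_{j-1}$ is small (e.g.\ the jump $e = 1 \to e = 2$ has ratio $2$), so a global redistribution across multiple prior stages is required. The ``$+1$'' slack in $E_{j^*} \leq (1 + 1/H) E_{j^*-1} + 1$ is precisely what the separate treatment of stage $1$ (yielding $\Bref SA$) absorbs.
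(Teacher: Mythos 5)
Your proof is correct, and its skeleton matches the paper's: you isolate the first visit of each $(s,a)$ (giving the $\Bref SA$ term, since $e_1=1$ and $\lt=t$ at a first visit) and redistribute each remaining $Y_{E_{j-1}}$ fractionally over earlier visit indices with per-index load at most $1+\frac{1}{H}$, then conclude via monotonicity. The difference lies in how the redistribution is certified: the paper constructs the weights $w_{n,i}$ explicitly by induction on the stage index, carrying the energy-balance invariant $\tile_{j+1}+\sum_{n,n'\le E_j}w_{n,n'}=(1+\frac{1}{H})E_j$, whereas you invoke Hall's condition for the nested-neighborhood bipartite structure, which collapses to the single prefix inequality $\sum_{j=2}^{j^*}n_j\le E_{j^*}-1\le(1+\frac{1}{H})E_{j^*-1}$, the last step coming from $\tile_{j^*}=1+E_{j^*-1}/H$. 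Your route is shorter and makes transparent that the additive $+1$ slack in $E_{j^*}\le(1+\frac{1}{H})E_{j^*-1}+1$ is exactly what the separated stage-$1$ term absorbs; the paper's induction is self-contained and produces the weights explicitly, which buys nothing extra here. Your diagnosis that the naive per-stage ratio bound fails (e.g.\ $e_j/e_{j-1}=2$ at the jump from $1$ to $2$) is precisely why both proofs must redistribute globally rather than stage-to-stage. For part 2 both arguments run the recursion on $\tile_i$: the paper uses $\tile_{i+1}-1\ge(1+\frac{1}{H})(\tile_i-1)$ in one shot, while your two-phase additive/multiplicative split is equivalent up to constants (your stated constant $2H\ln h$ is slightly optimistic since $\ln(1+x)<x$, but this is immaterial to the $\bigo{H\ln(h)}$ claim).
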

\begin{proof}
	For any given $n\in\fN^+$, define $y_n$ as the index of the end of last stage, that is, the largest element in $\calL$ that is smaller than $n$ (also define $y_1=1$).
	For the first property, we first prove by induction that for any $j\in\fN^+$, there exist non-negative weights $\{w_{n,i}\}_{n,i}$ such that:
	\begin{enumerate}
		\item For all $n\leq E_j$, $\sum_{i=1}^{y_n}w_{n,i}=\Ind\{n>1\}$, and $w_{n,i}=0$ for $i>y_n$.
		\item $\sum_{n=1}^{E_j}w_{n, i}\leq 1 + \frac{1}{H}$ for any $i\leq E_j$.
		\item $\tile_{j+1} + \sum_{n=1}^{E_j}\sum_{n'=1}^{E_j} w_{n,n'} = (1+1/H)E_j$.
	\end{enumerate}
	To give some intuition, we can imagine a continuous process where we process index $n$ at time step $n$.
	Indices are divided into consecutive stages, and there are $e_j$ indices in the $j$-th stage.
	At index $n$ we need to consume $1$ unit of energy accumulated up to the last stage (that is, up to index $y_n$) and then contributes $(1+\frac{1}{H})$ energy to the future stages.
	We can think of $\tile_j$ as the available amount of energy at the beginning of stage $j$ (accumulated from indices up to $E_{j-1}$), and $e_j$ as the amount of energy consumed in stage $j$ (one unit by each index in stage $j$).
	The assignment of energy consumption is represented by $\{w_{n,i}\}$, where $w_{n,i}$ is the amount of energy consumed by index $n$ which is contributed by index $i$.
	The result we are going to prove by induction states that the process described above can proceed indefinitely.
	
	The base case of $j=1$ is clearly true by $w_{1,i}=0$ for any $i\in\fN^+$ and $\tile_2=1+\frac{1}{H}$.
	For the induction step, by the third property, there are in total $(1+\frac{1}{H})E_j$ energy contributed by indices up to $E_j$, where $\tile_{j+1}$ is the amount of energy available to use for stages starting from $j+1$, and $\sum_{n=1}^{E_j}\sum_{n'=1}^{E_j} w_{n,n'}$ is the amount of energy consumed by indices up to $E_j$ (we use one of the possible assignments of $\{w_{n,i}\}_{n,i}$ for $n\leq E_j$ from the previous induction step).
	We can easily distribute $e_{j+1}$ weights (from $\tile_{j+1}$) to indices in stage $j+1$ so that $\sum_{i=1}^{y_n}w_{n,i}=1$ and $w_{n,i}=0$ for $i>y_n$ for all $E_j<n\leq E_{j+1}$ (note that $y_n=E_j$ in this range),
	and $\sum_{n=1}^{E_{j+1}}w_{n,i}\leq 1+\frac{1}{H}$ for any $i\leq E_{j+1}$.
	Moreover,
	\begin{align*}
		\tile_{j+2} + \sum_{n=1}^{E_{j+1}}\sum_{n'=1}^{E_{j+1}}w_{n,n'} &= \tile_{j+1} + \frac{1}{H}e_{j+1} + \sum_{n=1}^{E_j}\sum_{n'=1}^{E_j}w_{n,n'} + e_{j+1}\\ 
		&= \rbr{1+\frac{1}{H}}E_j + \rbr{1+\frac{1}{H}}e_{j+1} = \rbr{1+\frac{1}{H}}E_{j+1}.\\
	\end{align*}
	Thus, the induction step also holds.
	We are now ready to prove the first property.
	Denote by $t_i(s, a)$ the time step of the $i$-th visit to $(s, a)$, and by $N(s, a)$ the total number of visits to $(s, a)$ in $K$ episodes.
	We have
	\begin{align*}
		\sumt X_{\lt} &= \sumsa\sum_{n=1}^{N(s, a)}X_{t_{y_n}(s, a)} \leq \sumsa X_{t_1(s, a)} + \sumsa \sum_{n=2}^{N(s, a)}\sum_{i=1}^{y_n}w_{n,i}X_{t_i(s, a)} \tag{$y_1=1$, $X_{t_i(s, a)}$ is non-increasing in $i$, and $\{w_{n, i}\}_{n, i}$ is from the induction result}\\
		&\leq \Bref SA + \sumsa\sum_{i=1}^{N(s, a)}X_{t_i(s, a)}\sum_{n=1}^{N(s, a)}w_{n,i} \leq \Bref SA + \rbr{1+\frac{1}{H}}\sumsa\sum_{i=1}^{N(s, a)}X_{t_i(s, a)} \tag{$X_{t_1(s, a)}\leq \Bref$ and $\sum_{n=1}^{N(s, a)}w_{n, i}\leq 1+\frac{1}{H}$}\\
		&= \Bref SA + \rbr{1 + \frac{1}{H}}\sumt X_t.
	\end{align*}
	For the second property, note that $\istar_h=\inf\{i\in\fN^+: \tile_i\geq h\}$ since $h$ is an interger.
	Moreover,
	\begin{align*}
		&\tile_{i+1} = \rbr{1+\frac{1}{H}}\tile_i + \frac{1}{H}(e_i-\tile_i) \geq \rbr{1+\frac{1}{H}}\tile_i - \frac{1}{H}
		\implies\tile_{i+1} - 1 \geq \rbr{1+\frac{1}{H}}(\tile_i-1)\\
		&\implies \tile_i \geq (\tile_{\istar_2}-1)\rbr{1+\frac{1}{H}}^{i-\istar_2} + 1 \geq \rbr{1+\frac{1}{H}}^{i-\istar_2} + 1,\quad\forall i\geq\istar_2.
	\end{align*}
	Therefore, $\istar_h \leq \inf_i\{ i\geq \istar_2: (1+1/H)^{i-\istar_2} + 1 \geq h \} = \istar_2 + \bigo{H\ln(h)}$.
	Also, by inspecting $e_i$ for small $i$ we observe that $\istar_2=\bigo{H}$, which implies that $\istar_h=\bigo{H\ln(h)}$.
\end{proof}

\begin{remark}
	\label{rem:update scheme}
	\pref{lem:update scheme} implies that there are at most $\bigo{\min\{SAH\ln T, ST\}}$ updates in $T$ steps.
 \end{remark}
 
 \begin{remark}
 	\label{rem:update constant}
 	Note that the update scheme in \citep{zhang2020almost} (also used in \pref{alg:Q}) induces a constant cost of order $\tilo{\B HSA}$, which ruins the horizon free regret.
	This is because their update scheme collects $H$ samples before the first update.
	On the contrary, our update scheme updates frequently at the beginning, but has the same update frequency as that of \citep{zhang2020almost} in the long run.
	This reduces the constant cost to $\tilo{\B SA}$ while maintaining the $\tilo{SAH}$ time complexity.
 \end{remark}
 
 The following lemma quantifies the dominating bias introduced by the sparse update.
 \begin{lemma}[bias of the update scheme]
 	\label{lem:update bias}
 	$\sumt P_t(V_t - V_{\lt}) \leq \B SA + \frac{1}{H}\sumt P_t(\optV - V_t)$ and $\sumt \fV(P_t, V_t - V_{\lt})\leq \tilO{\B^2SA} + \frac{\B}{H}\sumt P_t(\optV-V_t)$.
 \end{lemma}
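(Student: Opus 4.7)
The plan is to apply the first property of \pref{lem:update scheme} to the sequence $X_t = P_t(\optV - V_t)$. First I would verify the hypotheses: since $Q_t$ is updated only through a max (\pref{eq:mb_update_rule_alt}), $V_t(s)=\min_a Q_t(s,a)$ is non-decreasing in $t$; combining this with the optimism $V_t \leq \optV$ (\pref{prop:optimism}, established in the proof sketch of \pref{thm:mb_properties}) gives $X_t \in [0, \B]$. The monotonicity along repeated visits is immediate: whenever $(s_t,a_t)=(s_{t'},a_{t'})$ with $t<t'$, we have $P_t=P_{t'}$ and $V_t \leq V_{t'}$ pointwise, so $X_t \geq X_{t'}$.

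Next, because $\lt=\lt(s_t,a_t)$ is by definition a step at which the agent visited the same pair $(s_t,a_t)$, we have $P_{\lt}=P_t$ and hence $X_{\lt}=P_t(\optV - V_{\lt})$. Applying the first property of \pref{lem:update scheme} with $\Bref=\B$ then yields
\[
\sumt P_t(\optV - V_{\lt}) \;\leq\; \B SA + \rbr{1 + \frac{1}{H}}\sumt P_t(\optV - V_t).
\]
Subtracting $\sumt P_t(\optV - V_t)$ from both sides gives the first claim $\sumt P_t(V_t - V_{\lt}) \leq \B SA + \frac{1}{H}\sumt P_t(\optV - V_t)$.

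For the variance statement, I would combine the elementary bound $\fV(P_t, V_t - V_{\lt}) \leq P_t(V_t - V_{\lt})^2$ with the pointwise inequality $0 \leq V_t(s) - V_{\lt}(s) \leq \B$ (which holds since $V_{\lt} \leq V_t \leq \optV \leq \B$). This gives $(V_t(s)-V_{\lt}(s))^2 \leq \B(V_t(s)-V_{\lt}(s))$, so summing and invoking the first claim produces
\[
\sumt \fV(P_t, V_t - V_{\lt}) \;\leq\; \B\sumt P_t(V_t - V_{\lt}) \;\leq\; \B^2 SA + \frac{\B}{H}\sumt P_t(\optV - V_t),
\]
which is the second claim (with $\B^2 SA$ absorbed into the $\tilO{\cdot}$). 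There is no real obstacle here: all the heavy lifting was done in \pref{lem:update scheme}. The only ingredient of substance is choosing the right monotone sequence $X_t = P_t(\optV - V_t)$ and observing that $X_{\lt}=P_t(\optV - V_{\lt})$ because $\lt$ revisits $(s_t,a_t)$; everything else is a one-line rearrangement plus the $x^2 \leq \|x\|_\infty x$ trick.
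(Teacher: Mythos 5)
Your proposal is correct and follows essentially the same route as the paper: both apply the first property of \pref{lem:update scheme} to $X_t = P_t(\optV - V_t)$ (using $P_{\lt}=P_t$ so that $X_{\lt}=P_t(\optV - V_{\lt})$), subtract $\sumt P_t(\optV - V_t)$ to get the first claim, and then bound $\fV(P_t, V_t-V_{\lt}) \leq P_t(V_t-V_{\lt})^2 \leq \B\, P_t(V_t-V_{\lt})$ for the second. Your explicit verification of the hypotheses of \pref{lem:update scheme} (boundedness and monotonicity along revisits) is a detail the paper leaves implicit, but the argument is the same.
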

 \begin{proof}
 	For the first statement, we apply \pref{lem:update scheme} and $P_t=P_{\lt}$ to obtain
	$$\sumt P_t(V_t - V_{\lt}) = \sumt P_{\lt}(\optV - V_{\lt}) - \sumt P_t(\optV - V_t) \leq \B SA + \frac{1}{H}\sumt P_t(\optV - V_t).$$
	Similarly, for the second statement
	\begin{align*}
		\sumt\fV(P_t, V_t-V_{\lt}) &\leq \sumt P_t(V_t-V_{\lt})^2 \leq \B\sumt P_t(V_t-V_{\lt})\\
		&\leq \B^2 SA + \frac{\B}{H}\sumt P_t(\optV-V_t). 
	\end{align*}
\end{proof}
 
\subsection{Proofs of Required Properties}
\label{sec:properties mb}
 
In this section, we prove \pref{prop:optimism} (\pref{lem:optimistic mb}) and \pref{prop:recursion} of \pref{alg:SVI},
where \pref{lem:recursion} proves a preliminary form of \pref{prop:recursion}.
 
\begin{lemma}
	\label{lem:optimistic mb}
	With probability at least $1-\delta$, $Q_t(s, a)\leq Q_{t+1}(s, a) \leq \optQ(s, a)$, for any $(s, a)\in\SA, t\geq 1$.
\end{lemma}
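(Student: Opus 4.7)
The plan is to proceed by induction on $t$, which is the natural structure since the only part at stake is the second inequality $Q_t(s,a) \leq \optQ(s,a)$; the first inequality $Q_t(s,a) \leq Q_{t+1}(s,a)$ is immediate from the $\max$ in the update rule \pref{eq:mb_update_rule_alt}. The base case $t=1$ holds by the initialization $Q_1\equiv 0$. For the inductive step, the only entry of $Q$ that can change from time $t$ to $t+1$ is $Q(s_t,a_t)$, and a change happens only when $n_{t+1}(s_t,a_t)\in\calL$; in any other case the bound follows from the inductive hypothesis directly. So fix a pair $(s,a)=(s_t,a_t)$ at such an update step, and abbreviate $n=n_{t+1}(s,a)$, $\lt=\lt(s,a)$, $\bar P = \bar P_{t+1,s,a}$, $\hatc = \hatc_{t+1}(s,a)$, $b=b_{t+1}(s,a)$. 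By the inductive hypothesis applied at every earlier time $\leq \lt$ and the definition $V_{\lt}(s') = \min_{a'} Q_{\lt}(s',a')$, we have $V_{\lt}(s')\leq \optV(s')$ for all $s'\in\calS^+$ (with $V_{\lt}(g)=0$).

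It then suffices to show $\hatc + \bar P\, V_{\lt} - b \leq c(s,a) + P_{s,a}\optV = \optQ(s,a)$. Rewrite the left-hand side as
\[
	c(s,a) + P_{s,a} V_{\lt} + (\hatc - c(s,a)) + (\bar P - P_{s,a}) V_{\lt} - b,
\]
and note that $P_{s,a}V_{\lt}\leq P_{s,a}\optV$ by the inductive hypothesis, so the goal reduces to
\[
	(\hatc - c(s,a)) + (\bar P - P_{s,a}) V_{\lt} \leq b.
\]
For the cost term, the standard empirical Bernstein inequality (\pref{lem:anytime bernstein}) applied to the i.i.d.\ cost samples at $(s,a)$ gives, on an event of probability at least $1-\delta/(2SA)$ uniform in $n$, $|\hatc - c(s,a)| \leq \sqrt{\hatc\iota/n} + O(\iota/n)$, which is absorbed by the $\sqrt{\hatc\iota/n}$ term in the bonus (with the residual $O(\iota/n)$ absorbed into the $B\iota/n$ slack).

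For the transition term, the key technical step is a self-normalized empirical Bernstein/Freedman bound applied to the martingale $\sum_i (\Ind_{s'_i} - P_{s,a})V_{\lt}$, following the argument of \citep[Lemma 15]{tarbouriech2021stochastic}: since $V_{\lt}$ is determined by observations strictly before the current stage (by the definition of $\lt$), it is measurable with respect to the filtration at the time each transition sample is drawn, so Freedman's inequality applies and yields a bound of the form
\[
	|(\bar P - P_{s,a})V_{\lt}| \leq \sqrt{\tfrac{\fV(P_{s,a}, V_{\lt})\iota}{n}} + \tfrac{B\iota}{n}
\]
on an event of probability at least $1-\delta/(2SA)$ uniform in $n$, where we used $\|V_{\lt}\|_\infty\leq B$ (since $V_{\lt}\leq\optV\leq\B\leq B$). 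The remaining wrinkle, and the main obstacle, is converting the \emph{true} variance $\fV(P_{s,a},V_{\lt})$ into the \emph{empirical} variance $\fV(\bar P, V_{\lt})$ used in the bonus; this is done by a second concentration step that bounds $|\fV(P_{s,a},V_{\lt}) - \fV(\bar P, V_{\lt})|$ via the same Bernstein argument applied to $V_{\lt}^2$, solving the resulting quadratic, and absorbing the cross terms of size $O(B^2\iota/n)$. The $\max\{\cdot,\cdot\}$ form of $b$ is exactly what handles the two regimes that arise here (variance-dominated versus range-dominated), and the constants $7$ and $49$ in the bonus are chosen to dominate all the accumulated slack. A final union bound over the $SA$ pairs and over the dyadic grid of $n$-values (absorbed into $\iota$) completes the induction.
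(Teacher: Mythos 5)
There is a genuine gap at the key step. You bound $(\bar P - P_{s,a})V_{\lt}$ by treating $\sum_{i}(\Ind_{s'_i}-P_{s,a})V_{\lt}$ as a martingale, justified by the claim that $V_{\lt}$ is ``measurable with respect to the filtration at the time each transition sample is drawn.'' This is false: $\bar P_{t,s,a}$ is built from \emph{all} $n_t(s,a)$ visits to $(s,a)$ from the beginning of learning, while $\lt(s,a)$ is the \emph{last} of those visits, so $V_{\lt}$ depends on (nearly) all of the very samples entering $\bar P$. Freedman's inequality therefore does not apply to this sum with $V_{\lt}$ as the test function. The standard fallback --- entry-wise concentration of $\bar P(s')-P_{s,a}(s')$ as in \pref{lem:P-eP} --- costs an extra $\sqrt{S}$ factor that the bonus $b=\max\{7\sqrt{\fV(\bar P_{s_t,a_t},V)\iota/n},\,49B\iota/n\}+\sqrt{\hatc\iota/n}$ does not contain, so that route cannot establish optimism with this bonus either. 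The secondary issue you flag (true versus empirical variance) is real but is not the main obstruction.

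The paper's proof avoids concentrating $(\bar P-P_{s,a})V_{\lt}$ altogether. Writing the update as $\hatc + f(\bar P, V_{\lt}, n, B, \iota) - \sqrt{\hatc\iota/n}$ with $f(p,v,n,B,\iota)=pv-\max\{7\sqrt{\fV(p,v)\iota/n},\,49B\iota/n\}$, it invokes the monotonicity of $f$ in $v$ (\pref{lem:mvp}) together with the inductive hypothesis $V_{\lt}\leq\optV$ to replace $V_{\lt}$ by $\optV$ inside $f$. The remaining deviation $(\bar P-P_{s,a})\optV$ involves only the \emph{fixed, deterministic} function $\optV$, so the empirical Bernstein inequality (\pref{lem:anytime bernstein}) applies directly to the i.i.d.\ samples $\optV(s'_i)$ and is dominated by half of the max-bonus. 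If you want to complete your argument, you should restructure it around this monotonicity step rather than a direct martingale bound on $V_{\lt}$.
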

\begin{proof}
	The first inequality is clearly true by the update rule.
	Next, we prove $Q_t(s, a)\leq \optQ(s, a)$. 
	By \pref{eq:mb_update_rule_alt}, it is clearly true when $n_t(s, a)=0$.
	When $n_t(s, a)>0$, by \pref{lem:mvp}: (here, $\lt, \iota_t$ is a shorthand of $\lt(s, a), \iota_t(s, a)$):
	\begin{align*}
		&\hatc_t(s, a) + \P_{t, s, a}V_{\lt} - b_t(s, a) = \hatc_t(s, a) + f(\P_{t, s, a}, V_{\lt}, n_t(s, a), B, \iota_t) - \sqrt{\frac{\hatc_t(s, a)\iota_t}{n_t(s, a)}} \\
		&\leq c(s, a) + f(\P_{t, s, a}, \optV, n_t(s, a), B, \iota_t) + \frac{\iota_t}{n_t(s, a)} \tag{\pref{eq:c-hatc mb}}\\
		&= c(s, a) + \P_{t, s, a}\optV - \max\cbr{7\sqrt{\frac{\fV(\bar{P}_{t, s, a}, \optV)\iota_t}{n_t(s, a)}}, \frac{49B\iota_t}{n_t(s, a)}} + \frac{\iota_t}{n_t(s, a)}\\
		&\leq \optQ(s, a) + (\P_{t, s, a}-P_{s, a})\optV - 3\sqrt{\frac{\fV(\P_{t, s, a}, \optV)\iota_t}{n_t(s, a)}} - \frac{24B\iota_t}{n_t(s, a)} + \frac{B\iota_t}{n_t(s, a)} \tag{$B\geq \B\geq 1$, $\optQ(s, a)=c(s, a)+P_{s, a}\optV$ and $\max\{a, b\}\geq\frac{a+b}{2}$}\\
		&\leq \optQ(s, a) + (2\sqrt{2}-3)\sqrt{\frac{\fV(\P_{t, s, a}, \optV)\iota_t}{n_t(s, a)}} + (20-24)\frac{B\iota_t}{n_t(s, a)} \leq \optQ(s, a). \tag{\pref{lem:anytime bernstein}}
	\end{align*}
\end{proof}

\begin{lemma}
	\label{lem:recursion}
	With probability at least $1-9\delta$, for all $(\Qref, \Vref)\in\calV_H$
	\begin{align*}
		&\sumt (\Qref(s_t, a_t) - Q_t(s_t, a_t))_+ \leq \rbr{1+\frac{1}{H}}\sumt(\Vref(s_t)-V_t(s_t))_+ \\
		&\qquad + \tilO{ \sqrt{\B SAC_K} + BS^2A + \sqrt{\frac{\B S^2A}{H}\sumt \optV(s_t)-V_t(s_t)} }.
	\end{align*}
\end{lemma}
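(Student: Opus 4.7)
The plan is to upper bound $\sumt(\Qref(s_t,a_t)-Q_t(s_t,a_t))_+$ by exploiting the update rule \pref{eq:mb_update_rule_alt} directly, and then to reduce the resulting recursion to one in $\sumt(\Vref(s_t)-V_t(s_t))_+$ via \pref{lem:update scheme} and a small stack of concentration inequalities.

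\textbf{Step 1: decomposition via the update rule.} For any $(\Qref,\Vref)\in\calV_H$, using $\Qref(s,a)=c(s,a)+P_{s,a}\Vref$ and \pref{eq:mb_update_rule_alt}, on $\{n_t(s_t,a_t)>0\}$ I would write
\begin{align*}
(\Qref(s_t,a_t)-Q_t(s_t,a_t))_+ \leq (c(s_t,a_t)-\hatc_t) + (P_t-\bar{P}_t)V_{\lt} + P_t(\Vref-V_{\lt}) + b_t ,
\end{align*}
while on the complementary event the term is bounded by $\B$ and sums to $\B SA$ (one first visit per $(s,a)$). The cost error $c-\hatc_t$ is absorbed by the $\sqrt{\hatc_t\iota/n_t}$ part of $b_t$ via empirical Bernstein (\pref{lem:anytime bernstein}), and the empirical-transition error $(P_t-\bar{P}_t)V_{\lt}$ is absorbed by the $\max\{7\sqrt{\fV(\bar{P}_t,V_{\lt})\iota/n_t},49B\iota/n_t\}$ part through the same MVP-style step (\pref{lem:mvp}) used for \pref{lem:optimistic mb}. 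Thus $\sumt(\Qref-Q_t)_+\leq \sumt P_t(\Vref-V_{\lt})_+ + \bigO{\sumt b_t} + \tilO{\B SA}$.

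\textbf{Step 2: from $P_t(\Vref-V_{\lt})_+$ to $(\Vref(s_t)-V_t(s_t))_+$.} Because $V_t$ is non-decreasing in $t$ and bounded by $\B$, the map $(s,a)\mapsto P_{s,a}(\Vref-V_t)_+$ satisfies the monotonicity hypothesis of \pref{lem:update scheme} with $\Bref=\B$, giving $\sumt P_t(\Vref-V_{\lt})_+ \leq \B SA + (1+1/H)\sumt P_t(\Vref-V_t)_+$. A Freedman step (\pref{lem:any interval freedman}) then passes from $P_t(\Vref-V_t)_+$ to $(\Vref(s'_t)-V_t(s'_t))_+$ at an overhead of order $\tilO{\sqrt{\B\sumt(\Vref(s_t)-V_t(s_t))_+}+\B}$, absorbable into the leading $(1+1/H)(\Vref-V_t)_+$ up to $\tilO{\B}$ by AM-GM. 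The non-decreasing property of $V_t$ and $\Vref(g)=0$ then let me shift the index via \pref{lem:t diff} to obtain $\sumt(\Vref(s_t)-V_t(s_t))_+$ plus an $\tilO{S\B}$ boundary term.

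\textbf{Step 3: controlling $\sumt b_t$.} Cauchy--Schwarz and \pref{lem:sum of count} handle the easy pieces: $\sumt\sqrt{\hatc_t\iota/n_t}=\tilO{\sqrt{SAC_K}}$ and $\sumt B\iota/n_t=\tilO{BSA}$. The variance piece is $\sumt\sqrt{\fV(\bar{P}_t,V_{\lt})\iota/n_t}\leq\tilO{\sqrt{SA\sumt\fV(\bar{P}_t,V_{\lt})}}$, so the crux is bounding $\sumt\fV(\bar{P}_t,V_{\lt})$ in three steps: (i) \pref{lem:var xy} plus self-bounding converts the empirical variance to the true variance $\fV(P_t,V_{\lt})$ up to $\tilO{B^2/n_t}$ terms summing to $\tilO{B^2SA}$; (ii) $\fV(P_t,V_{\lt})\leq 2\fV(P_t,V_t)+2\fV(P_t,V_t-V_{\lt})$, where the second summand is exactly \pref{lem:update bias}, giving $\tilO{\B^2 SA}+(\B/H)\sumt P_t(\optV-V_t)$; (iii) $\fV(P_t,V_t)\leq 2\fV(P_t,\optV)+2\B P_t(\optV-V_t)$, summed using \pref{lem:var optV} and one more Freedman pass. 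Collecting yields $\sumt\fV(\bar{P}_t,V_{\lt})=\tilO{\B C_K + \B^2 SA + (\B/H)\sumt(\optV(s_t)-V_t(s_t))}$, and the outer $\sqrt{SA\cdot(\cdot)}$ reproduces exactly the three $\tilO{\cdot}$ contributions claimed: $\sqrt{\B SAC_K}$, $BS^{3/2}\sqrt{A}\leq BS^2A$, and $\sqrt{(\B S^2A/H)\sumt(\optV(s_t)-V_t(s_t))}$.

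\textbf{Expected main obstacle.} The subtlety is keeping the $1/H$ factor inside the square root in Step~3: it must survive the variance-decomposition chain so that the last $\tilO{\cdot}$ term carries a $1/\sqrt{H}$ weight, because the closure of the recursion in the proof sketch of \pref{thm:mb_properties} demands $d=1$ in \pref{prop:recursion}. A secondary annoyance is bookkeeping the union bound over all $(\Qref,\Vref)\in\calV_H$ together with the Bernstein, MVP, and Freedman events, so that the total failure probability remains within the advertised $9\delta$.
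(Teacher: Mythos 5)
Your overall architecture (decompose via the update rule, convert $V_{\lt}$ to $V_t$ with \pref{lem:update scheme}, pass from $P_t(\cdot)$ to the on-trajectory sum, bound $\sumt b_t$) matches the paper's, but there is a genuine gap at the two places where the $1/H$ factor must be produced, and your proposed tools cannot produce it. In Step~2 you bound the martingale term $\sumt(P_t-\Ind_{s'_t})(\Vref-V_t)_+$ by a direct Freedman application, giving an overhead of order $\sqrt{\B\sumt(\Vref(s_t)-V_t(s_t))_+}$. Absorbing this into $(1+1/H)\sumt(\Vref(s_t)-V_t(s_t))_+$ by AM--GM costs an additive $\Theta(H\B)$, not $\tilO{\B}$: you need $\sqrt{\B X}\leq X/H + H\B/4$ to keep the multiplicative factor at $1+\bigO{1/H}$. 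An additive $H\B=\text{poly}(\B/\cmin)$ term is exactly what the lemma (and the horizon-free claim of \pref{thm:mb_properties}) forbids. Similarly, in Step~3(iii) the inequality $\fV(P_t,V_t)\leq 2\fV(P_t,\optV)+2\B P_t(\optV-V_t)$ yields the parasitic term $\B\sumt P_t(\optV-V_t)$ \emph{without} any $1/H$; only your step (ii), via \pref{lem:update bias}, carries one. So your collected bound $\sumt\fV(\bar P_t,V_{\lt})=\tilO{\B C_K+\B^2SA+(\B/H)\sumt(\optV(s_t)-V_t(s_t))}$ does not follow from the chain you describe — the honest output is $\tilO{\B C_K+\B^2SA}+\bigO{\B\sumt P_t(\optV-V_t)}$, which after the $H$-fold recursion in \pref{thm:mb_properties} again leaves a $\text{poly}(H)$ lower-order term.

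The missing idea is the recursive higher-moment argument that the paper packages as \pref{lem:var recursion} together with \pref{lem:sum Vref-V_t}. Rather than bounding the martingale deviation by the total variance $\sumt\fV(P_t,\Vref-V_t)\leq\B\sumt P_t(\Vref-V_t)_+$, one bounds it by $\sqrt{\B\cdot Y_T}$ with $Y_T\approx\sumt\rbr{\Vref(s_t)-V_t(s_t)-P_t(\Vref-V_t)}_+$, i.e.\ the sum of one-step Bellman residuals of the estimate; \pref{lem:sum Vref-V_t} then shows this residual sum is $\tilO{\sumt b_t+\frac{1}{H}\sumt P_t(\optV-V_t)+\cdots}$, with the $1/H$ inherited from \pref{lem:update bias}. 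The same device is what bounds $\sumt\fV(P_t,\optV-V_t)$ inside \pref{lem:bound terms}. You correctly flagged in your "main obstacle" paragraph that a $1/H$ must survive inside the square root, but your Steps~2 and~3(iii) are precisely where it is lost, and Freedman plus elementary variance decompositions cannot recover it. A smaller issue: in Step~1, $(P_t-\bar P_t)V_{\lt}$ cannot be absorbed into $b_t$ by "the same MVP-style step" as the optimism proof, since $V_{\lt}$ is correlated with the samples defining $\bar P_t$; the paper splits it as $(P_t-\bar P_t)\optV+(P_t-\bar P_t)(V_{\lt}-\optV)$ and pays an extra $\sqrt{S}$ via \pref{lem:P-eP} for the second piece.
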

\begin{proof}
	We first prove useful properties related to the cost estimator.
	For a fixed $(s, a)$, by \pref{lem:anytime bernstein}, with probability at least $1-\frac{\delta}{SA}$, when $n_t(s, a)>0$:
	\begin{equation}
		\label{eq:c-hatc mb}
		\abr{c(s, a) - \hatc_t(s, a)} \leq 2\sqrt{\frac{2\hatc_t(s, a)}{n_t(s, a)}\ln\frac{2SA}{\delta}} + \frac{19\ln\frac{2SA}{\delta}}{n_t(s, a)} \leq \sqrt{\frac{\hatc_t(s, a)\iota_t}{n_t(s, a)}} + \frac{\iota_t}{n_t(s, a)}.
	\end{equation}
	Taking a union bound, we have \pref{eq:c-hatc mb} holds for all $(s, a)$ when $n_t(s, a)>0$ with probability at least $1-\delta$.
	Then by definition of $b_t$, we have
	\begin{equation}
		\label{eq:c-hatc b mb}
		c(s_t, a_t)-\hatc_t(s_t, a_t)\leq \Ind\{n_t=0\} + b_t.
	\end{equation}
	Note that with probability at least $1-2\delta$, for all $(\Qref, \Vref)\in\calV_H$,
	\begin{align*}
		&\sumt ( \Qref(s_t, a_t) - Q_t(s_t, a_t) )_+ \leq \sumt (c(s_t, a_t) - \hatc_t(s_t, a_t) + P_t\Vref - \P_tV_{\lt} )_+ + b_t \tag{$\Qref(s_t, a_t)=c(s_t, a_t) + P_t\Vref$ and \pref{eq:mb_update_rule_alt}}\\
		&\leq \sumt\Ind\{n_t=0\} + \sumt \sbr{(P_t(\Vref - V_{\lt}) + (P_t-\P_t)\optV + (P_t-\P_t)(V_{\lt}-\optV) )_+ + 2b_t}\\
		&\leq SA + \sumt \sbr{P_t(\Vref - V_{\lt})_+ + \tilO{\sqrt{\frac{\fV(P_t, \optV)}{\n_t}} + \sqrt{\frac{S\fV(P_t, \optV-V_{\lt})}{\n_t}} + \frac{S\B}{\n_t} } + 2b_t} \tag{$(x+y)_+ \leq (x)_+ + (y)_+$, \pref{lem:anytime bernstein}, and \pref{lem:P-eP}}.
	\end{align*}
	Note that:
	\begin{align*}
		&\sumt P_t(\Vref - V_{\lt})_+ \leq \rbr{1 + \frac{1}{H}}\sumt P_t(\Vref - V_t)_+ + \B SA \tag{$P_{\lt}=P_t$ and \pref{lem:update scheme}}\\
		&= \B SA + \rbr{1 + \frac{1}{H}} \sumt \rbr{(\Vref(s'_t) - V_t(s'_t) )_+ + (P_t - \Ind_{s'_t})(\Vref - V_t)_+}\\
		&\leq \bigO{\B SA} + \rbr{1 + \frac{1}{H}}\sumt \rbr{ (\Vref(s_t) - V_t(s_t) )_+ + (P_t - \Ind_{s'_t})(\Vref - V_t)_+ }. \tag{\pref{lem:t diff} and $(\Vref(s'_t)-V_{t+1}(s'_t))_+ \leq (\Vref(s_{t+1})-V_{t+1}(s_{t+1}))_+$}
	\end{align*}
	Plugging this back to the previous inequality, and by Cauchy-Schwarz inequality and \pref{lem:sum nt}:
	\begin{align*}
		&\sumt ( \Qref(s_t, a_t) - Q_t(s_t, a_t) )_+ \leq \rbr{1+\frac{1}{H}}\sumt \rbr{(\Vref(s_t) - V_t(s_t))_+ + (P_t - \Ind_{s'_t})(\Vref-V_t)_+ + b_t}\\
		&\qquad + \tilO{ \sqrt{SA\sumt\fV(P_t, \optV)} + \sqrt{S^2A\sumt\fV(P_t, \optV-V_{\lt})} + \B S^2A }.
	\end{align*}
	Next, we bound the term $\sumt (P_t - \Ind_{s'_t})(\Vref-V_t)_+$.
	We condition on \pref{lem:sum Vref-V_t}, which holds with probability at least $1-\delta$.
	Then, for a given $(\Qref, \Vref)\in\calV_H$, by \pref{lem:var recursion} with $X_t=(\Vref-V_t)_+/\B$, we have with probability $1-\frac{\delta}{H+1}$ ($F_T, Y_T$, and $\zeta_T$ are defined in \pref{lem:var recursion}):
	\begin{align*}
		\B F_T(0) &= \sumt (P_t - \Ind_{s'_t})(\Vref-V_t)_+ \leq \B( \sqrt{3Y_T\zeta_T} + 4\zeta_T) = \tilO{ \sqrt{\B^2Y_T} + \B }\\ 
		&= \tilO{ \sqrt{\B^2\rbr{S+1 + \sumt (X_t(s_t) - P_tX_t)_+}} + \B}\\
		&= \tilO{ \sqrt{\B^2 S + \B\sumt ( \Vref(s_t)-V_t(s_t) - P_t(\Vref-V_t) )_+} + \B }. \tag{$(x)_+ - (y)_+ \leq (x-y)_+$}\\
		&\overset{\text{(i)}}{=} \tilO{ \sumt b_t + \B S\sqrt{A} + \sqrt{\frac{\B}{H}\sumt P_t(\optV-V_t)} }\\
		&\qquad + \tilO{ \sqrt{SA\sumt\fV(P_t, \optV)} + \sqrt{S^2A\sumt\fV(P_t, \optV-V_{\lt})} },
	\end{align*}
	where in (i) we apply:
	\begin{align*}
		&\sqrt{\B\sumt ( \Vref(s_t)-V_t(s_t) - P_t(\Vref-V_t) )_+} \leq \sqrt{\B \rbr{ \sumt 2b_t + \frac{P_t(\optV-V_t)}{H} } }\\
		&\qquad + \tilO{ \sqrt{\B \rbr{\sqrt{SA\sumt\fV(P_t, \optV)} + \sqrt{S^2A\sumt\fV(P_t, \optV-V_{\lt})}} } + \B S\sqrt{A} } \tag{\pref{lem:sum Vref-V_t} and $\sqrt{x+y}\leq\sqrt{x}+\sqrt{y}$}\\
		&\leq 2\sumt b_t + \sqrt{\frac{\B}{H}\sumt P_t(\optV-V_t)} + \tilO{ \sqrt{SA\sumt\fV(P_t, \optV)} + \sqrt{S^2A\sumt\fV(P_t, \optV-V_{\lt})} + \B S\sqrt{A} }. \tag{AM-GM inequality and $\sqrt{x+y}\leq\sqrt{x}+\sqrt{y}$}
	\end{align*}
	Hence, by a union bound, the bound above for $\sumt (P_t - \Ind_{s'_t})(\Vref-V_t)_+$ holds for all $(\Qref, \Vref)\in\calV_H$ with probability at least $1-\delta$, and with probability at least $1-4\delta$, for all $(\Qref, \Vref)\in\calV_H$,
	\begin{align*}
		&\sumt ( \Qref(s_t, a_t) - Q_t(s_t, a_t) )_+ \leq \rbr{1+\frac{1}{H}}\sumt (\Vref(s_t)-V_t(s_t))_+ + \tilO{ \B S^2A + \sumt b_t }\\
		&\qquad +\tilO{ \sqrt{SA\sumt\fV(P_t, \optV)} + \sqrt{S^2A\sumt\fV(P_t, \optV-V_{\lt})} + \sqrt{\frac{\B}{H}\sumt P_t(\optV-V_t)} } \\
		&\leq \rbr{1+\frac{1}{H}}\sumt (\Vref(s_t)-V_t(s_t))_+ + \tilO{B S^2A + \sqrt{SA\sumt\fV(P_t, \optV)} }\\
		&\qquad + \tilO{ \sqrt{S^2A\sumt\fV(P_t, \optV-V_{\lt})} + \sqrt{\frac{\B SA}{H}\sumt P_t(\optV-V_t)} + \sqrt{SAC_K} }.\tag{\pref{lem:bound terms}}
	\end{align*}
	Note that:
	\begin{align*}
		&\sqrt{S^2A\sumt\fV(P_t, \optV-V_{\lt})}\\
		&= \tilO{ \sqrt{ \B S^2A\sqrt{SA\sumt\fV(P_t, \optV)} + B^2S^4A^2 + \frac{\B S^2A}{H}\sumt P_t(\optV-V_t) + \B S^2A\sqrt{SAC_K} } } \tag{\pref{lem:bound terms}}\\
		&= \tilO{ \sqrt{ \B S^2A\sqrt{SA\sumt\fV(P_t, \optV)}}  + BS^2A + \sqrt{ \frac{\B S^2A}{H}\sumt P_t(\optV-V_t) } + \sqrt{SAC_K} } \tag{$\sqrt{x+y}\leq\sqrt{x}+\sqrt{y}$ and AM-GM inequality}\\
		&= \tilO{ \sqrt{SA\sumt\fV(P_t, \optV)} + BS^2A + \sqrt{ \frac{\B S^2A}{H}\sumt P_t(\optV-V_t) } + \sqrt{SAC_K} }. \tag{AM-GM inequality}
	\end{align*}
	Plug this back to the previous inequality, and then by \pref{lem:var optV}
	\begin{align*}
		\sumt ( \Qref(s_t, a_t) - Q_t(s_t, a_t) )_+ &\leq \rbr{1+\frac{1}{H}}\sumt(\Vref(s_t)-V_t(s_t))_+\\ 
		&+ \tilO{ \sqrt{\B SAC_K} + BS^2A + \sqrt{\frac{\B S^2A}{H}\sumt P_t(\optV-V_t)} }.
	\end{align*}
	Finally, applying \pref{lem:e2r}, \pref{lem:t diff} and $(\optV-V_{t+1})(s'_t)\leq (\optV-V_{t+1})(s_{t+1})$, the claim is proved by
	\begin{align*}
		\sumt P_t(\optV-V_t) \leq \tilO{\B} + 2\sumt (\optV(s'_t) - V_t(s'_t)) \leq \tilO{S\B} + 2\sumt(\optV(s_t) - V_t(s_t)).
	\end{align*}
\end{proof}

\begin{proof}[\pfref{thm:mb_properties}]
	\pref{prop:optimism} is proved in \pref{lem:optimistic mb}.
	For \pref{prop:recursion}, by \pref{lem:recursion}, it suffices to bound $\sumt \optV(s_t)-V_t(s_t)$.
	By \pref{lem:recursion}, $\optV_{h-1}(s_t)\leq\optQ_h(s_t, a_t)$, and $V_t(s_t)=Q_t(s_t, a_t)$, we have with probability at least $1-9\delta$, for all $\Qref=\optQ_h, \Vref=\optV_{h-1}, h\in[H]$:
	\begin{align*}
		&\sumt ( \optQ_h(s_t, a_t) - Q_t(s_t, a_t) )_+ \leq \rbr{1+\frac{1}{H}}\sumt (\optQ_{h-1}(s_t, a_t)-Q_t(s_t, a_t))_+ \\
		&\qquad + \tilO{ \sqrt{\B SAC_K} + B S^2A + \sqrt{\frac{\B S^2A}{H}\sumt \optV(s_t)-V_t(s_t)}}, \quad \forall h\in[H].
	\end{align*}
	Applying the inequality above recursively starting from $h=H$ and by $\optQ_0(s, a)=0, (1+\frac{1}{H})^H\leq 3$ we have:
	\begin{align*}
		&\sumt ( \optQ_H(s_t, a_t) - Q_t(s_t, a_t) )_+ = \tilO{ H\rbr{\sqrt{\B SAC_K} + BS^2A} + \sqrt{\B HS^2A\sumt \optV(s_t)-V_t(s_t)} }.
	\end{align*}
	Then by \pref{lem:loop free approx} with $H=\ceil{\frac{4B}{\cmin}\ln(\frac{2}{\beta})+1}_2$:
	\begin{align*}
		\sumt \optV(s_t) - V_t(s_t) &\leq \sumt (\optQ(s_t, a_t) - \optQ_H(s_t, a_t)) + \sumt (\optQ_H(s_t, a_t) - Q_t(s_t, a_t))\\
		&\leq \B\beta T + \tilO{ H\rbr{\sqrt{\B SAC_K} + BS^2A} + \sqrt{BHS^2A\sumt \optV(s_t)-V_t(s_t)} }.
	\end{align*}
	Solving a quadratic equation w.r.t $\sumt \optV(s_t) - V_t(s_t)$ (\pref{lem:quad}), we have:
	\begin{align*}
		\sumt \optV(s_t) - V_t(s_t) \leq \B\beta T + \tilO{ H\rbr{ \sqrt{\B SAC_K} + BS^2A}}.
	\end{align*}
	Plug this back to the bound of \pref{lem:recursion} and by AM-GM inequality, we have for all $(\Qref, \Vref)\in\calV_H$:
	\begin{align*}
		&\sumt (\Qref(s_t, a_t) - Q_t(s_t, a_t))_+\\
		&\leq \rbr{1+\frac{1}{H}}\sumt(\Vref(s_t)-V_t(s_t))_+ + \frac{\B\beta T}{H} + \tilO{ \sqrt{\B SAC_K} + BS^2A }.
	\end{align*}
	Moreover, by $H\geq \frac{\B}{\cmin}$, we have $\frac{\B\beta T}{H} \leq \beta\cmin T \leq \beta C_K$.
	Hence, \pref{prop:recursion} is satisfied with $d=1, \xi_H= \beta C_K + \tilo{\sqrt{\B SAC_K} + BS^2A }$ with probability at least $1-9\delta$.
\end{proof}

\subsection{\pfref{thm:mb}}
\label{sec:pf thm mb}

\begin{proof}
	By \pref{thm:main_regret} and \pref{thm:mb_properties}, with probability at least $1-12\delta$:
	\begin{align*}
		C_K - K\optV(\sinit) =  R_K \leq \beta C_K + \tilO{\sqrt{\B SAC_K} + BS^2A }.
	\end{align*}
	Then by $\optV(\sinit)\leq\B, \beta\leq\frac{1}{2}$ and \pref{lem:quad}, we have $C_K=\tilO{\B K}$.
	Substituting this back and by $\beta\leq\frac{\cmin}{\B K}, H=\tilo{\B/\cmin}$, we get $R_K = \tilO{ \B\sqrt{SAK} + BS^2A }$.
\end{proof}

\subsection{Extra Lemmas for \pref{sec:mb}}

In this section, we give full proofs of auxiliary lemmas used in \pref{sec:mb}.
Notably, \pref{lem:sum Vref-V_t} and \pref{lem:bound terms} bound the additional terms appears in the recursion in \pref{lem:recursion}.
\pref{lem:var recursion} gives recursion-based analysis on bounding the sum of martingale difference sequence, which is the key in obtaining horizon-free regret.

\begin{lemma}
	\label{lem:sum Vref-V_t}
	With probability at least $1-\delta$, we have for all $(\Qref, \Vref)\in\calV_H$,
	\begin{align*}
		&\sumt ( (\Ind_{s_t} - P_t)(\Vref-V_t) )_+ \leq \sumt 2b_t + \frac{P_t(\optV-V_t)}{H} \\
		&\qquad + \tilO{ \sqrt{SA\sumt\fV(P_t, \optV)} + \sqrt{S^2A\sumt\fV(P_t, \optV-V_{\lt})} + \B S^2A }.
	\end{align*}
\end{lemma}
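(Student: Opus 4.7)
The plan is to bound the summand pointwise for each $t$ with $n_t(s_t,a_t)>0$, and then sum. The first-visit steps (when $n_t=0$) are at most $SA$ in number and contribute $O(\B SA)$, which gets absorbed into the $\tilO{\B S^2A}$ term. So fix $t$ with $n_t>0$. The key identity is
\[
(\Ind_{s_t}-P_t)(\Vref-V_t) = \bigl(\Vref(s_t)-V_t(s_t)\bigr) - P_t(\Vref-V_t),
\]
and we want to control the first difference. Since $(\Qref,\Vref)\in\calV_H$ satisfies $\Vref(s_t)\leq \min_a\Qref(s_t,a)\leq\Qref(s_t,a_t)=c(s_t,a_t)+P_t\Vref$, and since $V_t(s_t)=Q_t(s_t,a_t)$ with $Q_t(s_t,a_t)\geq \hatc_t+\bar P_t V_{\lt}-b_t$ from the max in \pref{eq:mb_update_rule_alt}, we get
\[
\Vref(s_t)-V_t(s_t)\;\leq\; \bigl(c(s_t,a_t)-\hatc_t\bigr)+\bigl(P_t\Vref-\bar P_t V_{\lt}\bigr)+b_t \;\leq\; 2b_t+P_t\Vref-\bar P_tV_{\lt},
\]
where the second bound uses \pref{eq:c-hatc b mb}. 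Subtracting $P_t(\Vref-V_t)$ on both sides yields the pointwise bound
\[
\bigl((\Ind_{s_t}-P_t)(\Vref-V_t)\bigr)_+ \;\leq\; 2b_t + P_t(V_t-V_{\lt}) + (P_t-\bar P_t)V_{\lt}.
\]

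Next I would handle the three pieces after summation. The second term is handled by \pref{lem:update bias}, which gives $\sumt P_t(V_t-V_{\lt})\leq \B SA + \tfrac{1}{H}\sumt P_t(\optV-V_t)$, producing exactly the $\sumt \tfrac{P_t(\optV-V_t)}{H}$ contribution and an $O(\B SA)$ overhead. For the third term I decompose
\[
(P_t-\bar P_t)V_{\lt} \;=\; (P_t-\bar P_t)\optV \;-\; (P_t-\bar P_t)(\optV-V_{\lt}),
\]
and bound the two pieces using Bernstein-type concentration: for the fixed function $\optV$, \pref{lem:anytime bernstein} gives $|(P_t-\bar P_t)\optV|=\tilO{\sqrt{\fV(P_t,\optV)/n_t}+B/n_t}$, and for the data-dependent difference $\optV-V_{\lt}$ (which is $\ell_\infty$-bounded but not fixed), the $\ell_1$-style bound \pref{lem:P-eP} gives $|(P_t-\bar P_t)(\optV-V_{\lt})|=\tilO{\sqrt{S\fV(P_t,\optV-V_{\lt})/n_t}+SB/n_t}$. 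Summing with Cauchy--Schwarz and \pref{lem:sum nt} (which controls $\sumt 1/n_t$ and $\sumt 1/\sqrt{n_t}$ up to factors of $SA$) produces exactly
\[
\tilO{\sqrt{SA\sumt\fV(P_t,\optV)}+\sqrt{S^2A\sumt\fV(P_t,\optV-V_{\lt})}+\B S^2A}.
\]

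Finally I would apply a union bound over the $H+1$ pairs in $\calV_H$, each receiving failure probability $\delta/(H+1)$. The overall failure probability stays at $\delta$, and the logarithmic factor from the union bound is absorbed into $\tilO{\cdot}$. The main technical obstacle is the concentration of $(P_t-\bar P_t)(\optV-V_{\lt})$ uniformly in $t$, since $V_{\lt}$ is adapted to the past and can a priori range over exponentially many possibilities; this is precisely the reason one pays the extra $\sqrt{S}$ factor via \pref{lem:P-eP} (which in turn rests on a covering or coordinate-wise $\ell_1$ deviation argument), and it explains why the second variance term scales with $S^2A$ rather than $SA$. Modulo invoking this lemma as a black box, every other step is a direct computation or application of results already cited in the excerpt.
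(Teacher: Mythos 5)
Your proposal is correct and follows essentially the same route as the paper's proof: the same pointwise bound via $\Vref(s_t)\leq\Qref(s_t,a_t)$ and the update rule, the same decomposition of $(P_t-\bar P_t)V_{\lt}$ into $(P_t-\bar P_t)\optV$ plus $(P_t-\bar P_t)(V_{\lt}-\optV)$ handled by \pref{lem:anytime bernstein} and \pref{lem:P-eP} respectively, \pref{lem:update bias} for the $P_t(V_t-V_{\lt})$ term, and Cauchy--Schwarz with \pref{lem:sum nt} to conclude. The only cosmetic difference is that your union bound over $\calV_H$ is not actually needed, since the concentration events involve only $\optV$ and $V_{\lt}$ and the resulting bound holds simultaneously for all $(\Qref,\Vref)$.
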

\begin{proof}
	With probability at least $1-\delta$, for all $(\Qref, \Vref)\in\calV_H$, 
	\begin{align*}
		&\sumt ( \Vref(s_t) - V_t(s_t) - P_t(\Vref-V_t) )_+ \leq \sumt ( \Qref(s_t, a_t) - P_t\Vref + P_tV_t - V_t(s_t) )_+\\
		&\leq \sumt ( c(s_t, a_t) + P_tV_{\lt} - V_t(s_t) )_+ + P_t(V_t-V_{\lt}) \tag{$\Qref(s_t, a_t)=c(s_t, a_t) + P_t\Vref$, $(x+y)_+ \leq (x)_+ + (y)_+$, and $V_t$ is increasing in $t$} \\
		&\leq \B SA + \sumt (c(s_t, a_t) - \hatc_t(s_t, a_t))_+ + ((P_t-\P_t)V_{\lt})_+ + b_t + \frac{1}{H} P_t(\optV-V_t) \tag{$V_t(s_t)=Q_t(s_t, a_t)$, \pref{eq:mb_update_rule_alt}, and \pref{lem:update bias}} \\
		&\leq 2\B SA  + \sumt ( (P_t-\P_t)\optV + (P_t-\P_t)(V_{\lt}-\optV) )_+ + 2b_t + \frac{1}{H} P_t(\optV-V_t). \tag{\pref{eq:c-hatc b mb}}
	\end{align*}
	Now by \pref{lem:anytime bernstein} and \pref{lem:P-eP}, we have with probability at least $1-\delta$: $(P_t-\P_t)\optV = \bigO{\sqrt{\frac{\fV(P_t, \optV)}{\n_t}} + \frac{\B}{\n_t} }$ and $(P_t-\P_t)(V_{\lt}-\optV) = \tilO{ \sqrt{\frac{S\fV(P_t, \optV-V_{\lt})}{\n_t}} + \frac{S\B}{\n_t} }$.
	Plugging these back to the previous inequality, we have for all $(\Qref, \Vref)\in\calV_H$:
	\begin{align*}
		&\sumt ( \Vref(s_t) - V_t(s_t) - P_t(\Vref-V_t) )_+\\
		&\leq 2\B SA + \sumt \tilO{\sqrt{\frac{\fV(P_t, \optV)}{\n_t}} + \sqrt{\frac{S\fV(P_t, \optV-V_{\lt})}{\n_t}} + \frac{S\B}{\n_t} } + 2b_t + \frac{1}{H} P_t(\optV-V_t)\\
		&\leq \tilO{ \sqrt{SA\sumt\fV(P_t, \optV)} + \sqrt{S^2A\sumt\fV(P_t, \optV-V_{\lt})} + \B S^2A } + \sumt 2b_t + \frac{P_t(\optV-V_t)}{H}. \tag{Cauchy-Schwarz inequality and \pref{lem:sum nt}}
	\end{align*}
	This completes the proof.
\end{proof}

\begin{lemma}
	\label{lem:bound terms}
	With probability at least $1-3\delta$,
	\begin{align*}
		&\sumt b_t = \tilO{B S^{3/2}A + \sqrt{SA\sumt\fV(P_t, \optV)} + \sqrt{\frac{\B SA}{H}\sumt P_t(\optV-V_t)} + \sqrt{SAC_K}},\\
		&\sumt\fV(P_t, \optV-V_{\lt}) = \tilO{ \B\sqrt{SA\sumt\fV(P_t, \optV)} + B^2S^2A + \frac{\B}{H}\sumt P_t(\optV-V_t) + \B\sqrt{SAC_K} }.
	\end{align*}
\end{lemma}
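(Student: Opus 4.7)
The plan is to derive the two bounds as a coupled pair: the first inequality naturally brings $\sumt\fV(P_t,\optV-V_{\lt})$ into play (after splitting the empirical variance inside $b_t$), while the second will naturally produce a $\B\sumt P_t(\optV-V_t)$ term that must be rerouted back into variance quantities already present in the first. I will therefore prove each estimate ``conditionally'' on the other and then close the loop by substitution combined with $\sqrt{x+y}\le\sqrt{x}+\sqrt{y}$ and AM-GM.

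For $\sumt b_t$, I first apply $\max(a,b)\le a+b$ to split $b_t$ into a variance bonus $7\sqrt{\fV(\bar P_t,V_{\lt})\iota/n_t}$, a constant bonus $49B\iota/n_t$, and a cost piece $\sqrt{\hat c_t\iota/n_t}$. The constant part sums to $\tilO{BSA}$ by the counts-sum lemma \pref{lem:sum nt}. For the variance bonus I apply Cauchy--Schwarz to get $\sumt\sqrt{\fV(\bar P_t,V_{\lt})/n_t}\le\tilO{\sqrt{SA\sumt\fV(\bar P_t,V_{\lt})}}$, then convert the empirical variance to the true one via the same identities that appear in \pref{lem:mf optimistic} (the $\chi_3,\chi_4,\chi_5$ calculation together with Bernstein, \pref{lem:anytime bernstein}), and split $\fV(P_t,V_{\lt})\le 2\fV(P_t,\optV)+2\fV(P_t,\optV-V_{\lt})$ by $(a-b)^2\le 2a^2+2b^2$. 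The cost piece is handled exactly as in \pref{lem:sum bt mf}: write $\hat c_t\le c(s_t,a_t)+|\hat c_t-c(s_t,a_t)|$, apply Cauchy--Schwarz, use Bernstein to absorb the fluctuation, and solve the resulting quadratic via \pref{lem:quad}, yielding $\sumt\sqrt{\hat c_t/n_t}=\tilO{\sqrt{SAC_K}+SA}$.

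For $\sumt\fV(P_t,\optV-V_{\lt})$, I decompose $\optV-V_{\lt}=(\optV-V_t)+(V_t-V_{\lt})$ and use $\fV(X+Y)\le 2\fV(X)+2\fV(Y)$. The ``staleness'' piece $\sumt\fV(P_t,V_t-V_{\lt})$ is controlled directly by \pref{lem:update bias}, which delivers $\tilO{\B^2SA}+\tfrac{\B}{H}\sumt P_t(\optV-V_t)$; this is the source of the crucial $\tfrac{1}{H}$ factor, inherited from the update-scheme guarantee of \pref{lem:update scheme}. For the ``error'' piece $\sumt\fV(P_t,\optV-V_t)$, the bound $\optV-V_t\in[0,\B]$ gives $\fV(P_t,\optV-V_t)\le \B\,P_t(\optV-V_t)$, hence at worst $\B\sumt P_t(\optV-V_t)$. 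This term is then dissolved into the other quantities by writing $\sumt P_t(\optV-V_t)=\sumt(\optV(s'_t)-V_t(s'_t))+\sumt(P_t-\Ind_{s'_t})(\optV-V_t)$, applying Freedman (\pref{lem:any interval freedman}) with variance proxy $\sumt\fV(P_t,\optV-V_t)\le\B\sumt P_t(\optV-V_t)$ to self-bound the martingale, and reducing the diagonal sum through a time-shift (\pref{lem:t diff}) combined with \pref{lem:cost diff} and \pref{lem:var optV}.

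The main obstacle is precisely converting the naive $\B\sumt P_t(\optV-V_t)$ from the error piece into a form compatible with the claimed $\tfrac{\B}{H}\sumt P_t(\optV-V_t)$: without this refinement, the recursion of \pref{lem:recursion} would acquire an extra factor of $H$ in its lower-order term and break horizon-freeness. The fix is that the $\B$-coefficient piece is not retained as a linear term in $\sumt P_t(\optV-V_t)$ but rerouted, through the Freedman plus self-bounding quadratic sketched above, into the already-present combination $\B\sqrt{SA\sumt\fV(P_t,\optV)}+B^2S^2A+\B\sqrt{SAC_K}$; only the staleness contribution survives as an $\sumt P_t(\optV-V_t)$ term, and by \pref{lem:update bias} it is automatically pre-divided by $H$. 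Substituting the resulting bound on $\sumt\fV(P_t,\optV-V_{\lt})$ back into the Part 1 estimate and applying $\sqrt{x+y}\le\sqrt{x}+\sqrt{y}$ together with AM-GM then closes the coupled system and produces both inequalities.
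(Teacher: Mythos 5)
Your treatment of $\sumt b_t$ up to the intermediate bound $\tilO{BS^{3/2}A + \sqrt{SA\sumt\fV(P_t,\optV)} + \sqrt{SA\sumt\fV(P_t,\optV-V_{\lt})} + \sqrt{SAC_K}}$ matches the paper's (the paper splits $\sqrt{\fV(P_t,V_{\lt})}$ via \pref{lem:var diff} rather than squaring out, and converts empirical to true variance with a direct Bernstein argument rather than the $\chi_3,\chi_4,\chi_5$ computation, but these are cosmetic). The gap is in your handling of the error piece $\sumt\fV(P_t,\optV-V_t)$. The crude bound $\fV(P_t,\optV-V_t)\le \B\, P_t(\optV-V_t)$ leaves you with $\B\sumt P_t(\optV-V_t)$ carrying a full factor of $\B$ and no $1/H$, and your plan to ``dissolve'' it---rewriting $\sumt P_t(\optV-V_t)$ as a diagonal sum plus a martingale and invoking \pref{lem:t diff}, \pref{lem:cost diff}, and \pref{lem:var optV}---cannot work. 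The martingale part is indeed self-boundable, but the diagonal sum $\sumt(\optV(s_t)-V_t(s_t))$ is not controlled by any of those lemmas: it is the very quantity whose control requires \pref{lem:recursion} and the finite-horizon recursion, both of which sit downstream of \pref{lem:bound terms}, so the argument is circular. Moreover, even granting the bound $\sumt(\optV(s_t)-V_t(s_t)) = \tilO{H(\sqrt{\B SAC_K}+BS^2A)}$ that eventually emerges in \pref{thm:mb_properties}, you would obtain $\B\sumt P_t(\optV-V_t)=\tilO{\B H\sqrt{\B SAC_K}}$, which exceeds the claimed right-hand side by a factor of $H$ and would destroy horizon-freeness.

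The paper avoids this by never invoking $\fV(P_t,\optV-V_t)\le\B\, P_t(\optV-V_t)$. Instead it applies the recursive total-variance lemma \pref{lem:var recursion} (the $X^{2^d}$ doubling argument) with $X_t=(\optV-V_t)/\B$, which yields $\sumt\fV(P_t,\optV-V_t)\lesssim \B\sumt\rbr{(\Ind_{s_t}-P_t)(\optV-V_t)}_+ + \tilO{S\B^2}$; the one-step surpluses $\rbr{(\Ind_{s_t}-P_t)(\optV-V_t)}_+$ are then bounded by \pref{lem:sum Vref-V_t} using optimism and the update rule, which is where $2b_t$, the variance terms, and---via \pref{lem:update bias}---the $\frac{1}{H}P_t(\optV-V_t)$ term enter. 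Closing the resulting self-referential inequality with \pref{lem:quad} gives the second display, and substituting back gives the first. Your instinct that the $1/H$ on the staleness piece comes from \pref{lem:update bias} is right, but the same mechanism must also supply the $1/H$ for the error piece, and that requires routing through \pref{lem:var recursion} and \pref{lem:sum Vref-V_t} rather than through a direct bound on $\sumt P_t(\optV-V_t)$.
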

\begin{proof}
	First note that:
	\begin{align*}
		\sumt b_t &\overset{\text{(i)}}{=} \tilO{BSA + \sumt \sqrt{\frac{\fV(\P_t, V_{\lt})}{\n_t}} + \sqrt{\frac{\hatc_t}{\n_t}} } \overset{\text{(ii)}}{=} \tilO{BSA + \sumt \sqrt{\frac{\fV(P_t, V_{\lt})}{\n_t}} + \frac{\B\sqrt{S}}{\n_t} + \sqrt{\frac{\hatc_t}{\n_t}} }.
	\end{align*}
	where in (i) we apply $\max\{a, b\}\leq a+ b$ and \pref{lem:sum nt}, and in (ii) we have with probability at least $1-\delta$,
	\begin{align*}
		&\fV(\P_t, V_{\lt}) = \P_t(V_{\lt} - \P_t V_{\lt})^2 \leq \P_t(V_{\lt} - P_tV_{\lt})^2 \tag{$\frac{\sum_ip_ix_i}{\sum_ip_i}=\argmin_z\sum_ip_i(x_i-z)^2$}\\
		&= \fV(P_t, V_{\lt}) + (P_t-\P_t)(V_{\lt}-P_tV_{\lt})^2 \\
		&\leq \fV(P_t, V_{\lt}) + \tilO{\sum_{s'}\rbr{\sqrt{\frac{P_t(s')}{\n_t}} + \frac{1}{\n_t}}(V_{\lt}(s')-P_tV_{\lt})^2}  \tag{\pref{lem:anytime bernstein}}\\
		&\leq \fV(P_t, V_{\lt}) + \tilO{ \B\sqrt{\frac{S\fV(P_t, V_{\lt})}{\n_t}} + \frac{S\B^2}{\n_t} } = \tilO{ \fV(P_t, V_{\lt}) + \frac{S\B^2}{\n_t} }. \tag{Cauchy-Schwarz inequality and AM-GM inequality}
	\end{align*}
	Thus, by \pref{lem:var diff}, Cauchy-Schwarz inequality, and \pref{lem:sum nt}, we have:
	\begin{align}
		\sumt b_t &= \tilO{ BS^{3/2}A + \sumt \sqrt{\frac{\fV(P_t, \optV)}{\n_t}} + \sumt \sqrt{\frac{\fV(P_t, \optV-V_{\lt})}{\n_t}} + \sqrt{\frac{\hatc_t}{\n_t}} } \notag\\
		&= \tilO{B S^{3/2}A + \sqrt{SA\sumt\fV(P_t, \optV)} + \sqrt{SA\sumt \fV(P_t, \optV-V_{\lt})} + \sqrt{SAC_K} }, \label{eq:intermediate bt}
	\end{align}
	where in the last inequality we apply:
	\begin{align*}
		&\sumt \sqrt{\frac{\hatc_t}{\n_t}} \leq \sqrt{SA\rbr{ \sumt c(s_t, a_t) + \sumt (c(s_t, a_t) - \hatc_t) }} \tag{Cauchy-Schwarz inequality and \pref{lem:sum nt}}\\
		&\leq \sqrt{SA\rbr{2C_K + \tilO{1} + \sumt \sqrt{\frac{\hatc_t\iota_t}{\n_t}} + \frac{\iota_t}{\n_t}}} = \tilO{\sqrt{SAC_K} + \sqrt{SA\sumt\sqrt{\frac{\hatc_t}{\n_t}}} + SA}, \tag{\pref{lem:e2r} and \pref{eq:c-hatc mb}}
	\end{align*}
	and by \pref{lem:quad} we obtain: $\sumt\sqrt{\frac{\hatc_t}{\n_t}}=\tilo{\sqrt{SAC_K} + SA}$.
	Applying \pref{lem:var recursion} with $X_t(s) = (\optV(s)-V_t(s))/\B$, we have with probability at least $1-\delta$ ($G_T, Y_T$, and $\zeta_T$ are defined in \pref{lem:var recursion}),
	\begin{align*}
		&\sumt\fV(P_t, \optV-V_t) = \B^2 G_T(0) \leq 3\B^2Y_T + 9\B^2\zeta_T \leq 3\B\sumt (  (\Ind_{s_t} - P_t)(\optV-V_t) )_+  + \tilO{S\B^2}.\\
	\end{align*}
	By \pref{lem:sum Vref-V_t} and \pref{eq:intermediate bt}, with probability at least $1-\delta$,
	\begin{align*}
		&\sumt (  (\Ind_{s_t} - P_t)(\optV-V_t) )_+ \leq \sumt 2b_t + \frac{1}{H} P_t(\optV-V_t) \\
		&\qquad + \tilO{ \sqrt{SA\sumt\fV(P_t, \optV)} + \sqrt{S^2A\sumt\fV(P_t, \optV-V_{\lt})} + \B S^2A }.\\
		&= \tilO{ BS^2A + \sqrt{SA\sumt\fV(P_t, \optV)} + \sqrt{S^2A\sumt \fV(P_t, \optV-V_{\lt})} + \frac{1}{H}\sumt P_t(\optV-V_t) + \sqrt{SAC_K} }\\
		&\overset{\text{(i)}}{=} \tilO{ BS^2A + \sqrt{SA\sumt\fV(P_t, \optV)} + \sqrt{S^2A\sumt \fV(P_t, \optV-V_t)} + \frac{1}{H}\sumt P_t(\optV-V_t) + \sqrt{SAC_K} },
	\end{align*}
	where in (i) we apply
	\begin{align*}
		&\sqrt{S^2A\sumt \fV(P_t, \optV-V_{\lt})} = \tilO{\sqrt{S^2A\sumt \fV(P_t, \optV-V_t)} + \sqrt{S^2A\sumt \fV(P_t, V_t-V_{\lt})}  } \tag{$\var[X+Y]\leq 2\var[X] + 2\var[Y]$ and $\sqrt{x+y}\leq\sqrt{x}+\sqrt{y}$} \\
		&= \tilO{ \sqrt{S^2A\sumt \fV(P_t, \optV-V_t)} + \sqrt{ S^2A\rbr{ \B^2SA +  \frac{\B}{H}\sumt P_t(\optV-V_t) } } } \tag{\pref{lem:update bias}}\\
		&= \tilO{ \sqrt{S^2A\sumt \fV(P_t, \optV-V_t)} + \B S^2A + \frac{1}{H}\sumt P_t(\optV-V_t)} . \tag{$\sqrt{x+y}\leq \sqrt{x}+\sqrt{y}$ and AM-GM Inequality}
	\end{align*}
	Plugging the bound on $\sumt (  (\Ind_{s_t} - P_t)(\optV-V_t) )_+$ back, we have
	\begin{align*}
		\sumt\fV(P_t, \optV-V_t) &= \tilO{ B^2S^2A + \B\sqrt{SA\sumt\fV(P_t, \optV)} + \B\sqrt{S^2A\sumt \fV(P_t, \optV-V_t)} } \\
		&\qquad+ \tilO{ \frac{\B}{H}\sumt P_t(\optV-V_t) + \B\sqrt{SAC_K}}.
	\end{align*}
	Solving a quadratic inequality w.r.t $\sumt\fV(P_t, \optV-V_t)$ (\pref{lem:quad}), we obtain
	$$\sumt\fV(P_t, \optV-V_t) = \tilO{ B^2S^2A + \B\sqrt{SA\sumt\fV(P_t, \optV)} + \frac{\B}{H}\sumt P_t(\optV-V_t) + \B\sqrt{SAC_K} },$$
	and by $\var[X+Y]\leq 2\var[X] + 2\var[Y]$ and \pref{lem:update bias},
	\begin{align*}
		\sumt\fV(P_t, \optV-V_{\lt}) &=\tilO{ \sumt\fV(P_t, \optV-V_t) + \fV(P_t, V_t-V_{\lt}) }\\
		&= \tilO{ \B\sqrt{SA\sumt\fV(P_t, \optV)} + B^2S^2A + \frac{\B}{H}\sumt P_t(\optV-V_t) + \B\sqrt{SAC_K} }.
	\end{align*}
	Moreover, by $\sqrt{x+y}\leq\sqrt{x}+\sqrt{y}$ and AM-GM inequality:
	\begin{align*}
		&\sqrt{ SA\sumt\fV(P_t, \optV-V_{\lt}) }\\
		&= \tilO{ \sqrt{ \B SA\sqrt{SA\sumt\fV(P_t, \optV)} } + BS^{3/2}A + \sqrt{\frac{\B SA}{H}\sumt P_t(\optV-V_t)} + \sqrt{\B SA\sqrt{SAC_K}} }\\
		&= \tilO{ \sqrt{SA\sumt\fV(P_t, \optV)} + BS^{3/2}A + \sqrt{\frac{\B SA}{H}\sumt P_t(\optV-V_t)} + \sqrt{SAC_K} }.
	\end{align*}
	Plug this back to \pref{eq:intermediate bt}:
	$$\sumt b_t = \tilO{ BS^{3/2}A + \sqrt{SA\sumt\fV(P_t, \optV)} + \sqrt{\frac{\B SA}{H}\sumt P_t(\optV-V_t)} + \sqrt{SAC_K} }.$$
\end{proof}

\begin{lemma}
	\label{lem:var recursion}
	Suppose $X_t:\calS^+\rightarrow [0, 1]$ is monotonic in $t$ (that is, $X_t(s)$ is non-decreasing or non-increasing in $t$ for all $s\in\calS^+$), and $X_t(g)=0$. 
	Define:
	\begin{align*}
		F_n(d) = \sum_{t=1}^n P_tX_t^{2^d} - (X_t(s'_t))^{2^d}, \quad G_n(d) = \sum_{t=1}^n \fV(P_t, X_t^{2^d}).
	\end{align*}
	Then with probability at least $1-\delta$, for all $n\in\fN^+$ simultaneously, $G_n(0) \leq 3Y_n + 9\zeta_n, F_n(0) \leq \sqrt{3Y_n\zeta_n} + 4\zeta_n$, where $Y_n = S + 1 + \sum_{t=1}^n( X_t(s_t) - P_tX_t )_+, \zeta_n=32\ln^3\frac{4n^4}{\delta}$.
\end{lemma}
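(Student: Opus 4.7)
The plan is a recursive bootstrap: apply Freedman's inequality to the higher-moment martingale differences $F_n(d)$, derive a variance recursion linking $G_n(d)$ to $G_n(d+1)$ and $Y_n$, and then unroll the recursion up to a logarithmic truncation depth to close the resulting self-bounding system.

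First, I would apply the anytime Freedman inequality (\pref{lem:any interval freedman}) to the martingale differences $\Delta_{t,d} := P_t X_t^{2^d} - X_t(s'_t)^{2^d}$, which lie in $[-1,1]$ with conditional second moments summing to $G_n(d)$. This yields, uniformly in $n$ and in $d \in \{0,1,\ldots,D\}$ where $D = \lceil \log_2 n \rceil$, the bound $|F_n(d)| \leq \sqrt{3 G_n(d)\zeta_n} + 4\zeta_n$; the $\ln^3$ exponent in $\zeta_n$ provides enough slack to union-bound over all $D$ levels simultaneously.

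Second, I would establish the key variance recursion. Writing $\fV(P_t, X_t^{2^d}) = P_t X_t^{2^{d+1}} - (P_t X_t^{2^d})^2$ and using $\sum_t P_t X_t^{2^{d+1}} = \sum_t X_t(s'_t)^{2^{d+1}} + F_n(d+1)$ together with the telescoping bound $\sum_t X_t(s'_t)^{2^{d+1}} \leq \sum_t X_t(s_t)^{2^{d+1}} + S$ (which follows from monotonicity of $X_t(\cdot)$ in $t$, the identity $s_{t+1} = s'_t$ except at episode boundaries, and $X_t(g) = 0$) gives
\[
G_n(d) \leq \sum_t \bigl( X_t(s_t)^{2^{d+1}} - (P_t X_t^{2^d})^2 \bigr) + S + F_n(d+1).
\]
The per-step summand factors as $(X_t(s_t)^{2^d} - P_t X_t^{2^d})(X_t(s_t)^{2^d} + P_t X_t^{2^d}) \leq 2(X_t(s_t)^{2^d} - P_t X_t^{2^d})_+$. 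Since $y \mapsto y^{2^d}$ is $2^d$-Lipschitz on $[0,1]$ and $P_t X_t^{2^d} \geq (P_t X_t)^{2^d}$ by Jensen, the positive part is dominated by $2^d (X_t(s_t) - P_t X_t)_+$. Summing and plugging in the definition of $Y_n$ yields $G_n(d) \leq 2^{d+1} Y_n + F_n(d+1) + S$.

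Third, combining with the Freedman bound I obtain $G_n(d) \leq 2^{d+1} Y_n + \sqrt{3 G_n(d+1)\zeta_n} + 5\zeta_n$, and an AM-GM step $\sqrt{3 G_n(d+1)\zeta_n} \leq \tfrac{1}{4} G_n(d+1) + 3\zeta_n$ converts this into $G_n(d) \leq 2^{d+1} Y_n + \tfrac{1}{4} G_n(d+1) + 8\zeta_n$. Unrolling to depth $D$ and using the trivial truncation $G_n(D) \leq n \leq 2^D$, the geometric series $\sum_{d \geq 0} (1/4)^d \cdot 2^{d+1}$ controls the $Y_n$-contribution by a constant, while $(1/4)^D n \leq 1 \leq \zeta_n$ absorbs the truncation into the residual. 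A slight tightening of the AM-GM weights yields the claimed $G_n(0) \leq 3Y_n + 9\zeta_n$. The bound on $F_n(0)$ then drops out by invoking the Freedman bound at $d=0$ with this substitution: $|F_n(0)| \leq \sqrt{3(3 Y_n + 9\zeta_n)\zeta_n} + 4\zeta_n \leq \sqrt{3 Y_n \zeta_n} + 4\zeta_n$, after re-absorbing the constant $\sqrt{27}\,\zeta_n$ piece using the $\ln^3$ slack in $\zeta_n$.

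The main obstacle is taming the exponentially growing $2^{d+1}$ prefactors produced by the Lipschitz step in the moment recursion; this is precisely why the AM-GM weight $1/4$ is crucial, so that $(1/4)^d \cdot 2^{d+1}$ decays geometrically, and why the truncation depth $D = O(\log n)$ is harmless given the $\ln^3$ factor in $\zeta_n$.
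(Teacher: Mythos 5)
Your proposal follows the same architecture as the paper's proof: the moment recursion $G_n(d) \leq 2^{d+1}Y_n + F_n(d+1)$ (derived exactly as in the paper, via the telescoping/monotonicity argument of \pref{lem:t diff} and the power-difference bound of \pref{lem:ak-bk} — your difference-of-squares factoring plus Jensen gives the identical $2^{d+1}$ constant), Freedman's inequality applied to each $F_n(d)$ with a union bound over the $O(\log n)$ levels, and truncation at depth $D=\lceil\log_2 n\rceil$ using $G_n(D)\leq n$. The one place you genuinely diverge is in how the doubly-indexed recursion is closed: the paper substitutes the Freedman bound to get a recursion on $F_n(d)$ of the form $F_n(d)\leq\sqrt{(F_n(d+1)+2^{d+1}Y_n)\zeta_n}+\zeta_n$ and invokes the imported \pref{lem:exp recursion} (Zhang et al.'s Lemma 11) as a black box, whereas you linearize the square root by AM-GM and unroll a recursion on $G_n(d)$ directly via a geometric series. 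Your route is more self-contained and makes transparent why the $2^{d+1}$ blow-up is harmless (the $(1/4)^d$ decay beats it), at the cost of looser constants; the paper's route recovers the exact constants $3Y_n+9\zeta_n$ and $\sqrt{3Y_n\zeta_n}+4\zeta_n$ because $\zeta_n$ is calibrated so that Freedman reads $F_n(d)\leq\sqrt{G_n(d)\zeta_n}+\zeta_n$, after which $\sqrt{(3Y_n+9\zeta_n)\zeta_n}\leq\sqrt{3Y_n\zeta_n}+3\zeta_n$ by $\sqrt{a+b}\leq\sqrt a+\sqrt b$.

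One small but real error in your write-up: with your Freedman normalization $F_n(0)\leq\sqrt{3G_n(0)\zeta_n}+4\zeta_n$, substituting $G_n(0)\leq 3Y_n+9\zeta_n$ gives a leading term $3\sqrt{Y_n\zeta_n}=\sqrt3\cdot\sqrt{3Y_n\zeta_n}$, and this extra $\sqrt3$ multiplies a term of size $\sqrt{Y_n\zeta_n}$ — it cannot be ``re-absorbed'' into the additive $4\zeta_n$, since $Y_n$ may be arbitrarily large. This does not break the lemma up to absolute constants (which is all that is used downstream), but to land the stated constants you need the sharper Freedman form with coefficient $1$ inside the square root, as the paper uses.
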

\begin{proof}
	Note that:
	\begin{align*}
		G_n(d) &= \sum_{t=1}^n P_tX_t^{2^{d+1}} - (P_tX_t^{2^d})^2 \leq \sum_{t=1}^n P_tX_t^{2^{d+1}} - (P_tX_t)^{2^{d+1}} \tag{$x^p$ is convex for $p>1$}\\
		&= \sum_{t=1}^n P_tX_t^{2^{d+1}} - X_t(s'_t)^{2^{d+1}} + \sum_{t=1}^n X_t(s'_t)^{2^{d+1}} - X_t(s_t)^{2^{d+1}} + \sum_{t=1}^n X_t(s_t)^{2^{d+1}} - (P_tX_t)^{2^{d+1}} \\
		&\overset{\text{(i)}}{\leq} F_n(d+1) + S + 1 + 2^{d+1}( X_t(s_t) - P_tX_t )_+ \leq F(d+1) + 2^{d+1}Y_n,
	\end{align*}
	where in (i) we apply \pref{lem:ak-bk} and,
	\begin{align*}
		&\sum_{t=1}^n X_t(s'_t)^{2^{d+1}} - X_t(s_t)^{2^{d+1}} = \sum_{t=1}^n X_t(s'_t)^{2^{d+1}} - X_{t+1}(s'_t)^{2^{d+1}} +  \sum_{t=1}^n X_{t+1}(s'_t)^{2^{d+1}} - X_t(s_t)^{2^{d+1}}\\
		&\leq S + \sum_{t=1}^n X_{t+1}(s_{t+1})^{2^{d+1}} - X_t(s_t)^{2^{d+1}} = S + X_{n+1}(s_{n+1})^{2^{d+1}} - X_1(s_1)^{2^{d+1}} \leq S + 1. \tag{\pref{lem:t diff} and $X_{t+1}(s'_t)\leq X_{t+1}(s_{t+1})$}
	\end{align*}
	For a fixed $d, n$, by \pref{eq:strong freedman} of \pref{lem:any interval freedman}, with probability $1-\frac{\delta}{2n^2\ceil{\log_2n+1}}$, 
	$$F_n(d) \leq \sqrt{G_n(d)\zeta_n} + \zeta_n \leq \sqrt{(F_n(d+1) + 2^{d+1}Y_n)\zeta_n} + \zeta_n.$$
	Taking a union bound on $d=0,\ldots,\ceil{\log_2n}$, and by \pref{lem:exp recursion} with $\lambda_1=n, \lambda_2=\sqrt{\zeta_n}, \lambda_3=Y_n, \lambda_4=\zeta_n$, we have:
	$$ F_n(1) \leq \max\{(\sqrt{\zeta_n} + \sqrt{2\zeta_n})^2, \sqrt{8Y_n\zeta_n} + \zeta_n\} \leq \max\{6\zeta_n, \sqrt{8Y_n\zeta_n} + \zeta_n\}.$$
	Therefore, $G_n(0) \leq F_n(1) + 2Y_n \leq \max\{6\zeta_n, Y_n + 9\zeta_n \} + 2Y_n \leq  3Y_n + 9\zeta_n$, and $F_n(0)\leq \sqrt{G_n(0)\zeta_n}+\zeta_n \leq \sqrt{3Y_n\zeta_n} + 4\zeta_n$.
	Taking a union bound over $n\in\fN^+$ proves the claim.
\end{proof}

\begin{lemma}
	\label{lem:P-eP}
	Given $X_t: \calS^+\rightarrow \fR$ with $\norm{X_t}_{\infty}\leq B$, with probability at least $1-\delta$, it holds that for all $t\geq 1$ simultaneously: $(P_t-\P_t)X_t = \tilO{\sqrt{\frac{S\fV(P_t, X_t)}{\n_t}} + \frac{SB}{\n_t}}.$
\end{lemma}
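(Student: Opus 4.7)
\textbf{Proof proposal for \pref{lem:P-eP}.}
The plan is to reduce this bound to a uniform concentration on the empirical transition itself, and then convert the entry-wise errors into a variance-type bound via Cauchy--Schwarz. First, for each fixed triple $(s,a,s')$, the sequence $\Ind\{s'_{t_i}=s'\}$ (indexed by the visits $t_i$ to $(s,a)$) is a sequence of i.i.d.\ Bernoulli$(P_{s,a}(s'))$ random variables, so \pref{lem:anytime bernstein} (anytime Bernstein) yields, with probability at least $1-\delta/(S^2A)$ and simultaneously for every $n\ge 1$,
\[
\abs{\bar{P}^{(n)}_{s,a}(s') - P_{s,a}(s')} \lesssim \sqrt{\frac{P_{s,a}(s')\iota}{n}} + \frac{\iota}{n},
\]
where $\iota=\tilO{1}$ absorbs the log factors. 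A union bound over $(s,a,s')$ then gives, with probability at least $1-\delta$, the analogous entry-wise bound holds for all $t\ge 1$ (with $\bar{P}_t$ and $\n_t$ in place of the above).

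Next, fix a step $t$ and use that both $P_t$ and $\bar{P}_t$ are probability distributions to center $X_t$ at its mean: for any constant $c$, $(P_t-\bar{P}_t)X_t=(P_t-\bar{P}_t)(X_t-c)$. Choosing $c=P_tX_t$,
\[
\abs{(P_t-\bar{P}_t)X_t} \le \sum_{s'\in\calS^+}\abs{P_t(s')-\bar{P}_t(s')}\cdot\abs{X_t(s')-P_tX_t}.
\]
Plugging in the entry-wise bound and splitting the two terms, the first part is
\[
\sum_{s'} \sqrt{\frac{P_t(s')\iota}{\n_t}}\,\abs{X_t(s')-P_tX_t} \le \sqrt{\frac{\iota}{\n_t}}\cdot\sqrt{S}\cdot\sqrt{\sum_{s'}P_t(s')(X_t(s')-P_tX_t)^2} = \tilO{\sqrt{\frac{S\,\fV(P_t,X_t)}{\n_t}}},
\]
by Cauchy--Schwarz on the index $s'\in\calS^+$ (there are $S$ such indices), while the second part satisfies
\[
\sum_{s'}\frac{\iota}{\n_t}\abs{X_t(s')-P_tX_t} \le \frac{2SB\iota}{\n_t} = \tilO{\frac{SB}{\n_t}},
\]
using $\norm{X_t}_{\infty}\le B$. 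Combining these yields the claimed bound for every $t$ with $n_t(s_t,a_t)\ge 1$.

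Finally, when $n_t(s_t,a_t)=0$ we have $\n_t=1$ and trivially $\abs{(P_t-\bar{P}_t)X_t}\le 2B = \tilO{SB/\n_t}$, so the statement holds for all $t\ge 1$ on the same high-probability event. There is no real obstacle here; the only subtlety worth emphasizing is that we apply the concentration to $\bar{P}_t(s')-P_t(s')$ uniformly over $(s,a,s',n)$ \emph{before} conditioning on the history-dependent function $X_t$, which is what allows the bound to be invoked with an arbitrary adapted $X_t$.
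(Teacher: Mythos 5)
Your proof is correct and follows essentially the same route as the paper's: center $X_t$ at $P_tX_t$ using that $P_t$ and $\bar{P}_t$ are both distributions, apply anytime Bernstein entry-wise to $\bar{P}_t(s')-P_t(s')$ with a union bound, and convert to the variance term via Cauchy--Schwarz over the $S$ next states. The extra remarks you add (the explicit union bound over $(s,a,s')$ triples, the $n_t=0$ case, and the observation that concentration is applied to the empirical transition before conditioning on the adapted $X_t$) are all consistent with what the paper does implicitly.
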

\begin{proof}
	For a fixed $(s, a)\in\SA$, by \pref{lem:anytime bernstein}, with probability $1-\frac{\delta}{SA}$, for any $t\geq 1$ such that $(s_t, a_t)=(s, a)$:
	\begin{align*}
		&(P_t-\P_t)X_t = \sum_{s'}(P_t(s')-\P_t(s'))(X_t(s') - P_tX_t ) \tag{$\sum_{s'}P_t(s')-\P_t(s') = 0$}\\
		&= \tilO{ \sum_{s'}\rbr{\sqrt{\frac{P_t(s')}{\n_t}} + \frac{1}{\n_t}}|X_t(s') - P_tX_t| } = \tilO{ \sqrt{\frac{S\fV(P_t, X_t)}{\n_t}} + \frac{SB}{\n_t} }.
	\end{align*}
	Taking a union bound over $(s, a)\in\SA$, the statement is proved.
\end{proof}

\begin{lemma}
	\label{lem:sum nt}
	$\sumt\frac{1}{\n_t} =\bigo{SA\ln T}$.
\end{lemma}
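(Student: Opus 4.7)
}
The plan is to decompose the sum by state-action pair and then control each inner sum using the stage structure. Write
\[
\sumt \frac{1}{\n_t} \;=\; \sum_{(s,a) \in \SA} \sum_{t:\, (s_t,a_t) = (s,a)} \frac{1}{\max\{1, n_t(s,a)\}},
\]
so it suffices to show that the inner sum is $\bigo{\ln T}$ for every fixed $(s,a)$, after which summing over the $SA$ pairs produces the claimed bound.

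Fix $(s,a)$ and let $N \leq T$ denote its total number of visits. Partition those $N$ visits into consecutive stages of lengths $e_1, e_2, \ldots$, and recall that throughout the $j$-th stage the counter $n_t(s,a)$ is frozen at $E_{j-1}=\sum_{i<j} e_i$ (with $E_0 = 0$). Consequently the inner sum equals
\[
\sum_{j \geq 1,\, E_{j-1} < N} \frac{e_j}{\max\{1, E_{j-1}\}} \;=\; 1 \;+\; \sum_{j=2}^{J} \frac{e_j}{E_{j-1}},
\]
where the leading $1$ accounts for the first stage (in which $E_0 = 0$ and $e_1 = 1$), and $J$ is the index of the last stage reached. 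I would next show that the ratio $E_j/E_{j-1}$ is bounded by an absolute constant for $j \geq 2$. Using $e_j = \floor{\tile_j} \leq \tile_j$ together with the identity $E_{j-1} = H(\tile_j - 1)$ (which follows by unrolling $\tile_{j+1} = \tile_j + e_j/H$), one obtains $e_j/E_{j-1} \leq 1/E_{j-1} + 1/H \leq 2$ for any $j\geq 2$ and $H \geq 1$, hence $E_j/E_{j-1} \leq 3$.

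Given this uniform bound, the per-stage terms can be converted into logarithmic increments: for $j \geq 2$,
\[
\frac{e_j}{E_{j-1}} \;=\; \int_{E_{j-1}}^{E_j} \frac{dx}{E_{j-1}} \;\leq\; 3 \int_{E_{j-1}}^{E_j} \frac{dx}{x} \;=\; 3\ln\!\rbr{\frac{E_j}{E_{j-1}}},
\]
where the inequality uses $x \leq E_j \leq 3 E_{j-1}$ on the integration interval. Summing telescopes:
\[
\sum_{j=2}^{J} \frac{e_j}{E_{j-1}} \;\leq\; 3\ln\!\rbr{\frac{E_J}{E_1}} \;\leq\; 3\ln T,
\]
since $E_J \leq N \leq T$ and $E_1 = 1$. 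Adding the $+1$ from the first stage and summing over $(s,a) \in \SA$ yields $\bigo{SA \ln T}$.

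The only mildly delicate step is establishing the uniform ratio bound $E_j/E_{j-1} \leq 3$; I expect the rest to be routine. One could alternatively split into early stages (where $e_j = 1$ and the contribution is a harmonic tail) and later stages (where $e_j/E_{j-1} \leq 2/H$ and one invokes the $\bigo{H\ln T}$ bound on the number of stages from \pref{lem:update scheme}), but the integral-comparison argument above is more compact and avoids any case analysis on $H$.
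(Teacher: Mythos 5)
Your proof is correct and takes essentially the same route as the paper's: decompose by state--action pair, note that $\n_t$ is frozen at $E_{j-1}$ throughout stage $j$, and bound the resulting per-pair sum $\sum_j e_j/E_{j-1}$ by $O(\ln T)$ using a bounded stage-ratio (the paper uses $e_{j+1}/e_j\le 2$ together with the standard $\sum_j e_j/E_j = O(\ln E_J)$ bound, while you derive $E_j/E_{j-1}\le 3$ from the recursion and telescope an integral comparison --- the same argument in slightly different clothing). The only nitpick is that $E_J\le N$ can fail when the final stage is incomplete; replacing it with $E_J\le 3E_{J-1}\le 3T$ fixes this at the cost of an additive constant.
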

\begin{proof}
	Define $J_{s, a}$ such that $E_{J_{s, a}}=n_T(s, a)$.
	It is easy to see that $e_{j+1}/e_j\leq 2$.
	Then,
	$$\sumt\frac{1}{\n_t} \leq SA + \sumsa\sum_{j=1}^{J_{s, a}}\frac{e_{j+1}}{E_j} \leq SA + 2\sumsa\sum_{j=1}^{J_{s, a}}\frac{e_j}{E_j} = \bigO{SA\ln T}.$$
\end{proof}
 
 \subsection{Parameter-free Algorithm}
 \label{sec:pf mb}
 
 \setcounter{AlgoLine}{0}
\begin{algorithm}[t]
	\caption{\mb without knowledge of $\B$}
	\label{alg:SVI-B}
	
	\textbf{Parameters:} failure probability $\delta \in (0,1)$.
	
	\textbf{Define:} $\calL=\{E_j\}_{j\in\fN^+}$, where $E_j=\sum_{i=1}^je_i, e_j=\floor{\tile_j}$, and $\tile_1=1, \tile_{j+1}=\tile_j + \frac{1}{H}e_j$.
	
	\textbf{Initialize:} $B\leftarrow \frac{\sqrt{K}}{S^{3/2}A^{1/2}}, H\leftarrow\ceil{\frac{4B}{\cmin}\ln\frac{4B^2SAK}{\cmin}}_2, C\leftarrow 0, t\leftarrow 0, s_1\leftarrow \sinit$. 
	
	\textbf{Initialize:} for all $(s, a, s'), n(s, a, s')\leftarrow 0, n(s, a)\leftarrow 0$, $Q(s, a)\leftarrow 0$, $V(s)\leftarrow 0$, $\hatC(s, a)\leftarrow 0$. 
	
	\For{$k=1,\ldots,K$}{
		
		\MyRepeat{
			Increment time step $t\overset{+}{\leftarrow}1$.
			
			Take action $a_t= \argmin_aQ(s_t, a)$, suffer cost $c_t$, transit to and observe $s'_t$.
			
			Update visitation counters: $n=n(s_t, a_t)\overset{+}{\leftarrow} 1, n(s_t, a_t, s'_t)\overset{+}{\leftarrow} 1$.
			
			Update cost accumulator $C\overset{+}{\leftarrow} c_t$, $\hatC(s, a)\leftarrow c_t$.
						
			\If{$n \in\calL$}{
				Update empirical transition: $\P_{s_t, a_t}(s')\leftarrow\frac{n(s_t, a_t, s')}{n}$ for all $s'$. 
			
				Compute $\iota\leftarrow \ln\frac{2SAn}{\delta}$, $\hatc\leftarrow\frac{\hatC(s_t, a_t)}{n}$, and bonus $b\leftarrow \max\Big\{7\sqrt{\frac{\fV(\P_{s_t, a_t}, V)\iota}{n}}, \frac{49B\iota}{n}\Big\}$.
			
				$Q(s_t, a_t) \leftarrow \max\{\hatc + \P_{s_t, a_t}V - b, Q(s_t, a_t)\}$.
				
				$V(s_t)\leftarrow \argmin_a Q(s_t, a)$.
			}
			
			\If{$\norm{V}_{\infty}>B$ or $C> KB + x(B\sqrt{SAK} + BS^2A)$}{
				$B\leftarrow 2B, H\leftarrow\ceil{\frac{4B}{\cmin}\ln\frac{4B^2SAK}{\cmin}}_2, C\leftarrow 0$, and update $x$.
			
				$n(s, a, s')\leftarrow 0, n(s, a)\leftarrow 0$, $Q(s, a)\leftarrow 0$, $V(s)\leftarrow 0$, $\hatC(s, a)\leftarrow 0$ for all $(s, a, s')$.
			}
			
			\lIf{$s_t' \neq g$}{$s_{t+1}\leftarrow s'_t$; \textbf{else} $s_{t+1}\leftarrow \sinit$, \textbf{break}.} 
		}
	}
\end{algorithm}
 
%
%
%
%
 
Following \citep{tarbouriech2021stochastic}, we divide the learning process into epochs indexed by $\phi$.
We maintain value function upper bound $B$ initialized with $\frac{\sqrt{K}}{S^{3/2}A^{1/2}}$ and cost accumulator $C$ recording the total costs suffered in the current epoch.
In epoch $\phi$, we execute an instance of \pref{alg:SVI} with value function upper bound $B$.
Moreover, we start a new epoch whenever:
\begin{enumerate}
	\item $\norm{V}_{\infty}>B$,
	\item or $C > KB + x(B\sqrt{SAK} + BS^2A)$.
\end{enumerate}
Here, $x$ is a large enough constant determined by \pref{thm:mb}, so that when $B\geq\B$, we have with probability at least $1-12\delta$:
$$ C - \optV(\sinit^{\phi}) - (K-1)\optV(\sinit) \leq x(\B\sqrt{SAK} + BS^2A),$$
where $\sinit^{\phi}$ is the initial state of epoch $\phi$ (note that \pref{thm:mb} still holds when the initial state is changing over episodes).
Moreover, we double the value of $B$ whenever a new epoch starts.
We summarize ideas above in \pref{alg:SVI-B}.

\begin{theorem}
	With probability at least $1-12\delta$, \pref{alg:SVI-B} ensures $R_K=\tilo{\B\sqrt{SAK} + \B^3S^3A}$.
\end{theorem}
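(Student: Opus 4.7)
The plan is to mirror the doubling-trick argument given earlier for the parameter-free version of \mf, adapted to the specific initialization $B_1=\sqrt{K}/(S^{3/2}A^{1/2})$ chosen here to hit the target lower-order term $\B^3S^3A$. Denote by $\phi^\star=\inf\{\phi:B_\phi\geq\B\}$ the first epoch whose value-function bound exceeds $\B$, so that $B_\phi<\B$ for $\phi<\phi^\star$ and by the doubling rule $B_{\phi^\star}\leq\max\{2\B,2B_1\}$. Let $\sinit^\phi$ denote the initial state at the start of epoch $\phi$ and $C_\phi$ the value of the cost accumulator at the end of epoch $\phi$.

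First I would argue that with probability at least $1-12\delta$ there are at most $\phi^\star$ epochs. Within epoch $\phi^\star$ the input $B=B_{\phi^\star}\geq \B$ satisfies the hypothesis of \pref{thm:mb}, so that theorem yields
\[
C_{\phi^\star}-\optV(\sinit^{\phi^\star})-(K-1)\optV(\sinit)\leq x\bigl(B_{\phi^\star}\sqrt{SAK}+B_{\phi^\star}S^2A\bigr)
\]
for a sufficiently large constant $x$. Simultaneously, the optimism guarantee \pref{lem:optimistic mb} ensures $V(s)\leq\optV(s)\leq\B\leq B_{\phi^\star}$ throughout this epoch, so neither of the two trigger conditions ever fires, and the algorithm stays in epoch $\phi^\star$ until all $K$ episodes are completed.

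Next I would split the total regret as
\[
R_K=\sum_{\phi=1}^{\phi^\star-1}C_\phi+\bigl(C_{\phi^\star}-\optV(\sinit^{\phi^\star})-(K-1)\optV(\sinit)\bigr)+\bigl(\optV(\sinit^{\phi^\star})-\optV(\sinit)\bigr),
\]
where the middle term is already bounded by $\tilo{\B\sqrt{SAK}+\B S^2A}$ using $B_{\phi^\star}\leq 2\B$, and the last term is $\bigo{\B}$. The main remaining task is to control the pre-$\phi^\star$ cost. If $\B\leq B_1$, then $\phi^\star=1$ and the sum is empty. Otherwise $\B>B_1=\sqrt{K}/(S^{3/2}A^{1/2})$, which crucially implies $K<\B^2 S^3A$; the number of pre-$\phi^\star$ epochs is then $\tilo{1}$ because $B_\phi$ doubles. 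In each such epoch the second trigger condition was the one that fired (or $V$ blew up, in which case the cost is trivially smaller), so
\[
C_\phi\leq K B_\phi+x\bigl(B_\phi\sqrt{SAK}+B_\phi S^2A\bigr).
\]
Using $B_\phi\leq\B$ and the bound $K\leq\B^2S^3A$ gives $KB_\phi\leq\B^3S^3A$, $B_\phi\sqrt{SAK}\leq\B^2S^2A$, and $B_\phi S^2A\leq\B S^2A$, so summing over the $\tilo{1}$ pre-$\phi^\star$ epochs yields $\tilo{\B^3S^3A}$.

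The main obstacle I anticipate is justifying that there are indeed at most $\phi^\star$ epochs with high probability. \pref{thm:mb} was stated assuming $B\geq \B$ and a particular choice of $H$; I need to check that the choice $H=\lceil\tfrac{4B}{\cmin}\ln\tfrac{4B^2SAK}{\cmin}\rceil_2$ used in the algorithm matches the form required (up to the logarithmic factors absorbed in $\tilO{}$), and that the concentration events across epochs can be union-bounded without worsening the failure probability beyond $1-12\delta$ (this is fine because only the single epoch $\phi^\star$ requires the full probabilistic guarantee; earlier epochs are controlled deterministically through the trigger conditions). Putting the three pieces together delivers $R_K=\tilo{\B\sqrt{SAK}+\B^3S^3A}$.
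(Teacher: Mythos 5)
Your proposal is correct and follows essentially the same argument as the paper: the same definition of $\phistar$, the same three-way regret decomposition, the same invocation of \pref{thm:mb} to show epoch $\phistar$ never triggers a restart, and the same use of $K\leq\B^2S^3A$ (forced by $\B>B_1$) to bound the pre-$\phistar$ costs by $\tilO{\B^3S^3A}$. The only cosmetic difference is that the paper counts the pre-$\phistar$ epochs explicitly as $\ceil{\log_2\B}+1$ rather than writing $\tilo{1}$, which changes nothing.
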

\begin{proof}
	Denote by $B_{\phi}$ the value of $B$ in epoch $\phi$, and by $C_{\phi}$ the value of $C$ at the end of epoch $\phi$.
	Define $\phistar=\inf_{\phi}\{B_{\phi}\geq\B\}$.
	Clearly $B_{\phi} \leq \max\{2\B, \sqrt{K}/S^{3/2}A^{1/2}\}$ for $\phi\leq\phistar$.
	By \pref{thm:mb}, with probability at least $1-12\delta$, there is at most $\phistar$ epochs since the condition of starting a new epoch will never be triggered in epoch $\phistar$, and the regret in epoch $\phistar$ is properly bounded:
	\begin{align*}
		C_{\phistar} - \optV(\sinit^{\phistar}) - (K-1)\optV(\sinit) = \tilO{ \B\sqrt{SAK} + B_{\phistar}S^2A } = \tilO{ \B\sqrt{SAK} + \B S^2A }.
	\end{align*}
	Conditioned on the event that there are at most $\phistar$ epochs, we partition the regret into two parts: the total costs suffered before epoch $\phistar$, and the regret starting from epoch $\phistar$.
	It suffices to bound the total costs before epoch $\phistar$ assuming $K\leq \B^2 S^3A$ (otherwise $\phistar=1$).
	By the update scheme of $B$, we have at most $\ceil{\log_2\B}+1$ epochs before epoch $\phistar$.
	Moreover, by the second condition of starting a new epoch, the accumulated cost in epoch $\phi<\phistar$ is bounded by:
	\begin{align*}
		C_{\phi} \leq KB_{\phi} + \tilO{ B_{\phi}\sqrt{SAK} + B_{\phi}S^2A } = \tilO{ \B^3S^3A }.
	\end{align*}
	Combining these two parts, we get:
	\begin{align*}
		R_K &= \sum_{\phi=1}^{\phistar-1}C_{\phi} + ( C_{\phistar} - \optV(\sinit^{\phistar}) - (K-1)\optV(\sinit) ) + (\optV(\sinit^{\phistar}) - \optV(\sinit))\\
		&= \tilO{ \B\sqrt{SAK} + \B^3S^3A },
	\end{align*}
	where we assume $C_{\phistar}=0$ and $\sinit^{\phistar}=\sinit$ if there are less than $\phistar$ epochs.
\end{proof}

\section{Auxiliary Lemmas}


\begin{lemma}
	\label{lem:quad}
	If $x\leq (a\sqrt{x}+b)\ln^p(cx)$ for some $a, b, c>0$ and absolute constant $p\geq 0$, then $x = \tilo{a^2 + b}$.
	Specifically, $x \leq a\sqrt{x} + b$ implies $x\leq (a+\sqrt{b})^2\leq 2a^2+2b$.
\end{lemma}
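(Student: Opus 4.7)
The plan is to dispose of the ``specifically'' (non-logarithmic) case first by a direct quadratic-formula calculation, then reduce the general inequality to the simple one and finish with a standard logarithm-stripping bootstrap.

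For the simple case $x \leq a\sqrt{x} + b$, I would substitute $y = \sqrt{x}$ to obtain $y^2 - ay - b \leq 0$. The larger root of the associated quadratic is $(a + \sqrt{a^2 + 4b})/2$, and since $\sqrt{a^2 + 4b} \leq a + 2\sqrt{b}$ (verified by squaring both sides: $a^2 + 4b \leq a^2 + 4a\sqrt{b} + 4b$), it follows that $\sqrt{x} \leq a + \sqrt{b}$ and hence $x \leq (a + \sqrt{b})^2 \leq 2a^2 + 2b$. This already proves the second half of the lemma.

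For the general inequality $x \leq (a\sqrt{x} + b)\ln^p(cx)$, let $\ell := \ln^p(cx)$, assuming $\ell > 0$ (otherwise the inequality forces $x \leq 0$, which is trivial). Rewriting the bound as $x \leq (a\ell)\sqrt{x} + (b\ell)$ and applying the simple case with constants $a' = a\ell$ and $b' = b\ell$ yields $x \leq 2a^2\ell^2 + 2b\ell$. Setting $C := 2(a^2 + b)$, this can be written as $x \leq C\bigl(\ln^{2p}(cx) + \ln^{p}(cx)\bigr)$; when $\ln(cx) < 1$ the right-hand side is at most $2C$ and we are done, and otherwise $x \leq 2C\ln^{2p}(cx)$. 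To strip the remaining logarithm, let $\xi := \ln(cx) \geq 1$, so taking logs of $x \leq 2C\xi^{2p}$ gives $\xi \leq \ln(2cC) + 2p\ln \xi$. Since $p$ is an absolute constant, $2p\ln \xi \leq \xi/2$ once $\xi$ exceeds some absolute threshold; this yields $\xi = O(\ln(cC))$, and substituting back gives $x \leq 2C\xi^{2p} = O(C\ln^{2p}(cC)) = \tilo{a^2 + b}$, with the log factors and the $p$-dependent constants absorbed by $\tilo{\cdot}$.

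The main obstacle is the log-stripping bootstrap at the end; nothing in the argument is deep, but one must handle the boundary regime $\xi = O(1)$ separately (trivial, since it only contributes a bounded additive term) and be careful that the $p$-dependent constant emerging from $2p\ln \xi \leq \xi/2$ is indeed absorbed by $\tilo{\cdot}$, which it is precisely because $p$ is absolute.
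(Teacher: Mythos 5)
The paper states \pref{lem:quad} without any proof, so there is nothing to compare against; judged on its own, your argument is correct and is the standard one. The quadratic-root computation for the clean case is right (including the pointwise validity of applying it with the $x$-dependent constants $a\ell, b\ell$), and the final log-stripping bootstrap correctly absorbs the $\ln^{2p}(cx)$ factor into $\tilo{\cdot}$, with the degenerate regimes $\ln(cx)\leq 0$ and $\ln(cx)<1$ handled separately as they should be.
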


\begin{lemma}
	\label{lem:ak-bk}
	For any $a,b\in[0,1]$ and $k\in\fN^+$, we have: $a^k-b^k\leq k(a-b)_+$.
\end{lemma}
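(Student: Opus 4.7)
The statement splits naturally into two cases according to the sign of $a - b$, so the plan is to handle them separately.

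First I would dispose of the easy case $a \leq b$. Here $a^k \leq b^k$ since the map $x \mapsto x^k$ is non-decreasing on $[0,\infty)$, so $a^k - b^k \leq 0$, while $k(a-b)_+ = 0$ by definition of the positive part. The inequality is therefore immediate.

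For the substantive case $a > b$, I would use the standard geometric-series factorization
\[
a^k - b^k = (a - b)\sum_{i=0}^{k-1} a^{i} b^{k-1-i}.
\]
Since $a, b \in [0,1]$, every summand satisfies $a^{i} b^{k-1-i} \leq 1$, and there are exactly $k$ terms, so $\sum_{i=0}^{k-1} a^{i} b^{k-1-i} \leq k$. Combining this with $(a-b)_+ = a - b > 0$ yields $a^k - b^k \leq k(a-b) = k(a-b)_+$.

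There is no real obstacle here: the only thing to be slightly careful about is ensuring that the factorization argument is applied only when $a - b \geq 0$ (so that multiplying the bound on the sum by $(a-b)$ preserves the inequality direction), which is exactly why the case split is performed first. An equivalent one-line alternative would be to invoke the mean value theorem to write $a^k - b^k = k\xi^{k-1}(a-b)$ for some $\xi$ between $b$ and $a$, and observe $\xi^{k-1} \leq 1$; I would present the factorization version since it is elementary and avoids appealing to differentiability.
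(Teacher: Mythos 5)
Your proof is correct and uses the same geometric-series factorization $a^k-b^k=(a-b)\sum_{i=0}^{k-1}a^ib^{k-1-i}$ with each term bounded by $1$ that the paper uses; the paper merely compresses your two cases into a single line. Nothing further is needed.
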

\begin{proof}
	$a^k-b^k=(a-b)(\sum_{i=1}^ka^{i-1}b^{k-i})\leq (a-b)_+\cdot \sum_{i=1}^k1=k(a-b)_+$.
\end{proof}

\begin{lemma}{(\cite[Lemma 11]{zhang2020reinforcement})}
	\label{lem:exp recursion}
	Let $\lambda_1,\lambda_2,\lambda_4\geq 0, \lambda_3\geq 1$ and $i'=\log_2(\lambda_1)$. Let $a_1, a_2,\ldots,a_{i'}$ be non-negative reals such that $a_i\leq\lambda_1$ and $a_i\leq\lambda_2\sqrt{a_{i+1}+2^{i+1}\lambda_3}+\lambda_4$ for any $1\leq i\leq i'$. Then, $a_1\leq\max\{(\lambda_2+\sqrt{\lambda_2^2+\lambda_4})^2, \lambda_2\sqrt{8\lambda_3}+\lambda_4\}$.
\end{lemma}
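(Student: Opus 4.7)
} The plan is to unroll the recursion in two phases: first use the $\lambda_3$ term to find a stopping index where the recursion effectively terminates, and then use the structure of the map $x\mapsto \lambda_2\sqrt{2x}+\lambda_4$ to collapse the remaining iterates to the fixed-point bound $f:=(\lambda_2+\sqrt{\lambda_2^2+\lambda_4})^2$. The two terms in the claimed maximum correspond exactly to which of these two regimes dominates.

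First I would define $i^\star$ as the smallest index in $\{1,\dots,i'\}$ with $a_{i^\star+1}\le 2^{i^\star+1}\lambda_3$. Such an $i^\star$ exists because, even if every $a_{i+1}$ with $i<i'$ exceeds $2^{i+1}\lambda_3$, the boundary case $i=i'$ satisfies $a_{i'+1}\le \lambda_1 = 2^{i'}\le 2^{i'+1}\lambda_3$ (using $\lambda_3\ge 1$). At this stopping index we get the clean initial bound
\[
a_{i^\star}\le \lambda_2\sqrt{a_{i^\star+1}+2^{i^\star+1}\lambda_3}+\lambda_4 \le \lambda_2\sqrt{2^{i^\star+2}\lambda_3}+\lambda_4,
\]
and for every $i<i^\star$ the minimality of $i^\star$ gives $a_{i+1}>2^{i+1}\lambda_3$, so the $\lambda_3$ term becomes negligible and the recursion simplifies to $a_i\le \lambda_2\sqrt{2a_{i+1}}+\lambda_4$. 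If $i^\star=1$, the displayed initial bound is already $\lambda_2\sqrt{8\lambda_3}+\lambda_4$, matching the second term in the maximum.

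The key algebraic identity I would next isolate is that $f$ is a fixed point of $x\mapsto 2\lambda_2\sqrt{x}+\lambda_4$, i.e.\ $f=2\lambda_2\sqrt{f}+\lambda_4$. This immediately yields a \emph{stability} property for the simplified recursion: if $a_{i+1}\le 2f$, then
\[
a_i \le \lambda_2\sqrt{2a_{i+1}}+\lambda_4 \le \lambda_2\sqrt{4f}+\lambda_4 = 2\lambda_2\sqrt{f}+\lambda_4 = f,
\]
and of course $f\le 2f$, so the bound $a_j\le f$ propagates to $a_1$ by backward induction. Hence if any index $j\le i^\star$ has $a_j\le 2f$, we obtain $a_1\le f$, matching the first term in the maximum.

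The main obstacle, and the technically delicate case, is the remaining regime where $i^\star\ge 2$ and $a_j>2f$ for every $j\in[1,i^\star]$. Here one cannot simply invoke stability, and one must instead exploit the contractive nature of $x\mapsto \lambda_2\sqrt{2x}+\lambda_4$ for large $x$: each application roughly halves the exponent on $x$, so starting from $a_{i^\star}\le \lambda_2\sqrt{2^{i^\star+2}\lambda_3}+\lambda_4$ and unrolling $i^\star-1$ times should collapse the $\lambda_3$-dependence back down to $\lambda_2\sqrt{8\lambda_3}+\lambda_4$ up to the additive $\lambda_4$ debris. Concretely I would track the iterates $X_{k+1}=\lambda_2\sqrt{2X_k}+\lambda_4$ starting from $X_0=\lambda_2\sqrt{2^{i^\star+2}\lambda_3}+\lambda_4$, use $\sqrt{u+v}\le\sqrt{u}+\sqrt{v}$ to separate the $\lambda_3$ and $\lambda_4$ contributions, and apply \pref{lem:exp recursion}-style telescoping (in fact this is exactly the self-referential nature of the statement). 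The hard part is bookkeeping the constants so that the accumulated $\lambda_4$ terms and the repeatedly-square-rooted $\lambda_3$ factor stay bounded by $\max\{f,\lambda_2\sqrt{8\lambda_3}+\lambda_4\}$ uniformly in $i^\star$; this is where the precise choice of the constant $8$ (rather than some $c\cdot 2^{i^\star}$) becomes essential, and where I expect the proof in \citep{zhang2020reinforcement} to rely on a careful quadratic-style estimate like the one used in \pref{lem:quad}.
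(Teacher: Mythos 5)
The paper does not prove this lemma at all---it is imported verbatim from \citep[Lemma 11]{zhang2020reinforcement}---so the comparison below is against the argument in that reference. Your overall architecture (a stopping index $i^\star$ where the $2^{i+1}\lambda_3$ term dominates, plus the fixed-point identity $f = 2\lambda_2\sqrt{f}+\lambda_4$ for $f=(\lambda_2+\sqrt{\lambda_2^2+\lambda_4})^2$ and the resulting stability $a_{i+1}\le 2f \Rightarrow a_i\le f$) is sound, and your first two cases go through (modulo the edge case that if the only index with $a_j\le 2f$ is $j=1$ you only obtain $a_1\le 2f$, so that sub-case must be folded into the final regime). The genuine gap is precisely the final case you flag as ``technically delicate'' and leave unexecuted. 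There, the easy monotonicity argument (the iterates $X_{k+1}=\lambda_2\sqrt{2X_k}+\lambda_4$ decrease toward their fixed point) only yields $a_1\le X_0=\lambda_2\sqrt{2^{i^\star+2}\lambda_3}+\lambda_4$, whose $\lambda_3$ coefficient grows with $i^\star$; to reach the stated $\lambda_2\sqrt{8\lambda_3}+\lambda_4$ you must quantify how each square root shrinks the $2^{i^\star+2}$ factor over $i^\star-1$ iterations while preventing the accumulated copies of $\lambda_4$ and powers of $\lambda_2$ from exceeding $f$. That bookkeeping is the entire quantitative content of the lemma, and as written the proposal does not establish it.

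A cleaner way to close the argument---and essentially the route in the cited source---is a single backward induction with hypothesis $a_i\le\max\{f,\;\lambda_2\sqrt{2^{i+2}\lambda_3}+\lambda_4\}$, which merges your two phases. The base case $i=i'$ follows from $a_{i'+1}\le\lambda_1=2^{i'}\le 2^{i'}\lambda_3$, giving $a_{i'}\le\lambda_2\sqrt{3\cdot 2^{i'}\lambda_3}+\lambda_4\le\lambda_2\sqrt{2^{i'+2}\lambda_3}+\lambda_4$. For the step, split on whether $\lambda_2\sqrt{2^{i+3}\lambda_3}+\lambda_4\le 2^{i+1}\lambda_3$: if yes, then $a_{i+1}+2^{i+1}\lambda_3\le 2^{i+2}\lambda_3$ (or $a_{i+1}\le f$ and one uses $\lambda_2\sqrt{2\max\{f,2^{i+1}\lambda_3\}}+\lambda_4\le\max\{f,\lambda_2\sqrt{2^{i+2}\lambda_3}+\lambda_4\}$); if no, then setting $y=\sqrt{2^{i+1}\lambda_3}$ the inequality $y^2<2\lambda_2 y+\lambda_4$ forces $y<\lambda_2+\sqrt{\lambda_2^2+\lambda_4}$, hence $2^{i+1}\lambda_3<f$ and $a_{i+1}\le f$, after which $a_i\le\lambda_2\sqrt{2f}+\lambda_4\le f$. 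Specializing to $i=1$ gives exactly the claimed maximum. I recommend you either carry out this induction or fully execute your iteration-tracking computation; until one of these is done, the proof is incomplete.
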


\begin{lemma}
	\label{lem:t diff}
	Assume $v_t: \calS^+\rightarrow [0, B]$ is monotonic in $t$ (i.e., $v_t(s)$ is non-increasing or non-decreasing in $t$ for any $s\in\calS^+$).
	Then, for any state sequence $\{s_t\}_{t=1}^n, n\in\fN^+$, we have: $|\sum_{t=1}^n v_{t+1}(s_t) - v_t(s_t) | \leq SB$.
\end{lemma}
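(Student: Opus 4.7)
The plan is to reindex the sum by grouping time indices according to the state visited. For each $s\in\calS^+$, let $T_s=\{t\in[n]:s_t=s\}$ enumerated in increasing order as $t_1<t_2<\dots<t_{m_s}$. Then
\[
\sum_{t=1}^n \bigl(v_{t+1}(s_t)-v_t(s_t)\bigr) \;=\; \sum_{s\in\calS^+}\sum_{i=1}^{m_s}\bigl(v_{t_i+1}(s)-v_{t_i}(s)\bigr),
\]
so it suffices to bound each inner sum by $B$ in absolute value, since $|\calS^+|=S$ gives the claimed $SB$.

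For a fixed $s$, monotonicity of $v_{\cdot}(s)$ in $t$ makes all terms $v_{t_i+1}(s)-v_{t_i}(s)$ share the same sign. Assume first that $v_t(s)$ is non-decreasing in $t$ (the non-increasing case is identical after negation). Since $t_i+1\leq t_{i+1}$, monotonicity gives $v_{t_i+1}(s)\leq v_{t_{i+1}}(s)$, so consecutive increments collapse:
\[
\sum_{i=1}^{m_s}\bigl(v_{t_i+1}(s)-v_{t_i}(s)\bigr) \;\leq\; \sum_{i=1}^{m_s-1}\bigl(v_{t_{i+1}}(s)-v_{t_i}(s)\bigr)+\bigl(v_{t_{m_s}+1}(s)-v_{t_{m_s}}(s)\bigr) \;=\; v_{t_{m_s}+1}(s)-v_{t_1}(s).
\]
Because $v_t(s)\in[0,B]$ for all $t$, the right-hand side lies in $[0,B]$, giving the per-state bound. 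The non-increasing case is symmetric and yields a per-state sum in $[-B,0]$.

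Summing over all $s\in\calS^+$, the per-state contributions each lie in $[-B,B]$, so the total absolute value is at most $SB$. The only subtlety is being careful that the ``monotonic in $t$'' hypothesis is applied pointwise in $s$ (the direction of monotonicity may differ across states), but the argument above works per-state and the final triangle inequality over $s$ is unaffected. This is the main point; no deeper obstacle arises.
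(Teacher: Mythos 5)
Your proof is correct and follows essentially the same route as the paper's: decompose the sum by state, use monotonicity to telescope each state's contribution, and bound it by $B$. The only cosmetic difference is that the paper bounds each per-state sum by inserting the missing (same-signed) increments for all $t$ and telescoping to $|v_{n+1}(s)-v_1(s)|$, while you chain $v_{t_i+1}(s)\leq v_{t_{i+1}}(s)$ directly; both are valid and equivalent in substance.
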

\begin{proof}
	\begin{align*}
		&\abr{\sum_{t=1}^n v_{t+1}(s_t) - v_t(s_t)} \leq \sum_{s\in\calS^+} \abr{ \sum_{t=1}^n (v_{t+1}(s) - v_t(s))\Ind\{s_t=s\} }\\
		&\leq \sum_{s\in\calS^+}\abr{\sum_{t=1}^n v_{t+1}(s) - v_t(s)} \leq \sum_{s\in\calS^+}\abr{ v_{n+1}(s) - v_1(s) } \leq SB. \tag{$v_t(s)$ is monotonic in $t$}
 	\end{align*}
\end{proof}

\begin{lemma}{(\cite[Lemma C.3]{cohen2021minimax})}
	\label{lem:var diff}
	For any two random variables $X, Y$ with $\var[X]<\infty, \var[Y]<\infty$.
	We have: $\sqrt{\var[X]} - \sqrt{\var[Y]} \leq \sqrt{\var[X-Y]}$.
\end{lemma}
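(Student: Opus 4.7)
The plan is to recognize this as the triangle inequality for the standard-deviation seminorm. The key identity is that for any square-integrable random variable $Z$, one has $\sqrt{\var[Z]} = \|Z - \E[Z]\|_{L^2}$, where $\|\cdot\|_{L^2}$ denotes the $L^2$ norm with respect to the underlying probability measure. Since $L^2$ is a genuine normed space, $\|\cdot\|_{L^2}$ satisfies the triangle inequality, and this should transfer to the standard deviation (which is really a seminorm on the quotient space by constants).

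The argument I would carry out is the following. First I would write $X = (X - Y) + Y$ and center both sides: $X - \E[X] = (X - Y - \E[X-Y]) + (Y - \E[Y])$, using linearity of expectation. Next I would apply the triangle inequality in $L^2$ to obtain
\begin{equation*}
\|X - \E[X]\|_{L^2} \leq \|(X - Y) - \E[X-Y]\|_{L^2} + \|Y - \E[Y]\|_{L^2}.
\end{equation*}
Translating each term back into standard-deviation form yields $\sqrt{\var[X]} \leq \sqrt{\var[X-Y]} + \sqrt{\var[Y]}$, and rearranging delivers the claimed inequality $\sqrt{\var[X]} - \sqrt{\var[Y]} \leq \sqrt{\var[X-Y]}$.

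There is essentially no obstacle here beyond setting up the $L^2$ viewpoint correctly; the finiteness of $\var[X]$ and $\var[Y]$ is needed to ensure both centered random variables lie in $L^2$, and then $\var[X-Y] < \infty$ follows from the triangle inequality itself (or from a direct bound $\var[X-Y] \leq 2\var[X] + 2\var[Y]$). If one prefers a fully self-contained argument avoiding the $L^2$ language, the same inequality can be obtained by squaring both sides of the desired statement and reducing to Cauchy--Schwarz applied to the covariance: $\cov(X-Y, Y) \leq \sqrt{\var[X-Y]\var[Y]}$, since expanding $\var[X] = \var[(X-Y)+Y] = \var[X-Y] + 2\cov(X-Y,Y) + \var[Y]$ shows that the needed bound $\var[X] \leq (\sqrt{\var[X-Y]} + \sqrt{\var[Y]})^2$ is equivalent to Cauchy--Schwarz on the covariance. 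Either route is a few lines of calculation.
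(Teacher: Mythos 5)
Your proof is correct; the paper itself states this lemma only as a citation to \citep[Lemma C.3]{cohen2021minimax} without reproducing a proof, and the argument there is exactly the Minkowski/triangle inequality for the centered $L^2$ seminorm that you give. Both your $L^2$ route and your Cauchy--Schwarz-on-covariance variant are valid and complete.
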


\begin{lemma}
	\label{lem:var xy}
	For any two random variables $X, Y$, we have: 
	$$\var[XY] \leq 2\var[X]\norm{Y}_{\infty}^2 + 2(\E[X])^2\var[Y].$$
	Consequently, $\norm{X}_{\infty}\leq C$ implies $\var[X^2]\leq 4C^2\var[X]$.
\end{lemma}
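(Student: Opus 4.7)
The natural approach is to split $XY$ into a ``fluctuation'' term and a ``mean'' term and then handle each with a one-line inequality. Specifically, I would write
\[
XY = (X - \E X)\,Y + (\E X)\,Y,
\]
and apply the elementary fact $\var[A+B] \leq 2\var[A] + 2\var[B]$ (a consequence of $(a+b)^2 \leq 2a^2+2b^2$) to reduce the problem to bounding the two pieces separately.

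The ``mean'' piece is immediate: since $\E X$ is a deterministic scalar, $\var[(\E X)\,Y] = (\E X)^2\,\var[Y]$, which supplies the second term on the right-hand side. For the ``fluctuation'' piece, I would use $\var[Z]\leq\E[Z^2]$ with $Z=(X-\E X)\,Y$ and then pull out the essential supremum of $Y$ as a deterministic constant:
\[
\var[(X-\E X)\,Y] \leq \E\bigl[(X-\E X)^2\,Y^2\bigr] \leq \norm{Y}_\infty^2\,\E[(X-\E X)^2] = \norm{Y}_\infty^2\,\var[X].
\]
Combining yields $\var[XY]\leq 2\norm{Y}_\infty^2\,\var[X] + 2(\E X)^2\,\var[Y]$, which is the stated inequality. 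There is no real obstacle here; the only point worth noting is that one must use $\var[\cdot]\leq\E[\cdot^2]$ (not equality) on the centered version so that the $\norm{Y}_\infty^2$ factor can be extracted before taking the expectation.

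For the ``consequently'' part, I would instantiate $Y := X$ in the inequality just proved. This gives $\var[X^2] \leq 2\norm{X}_\infty^2\,\var[X] + 2(\E X)^2\,\var[X]$, and the assumption $\norm{X}_\infty\leq C$ together with Jensen's inequality $(\E X)^2 \leq \E[X^2] \leq C^2$ bounds both prefactors by $C^2$, yielding $\var[X^2]\leq 4C^2\,\var[X]$ as claimed.
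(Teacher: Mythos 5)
Your proposal is correct and follows essentially the same route as the paper's proof: the identical decomposition $XY=(X-\E X)Y+(\E X)Y$, the bound $\var[A+B]\leq 2\var[A]+2\var[B]$, and the step $\var[(X-\E X)Y]\leq\E[(X-\E X)^2Y^2]\leq\norm{Y}_\infty^2\var[X]$. The derivation of the consequence by setting $Y=X$ is also the intended one.
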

\begin{proof}
	First note that for any two random variables $U, V$, we have $\var[U+V] \leq 2\var[U] + 2\var[V]$.
	Now let $U=(X-\E[X])Y$ and $V=\E[X]Y$, we have:
	\begin{align*}
		\var[XY] &\leq 2\var[(X-\E[X])Y] + 2\var[\E[X]Y] \leq 2\E[(X-\E[X])^2Y^2] + 2(\E[X])^2\var[Y]\\
		&\leq 2\var[X]\norm{Y}^2_{\infty} + 2(\E[X])^2\var[Y].
	\end{align*}
\end{proof}

\begin{lemma}{(\citep[Lemma 14]{tarbouriech2021stochastic})}
	\label{lem:mvp}
	Define $\Upsilon=\{ v\in[0, B]^{\calS^+}: v(g)=0 \}$.
	Let $f: \Delta_{\calS^+}\times\Upsilon\times\fR^+\times\fR^+\times\fR^+\rightarrow\fR^+$ with $f(p, v, n, B, \iota)=pv-\max\Big\{c_1\sqrt{\frac{\fV(p, v)\iota}{n}}, c_2\frac{B\iota}{n}\Big\}$, with $c_1=7$ and $c_2=49$.
	Then $f$ satisfies for all $p\in\Delta_{\calS^+}, v\in\Upsilon$ and $n, \iota>0$,
	\begin{enumerate}
		\item $f(p, v, n, B, \iota)$ is non-decreasing in $v(s)$, that is,
		$$\forall v, v'\in\Upsilon, v(s)\leq v'(s), \forall s\in\calS^+ \implies f(p, v, n, B, \iota)\leq f(p, v', n, B, \iota);$$
		\item $f(p, v, n, B, \iota)\leq pv-\frac{c_1}{2}\sqrt{\frac{\fV(p, v)\iota}{n}}-\frac{c_2}{2}\frac{B\iota}{n} \leq pv - 3\sqrt{\frac{\fV(p, v)\iota}{n}} - 24\frac{B\iota}{n}$.
	\end{enumerate}
\end{lemma}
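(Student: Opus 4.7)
The proof splits into two claims: Part 1, coordinatewise monotonicity of $f$ in $v(s)$, and Part 2, the two-stage sandwich inequality.

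Part 2 is algebraic. Writing $A(v) := c_1 \sqrt{\fV(p,v)\iota/n}$ and $C := c_2 B\iota/n$, the bound $\max\{A(v), C\} \geq \tfrac{1}{2}(A(v) + C)$ immediately gives $f(p,v,n,B,\iota) \leq pv - \tfrac{c_1}{2}\sqrt{\fV(p,v)\iota/n} - \tfrac{c_2}{2} B\iota/n$, which is the first inequality. The second inequality reduces to checking $c_1/2 \geq 3$ and $c_2/2 \geq 24$; both hold for $c_1 = 7, c_2 = 49$, and the subtracted terms are non-negative.

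For Part 1, I would first reduce the global statement $v \leq v' \implies f(p,v,\cdot) \leq f(p,v',\cdot)$ to coordinatewise monotonicity via a telescoping argument: move from $v$ to $v'$ by raising one coordinate at a time, staying inside $\Upsilon = \{u \in [0,B]^{\calS^+} : u(g)=0\}$ throughout. For fixed $v(s')$ at $s' \neq s$, the map $v(s) \mapsto f$ is piecewise smooth in $v(s) \in [0,B]$ and continuous across the transition $A(v) = C$, so it suffices to show the derivative is non-negative on each piece. On the flat piece $A(v) < C$, $f = pv - C$ has derivative $p(s) \geq 0$. On the variance piece $A(v) \geq C$, using the identities $\partial (pv)/\partial v(s) = p(s)$ and $\partial \fV(p,v)/\partial v(s) = 2p(s)(v(s) - pv)$, I would compute
\[
\frac{\partial A(v)}{\partial v(s)} \;=\; \frac{c_1\,\iota\, p(s)(v(s)-pv)}{n\,g(v)}, \qquad g(v) := \sqrt{\fV(p,v)\iota/n}.
\]

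The heart of the proof, and the main obstacle, is bounding this derivative by $p(s)$. When $v(s) \leq pv$ the derivative is non-positive and the inequality is immediate, so $\partial f/\partial v(s) \geq p(s) \geq 0$. When $v(s) > pv$, I would combine $v(s) - pv \leq B$ (from $v \in [0,B]^{\calS^+}$) with the activity condition $g(v) \geq (c_2/c_1) B\iota/n$ (equivalent to $A(v) \geq C$) to obtain $\partial A(v)/\partial v(s) \leq (c_1^2/c_2)\, p(s)$. The constants $c_1 = 7, c_2 = 49$ are tuned exactly so that $c_1^2 = c_2$, making this bound equal to $p(s)$ and hence $\partial f/\partial v(s) \geq 0$ on the variance piece as well. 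This tight use of the constants is the only place where their specific numerical values are actually constrained, and anticipating it is the main subtle point; everything else is routine piecewise calculus.
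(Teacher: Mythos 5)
Your proof is correct: the Part 2 bound via $\max\{a,c\}\geq\frac{a+c}{2}$ and the Part 1 piecewise-derivative argument (with the key step $\partial A/\partial v(s)\leq (c_1^2/c_2)p(s)=p(s)$ on the active-variance piece, using $v(s)-pv\leq B$ and the activity condition) are exactly right, including the observation that $c_1^2=c_2$ is the binding constraint. The paper itself gives no proof — it imports the lemma from \citep[Lemma 14]{tarbouriech2021stochastic} — and your argument is essentially the same derivative-based one used in that reference, so there is nothing further to compare.
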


\begin{lemma}{(\citep[Lemma 19]{jaksch2010near}, \citep[Lemma B.18]{cohen2020near})}
	\label{lem:z sum}
	For any sequence of numbers $z_1,\ldots,z_n$ with $0\leq z_t\leq Z_{t-1}=\max\{1, \sum_{i=1}^{t-1}z_i\}$:
	\begin{align*}
		\sum_{t=1}^n\frac{z_t}{Z_{t-1}} \leq 2\ln Z_n, \quad \sum_{t=1}^n\frac{z_t}{\sqrt{Z_{t-1}}} \leq 3\sqrt{Z_n}.
	\end{align*}
\end{lemma}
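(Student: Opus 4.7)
The plan is to prove both inequalities by a telescoping argument that compares $z_t/Z_{t-1}$ (respectively $z_t/\sqrt{Z_{t-1}}$) to the discrete increment of $\ln Z_t$ (respectively $\sqrt{Z_t}$), taking advantage of the structural constraint $z_t \leq Z_{t-1}$, which forces $Z_t \leq 2Z_{t-1}$.

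For the first inequality, I would rewrite $z_t/Z_{t-1} = (Z_t - Z_{t-1})/Z_{t-1}$ and note that $Z_t/Z_{t-1} \in [1,2]$, so I can apply the elementary bound $x \leq 2\ln(1+x)$ valid on $x \in [0,1]$ with $x = z_t/Z_{t-1}$. This yields
\[
\frac{z_t}{Z_{t-1}} \;\leq\; 2\ln\!\rbr{1 + \frac{z_t}{Z_{t-1}}} \;=\; 2\ln\frac{Z_t}{Z_{t-1}}.
\]
Summing telescopically and using $Z_0 = \max\{1,0\} = 1$ gives $\sum_{t=1}^n z_t/Z_{t-1} \leq 2\ln(Z_n/Z_0) = 2\ln Z_n$.

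For the second inequality, I would use the conjugate trick: since $\sqrt{Z_t} - \sqrt{Z_{t-1}} = z_t/(\sqrt{Z_t} + \sqrt{Z_{t-1}})$, we have
\[
\frac{z_t}{\sqrt{Z_{t-1}}} \;=\; \rbr{\sqrt{Z_t}-\sqrt{Z_{t-1}}}\rbr{\sqrt{Z_t/Z_{t-1}} + 1}.
\]
Again using $Z_t \leq 2Z_{t-1}$, the second factor is at most $\sqrt{2}+1 \leq 3$, so telescoping yields $\sum_{t=1}^n z_t/\sqrt{Z_{t-1}} \leq 3(\sqrt{Z_n} - \sqrt{Z_0}) \leq 3\sqrt{Z_n}$.

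The only subtlety (hardly an obstacle, really) is handling the $\max\{1,\cdot\}$ in the definition of $Z_{t-1}$, which ensures $Z_0 = 1$ so the logarithms and square roots are well-defined and the telescoping base case is clean. The key structural fact that makes both bounds tight up to small constants is $z_t \leq Z_{t-1}$, without which neither the $\ln(1+x) \geq x/2$ bound nor the uniform control of $\sqrt{Z_t/Z_{t-1}}$ would apply.
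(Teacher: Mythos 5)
The paper does not actually prove this lemma---it cites \citep[Lemma 19]{jaksch2010near} and \citep[Lemma B.18]{cohen2020near}---so there is no internal proof to compare against; the telescoping idea you use is indeed the standard one behind those references. However, your execution has a genuine gap: both of your key identities, $z_t/Z_{t-1} = (Z_t - Z_{t-1})/Z_{t-1}$ and $z_t/\sqrt{Z_{t-1}} = (\sqrt{Z_t}-\sqrt{Z_{t-1}})(\sqrt{Z_t/Z_{t-1}}+1)$, silently assume $Z_t - Z_{t-1} = z_t$. Because of the $\max\{1,\cdot\}$ this is false on the initial prefix where the running sum is still below $1$: there $Z_{t-1}=Z_t=1$ while $z_t$ may be positive, so $Z_t - Z_{t-1} = 0 < z_t$ and the inequality you need goes the wrong way (only $Z_t - Z_{t-1} \le z_t$ holds in general). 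You flag the $\max$ as ``hardly an obstacle'' and use it only to set $Z_0=1$, but it is precisely where the argument breaks.

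The repair is to split off the prefix $A=\{t:\sum_{i<t}z_i<1\}$. On $A$ one has $Z_{t-1}=1$ and $\sum_{t\in A}z_t < 2$ (the partial sums stay below $1$ until the last index of $A$, whose increment is at most $Z_{t-1}=1$), and after $A$ your telescoping is valid verbatim. For the square-root bound this yields at most $2+(\sqrt{2}+1)(\sqrt{Z_n}-\sqrt{Z_{t^*}}) \le (\sqrt{2}+1)\sqrt{Z_n}+(1-\sqrt 2)\le 3\sqrt{Z_n}$, so that inequality is fully recoverable. For the logarithmic bound the patch gives $2\ln Z_n + 2$, and no proof can do better: the statement $\sum_t z_t/Z_{t-1}\le 2\ln Z_n$ exactly as written fails already for $n=1$, $z_1=1$ (left side $1$, right side $0$). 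The citable versions of this lemma carry an additive constant, which is harmless in every application here; you should either prove that corrected form or restrict to the regime where $Z_n$ is bounded below appropriately.
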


\section{Concentration Inequalities}

\begin{lemma}{(\citep[Theorem D.1]{cohen2020near})}
	\label{lem:anytime azuma}
	Let $\{X_t\}_t$ be a martingale difference sequence such that $|X_t|\leq B$.
	Then with probability at least $1-\delta$,
	\begin{align*}
		\abr{\sum_{t=1}^n X_t} \leq B\sqrt{n\ln\frac{2n}{\delta}},\quad\forall n\geq1.
	\end{align*}
\end{lemma}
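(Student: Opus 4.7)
The plan is straightforward: combine a standard single-$n$ Azuma-Hoeffding tail bound with a union bound across $n$. First I would establish, via the exponential martingale $M_n(\lambda) = \exp(\lambda S_n - \lambda^2 n B^2/2)$ with $S_n = \sum_{t=1}^n X_t$, the fixed-horizon inequality $\Pr(|S_n| \geq \epsilon) \leq 2 e^{-\epsilon^2 / (2nB^2)}$. This rests only on the conditional sub-Gaussian estimate $\E[e^{\lambda X_t}\mid \mathcal{F}_{t-1}] \leq e^{\lambda^2 B^2/2}$, which is Hoeffding's lemma applied to the bounded, mean-zero increment $X_t$. Inverting the tail bound yields $|S_n|\leq B\sqrt{2n\ln(2/\delta_n)}$ with probability at least $1-\delta_n$, for any prescribed $\delta_n\in(0,1)$.

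To upgrade the guarantee from a single index $n$ to a bound that holds simultaneously for all $n\geq 1$, I would choose a summable confidence schedule. The convenient choice $\delta_n = 6\delta/(\pi^2 n^2)$ satisfies $\sum_{n\geq 1}\delta_n \leq \delta$, so a union bound gives, with probability at least $1-\delta$, the inequality $|S_n| \leq B\sqrt{2n\ln(\pi^2 n^2/(3\delta))}$ for every $n\geq 1$. The claimed form $B\sqrt{n\ln(2n/\delta)}$ then follows after absorbing the multiplicative $\sqrt{2}$ into the logarithm (using $2\ln(2n/\delta) \geq \ln(4n^2/\delta^2) \geq \ln(\pi^2 n^2/(3\delta))$ for sufficiently small $\delta$, say $\delta\leq e^{-1}$, which is the regime we operate in per the appendix's preliminaries).

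The main source of friction is merely matching the exact constants in the stated bound. A naive union bound over individual $n$ loses a $\sqrt{2}$ factor relative to the claimed form; if one insists on recovering the precise constant, the standard workaround is a peeling/stitching argument: apply Doob's maximal inequality to $M_n(\lambda)$ on each dyadic window $n\in[2^{k-1},2^k)$ with $\lambda$ tuned for that window, then union-bound over $k$ using $\delta_k = 6\delta/(\pi^2 k^2)$. Since $2^{\lceil\log_2 n\rceil}\leq 2n$ for every $n\geq 1$, this restores an anytime bound of the claimed shape. There is no conceptual obstacle beyond constant tracking; the lemma is a generic martingale concentration statement and no SSP-specific machinery from the earlier sections of the paper is needed.
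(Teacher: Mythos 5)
The paper does not prove this lemma at all --- it is imported verbatim as a citation of \citep[Theorem D.1]{cohen2020near} --- so there is no in-paper argument to compare against. Your overall route (pointwise Azuma--Hoeffding via the exponential supermartingale, then a union bound over $n$ with a summable schedule $\delta_n$) is the standard and surely intended derivation, and it correctly yields an anytime bound of the right \emph{shape}. The problem is the last step. Your union bound gives $|S_n|\le B\sqrt{2n\ln(\pi^2 n^2/(3\delta))}$, and the inequality you invoke, $\ln(\pi^2 n^2/(3\delta))\le 2\ln(2n/\delta)$, runs in the wrong direction for your purpose: it shows your bound is at most $B\sqrt{2n\cdot 2\ln(2n/\delta)}=2B\sqrt{n\ln(2n/\delta)}$, i.e.\ \emph{twice} the stated bound (a factor $2$, not the $\sqrt2$ you report). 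You cannot ``absorb'' a multiplicative constant sitting outside the logarithm into the logarithm without changing the bound.

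More importantly, the proposed dyadic-peeling repair cannot close this gap, so ``no conceptual obstacle beyond constant tracking'' is too optimistic. Even at a single $n$, the best sub-Gaussian bound available under $|X_t|\le B$ (variance proxy $nB^2$) is $|S_n|\le B\sqrt{2n\ln(2/\delta')}$ at confidence $\delta'$; requiring this to be below $B\sqrt{n\ln(2n/\delta)}$ forces $\delta'\ge\sqrt{\delta/(2n)}$, which is not summable over $n$ (and already exceeds $\delta$ for $n>1/(2\delta)$). Indeed, with literal constants the displayed inequality fails for a $\pm B$ random walk at moderate $n$ and small $\delta$ (e.g.\ $n=100$, $\delta=10^{-6}$ gives a single-$n$ failure probability of order $10^{-5}$ by the CLT), so the restatement here has evidently dropped a constant relative to the source. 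The honest conclusion of your argument is the lemma with an extra universal constant factor, e.g.\ $2B\sqrt{n\ln(2n/\delta)}$ --- which is entirely sufficient for this paper, since the lemma only ever feeds into $\tilO{\cdot}$ bounds --- but you should state that weakened form rather than claim the constant $1$.
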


\begin{lemma}
	\label{lem:anytime bernstein}
	Let $\{X_t\}_t$ be a sequence of i.i.d random variables with mean $\mu$, variance $\sigma^2$, and $0\leq X_t \leq B$.
	Then with probability at least $1-\delta$, the following holds for all $n\geq 1$ simultaneously:
	\begin{align*}
		\abr{\sum_{t=1}^n(X_t-\mu)} &\leq 2\sqrt{2\sigma^2 n\ln\frac{2n}{\delta}} + 2B\ln\frac{2n}{\delta}.\\
		\abr{\sum_{t=1}^n(X_t-\mu)} &\leq 2\sqrt{2\hat{\sigma}^2_nn\ln\frac{2n}{\delta}} + 19B\ln\frac{2n}{\delta}.
	\end{align*}
	where $\hat{\sigma}_n^2=\frac{1}{n}\sum_{t=1}^nX_t^2 - (\frac{1}{n}\sum_{t=1}^nX_t)^2$.
\end{lemma}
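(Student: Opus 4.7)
The plan is to establish the first (population-variance) inequality via a union bound applied to the standard fixed-$n$ Bernstein inequality, and then derive the second (empirical-variance) inequality by controlling the deviation of $\hat{\sigma}_n^2$ from $\sigma^2$ using the first inequality as a black box.

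First I would apply classical Bernstein's inequality for bounded i.i.d.\ sums at each fixed $n$, with failure probability $\delta_n := \delta/(n(n+1))$. Since $\sum_{n\geq 1}\delta_n = \delta$, a union bound gives that with probability at least $1-\delta$, for every $n\geq 1$ simultaneously,
\[
\abr{\sum_{t=1}^n (X_t - \mu)} \;\leq\; \sqrt{2n\sigma^2 \ln(2/\delta_n)} + \tfrac{2}{3}B\ln(2/\delta_n).
\]
The key arithmetic step is the bound $\ln(2/\delta_n) = \ln(2n(n+1)/\delta) \leq 2\ln(2n/\delta)$ for all $n\geq 1$ and $\delta\in(0,1]$, which upgrades the log factor to the claimed $\ln(2n/\delta)$ at the cost of a constant. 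Absorbing constants gives the first inequality with coefficients $2\sqrt{2}$ and $2$.

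Next, for the empirical version, I would control $|\hat{\sigma}_n^2 - \sigma^2|$ using the decomposition
\[
\hat{\sigma}_n^2 - \sigma^2 \;=\; \Bigl(\tfrac{1}{n}\sum_{t=1}^n X_t^2 - \E[X^2]\Bigr) - \bigl(\hat{\mu}_n^2 - \mu^2\bigr).
\]
I would apply the first inequality once to $(X_t)_t$ (variance $\sigma^2$, range $B$) to bound $|\hat{\mu}_n-\mu|$, and once to $(X_t^2)_t$, whose variance is at most $4B^2\sigma^2$ by \pref{lem:var xy} (using $X_t\in[0,B]$) and whose range is $B^2$. Combined with $|\hat{\mu}_n+\mu|\leq 2B$, these yield $|\hat{\sigma}_n^2 - \sigma^2| \leq c_1 B\sqrt{\sigma^2 \ln(2n/\delta)/n} + c_2 B^2 \ln(2n/\delta)/n$ with high probability. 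Solving this quadratic in $\sigma$ via \pref{lem:quad} (or a direct AM--GM split) gives $\sigma^2 \leq 2\hat{\sigma}_n^2 + c_3 B^2 \ln(2n/\delta)/n$ for explicit constants.

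Finally, substituting this upper bound on $\sigma^2$ into the first inequality and applying $\sqrt{a+b}\leq \sqrt{a}+\sqrt{b}$ separates the $\hat{\sigma}_n^2$ contribution from an additional $O(B\ln(2n/\delta))$ term, producing the second claim. The conceptual part of the argument is routine; the main obstacle is simply bookkeeping of the universal constants carefully enough to fit the claimed $2\sqrt{2}$ and $19$. This requires combining the $\tfrac{2}{3}$ Bernstein constant, the factor $4$ from the variance-of-squares bound, and the factor $2$ lost when solving the quadratic for $\sigma^2$; I expect this constant-tracking to be the least pleasant step, but it is entirely mechanical.
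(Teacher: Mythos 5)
Your treatment of the first inequality is essentially the paper's: a fixed-$n$ Bernstein/Freedman bound plus a union bound over $n$ (the paper uses weights $\delta/(4n^2)$ rather than $\delta/(n(n+1))$, which is immaterial). For the second inequality the paper does not redo the empirical-variance concentration from scratch; it invokes \citep[Lemma 19]{efroni2021confidence} to get the \emph{standard-deviation} bound $|\sigma-\hat{\sigma}_n|\leq 6B\sqrt{\ln(2n/\delta)/n}$, writes $\sqrt{n}\sigma\leq\sqrt{n}\hat{\sigma}_n+6B\sqrt{\ln(2n/\delta)}$, and substitutes into the first inequality; the constant $19$ is exactly $2+12\sqrt{2}\approx 18.97$. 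Your plan replaces that citation with the standard self-contained argument (concentrate $\frac{1}{n}\sum X_t^2$ and $\hat{\mu}_n$ separately, then compare variances), which is a legitimate alternative route in principle.

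However, the constant bookkeeping you defer as ``mechanical'' is where the argument as outlined fails to deliver the stated lemma, and one part of the failure is structural rather than cosmetic. If you solve the quadratic to get $\sigma^2\leq 2\hat{\sigma}_n^2+c_3B^2\ln(2n/\delta)/n$ and substitute, the leading term becomes $2\sqrt{2\cdot 2\hat{\sigma}_n^2 n\ln(2n/\delta)}=2\sqrt{2}\cdot\sqrt{2\hat{\sigma}_n^2 n\ln(2n/\delta)}$, i.e.\ $\sqrt{2}$ times the claimed coefficient, and this excess is of order $\hat{\sigma}_n\sqrt{n\ln(2n/\delta)}$, which grows like $\sqrt{n}$ and cannot be absorbed into the additive $B\ln(2n/\delta)$ term. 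To keep the leading coefficient at $2\sqrt{2}$ you must instead produce an \emph{additive} bound $\sigma\leq\hat{\sigma}_n+cB\sqrt{\ln(2n/\delta)/n}$ (e.g.\ by completing the square in $\sigma$ rather than invoking \pref{lem:quad}), which is precisely the form of the cited lemma. Even then, tracking your route's constants (a $4\sqrt{2}$ from concentrating $X_t^2$ with $\var[X^2]\leq 4B^2\sigma^2$, another $4\sqrt{2}$ from $|\hat{\mu}_n^2-\mu^2|$, plus the union-bound cost of two more applications of the first inequality) gives $c$ well above the $6$ needed to land at $2+2\sqrt{2}c\leq 19$. So your argument proves a statement of the same shape but with a strictly worse second inequality than claimed; to get the actual constants you either need the sharper external bound on $|\sigma-\hat{\sigma}_n|$ that the paper uses, or a more careful single-shot empirical Bernstein inequality.
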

\begin{proof}
	For a fixed $n$, the first inequality holds with probability at least $1-\frac{\delta}{4n^2}$ by Freedman's inequality.
	Then by \citep[Lemma 19]{efroni2021confidence}, with probability at least $1-\frac{\delta}{4n^2}$, $|\sigma - \hat{\sigma}_n| \leq \sqrt{\frac{36B^2\ln(2n/\delta)}{\n}}$.
	Therefore, $\sqrt{n}\sigma = \sqrt{n}\hat{\sigma}_n + \sqrt{n}(\sigma-\hat{\sigma}_n) \leq \sqrt{n}\hat{\sigma}_n + 6B\sqrt{\ln(2n/\delta)}$.
	Plugging this back to the first inequality gives the second inequality.
\end{proof}

\begin{lemma}{(Strengthened Freedman's inequality)}\label{lem:any interval freedman}
Let $X_{1:\infty}$ be a martingale difference sequence with respect to a filtration $\{\calF_t\}_t$ such that $\E[X_t|\calF_{t-1}] = 0$.
Suppose $B_t \in [1,b]$ for a fixed constant $b$, $B_t\in\calF_{t-1}$ and $X_t \leq B_t$ almost surely.
Then for a given $n$, with probability at least $1-\delta$:
\begin{equation}
	\label{eq:strong freedman}
	\abr{\sum_{t=1}^nX_t} \leq  C\big(\sqrt{8V_{1,n}\ln\left(2C/\delta\right)} + 5B_{1,n} \ln\left(2C/\delta\right)\big),
\end{equation}
and with probability at least $1 -\delta$ we have for all $1\leq l \leq n $ simultaneously
\begin{equation}
	\label{eq:anytime strong freedman}
    \abr{\sum_{t=l}^{l+n-1}  X_t} \leq  C\big(\sqrt{8V_{l,n}\ln\left(4Cn^3/\delta\right)} + 5B_{l,n} \ln\left(4Cn^3/\delta\right)\big) \leq 8CB_{l,n}\sqrt{n}\ln(4Cn^3/\delta),
\end{equation}
where $V_{l,n} = \sum_{t=l}^{l+n-1} \E[X_t^2|\calF_{t-1}], B_{l,n}=\max_{l\leq t < l+n}B_t$, and 
$C = \ceil{\ln(b)}\ceil{\ln(nb^2)}$.
\end{lemma}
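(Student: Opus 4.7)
The plan is to reduce this ``strengthened'' Freedman inequality to the classical (deterministic-bound) Freedman inequality via a peeling argument over the two quantities that are genuinely random here: the per-increment bound $B_{l,n}$ and the cumulative conditional variance $V_{l,n}$. The standard Freedman bound requires uniform almost-sure bounds on both the increments and the total conditional variance, neither of which holds here; but because $B_{l,n} \in [1, b]$ and $V_{l,n} \in [0, n b^2]$ lie in known ranges, we can cover these ranges by only $O(\ln b \cdot \ln(n b^2))$ shells, at a cost of only a single logarithmic factor in the failure probability.

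For the fixed-window bound \pref{eq:strong freedman}, I would fix $n$ and, for each pair $(i, j)$ with $0 \leq i \leq \ceil{\ln b}$ and $0 \leq j \leq \ceil{\ln(n b^2)}$, set $R_i = e^i$ and $\sigma_j^2 = e^j$. Because $B_t$ and $\sum_{s \leq t}\E[X_s^2 \mid \calF_{s-1}]$ are both $\calF_{t-1}$-measurable,
\[
\tau_{i,j} = \inf\left\{ t \geq 1 : B_t > R_i \ \text{or}\ \sum_{s=1}^{t} \E[X_s^2 \mid \calF_{s-1}] > \sigma_j^2 \right\}
\]
is a predictable stopping time. The stopped martingale $M^{(i,j)}_n = \sum_{t=1}^{n \wedge \tau_{i,j}} X_t$ has increments bounded pathwise by $R_i$ and total conditional variance bounded deterministically by $\sigma_j^2$, so standard Freedman yields a deviation bound for $M^{(i,j)}_n$ of failure probability $\delta/(2C)$. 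A union bound over all $2C$ such shells (including both tails) gives a simultaneous statement. On the event $\{B_{1,n} \leq R_i,\ V_{1,n} \leq \sigma_j^2\}$ the stopped sum equals $\sum_{t=1}^n X_t$, and since every realization of $(B_{1,n}, V_{1,n})$ lies in some shell with $R_i \leq e\cdot B_{1,n}$ and $\sigma_j^2 \leq e\cdot V_{1,n}$, rearranging absorbs the $O(1)$ factors into the numerical constants $8$ and $5$ appearing in \pref{eq:strong freedman}.

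For the anytime bound \pref{eq:anytime strong freedman}, I would reapply the fixed-window result to every starting index $l \in \{1, \ldots, n\}$ with failure probability $\delta/n$, which multiplies the number of Freedman applications by $n$ and thus replaces $\ln(2C/\delta)$ by roughly $\ln(2Cn/\delta)$. The additional factor $n^2$ inside $\ln(4 C n^3 / \delta)$ absorbs further union-bound slack (e.g.\ a slightly finer peeling grid for $V_{l,n}$ when $V_{l,n}$ can be very small, and/or covering sub-interval lengths if one wants a uniform statement). The concluding ``loose'' form $\leq 8 C B_{l,n} \sqrt{n} \ln(4 C n^3 / \delta)$ is then immediate from the deterministic bound $V_{l,n} \leq n B_{l,n}^2$, which collapses $\sqrt{8 V_{l,n}\, \iota} + 5 B_{l,n}\, \iota$ into $8 B_{l,n} \sqrt{n}\, \iota$ for $\iota \geq 1$.

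The main obstacle is the bookkeeping around predictability and pathwise control: one must use the hypothesis $B_t \in \calF_{t-1}$ \emph{essentially} to argue that $\tau_{i,j}$ is an $\{\calF_{t-1}\}$-predictable stopping time, and then verify that on the good event $\{B_{1,n} \leq R_i\}$ the pathwise bound $|X_t| \leq R_i$ holds for all $t \leq n \wedge \tau_{i,j}$, so that classical Freedman applies verbatim to the stopped process. (A subtlety is that the one-sided hypothesis $X_t \leq B_t$ in the statement must in fact be read as $|X_t| \leq B_t$ for the two-sided conclusion; this should be mentioned or the lemma applied twice, once to $X_t$ and once to $-X_t$.) Matching the precise form of $C = \ceil{\ln b}\ceil{\ln(n b^2)}$ and the logarithmic factor $\ln(4 C n^3/\delta)$ is then a clerical exercise of aligning the peeling grid and the union-bound accounting.
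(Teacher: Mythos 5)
Your proposal is correct in outline, but it takes a different (and much longer) route than the paper for the core inequality. The paper's entire proof of \pref{eq:strong freedman} is a citation: it applies Theorem~2.2 of \citet{lee2020bias} to $\{X_t\}_t$ and $\{-X_t\}_t$, and then obtains \pref{eq:anytime strong freedman} by exactly the union bound you describe (allocating $\delta/(2n^3)$ to each pair $(l,n)$ and summing first over $l$, then over $n$, which is where the $4Cn^3$ inside the logarithm comes from). What you have written is essentially a re-derivation of that cited theorem: the peeling over the $O(\ln b \cdot \ln(nb^2))$ shells for $(B_{1,n}, V_{1,n})$, the predictable stopping times $\tau_{i,j}$ (valid precisely because $B_t$ and the running conditional variance are $\calF_{t-1}$-measurable), and classical Freedman on each stopped process is the standard proof of the strengthened Freedman inequality and accounts for the factor $C=\ceil{\ln b}\ceil{\ln(nb^2)}$. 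Your approach buys self-containedness; the paper buys brevity. Two points worth noting. First, you correctly flag that the two-sided conclusion requires $|X_t|\leq B_t$ rather than the one-sided $X_t\leq B_t$ stated in the lemma; the paper implicitly acknowledges this by applying the cited theorem to both $X_t$ and $-X_t$, and in all of its applications the increments are bounded in absolute value, so this is a statement-level imprecision rather than a gap in either proof. Second, your handling of the regime where $V_{1,n}$ falls below the smallest variance shell is a bit hand-wavy (the smallest shell $\sigma_0^2=1$ does not satisfy $\sigma_0^2\leq e\,V_{1,n}$ there); the standard fix is to absorb $\sqrt{8\sigma_0^2\iota}=O(\sqrt{\iota})$ into the additive term $5B_{1,n}\iota$ using $B_{1,n}\geq 1$, which you should state explicitly rather than attributing it to slack in the $n^2$ factor.
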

\begin{proof}
	\pref{eq:strong freedman} is simply from applying \citep[Theorem 2.2]{lee2020bias} to $\{X_t\}_t$ and $\{-X_t\}_t$.
	Fix some $l, n\geq 1$.
	\pref{eq:anytime strong freedman} holds with probability at least $1-\frac{\delta}{2n^3}$ by \pref{eq:strong freedman}.
	By a union bound (first sum over $l$, then sum over $n$), the statement is proved.
\end{proof}

\begin{lemma}
	\label{lem:e2r}
	Given $\alpha\geq 1$ and a martingale sequence $\{X_t\}_t$ such that $X_t\in\calF_t, 0\leq X_t \leq B$, with probability at least $1-\delta$:
	\begin{align*}
		\sum_{t=1}^n \E[X_t|\calF_{t-1}] \leq \rbr{1 + \frac{1}{\alpha}}\sum_{t=1}^n X_t + 8B\alpha\ln\frac{2n}{\delta},\quad \forall n\geq 1.
	\end{align*}
\end{lemma}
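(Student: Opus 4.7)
\textbf{Proof proposal for \pref{lem:e2r}.} The plan is to convert the adapted sequence $\{X_t\}$ into a martingale difference sequence and then apply an anytime Bernstein/Freedman-type inequality, using the standard trick that a non-negative bounded random variable has conditional variance controlled by its conditional mean. Concretely, I would first define
\[
Y_t \;\defeq\; \E[X_t \mid \calF_{t-1}] - X_t,
\]
which is a martingale difference sequence with respect to $\{\calF_t\}$, satisfies $|Y_t|\le B$, and, because $0\le X_t\le B$, has conditional second moment bounded by
\[
\E[Y_t^2 \mid \calF_{t-1}] \;\le\; \E[X_t^2 \mid \calF_{t-1}] \;\le\; B\,\E[X_t \mid \calF_{t-1}].
\]

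Next I would apply an anytime Freedman inequality to $\{Y_t\}$ (either invoking \pref{lem:any interval freedman} with $l=1$, or more efficiently a simple union-bound-over-$n$ version of Bernstein). This yields, with probability at least $1-\delta$, that for all $n\ge 1$ simultaneously
\[
\sum_{t=1}^n \E[X_t\mid\calF_{t-1}] - \sum_{t=1}^n X_t \;=\; \sum_{t=1}^n Y_t \;\le\; \sqrt{2\,B\,S_n\,\iota_n} \;+\; c\,B\,\iota_n,
\]
where $S_n=\sum_{t=1}^n\E[X_t\mid\calF_{t-1}]$, $\iota_n = \ln(2n/\delta)$, and $c$ is some absolute constant (arising from the Bernstein tail). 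Here I have used $V_n = \sum_{t=1}^n \E[Y_t^2|\calF_{t-1}] \le B S_n$.

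The final step is an AM-GM absorption: for any $\alpha\ge 1$,
\[
\sqrt{2BS_n\iota_n} \;\le\; \frac{S_n}{2\alpha} + B\alpha\iota_n,
\]
so that $S_n\bigl(1-\tfrac{1}{2\alpha}\bigr) \le \sum_t X_t + B\iota_n(\alpha+c)$. A short calculation shows $\frac{2\alpha}{2\alpha-1}\le 1+\frac{1}{\alpha}$ for $\alpha\ge 1$ (equivalent to $\alpha\ge 1$), and $\frac{2\alpha(\alpha+c)}{2\alpha-1}\le 2(\alpha+c)\le 8\alpha$ for reasonable $c$, which gives the claimed bound.

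I do not expect any real obstacle here: the only slightly delicate point is whether to obtain the anytime guarantee through \pref{lem:any interval freedman} (which carries an extra $\ln\!\ln$-type factor $C$) or through a direct union bound over $n$ with standard Freedman. The latter yields cleaner $\ln(2n/\delta)$ logarithmic terms matching the statement, so that is the route I would take; the generous constant $8$ in the bound leaves plenty of slack for the Bernstein constant $c$ and the AM-GM constants to fit.
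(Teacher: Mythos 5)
Your proposal is correct and matches the paper's proof in all essentials: the same martingale difference $Y_t = \E[X_t|\calF_{t-1}] - X_t$, the same bound $\E[Y_t^2|\calF_{t-1}] \le \E[X_t^2|\calF_{t-1}]\le B\,\E[X_t|\calF_{t-1}]$, Freedman's inequality with a union bound over $n$, and absorption of the conditional-mean term into the left-hand side. The only cosmetic difference is that the paper applies Freedman in its parametrized form $\sum_{t=1}^n Y_t \le \eta \sum_{t=1}^n\E[Y_t^2|\calF_{t-1}] + 2\ln(2n/\delta)/\eta$ with the fixed choice $\eta = 1/(2B\alpha)$ and then rearranges, which sidesteps the self-normalized square-root form of Freedman (and hence the extra peeling over the random variance, or the factor $C$ of \pref{lem:any interval freedman}) that your square-root-plus-AM-GM route would need to justify.
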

\begin{proof}
	Define $Y_t = \E[X_t|\calF_{t-1}] - X_t$.
	For a given $n$, by Freedman's inequality, with probability at least $1-\frac{\delta}{2n^2}$:
	\begin{align*}
		\sum_{t=1}^n Y_t \leq \eta\sum_{t=1}^n\E[(X_t - \E[X_t|\calF_{t-1}])^2|\calF_{t-1}] + \frac{2\ln(2n/\delta)}{\eta} \leq B\eta\E[X_t|\calF_{t-1}] + \frac{2\ln(2n/\delta)}{\eta},
	\end{align*}
	for some $\eta < \frac{1}{B}$.
	Reorganizng terms, we get when $\eta=\frac{1}{2B\alpha}<\frac{1}{B}$ (note that $B\eta \leq \frac{1}{2}$):
	\begin{align*}
		\sum_{t=1}^n \E[X_t|\calF_{t-1}] &\leq \frac{1}{1-B\eta}\rbr{\sum_{t=1}^n X_t + \frac{2\ln(2n/\delta)}{\eta}} \leq (1 + 2B\eta)\sum_{t=1}^n X_t + \frac{4\ln(2n/\delta)}{\eta}\\
		&\leq \rbr{1 + \frac{1}{\alpha}}\sum_{t=1}^n X_t + 8B\alpha\ln\frac{2n}{\delta}. \tag{$\frac{1}{1-x}\leq 1+2x$ when $x\in [0, \frac{1}{2}]$}
	\end{align*}
	By a union bound over $n$, we obtain the desired bound.
\end{proof}

\section{Experiments}
\label{app:exp}

In this section, we benchmark known SSP algorithms empirically.
We consider two environments, RandomMDP and GridWorld.
In RandomMDP, there are 5 states and 2 actions, and both transition and cost function are chosen uniformly at random.
In GridWorld, there are $12$ states (including the goal state) and 4 actions (LEFT, RIGHT, UP, DOWN) forming a $3\times 4$ grid.
The agent starts at the upper left corner of the grid, and the goal state is at the lower right corner of the grid.
Taking each action initiates an attempt to moves one step towards the indicated direction with probability $0.85$, and moves randomly towards the other three directions with probability $0.15$.
The movement attempt fails if the agent tries to move out of the grid, and in this case the agent stays at the same position.
The cost is $1$ for each state-action pair.
In our experiments, $\B\approx 1.5$ and $\cmin\approx 0.04$ in RandomMDP, and $\B\approx 6$ and $\cmin=1$ in GridWorld.

We implement two model-free algorithms: Q-learning with  $\epsilon$-greedy exploration~\citep{yu2013boundedness} and \mf, and five model-based algorithms: UC-SSP~\citep{tarbouriech2020no}\footnote{we implement a variant of UC-SSP with a fixed pivot horizon for a much better empirical performance, where $\gamma_{k,j}=10^{-6}$ always (see their Algorithm 2 for the definition of $\gamma_{k,j}$)}, Bernstein-SSP~\citep{cohen2020near}, ULCVI~\citep{cohen2021minimax}, EB-SSP~\citep{tarbouriech2021stochastic}, and \mb.
For each algorithm, we optimize hyper-parameters for the best possible results.
Moreover, instead of incorporating the logarithmic terms from confidence intervals suggested by the theory, we treat it as a hyper-parameter $\iota$ and search its best value.
The hyper-parameters used in the experiments are shown in \pref{tab:hp}.
All experiments are performed in Google Cloud Platform on a compute engine with machine type ``e2-medium''.

The plot of accumulated regret is shown in \pref{fig:plot}.
Q-learning with $\epsilon$-greedy exploration suffers linear regret, indicating that naive $\epsilon$-greedy exploration is inefficient.
UC-SSP and SVI-SSP show competitive results in both environments.
SVI-SSP also consistently outperforms EB-SSP, both of which are minimax-optimal and horizon-free.

In \pref{tab:update}, we also show the time spent in updates (policy, accumulators, etc) in the whole learning process for each algorithm.
Our model-based algorithm \mb spends least time in updates among all algorithms, confirming our theoretical arguments.
ULCVI and UC-SSP spend most time in updates, which is reasonable since these two algorithms computes a new policy in each episode, instead of exponentially sparse updates.

\begin{figure}[t]
	\centering
	\begin{tabular}{cc}
		\includegraphics[width=0.5\textwidth]{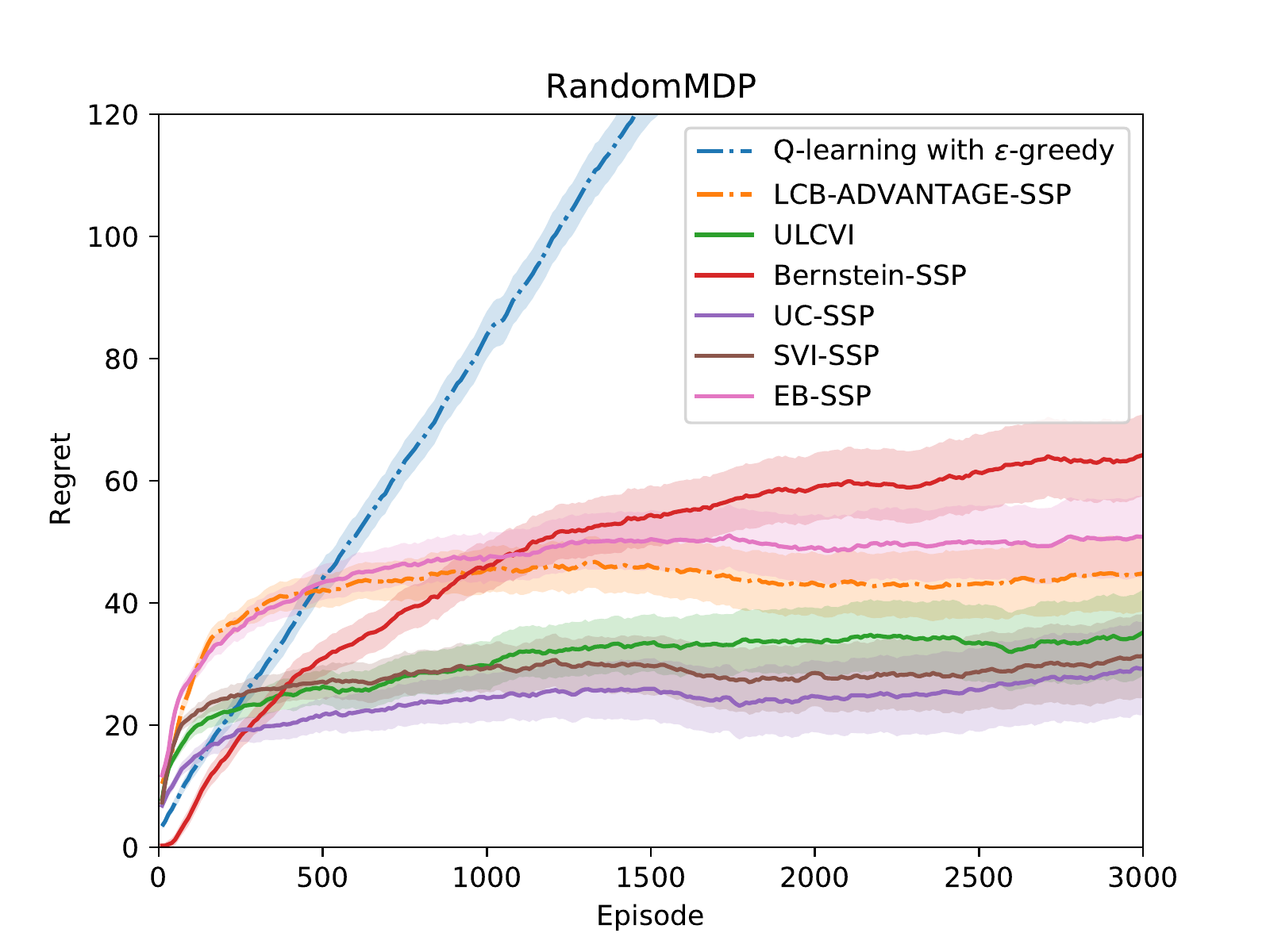} &
		\includegraphics[width=0.5\textwidth]{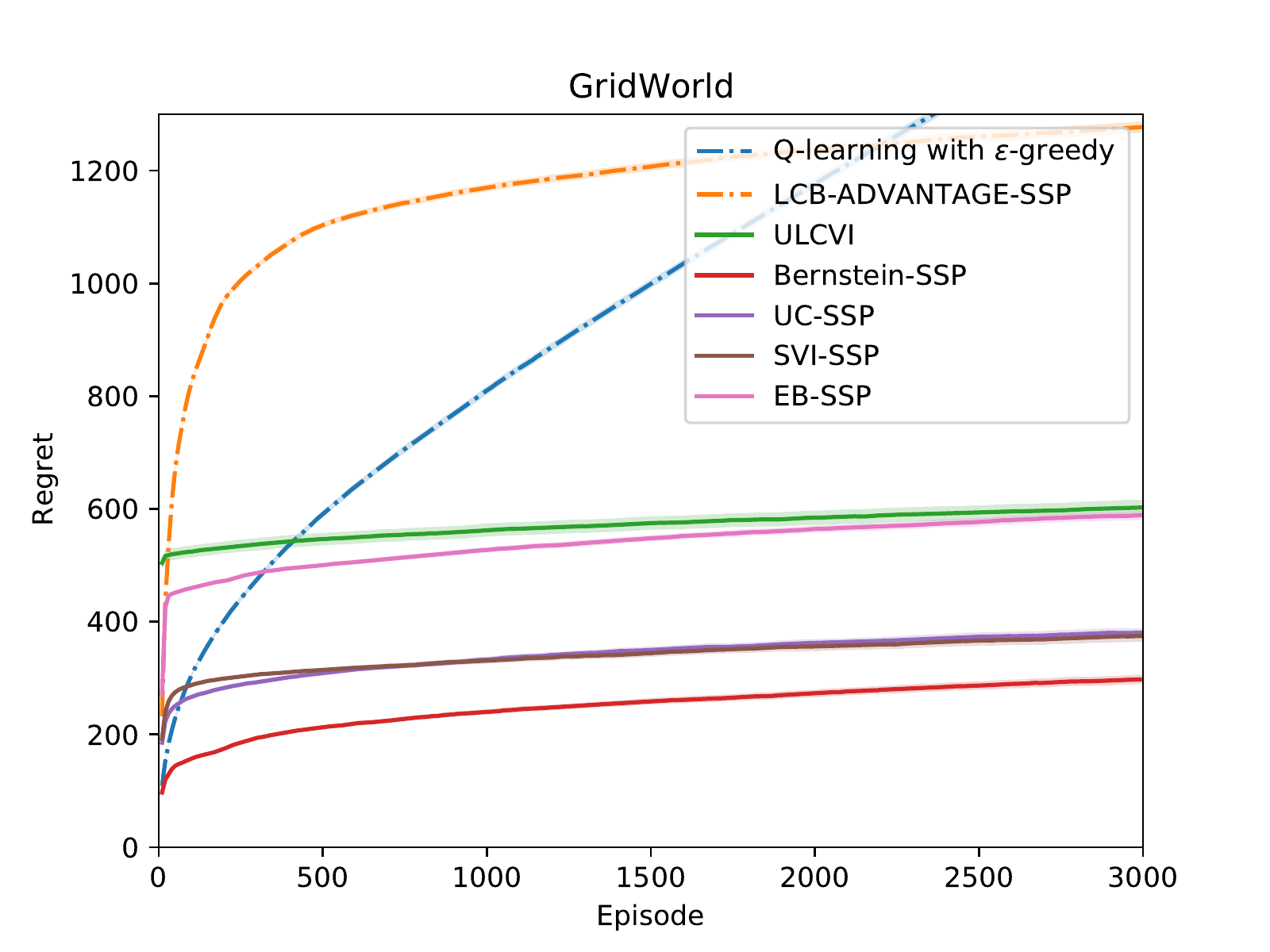}
	\end{tabular}
	\caption{
		Accumulated regret of each algorithm on RandomMDP (left) and GridWorld (right) in $3000$ episodes.
		Each plot is an average of 500 repeated runs, and the shaded area is 95\% confidence interval.
		Dotted lines represent model-free algorithms and solid lines represent model-based algorithms.
	}
	\label{fig:plot}
\end{figure}

\begin{table}[t]
	\centering
	\caption{
		\small Average time (in seconds) spent in updates in $3000$ episodes for each algorithm.
		Our model-based algorithm \mb is the most efficient algorithm.
	}
	\label{tab:update}
	\begin{tabular}{|c|c|c|}
		\hline
		& RandomMDP & GridWorld\\
		\hline
		Q-learning with $\epsilon$-greedy & 0.3385 & 0.3773 \\
		\mf  & 0.3517 & 0.3982 \\
		UC-SSP & 14.4472 & 8.6886 \\
		Bernstein-SSP & 0.2918 & 0.4656 \\
		ULCVI & 15.7128 & 22.8062 \\
		EB-SSP & 0.2319 & 0.4619 \\
		SVI-SSP & \textbf{0.1207} & \textbf{0.1419} \\
		\hline
	\end{tabular}
\end{table}

\begin{table}[t]
	\centering
	\caption{\small 
		Hyper-parameters used in the experiments.
		We search the best parameters for each algorithm.
	}
	\label{tab:hp}
	\begin{tabular}{|c|c|c|}
		\hline
		& Algorithm & Parameters\\
		\hline
		\multirow{7}{*}{RandomMDP} & Q-learning with $\epsilon$-greedy & $\epsilon=0.05$ \\
		& \mf & $H=5, \iota=0.05, \thetastar=4096$ \\
		& UC-SSP & $\iota=1.0$ \\
		& Bernstein-SSP & $\iota=2.0$ \\
		& ULCVI & $H=80, \iota=2.0$ \\
		& EB-SSP & $\iota=0.05$ \\
		& SVI-SSP & $H=15, \iota=0.05$ \\
		\hline
		\multirow{7}{*}{GridWorld} & Q-learning with $\epsilon$-greedy & $\epsilon=0.05$ \\
		& \mf & $H=5, \iota=0.1, \thetastar=4096$ \\
		& UC-SSP & $\iota=0.5$ \\
		& Bernstein-SSP & $\iota=0.5$ \\
		& ULCVI & $H=100, \iota=1.0$ \\
		& EB-SSP & $\iota=0.01$ \\
		& SVI-SSP & $H=10, \iota=0.01$ \\
		\hline
	\end{tabular}
\end{table}



\end{document}